% CVPR 2026 Paper Template; see https://github.com/cvpr-org/author-kit

\documentclass[10pt,twocolumn,letterpaper]{article}

%%%%%%%%% PAPER TYPE  - PLEASE UPDATE FOR FINAL VERSION
\usepackage{cvpr}              % To produce the CAMERA-READY version
\definecolor{cvprblue}{rgb}{0.21,0.49,0.74}
\usepackage[pagebackref,breaklinks,colorlinks,allcolors=cvprblue]{hyperref}

\usepackage{tikz}
\usepackage{graphicx}
\usepackage{amsmath}
\usepackage{adjustbox}
\usepackage{color, colortbl, xcolor}
\usepackage{subcaption}
\usepackage{multicol}
\usepackage{multirow}
\usepackage{arydshln}
\usepackage{url}

%%%%%%%%% PAPER ID  - PLEASE UPDATE
 % *** Enter the Paper ID here

\definecolor{LightGray}{HTML}{F0F1F2}
\newenvironment{proof}{{\noindent\it Proof.}\quad}{\hfill $\square$\par}
\newtheorem{proposition}{Proposition}

\newcommand*\circled[1]{\tikz[baseline=(char.base)]{\node[shape=circle,draw,inner sep=1pt,scale=1.0] (char) {#1};}}
\newcommand{\ours}{ASCOOD}

%%%%% NEW MATH DEFINITIONS %%%%%

\usepackage{amsmath,amsfonts,bm}

% Mark sections of captions for referring to divisions of figures

% Highlight a newly defined term

% Figure reference, lower-case.

% Figure reference, capital. For start of sentence

% Section reference, lower-case.

% Section reference, capital.

% Reference to two sections.

% Reference to three sections.

% Reference to an equation, lower-case.
\def\eqref#1{equation~\ref{#1}}
% Reference to an equation, upper case

% A raw reference to an equation---avoid using if possible

% Reference to a chapter, lower-case.

% Reference to an equation, upper case.

% Reference to a range of chapters

% Reference to an algorithm, lower-case.

% Reference to an algorithm, upper case.

% Reference to a part, lower case

% Reference to a part, upper case

\def\1{\bm{1}}

% Random variables

% rm is already a command, just don't name any random variables m

% Random vectors

% Elements of random vectors

% Random matrices

% Elements of random matrices

% Vectors

\def\vp{{\bm{p}}}

\def\vy{{\bm{y}}}

% Elements of vectors

% Matrix

% Tensor
\DeclareMathAlphabet{\mathsfit}{\encodingdefault}{\sfdefault}{m}{sl}
\SetMathAlphabet{\mathsfit}{bold}{\encodingdefault}{\sfdefault}{bx}{n}

% Graph

% Sets

% Don't use a set called E, because this would be the same as our symbol
% for expectation.

% Entries of a matrix

% entries of a tensor
% Same font as tensor, without \bm wrapper

% The true underlying data generating distribution

% The empirical distribution defined by the training set

% The model distribution

% Stochastic autoencoder distributions

 % Laplace distribution

% Wolfram Mathworld says $L^2$ is for function spaces and $\ell^2$ is for vectors
% But then they seem to use $L^2$ for vectors throughout the site, and so does
% wikipedia.

 % See usage in notation.tex. Chosen to match Daphne's book.

%%%%%%%%% TITLE - PLEASE UPDATE
\title{Image-based Outlier Synthesis With Training Data}

%%%%%%%%% AUTHORS - PLEASE UPDATE
\author{Sudarshan Regmi\\
Department of Computer Science\\
Dartmouth College\\
{\tt\small sudarshan.regmi.gr@dartmouth.edu}
}

\begin{document}
\maketitle
\begin{abstract}
Out-of-distribution (OOD) detection is critical to ensure the safe deployment of deep learning models in critical applications. Deep learning models can often misidentify OOD samples as in-distribution (ID) samples. This vulnerability worsens in the presence of spurious correlation in the training set. Likewise, in fine-grained classification settings, detection of fine-grained OOD samples becomes inherently challenging due to their high similarity to ID samples. However, current research on OOD detection has focused instead largely on relatively easier (conventional) cases. Even the few recent works addressing these challenging cases rely on carefully curated or synthesized outliers, ultimately requiring external data. This motivates our central research question: ``Can we innovate OOD detection training framework for fine-grained and spurious settings \textbf{without requiring any external data at all?}" In this work, we present a unified \textbf{A}pproach to \textbf{S}purious, fine-grained, and \textbf{C}onventional \textbf{OOD D}etection (\textbf{\ours}) that eliminates the reliance on external data. First, we synthesize virtual outliers from ID data by approximating the destruction of invariant features. Specifically, we propose to add gradient attribution values to ID inputs to disrupt invariant features while amplifying true-class logit, thereby synthesizing challenging near-manifold virtual outliers. Then, we simultaneously incentivize ID classification and predictive uncertainty towards virtual outliers. For this, we further propose to leverage standardized features with z-score normalization. \ours~effectively mitigates impact of spurious correlations and encourages capturing fine-grained attributes. Extensive experiments across \textbf{7} datasets and and comparisons with \textbf{30+} methods demonstrate merit of \ours~in spurious, fine-grained and conventional settings.
\end{abstract}    
\section{Introduction}
\label{sec:intro}
\begin{figure}[!t]
  \centering
  \includegraphics[width=0.85\linewidth]{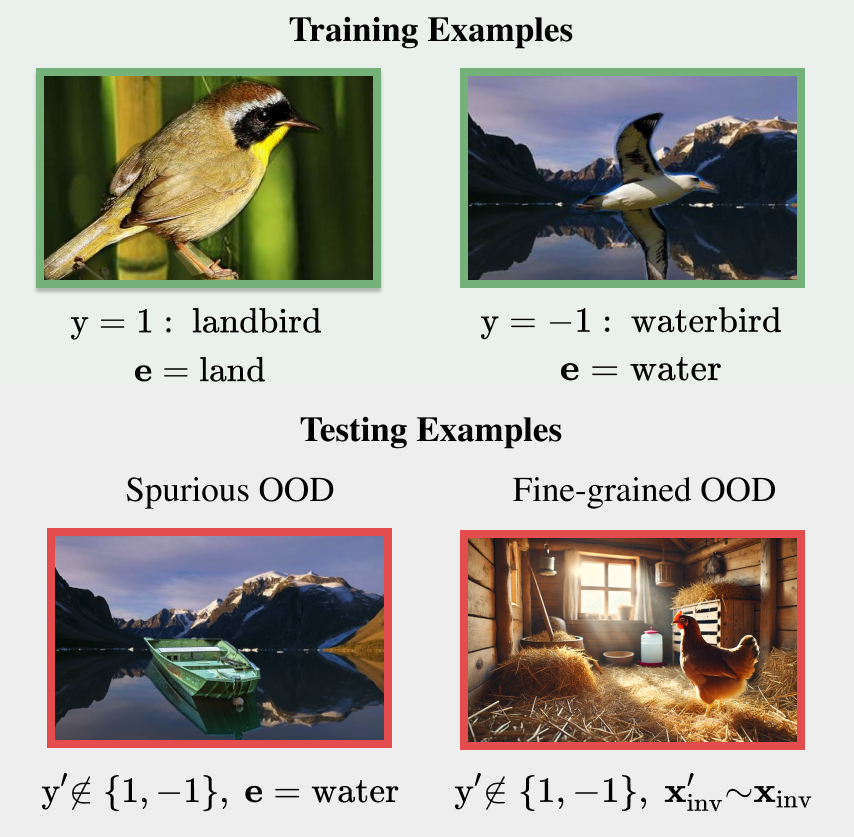}
  \caption{In Waterbirds dataset~\cite{Sagawa2020Distributionally}, label \( y \in \{\text{waterbird, landbird}\} \) is correlated with environmental feature \( \mathbf{e} \in \{\text{water, land}\} \). Spurious OOD retains environmental feature \(\mathbf{e}\) (water) while fine-grained OOD has its invariant feature similar to ID invariant feature \((\mathbf{x}'_\text{inv} \sim \mathbf{x}_\text{inv})\). Both present significant challenges for OOD detection.}
  \label{fig:teaser}
\end{figure}
Deploying deep learning models, trained under the \textit{closed-world} assumption $\left( \mathbb{D}_\text{train} = \mathbb{D}_\text{test}\right)$, often becomes challenging in real-world scenarios as they frequently encounter OOD inputs. OOD inputs should be accurately flagged as they lie beyond the training distribution. Such identification of OOD inputs becomes challenging if models rely on spurious features that do not generalize beyond the training distribution~\citep{ming2021impact}. For instance, a medical diagnosis model might erroneously rely on spurious features, such as image artifacts, leading it to incorrectly classify any image containing such artifacts as ID sample. Similarly, in fine-grained scenarios~\cite{yang2021re,hsu2019fine,zheng2017learning,chen2025openinsect} like species classification, novel species visually similar to known ones can easily be misidentified as ID species. Proper consideration of these scenarios while ensuring high ID accuracy is essential for safe deployment of deep learning models.
\begin{figure*}[!t]
    \centering
    \includegraphics[width=0.9\textwidth]{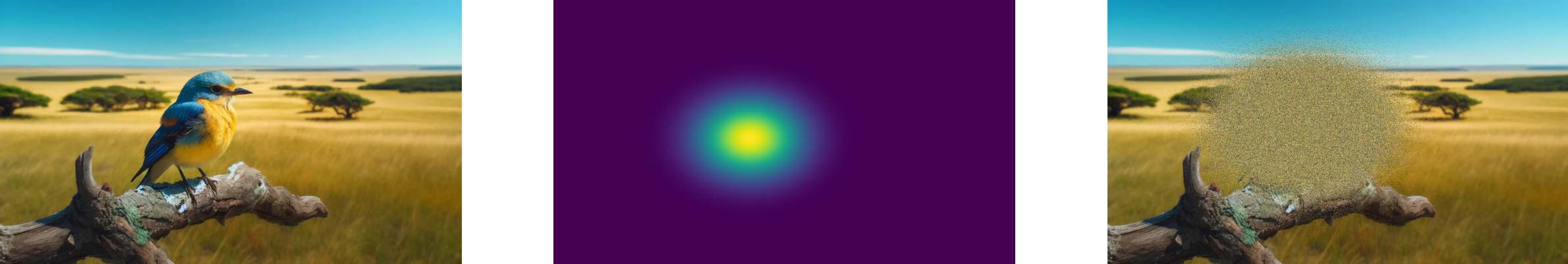}
    \caption{Motivating example of the outlier synthesis pipeline. \textbf{Left:} An image $\mathbf{x} = \psi(\mathbf{x}_{\text{inv}}, \mathbf{e}) \in \mathcal{X}$ from the in-distribution dataset $\mathbb{D}_\text{in}$ is shown. \textbf{Middle:} A 2D distribution $\mathcal{G}_\text{oracle}$ is shown which signifies the presence of invariant feature in a smaller region of the image $\mathbf{x}$. \textbf{Right:} Corresponding outlier $\mathbf{x}^{\prime}$ is shown, which is formed by destroying the invariant feature $\mathbf{x}_\text{inv}$ of $\mathbf{x}$ through a perturbation function $\mathcal{P}_F$, having access to $\mathcal{G}_\text{oracle}$. \textit{Can we synthesize similar virtual outlier $\mathbf{x}'$ without the access of $\mathcal{G}_\text{oracle}$?}}
    \label{fig:main_fig}
\end{figure*}

Images generally consist of both \textit{invariant} and \textit{environmental} features. As shown in Figure \ref{fig:teaser}, when the correlations between environmental features (land and water) and corresponding target labels (landbird, waterbird) are high, neural networks can rely on spurious features to achieve high classification performance~\citep{beery2018recognition,Sagawa2020Distributionally}. It can cause the model to incorrectly make high-confidence predictions for OOD samples with similar environmental features but different semantic content. Moreover, in fine-grained classification settings, the degree of distinction between ID and OOD samples may be as subtle as that between different ID classes. As illustrated in Figure \ref{fig:teaser}, fine-grained OOD for the Waterbirds ID dataset may be ``hen", which differs from ID samples based on subtle fine-grained attributes (Also, see \ref{sec:fine_grained}). Moreover, the overlap of high-level feature sets between fine-grained OOD and ID data complicates the detection of the former. Real-world scenarios frequently involve either spurious or fine-grained settings as ID and OOD are often captured under similar conditions during deployment, highlighting the importance of study under such settings.

A significant majority of OOD detection studies, including recent ones~\citep{du2023dream,liu2024can,doorenbos2024nonlinearoutliersynthesisoutofdistribution,liang2025revisiting,fang2024kpcaood,yang2025oodd,Karunanayake_2025_WACV}, restrict their studies to conventional cases. While few works~\cite{mixoe23wacv,techapanurak2021practical,perera2019deep,shinohara2025logit} study fine-grained OOD detection, they often require the curation of diverse outliers non-overlapping with ID data~\citep{yao2024outofdistribution,jiang2024dos,zhu2023diversified,fukuda2024taylor,bai2023feed}. Some recent works~\citep{du2023dream,liu2024can,doorenbos2024nonlinearoutliersynthesisoutofdistribution,Kwon_2023_BMVC,chen2024fodfom,wahd2024deep,sun2024clipdrivenoutlierssynthesisfewshot,liu2024diffusionbased,Ansari_2025_ICCV} use foundation models to synthesize the outliers in image space. Such approach can be computationally intensive, often requiring multiple steps and careful prompting to curate outliers. The reliance on domain knowledge of foundation model limits its applicability in highly novel scenarios.

On the other hand, Ming \etal~\citep{ming2021impact} and Zhang \etal~\citep{zhang2023robustness} have explored the detrimental effect of spurious correlation on OOD detection, but studies addressing this issue (with virtual outliers) leveraging only training samples remain relatively scarce. A few notable works such as Kirby~\citep{kim2023key} and BackMix~\citep{wang2025backmix} propose to use background image features utilizing inpainting procedure while OEST~\citep{wang2023out} utilizes explicit data augmentations. In this work however, we take a more direct simplistic approach -- we add gradient attribution values to ID inputs to disrupt invariant features while amplifying true-class logit, thereby synthesizing challenging near-manifold virtual outliers.

To summarize, in this work, we propose a unified \textit{\textbf{A}pproach to \textbf{S}purious, fine-grained and \textbf{C}onventional OOD Detection} (dubbed \textbf{\ours}). \ours~consists of: \circled{1} outlier synthesis pipeline and \circled{2} virtual outlier exposure (OE) training pipeline. To synthesize virtual outliers, we perturb invariant features while preserving environmental features. We identify invariant features with the pixel attribution method using the model being learned. Second, we formulate a joint training objective that incentivizes the ID classification and the predictive uncertainty toward the synthesized outliers. To facilitate the joint objective, we employ constrained optimization by leveraging standardized feature representation. Our contributions are:
\begin{itemize}
    \item We propose a novel OE training approach leveraging standardized feature representation, along with an improved variant of posthoc method ODIN~\citep{odin18iclr}.
    \item To the best of our knowledge, we are the first to empirically demonstrate that adding gradient attribution values to ID samples synthesizes effective outliers, whereas subtracting these values does not. We also introduce invariant pixel shuffling as a strong outlier synthesis baseline.
    \item We empirically reveal superiority of z-score over $L_2$ normalization in feature representation for training the OOD detection model.
\end{itemize}
\section{Preliminaries}
\label{sec:formatting}
\noindent\textbf{Background}: We consider supervised multi-class classification setup. Let $\mathcal{X}_{\text{inv}} \in \mathbb{R}^v$ denote invariant image space, where each invariant feature $\mathbf{x}_{\text{inv}} \in \mathcal{X}_{\text{inv}}$ is essential for class recognition. Let $\mathcal{Y} = \{1, 2, \ldots, C\}$ be label space consisting of $C$ predefined classes with each label $y \in \mathcal{Y}$ associated with an invariant feature $\mathbf{x}_{\text{inv}}$. Let $\mathbf{y}$ be the one-hot vector of $y$. Let $\mathcal{E} \in \mathbb{R}^t$ denote environment space comprising $o$ distinct environments $\{e_1, e_2, \ldots, e_o\}$. Input space $\mathcal{X} \in \mathbb{R}^{v+t}$ is defined such that each input $\mathbf{x} \in \mathcal{X}$ is a function $\psi$ of $\mathbf{x}_{\text{inv}} \in \mathcal{X}_{\text{inv}}$ and $\mathbf{e} \in \mathcal{E}$, i.e., $\mathbf{x} := \psi(\mathbf{x}_{\text{inv}}, \mathbf{e})$, with $\mathbf{e}$ providing non-essential contextual cues. The training dataset $\mathbb{D}_{\text{train}} = \{(\mathbf{x}, \mathbf{y})_i \mid i = 1, 2, \ldots, N\}$ consists of $N$ i.i.d. samples from distribution $P(\mathcal{X}, \mathcal{Y})$. A feature extractor $\phi_\gamma: \mathcal{X} \rightarrow \mathbb{R}^m$ maps input $\mathbf{x} \in \mathcal{X}$ to a feature $\mathbf{h} \in \mathcal{H}$ in feature space $\mathcal{H} \in \mathbb{R}^m$, i.e., $\mathbf{h} := \phi_\gamma(\mathbf{x})$. A classifier $f_\theta: \mathbb{R}^m \rightarrow \mathbb{R}^C$ assigns logits $\mathbf{z} \in \mathbb{R}^C$ to $\mathbf{h}$, which are transformed into probabilities $\mathbf{p} = \rho(\mathbf{z}) \in \mathbb{R}^C$ using softmax function: $\rho(\mathbf{z})_j = \frac{\exp(z_j)}{\sum_{l=1}^C \exp(z_l)}, \forall j \in [1,C]$. The classification model $g = f_\theta \circ \phi_\gamma$ is traditionally optimized under the \textit{closed-world} assumption with empirical risk minimization using $\mathcal{L}$ loss function.: \( \min_{\phi_\gamma, f_\theta} \mathcal{L}(\rho(f_\theta(\phi_\gamma(\mathbf{x}))), \mathbf{y})\).

\noindent\textbf{OOD detection:} The deployment of model $g$ in open world (test distribution $\mathbb{D}_\text{test} = \{\mathbb{D}_\text{train}, \mathbb{D}_\text{out}\} = \{\mathbb{D}_\text{in}, \mathbb{D}_\text{out}\}$) 
violates the closed-world assumption ($\mathbb{D}_\text{test} = \mathbb{D}_\text{train}$), where $\mathbb{D}_\text{out}$ is OOD. OOD input $\mathbf{x}^{\prime} \in \mathbb{D}_\text{out}$ should be correctly identified to ensure the safe operation of model $g$. This is generally achieved through a scoring function $s: \mathbb{R}^m \rightarrow \mathbb{R}$ (possibly incorporating $f_\theta$), that quantifies the alignment of input $\mathbf{x}_\text{test}$ with $\mathbb{D}_\text{in}$ via the score $s(\phi_\gamma(\mathbf{x}_\text{test}))$. Specifically, if $s(\phi_\gamma(\mathbf{x}_\text{test})) \geq \beta$, 
it indicates $\mathbf{x}_\text{test} \in \mathbb{D}_\text{in}$. Conversely, if $s(\phi_\gamma(\mathbf{x}_\text{test})) < \beta$, it indicates $\mathbf{x}_\text{test} \in \mathbb{D}_\text{out}$. 
Here, $\beta$ represents a threshold chosen to have a higher true positive rate (e.g., 95\%) over the input space $\mathcal{X}$. If $\mathcal{E}^{\prime} \in \mathbb{R}^t$ and $\mathcal{X}_{\text{inv}}^{\prime} \in \mathbb{R}^v$ represent another environment and invariant input space respectively 
such that $\mathcal{X}_{\text{inv}} \cap \mathcal{X}_{\text{inv}}^{\prime} = \emptyset$ and
$\mathcal{E} \cap \mathcal{E}^{\prime} = \emptyset$, 
we can formalize three kinds of OOD inputs: an input $\mathbf{x}_\text{test}$ is known as \textit{conventional OOD} if $\mathbf{x}_\text{test} = \psi(\mathbf{x}'_{\text{inv}}, \mathbf{e}')$. It is known as \textit{spurious OOD} if $\mathbf{x}_\text{test} = \psi(\mathbf{x}^{\prime}_{\text{inv}}, \mathbf{e})$. In either case, $\mathbf{x}_\text{test}$ can be \textit{fine-grained OOD} if $\mathbf{x}^{\prime}_\text{inv} \sim \mathbf{x}_\text{inv}$.
\section{Method}
\label{sec:method}

In this section, we motivate our method with an example and then formulate our learning framework based on this motivation. We subsequently detail the outlier synthesis and virtual outlier exposure training. \\

\noindent\textbf{Motivation:} As depicted in Figure~\ref{fig:main_fig} (left), we analyze an image $\mathbf{x} = \psi(\mathbf{x}_{\text{inv}}, \mathbf{e}) \in \mathbb{D}_\text{in}$ consisting of invariant feature (bird) $\mathbf{x}_{\text{inv}}$ and environmental feature (land) $\mathbf{e}$. Only a smaller portion contains the invariant feature $\mathbf{x}_{\text{inv}}$ necessary for class recognition, while the remainder comprises non-essential environmental features $\mathbf{e}$. \textit{Can we synthesize challenging outlier $\mathbf{x}^{\prime}$ from $\mathbf{x}$ by perturbing $\mathbf{x}_{\text{inv}}$ while retaining $\mathbf{e}$?} Let $\mathcal{G}_\text{oracle}$ denote an oracle 2D distribution indicating the presence of invariant feature $\mathbf{x}_\text{inv}$ in $\mathbf{x}$. Consider a transformation $\psi^{-1}_{\mathcal{G}_\text{oracle}}$, with access to $\mathcal{G}_\text{oracle}$, that decomposes $\mathbf{x}$ i.e. $\psi^{-1}_{\mathcal{G}_\text{oracle}}(\mathbf{x}) \rightarrow [\mathbf{x}_{\text{inv}}, \mathbf{e}]$. Consider a perturbation function $\mathcal{P}_F$ that disrupts the semantics of $\mathbf{x}_{\text{inv}}$, yielding $\mathbf{e}^{\prime}$ such that $\mathcal{P}_F([\mathbf{x}_{\text{inv}}, \mathbf{e}]) = [\mathbf{e}^{\prime}, \mathbf{e}]$. Using these transformations, we can synthesize an outlier $\mathbf{x}' = \psi(\mathcal{P}_F(\psi^{-1}_{\mathcal{G}_\text{oracle}}(\mathbf{x})))$. \textit{In the absence of $\mathcal{G}_\text{oracle}$, can we approximate it with $\mathcal{G}$ for each $\mathbf{x} \in \mathcal{X}$ to synthesize outlier $\mathbf{x}^{\prime} \in \mathbb{D}_\text{out}$?} Training the network to enhance predictive uncertainty towards these challenging outliers improves the model's uncertainty towards OOD.\\

\noindent\textbf{Learning framework:} With the assumption of access to synthesized virtual $\mathbb{D}_\text{out}$, our learning framework is designed to optimize the parameters $\theta$ (of $f_\theta$) and $\gamma$ (of $\phi_\gamma$) of a classification model $g$, simultaneously focusing on ID classification accuracy and uncertainty on OOD inputs. We define the total loss function, $\mathcal{L}_\text{total}$ as:
\begin{equation}
\label{eq:obj}
\begin{gathered}
    \rightarrow \arg\min_{\theta, \gamma} \underbrace{\mathcal{L}_\text{ID}\left(f_\theta \left( \phi_\gamma(\mathbb{D}_{\text{in}}) \right)\right)}_{\text{ID classification error}} + \underbrace{\mathcal{L}_\text{OOD}\left(f_\theta \left( \phi_\gamma(\mathbb{D}_{\text{out}}) \right)\right)}_{\text{Uncerntainty error}}
\end{gathered}
\end{equation}
We use cross-entropy loss $\mathcal{L}_\text{CE}$ for ID classification loss $\mathcal{L}_\text{ID}$ and KL divergence loss $\mathcal{L}_\text{KL}$ between virtual $\mathbb{D}_{\text{out}}$ and uniform distribution $\mathcal{U}$ for uncertainty loss $\mathcal{L}_\text{OOD}$.
\subsection{Image-based Outlier Synthesis}
\label{sec:outlier_synthesis}
\begin{figure}[t]
  \centering
  \includegraphics[width=0.95\linewidth]{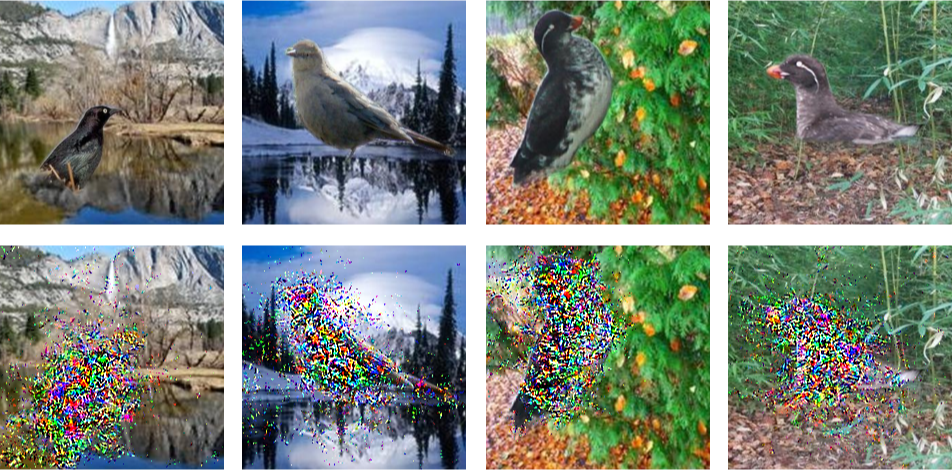}
  \caption{\textbf{Top row}: In-distribution images from the Waterbirds dataset. The first two images show waterbirds in water backgrounds, while the last two show landbirds in land backgrounds. \textbf{Bottom row}: Synthesized virtual outliers corresponding to the images in the top row at the latter stage of training.}
  \label{fig:outlier_synthesis}
\end{figure}
We synthesize virtual outliers from input space $\mathcal{X}$ by approximately perturbing the invariant features $\mathbf{x}_{\text{inv}}$ while preserving the environmental features $\mathbf{e}$ of image $\mathbf{x}$. In the interpretability literature, several methods~\cite{baehrens2010explain,smilkov2017smoothgrad,scott2017unified,wang2023counterfactual,Woerl_2023_CVPR} have been proposed to compute saliency map that quantifies the importance of each pixel. A straightforward approach to computing it involves calculating derivative (i.e. gradient) $\mathbf{G}$ of the logit value of true class ($\mathbf{z}_{c}$) with respect to the input image $\mathbf{x}$:
\begin{equation}
\mathbf{G} = \frac{\partial \mathbf{z}_{c}}{\partial \mathbf{x}}
\label{eq:gradient}
\end{equation}
For an input $\mathbf{x}' = \mathbf{x} + \alpha \cdot \mathbf{G}$, the model $g$ exhibits an increase in logit value of true class compared to the original input $\mathbf{x}$. Since input $\mathbf{x}'$ (with sufficiently high $\alpha$) has its invariant features destroyed (rendering it an outlier), the model should ideally express uncertainty. On the other hand, an increase in the logit value of the true class (roughly speaking) suggests that $\mathbf{x}'$ can serve as a challenging outlier.

We observe similar empirical effects using gradients of either logits or softmax probabilities for outlier synthesis (see \cref{appendix:sec:gradient_logit_softmax}). Since $\mathbf{G}$ assigns larger magnitudes to invariant pixels and smaller ones to environmental pixels, adding $\mathbf{G}$ to $\mathbf{x}$ disproportionately degrades invariant features while minimally impacting environmental features. \textit{Consequently, it effectively perturbs invariant features while preserving environmental features.} $\mathbf{G}$ can be sparsified by masking out the low-magnitude regions. Consider an image $\mathbf{x}$ consisting of $p_\text{inv}\%$ of pixels which are invariant pixels. Let $|\mathbf{G}|^{(100 - p_\text{inv})\%}$ denote the $(100 - p_\text{inv})^\text{th}$ percentile of $|\mathbf{G}|$. The gradient $\mathbf{G}$ with suppressed environment features can be expressed as:
\[
\mathbf{G}_{\text{inv}}^{j} = \begin{cases}
\mathbf{G}^{j}, & \text{if } |\mathbf{G}|^{j} \geq |\mathbf{G}|^{(100 - p_\text{inv})\%} \\
0, & \text{if } |\mathbf{G}|^{j} < |\mathbf{G}|^{(100 - p_\text{inv})\%}
\end{cases}
\]
We compute $\mathbf{G}$ with the model being learned. In highly spurious settings, using $\mathbf{x}' = \mathbf{x} + \alpha \cdot \mathbf{G}_\text{inv}$ better preserves environmental features. The examples of synthesized outliers depicted in Figure \ref{fig:outlier_synthesis} indeed show invariant features of the images being altered. Inspired by such perturbation, we propose improved variant of ODIN~\citep{odin18iclr}, \textbf{invariant-ODIN (i-ODIN)} (See \cref{appendix:sec:iodin}). 

% \texttt{shuffle} perturbation for virtual outlier synthesis. 
\noindent Additionally, we also propose a novel (to the best of our knowledge) way of synthesizing virtual outlier by shuffling invariant pixels $\mathbf{x}_\text{inv}$ of ID sample (See \cref{appendix:sec:additional_details}). If \texttt{shuffle} denotes pixel-shuffling operation, virtual outliers could be synthesized as:
\[
\mathbf{x}' = \psi(~\texttt{shuffle} (\mathbf{x}_\text{inv}),~\mathbf{e}~)
\]
% See \cref{sec:additional_details} for \texttt{shuffle} perturbation.

\subsection{Virtual Outlier Exposure (OE) Training:}
We propose to train model $g$ by simultaneously optimizing ID classification and predictive uncertainty towards the outliers with the learning framework of Equation \ref{eq:obj}.

\begin{proposition} The derivative of $\mathcal{L}_\text{total} = \mathcal{L}_\text{CE} + \mathcal{L}_\text{KL}$ w.r.t $k^{th}$ logit is $(\vp_{k}  - \vy_k) +   \left(\vp^{\prime}_{k} - 1/C \right)$. \label{proposition:gradient_analysis}
\end{proposition}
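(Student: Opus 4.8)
The plan is to exploit linearity of differentiation: since $\mathcal{L}_\text{total} = \mathcal{L}_\text{CE} + \mathcal{L}_\text{KL}$, the derivative with respect to any logit splits as the sum of the derivatives of the two terms, so I would handle each summand separately and add the results at the end. The only computational ingredient needed throughout is the softmax Jacobian, which I would establish first by direct application of the quotient rule to $\rho(\mathbf{z})_j = e^{z_j}/\sum_l e^{z_l}$, yielding $\partial p_j/\partial z_k = p_j(\delta_{jk} - p_k)$, where $\delta_{jk}$ is the Kronecker delta.

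For the classification term, write $\mathcal{L}_\text{CE} = -\sum_{j=1}^C y_j \log p_j$ with $\mathbf{p} = \rho(\mathbf{z})$. Applying the chain rule and substituting the Jacobian, the factors of $p_j$ cancel against $1/p_j$, leaving $\partial \mathcal{L}_\text{CE}/\partial z_k = -\sum_j y_j(\delta_{jk} - p_k) = -y_k + p_k \sum_j y_j$. Here I would invoke the fact that $\mathbf{y}$ is a one-hot vector, so $\sum_j y_j = 1$, collapsing the expression to the familiar $\vp_k - \vy_k$.

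For the uncertainty term, I would first fix the convention that yields the clean gradient: $\mathcal{L}_\text{KL}$ is the divergence of the outlier output $\mathbf{p}'$ from the uniform reference $\mathcal{U}$ (equivalently, cross-entropy to the uniform target), so that up to a constant independent of the logits it equals $-\frac{1}{C}\sum_j \log p'_j$. The same chain-rule computation then goes through verbatim with the uniform weights $1/C$ playing the role of the $y_j$; since $\sum_j (1/C) = 1$ as well, I obtain $\partial \mathcal{L}_\text{KL}/\partial z'_k = \vp'_k - 1/C$. Adding the two contributions gives the claimed $(\vp_{k} - \vy_k) + (\vp'_{k} - 1/C)$.

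The computations themselves are routine softmax bookkeeping; the one point deserving care is conceptual rather than technical, and it is where I expect the only friction. The two terms act on different streams — $\mathcal{L}_\text{CE}$ on the ID logits producing $\mathbf{p}$ and $\mathcal{L}_\text{KL}$ on the outlier logits producing $\mathbf{p}'$ — so the statement must be read as describing the additive contributions of each term to the overall logit gradient rather than two derivatives with respect to the same variable. I would also flag the direction of the divergence explicitly, since the clean form $\vp'_{k} - 1/C$ arises specifically when the uniform distribution sits in the reference argument; the reverse divergence would instead produce an entropy-weighted gradient of the form $p'_k(\log p'_k - \sum_j p'_j \log p'_j)$ and would not match the stated result.
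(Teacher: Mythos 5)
Your proof is correct, and it is actually more careful than the paper's own argument, though it takes a slightly different computational route. For the cross-entropy term the paper substitutes the softmax into the loss and expands the logarithm to get $\mathcal{L}_\text{CE} = -\sum_l \mathbf{y}_l \mathbf{z}_l + \log\bigl(\sum_r \exp(\mathbf{z}_r)\bigr)$, then differentiates the log-sum-exp directly; you instead derive the softmax Jacobian $\partial p_j/\partial z_k = p_j(\delta_{jk}-p_k)$ and push it through the chain rule. Both yield $\vp_k - \vy_k$; the paper's route avoids the Jacobian entirely, while yours has the advantage that the same Jacobian is reused verbatim for the second term. Where your write-up genuinely improves on the paper is the KL term: the paper writes $\mathcal{L}_\text{KL} = \sum_l \mathbf{p}'_l\log\mathbf{p}'_l - \sum_l \mathbf{p}'_l\log(1/C)$, which is $\KL(\mathbf{p}'\,\|\,\mathcal{U})$, and then asserts without intermediate steps that its logit derivative is $\mathbf{p}'_k - 1/C$. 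As you correctly observe, the derivative of that expression is the entropy-weighted form $\mathbf{p}'_k\bigl(\log\mathbf{p}'_k - \sum_j \mathbf{p}'_j\log\mathbf{p}'_j\bigr)$; the clean result $\mathbf{p}'_k - 1/C$ only arises when the loss is taken as $\KL(\mathcal{U}\,\|\,\mathbf{p}')$, equivalently the cross-entropy $-\tfrac{1}{C}\sum_j\log\mathbf{p}'_j$ to the uniform target, which is the convention standard in outlier-exposure training. Your explicit flagging of this direction, and of the fact that the two summands act on different logit streams ($\mathbf{z}$ versus $\mathbf{z}'$) so the statement must be read as two additive contributions rather than one derivative, resolves ambiguities that the paper's proof leaves implicit.
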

\begin{proof} 
The cross-entropy loss $\mathcal{L}_\text{CE}$ is given by:
$$\mathcal{L}_\text{CE} = -\sum_{l=1}^C \mathbf{y}_l \log \mathbf{p}_l, ~~ \mathbf{p}_l = \rho(\mathbf{z}_l) = \frac{\exp(\mathbf{z}_l)}{\sum_{r=1}^C \exp(\mathbf{z}_r)}.$$
To compute $\frac{\partial \mathcal{L}_\text{CE}}{\partial \mathbf{z}_k}$, we proceed by substituting the $\mathbf{p}_l$ in $\mathcal{L}_\text{CE}$ and performing $\log$ expansion.
$$\mathcal{L}_\text{CE} = -\sum_{l=1}^C \mathbf{y}_l \mathbf{z}_l + \log \left( \sum_{r=1}^C \exp(\mathbf{z}_r) \right) \notag $$
$$\text{Hence,}~~\frac{\partial \mathcal{L}_\text{CE}}{\partial \mathbf{z}_k} = \left( \mathbf{p}_k - \mathbf{y}_k \right) \notag$$
The Kullback-Leibler divergence loss $\mathcal{L}_\text{KL}$ is given by:
$$\mathcal{L}_\text{KL} = \sum_{l=1}^C \mathbf{p}^{\prime}_l \log (\mathbf{p}^{\prime}_l) - \sum_{l=1}^C \mathbf{p}^{\prime}_l \log \left( \frac{1}{C} \right)$$
$$\text{Hence,}~~\frac{\partial \mathcal{L}_\text{KL}}{\partial \mathbf{z}^{\prime}_k} = \left( \mathbf{p}^{\prime}_k - {1}/{C} \right)$$
\[\text{So,}~~\frac{\partial \mathcal{L}_\text{CE}}{\partial \mathbf{z}_k} + \frac{\partial \mathcal{L}_\text{KL}}{\partial \mathbf{z}^{\prime}_k} = (\mathbf{p}_k - \mathbf{y}_k) + (\mathbf{p}^{\prime}_k - {1}/{C})
\]
\end{proof}
During the initial phase of training, model $g$ lacks a comprehensive understanding of ID features. As we rely on the model for outlier synthesis, it may fail to synthesize true outliers ($\mathbf{x}^{\prime} = \mathbf{x}$) early on. From the proposition \ref{proposition:gradient_analysis}, \textit{ID gradient} $(\mathbf{p}_k - \mathbf{y}_k)$ should dominate \textit{OOD gradient} $(\mathbf{p}^{\prime}_k - {1}/{C})$ to reliably learn ID discrimination as effective outlier synthesis relies on accurately understanding ID features. As the model gets better on ID discrimination, the overconfident nature of neural networks can often lead to high-confidence predictions for both ID and OOD (high $\mathbf{p}^{\prime}_k$ and $\mathbf{p}_k)$, implying $|\mathbf{p}_{k}  - \mathbf{y}_k| < |\mathbf{p}^{\prime}_{k} - 1 / C|$. Though the nature of outlier synthesis determines $\mathbf{p}^{\prime}_{k}$, it is desirable to avoid high-confidence predictions on challenging outliers.
\begin{proposition}
The norm of a standardized feature \(\mathbf{\Tilde{h}} \in \mathbb{R}^m\) with \( (\mu, \sigma) = (0, \sigma) \) is constrained by the upper bound \(\sigma \cdot \sqrt{m-1}\).
\label{proposition:norm}
\end{proposition}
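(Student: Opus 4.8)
The plan is to read the two statistics $(\mu,\sigma)=(0,\sigma)$ as a pair of scalar constraints on the coordinates of $\mathbf{\tilde{h}}=(\tilde h_1,\dots,\tilde h_m)$, and then to observe that once the mean is pinned to zero, the sum of squared deviations that defines the variance is literally the squared Euclidean norm. In this view the entire statement reduces to a one-line algebraic identity combined with a choice of variance convention, so no optimization or inequality machinery is really needed.

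First I would write the (unbiased) sample variance of the coordinates, $\sigma^2=\frac{1}{m-1}\sum_{i=1}^m(\tilde h_i-\mu)^2$. Next I would substitute the zero-mean hypothesis $\mu=0$: this is the pivotal simplification, because the deviations $\tilde h_i-\mu$ collapse to the raw coordinates $\tilde h_i$, so $\sum_{i=1}^m(\tilde h_i-\mu)^2=\sum_{i=1}^m \tilde h_i^2=\lVert\mathbf{\tilde{h}}\rVert^2$. Combining the two gives $\sigma^2=\frac{1}{m-1}\lVert\mathbf{\tilde{h}}\rVert^2$, which I would rearrange to $\lVert\mathbf{\tilde{h}}\rVert^2=(m-1)\sigma^2$ and then take the square root to obtain $\lVert\mathbf{\tilde{h}}\rVert=\sigma\sqrt{m-1}$.

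The one thing to get right — and essentially the only obstacle — is the variance convention: the factor $\sqrt{m-1}$ rather than $\sqrt{m}$ arises precisely from dividing by $m-1$ (Bessel's correction), which in turn reflects that the zero-mean constraint removes one degree of freedom. I would also note that the identity above is an equality, so $\sigma\sqrt{m-1}$ is a \emph{tight} bound; the statement is phrased as an upper bound because if standardization only guarantees a per-coordinate standard deviation of at most $\sigma$, the same computation yields $\lVert\mathbf{\tilde{h}}\rVert\le\sigma\sqrt{m-1}$ directly. This bounded feature norm is exactly what is needed to cap the attainable logits and thereby curb the over-confident predictions flagged in the discussion following Proposition~\ref{proposition:gradient_analysis}.
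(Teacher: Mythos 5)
Your proof is correct and rests on the same key identity as the paper's: the Bessel-corrected sample variance gives $\sum_{u}(\tilde h_u-\mu)^2=(m-1)\sigma^2$, which with $\mu=0$ is exactly $\|\mathbf{\tilde h}\|^2$. The only cosmetic difference is that you apply this identity directly to the output $\mathbf{\tilde h}$ (whose statistics are $(0,\sigma)$ by construction), whereas the paper expands $\tilde h_u=\sigma(h_u-\mu_{h})/\sigma_{h}$ and lets the $\sigma_{h}$ factors cancel; both yield the same tight equality $\|\mathbf{\tilde h}\|=\sigma\sqrt{m-1}$.
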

\begin{proof}
We begin by examining the square of the norm of standardized feature \(\mathbf{\Tilde{h}}\) of \(\mathbf{h}\) :
\begin{align}
    \|\mathbf{\Tilde{h}}\|^2 &= \sum_{u=1}^m \Tilde{h}_u^2 = \sum_{u=1}^m \left( \left( \frac{h_u - \mu_{h}}{\sigma_{h}} \right) \cdot \sigma + \mu \right)^2 \nonumber \\
    \|\mathbf{\Tilde{h}}\|^2 &= \frac{\sigma^2}{\sigma_{h}^2} \sum_{u=1}^m (h_u - \mu_{h})^2 \label{first_eq}
\end{align}
From the definition of sample standard deviation, we have:
\begin{equation}
\resizebox{\linewidth}{!}{$
\sigma_{h}^2 = \frac{\sum_{u=1}^m (h_u - \mu_{h})^2}{m-1} \Rightarrow \sigma_{h}^2 \cdot (m - 1) = \sum_{u=1}^m (h_u - \mu_{h})^2
$}
\end{equation}
Substituting this equality into Equation \ref{first_eq}:
\[
\|\mathbf{\Tilde{h}}\|^2 = \frac{{\sigma}^2}{\sigma_{h}^2} \sum_{u=1}^m (h_u - \mu_{h})^2 = \frac{{\sigma}^2}{\sigma_{h}^2} \cdot \sigma_{h}^2 \cdot (m-1) = {\sigma}^2 \cdot \left( m-1 \right)
\]
\[
\text{Hence,}~\|\mathbf{\Tilde{h}}\| = \sigma \cdot \sqrt{m-1}
\]
% Hence, the norm of a standardized vector \(\mathbf{\Tilde{h}}\) with \( (\mu, \sigma) = (0, \sigma) \) is bounded by \( \sigma \cdot \sqrt{m-1}\).
\end{proof}

\noindent We hypothesize that effective joint optimization of ID classification and outlier uncertainty requires mitigating overconfidence. The proposition \ref{proposition:norm} states that the norm of the standardized feature $\tilde{\mathbf{h}} = \mathcal{S}_h\left(\mathbf{h}\right) = \left(\left(\frac{\mathbf{h} - \mu_{\mathbf{h}} }{\sigma_{\mathbf{h}}}\right) \cdot \sigma \right) (\mu = 0)$ is constrained by the upper bound \(\sigma \cdot \sqrt{m-1}\). We hypothesize that employing constrained optimization by using standardized feature $\tilde{\mathbf{h}}$ instead of raw feature ${\mathbf{h}}$ minimizes overconfidence. Indeed, prior works~\cite{wei2022mitigating,regmi2023t2fnorm} have shown the effectiveness of constrained optimization. Comparatively low values of $\mathbf{p}_{k}$ and $\mathbf{p}^{\prime}_{k}$ can often ensure $|\mathbf{p}_{k}  - \mathbf{y}_k| > |\mathbf{p}^{\prime}_{k} - 1 / C|$. This reduction in overconfidence can assist in maintaining the appropriate balance of \textit{ID gradient} and \textit{OOD gradient}. Furthermore, a hyperparameter $\lambda$ can be introduced to empirically achieve the desired balance in $\mathcal{L}_\text{total}$ such that, $ \mathcal{L}_\text{total} = \mathcal{L}_{CE} + \lambda \cdot \mathcal{L}_{KL}$. The training-time regularization objective \ref{eq:obj} can be expressed as:
\begin{equation}
\label{eq:final_obj}
\resizebox{\linewidth}{!}{$
\arg\min_{\theta, \gamma} \quad
\underbrace{
\mathcal{L}_\text{CE}\left(
f_\theta\left( \mathcal{S}_h\left( \phi_{\gamma}\left( \mathbf{x} \right) \right) \right), \mathbf{y}
\right)
}_{\text{ID classification error}} +\ \lambda \cdot \underbrace{
\mathcal{L}_\text{KL}\left(
f_\theta\left( \mathcal{S}_h\left( \phi_{\gamma}\left( \mathbf{x}^{\prime} \right) \right) \right), \mathcal{U}
\right)
}_{\text{Uncertainty error}}
$}
\end{equation}
We train our model by using this objective in \cref{eq:final_obj}.
\begin{table*}[t]
\centering
\adjustbox{max width=0.95\linewidth}{%
\begin{tabular}{l cccccc} % Method + 6 datasets (Aircraft, Car, Waterbirds, CelebA, CIFAR-100, CIFAR-10)
\toprule
\multirow{3}{*}{Method}
& \multicolumn{2}{c}{\textbf{Fine-grained OOD Detection}} & \multicolumn{2}{c}{\textbf{Spurious OOD Detection}} & \multicolumn{2}{c}{\textbf{Conventional OOD Detection}} \\
\cmidrule(lr){2-3} \cmidrule(lr){4-5} \cmidrule(lr){6-7}
& \multicolumn{1}{c}{\textbf{Aircraft}} & \multicolumn{1}{c}{\textbf{Car}} & \multicolumn{1}{c}{\textbf{Waterbirds}} & \multicolumn{1}{c}{\textbf{CelebA}} & \multicolumn{1}{c}{\textbf{CIFAR-100}} & \multicolumn{1}{c}{\textbf{CIFAR-10}} \\
% \cmidrule(lr){2-2} \cmidrule(lr){3-3} \cmidrule(lr){4-4} \cmidrule(lr){5-5} \cmidrule(lr){6-6} \cmidrule(lr){7-7}
% & {FPR@95$\downarrow$ / AUROC$\uparrow$} & {FPR@95$\downarrow$ / AUROC$\uparrow$} & {FPR@95$\downarrow$ / AUROC$\uparrow$} & {FPR@95$\downarrow$ / AUROC$\uparrow$} & {FPR@95$\downarrow$ / AUROC$\uparrow$} & {FPR@95$\downarrow$ / AUROC$\uparrow$} \\
\midrule
MSP & 63.79{\tiny$\pm$5.71} / 80.53{\tiny$\pm$1.61} & 58.17{\tiny$\pm$0.99} / 87.12{\tiny$\pm$0.16} & 60.41{\tiny$\pm$1.52} / 77.18{\tiny$\pm$0.66} & 56.00{\tiny$\pm$2.73} / 82.53{\tiny$\pm$1.19} & 57.49{\tiny$\pm$0.85} / 78.48{\tiny$\pm$0.42} & 29.94{\tiny$\pm$1.32} / 91.06{\tiny$\pm$0.35}\\
TempScale & 61.72{\tiny$\pm$5.31} / 82.07{\tiny$\pm$1.62} & 57.47{\tiny$\pm$1.35} / 87.74{\tiny$\pm$0.20} & 60.37{\tiny$\pm$1.51} / 77.19{\tiny$\pm$0.66} & 55.50{\tiny$\pm$2.62} / 82.62{\tiny$\pm$1.18} & 56.58{\tiny$\pm$0.99} / 79.55{\tiny$\pm$0.51} & 31.38{\tiny$\pm$1.78} / 91.33{\tiny$\pm$0.42} \\
% ODIN & 54.55{\tiny$\pm$5.54} / 86.23{\tiny$\pm$1.47} & 64.78{\tiny$\pm$1.28} / 86.41{\tiny$\pm$0.29} & 100.00{\tiny$\pm$0.00} / 84.88{\tiny$\pm$1.48} & 100.00{\tiny$\pm$0.00} / 71.20{\tiny$\pm$2.37} & 54.49{\tiny$\pm$0.72} / 81.44{\tiny$\pm$0.20} & 51.55{\tiny$\pm$4.03} / 89.29{\tiny$\pm$0.53} \\
MDS & 77.41{\tiny$\pm$0.74} / 66.51{\tiny$\pm$0.37} & 67.48{\tiny$\pm$0.61} / 70.45{\tiny$\pm$0.39} & 93.60{\tiny$\pm$1.42} / 73.82{\tiny$\pm$0.91} & 90.91{\tiny$\pm$2.49} / 59.40{\tiny$\pm$3.68} & 73.39{\tiny$\pm$1.69} / 68.32{\tiny$\pm$1.24} & 32.28{\tiny$\pm$3.97} / 89.75{\tiny$\pm$1.60} \\
MDSEns & 94.70{\tiny$\pm$0.81} / 49.67{\tiny$\pm$0.39} & 95.36{\tiny$\pm$0.12} / 49.75{\tiny$\pm$0.03} & 62.19{\tiny$\pm$0.60} / 84.70{\tiny$\pm$0.07} & 92.71{\tiny$\pm$0.46} / 56.38{\tiny$\pm$0.57}  & 63.04{\tiny$\pm$0.16} / 71.05{\tiny$\pm$0.33} & 55.59{\tiny$\pm$2.36} / 77.92{\tiny$\pm$0.57} \\
RMDS & 58.40{\tiny$\pm$4.11} / \underline{86.80{\tiny$\pm$0.44}} & 51.46{\tiny$\pm$1.20} / 88.25{\tiny$\pm$0.15} & 84.06{\tiny$\pm$4.20} / 71.61{\tiny$\pm$1.44} & 88.36{\tiny$\pm$1.89} / 72.04{\tiny$\pm$1.69} & 52.95{\tiny$\pm$0.13} / 82.50{\tiny$\pm$0.10} & 24.34{\tiny$\pm$0.74} / 92.55{\tiny$\pm$0.22} \\
Gram & 96.01{\tiny$\pm$0.25} / 43.53{\tiny$\pm$1.42} & 92.19{\tiny$\pm$0.08} / 55.74{\tiny$\pm$0.68} & 92.71{\tiny$\pm$0.44} / 66.95{\tiny$\pm$1.87} & 72.17{\tiny$\pm$3.63} / 68.46{\tiny$\pm$2.20} & 71.55{\tiny$\pm$1.90} / 61.48{\tiny$\pm$1.03} & 77.65{\tiny$\pm$5.39} / 64.28{\tiny$\pm$2.69} \\
EBO & 51.22{\tiny$\pm$3.88} / 85.80{\tiny$\pm$1.25} & 59.36{\tiny$\pm$2.64} / 86.83{\tiny$\pm$0.35} & 59.17{\tiny$\pm$1.43} / 77.66{\tiny$\pm$0.79} & 55.19{\tiny$\pm$3.10} / 82.50{\tiny$\pm$1.10} & 54.76{\tiny$\pm$1.42} / 80.84{\tiny$\pm$0.60} & 38.82{\tiny$\pm$4.25} / 91.63{\tiny$\pm$0.76}  \\
GradNorm & 90.49{\tiny$\pm$1.67} / 70.78{\tiny$\pm$1.37} & 86.86{\tiny$\pm$0.70} / 72.13{\tiny$\pm$0.37} & 72.16{\tiny$\pm$2.79} / 79.79{\tiny$\pm$1.47} & 62.57{\tiny$\pm$3.88} / 77.16{\tiny$\pm$2.26}  & 84.21{\tiny$\pm$2.34} / 69.60{\tiny$\pm$1.23} & 91.07{\tiny$\pm$1.76} / 59.41{\tiny$\pm$2.67} \\
ReAct & 60.11{\tiny$\pm$6.33} / 83.62{\tiny$\pm$1.22} & {45.46{\tiny$\pm$2.65} / 88.89{\tiny$\pm$0.29}} & 56.37{\tiny$\pm$1.95} / 78.80{\tiny$\pm$0.89} & 55.14{\tiny$\pm$2.85} / 82.77{\tiny$\pm$0.92} & 52.25{\tiny$\pm$1.48} / 81.46{\tiny$\pm$0.55} & 41.09{\tiny$\pm$6.33} / 91.06{\tiny$\pm$1.04} \\
MLS & 52.00{\tiny$\pm$3.80} / 85.99{\tiny$\pm$1.25} & 59.19{\tiny$\pm$2.48} / 87.42{\tiny$\pm$0.35} & 59.17{\tiny$\pm$1.43} / 77.69{\tiny$\pm$0.68} & 55.19{\tiny$\pm$3.10} / 82.52{\tiny$\pm$1.11} & 54.92{\tiny$\pm$1.35} / 80.65{\tiny$\pm$0.57} & 38.79{\tiny$\pm$4.19} / 91.52{\tiny$\pm$0.73} \\
KLM & 82.29{\tiny$\pm$4.20} / 80.74{\tiny$\pm$2.01} & 66.45{\tiny$\pm$1.43} / 85.50{\tiny$\pm$0.15} & 97.68{\tiny$\pm$0.09} / 46.97{\tiny$\pm$0.47} & 98.61{\tiny$\pm$0.16} / 53.12{\tiny$\pm$0.97} & 74.15{\tiny$\pm$1.15} / 76.46{\tiny$\pm$0.50} & 18.16{\tiny$\pm$2.10} / 94.69{\tiny$\pm$0.87} \\
VIM & 57.51{\tiny$\pm$4.10} / 79.21{\tiny$\pm$0.94} & 54.80{\tiny$\pm$1.23} / 82.03{\tiny$\pm$0.16} & 39.66{\tiny$\pm$1.21} / 85.32{\tiny$\pm$0.63} & 58.25{\tiny$\pm$1.67} / 82.73{\tiny$\pm$0.29} & 49.92{\tiny$\pm$1.06} / 81.81{\tiny$\pm$0.76} & 24.19{\tiny$\pm$0.31} / 93.64{\tiny$\pm$0.16} \\
DICE & 70.79{\tiny$\pm$4.24} / 72.20{\tiny$\pm$3.31} & 72.27{\tiny$\pm$0.87} / 77.45{\tiny$\pm$0.31} & 56.40{\tiny$\pm$2.22} / 84.91{\tiny$\pm$1.55} & 50.52{\tiny$\pm$2.50} / 80.90{\tiny$\pm$2.15} & 54.55{\tiny$\pm$0.45} / 80.93{\tiny$\pm$0.30} & 53.31{\tiny$\pm$5.29} / 83.70{\tiny$\pm$2.16} \\
RankFeat & 62.00{\tiny$\pm$4.57} / 82.13{\tiny$\pm$1.74} & 87.45{\tiny$\pm$3.73} / 62.05{\tiny$\pm$3.57} & 70.47{\tiny$\pm$8.53} / 67.50{\tiny$\pm$6.04} & 74.48{\tiny$\pm$2.51} / 67.38{\tiny$\pm$3.39} & 67.90{\tiny$\pm$0.96} / 68.52{\tiny$\pm$1.32}  & 54.44{\tiny$\pm$9.49} / 77.48{\tiny$\pm$5.48} \\
ASH & 86.41{\tiny$\pm$3.76} / 73.98{\tiny$\pm$3.29} & 80.25{\tiny$\pm$4.34} / 74.73{\tiny$\pm$3.93} & 36.69{\tiny$\pm$1.16} / 87.03{\tiny$\pm$0.62} & 58.49{\tiny$\pm$4.15} / 80.98{\tiny$\pm$0.64} & 52.84{\tiny$\pm$1.20} / 82.26{\tiny$\pm$0.54}  & 70.55{\tiny$\pm$5.50} / 85.27{\tiny$\pm$2.05} \\
SHE & 78.90{\tiny$\pm$2.46} / 76.13{\tiny$\pm$1.45} & 86.78{\tiny$\pm$1.34} / 71.16{\tiny$\pm$0.64} & 66.03{\tiny$\pm$2.08} / 82.99{\tiny$\pm$1.82} & 55.19{\tiny$\pm$4.01} / 78.55{\tiny$\pm$2.02} & 62.08{\tiny$\pm$2.73} / 77.99{\tiny$\pm$1.09}  & 63.71{\tiny$\pm$5.22} / 86.18{\tiny$\pm$1.20} \\
GEN & \underline{50.81{\tiny$\pm$5.85}} / 86.41{\tiny$\pm$1.36} & 56.98{\tiny$\pm$2.40} / 88.05{\tiny$\pm$0.51} & 60.37{\tiny$\pm$1.51} / 77.19{\tiny$\pm$0.66} & 55.50{\tiny$\pm$2.62} / 82.62{\tiny$\pm$1.18} & 55.11{\tiny$\pm$1.65} / 80.48{\tiny$\pm$0.93}  & 32.49{\tiny$\pm$1.17} / 91.72{\tiny$\pm$0.57} \\
NNGuide & 52.23{\tiny$\pm$3.23} / 85.13{\tiny$\pm$1.42} & 61.12{\tiny$\pm$2.45} / 85.86{\tiny$\pm$0.34} & 53.69{\tiny$\pm$1.40} / 80.39{\tiny$\pm$0.75} & 52.32{\tiny$\pm$2.39} / 83.22{\tiny$\pm$1.02}  & 51.89{\tiny$\pm$1.35} / 82.33{\tiny$\pm$0.66} & 39.64{\tiny$\pm$3.91} / 91.39{\tiny$\pm$0.57} \\
Relation & 61.35{\tiny$\pm$5.78} / 83.05{\tiny$\pm$1.90} & 56.25{\tiny$\pm$1.30} / 86.77{\tiny$\pm$0.84} & \underline{31.71{\tiny$\pm$0.68}} / 88.35{\tiny$\pm$0.24} & 62.19{\tiny$\pm$2.65} / 81.78{\tiny$\pm$0.45} & 53.84{\tiny$\pm$0.32} / 81.60{\tiny$\pm$0.34} & 26.59{\tiny$\pm$0.64} / 92.49{\tiny$\pm$0.22} \\
SCALE & 90.49{\tiny$\pm$1.67} / 70.78{\tiny$\pm$1.37} & 79.78{\tiny$\pm$1.19} / 73.78{\tiny$\pm$0.35} & 36.49{\tiny$\pm$1.94} / {91.89{\tiny$\pm$0.66}} & 75.82{\tiny$\pm$3.25} / 72.58{\tiny$\pm$1.52} & 53.20{\tiny$\pm$1.27} / 81.97{\tiny$\pm$0.54} & 63.15{\tiny$\pm$5.95} / 87.38{\tiny$\pm$1.51} \\
FDBD & 58.46{\tiny$\pm$6.07} / 83.71{\tiny$\pm$1.50} & 50.30{\tiny$\pm$2.59} / 88.18{\tiny$\pm$0.26} & 50.07{\tiny$\pm$1.19} / 80.34{\tiny$\pm$0.57} & 56.07{\tiny$\pm$3.26} / 83.16{\tiny$\pm$0.50} & 51.66{\tiny$\pm$0.41} / 81.23{\tiny$\pm$0.41} & 23.00{\tiny$\pm$0.80} / 93.44{\tiny$\pm$0.26} \\ \midrule
ConfBranch & 91.24{\tiny$\pm$0.97} / 41.85{\tiny$\pm$0.70} & 80.63{\tiny$\pm$1.80} / 62.19{\tiny$\pm$0.76} & 52.09{\tiny$\pm$5.40} / 85.58{\tiny$\pm$2.42} & 57.02{\tiny$\pm$1.53} / 80.39{\tiny$\pm$0.52} & 77.97{\tiny$\pm$1.16} / 63.82{\tiny$\pm$2.02} & 21.40{\tiny$\pm$1.07} / 93.30{\tiny$\pm$0.50}\\
RotPred & 62.04{\tiny$\pm$1.30} / 81.98{\tiny$\pm$0.46} & 61.86{\tiny$\pm$2.18} / 85.00{\tiny$\pm$0.23} & 42.06{\tiny$\pm$1.55} / 85.32{\tiny$\pm$0.94} & 52.74{\tiny$\pm$2.52} / 83.26{\tiny$\pm$0.47} & \underline{35.57{\tiny$\pm$1.04}} / 88.09{\tiny$\pm$0.33} & \underline{12.80{\tiny$\pm$0.69}} / 96.36{\tiny$\pm$0.12} \\
G-ODIN & 58.43{\tiny$\pm$4.04} / 83.09{\tiny$\pm$0.06} & 64.34{\tiny$\pm$6.38} / 85.94{\tiny$\pm$1.08} & 58.65{\tiny$\pm$17.87} / 80.87{\tiny$\pm$4.18} & 78.17{\tiny$\pm$2.96} / 58.68{\tiny$\pm$1.56} & 37.03{\tiny$\pm$0.43} / \underline{88.49{\tiny$\pm$0.36}} & 20.02{\tiny$\pm$0.91} / 95.79{\tiny$\pm$0.19} \\
VOS & 57.31{\tiny$\pm$1.34} / 85.37{\tiny$\pm$0.27} & 63.16{\tiny$\pm$2.77} / 86.42{\tiny$\pm$0.42} & 57.66{\tiny$\pm$3.13} / 78.65{\tiny$\pm$1.26} & 54.67{\tiny$\pm$2.02} / 82.56{\tiny$\pm$0.65} & 53.28{\tiny$\pm$4.01} / 81.42{\tiny$\pm$1.95} & 38.08{\tiny$\pm$3.29} / 92.04{\tiny$\pm$0.52} \\
LogitNorm & 70.32{\tiny$\pm$5.73} / 79.16{\tiny$\pm$1.67} & 62.48{\tiny$\pm$1.90} / 84.92{\tiny$\pm$0.49} & 64.18{\tiny$\pm$4.19} / 73.87{\tiny$\pm$0.45} & 100.00{\tiny$\pm$0.00} / 81.61{\tiny$\pm$1.99}  & 52.42{\tiny$\pm$1.67} / 80.70{\tiny$\pm$1.15} & 13.98{\tiny$\pm$1.33} / \underline{96.54{\tiny$\pm$0.45}} \\
CIDER & 88.99{\tiny$\pm$1.88} / 54.08{\tiny$\pm$1.50} & 88.95{\tiny$\pm$1.07} / 54.30{\tiny$\pm$1.31} & 49.34{\tiny$\pm$17.95} / 84.61{\tiny$\pm$7.94} & \underline{50.38{\tiny$\pm$4.98} / 85.35{\tiny$\pm$0.92}} & 61.67{\tiny$\pm$1.69} / 77.22{\tiny$\pm$0.89}  & 20.23{\tiny$\pm$2.90} / 94.49{\tiny$\pm$1.18} \\ 
NPOS & 68.71{\tiny$\pm$10.57} / 72.02{\tiny$\pm$6.07} & 89.09{\tiny$\pm$0.53} / 55.95{\tiny$\pm$0.64} & 42.88{\tiny$\pm$5.95} / 89.01{\tiny$\pm$1.83} & 54.35{\tiny$\pm$5.25} / 82.06{\tiny$\pm$4.26} & 52.09{\tiny$\pm$2.02} / 84.06{\tiny$\pm$0.99} & 22.65{\tiny$\pm$1.10} / 93.51{\tiny$\pm$0.18} \\
OE & 56.82{\tiny$\pm$2.01} / 82.86{\tiny$\pm$0.48} & 60.30{\tiny$\pm$3.04} / 85.60{\tiny$\pm$1.73} & 33.26{\tiny$\pm$6.00} / \underline{92.63{\tiny$\pm$1.14}} & 100.0{\tiny$\pm$0.00} / 70.24{\tiny$\pm$3.83} & 46.82{\tiny$\pm$4.65} / 86.84{\tiny$\pm$1.44} & 19.50{\tiny$\pm$3.72} / 92.82{\tiny$\pm$1.80} \\
MixOE & 70.61{\tiny$\pm$12.18} / 83.47{\tiny$\pm$2.41} & \underline{42.04{\tiny$\pm$0.74} / 90.19{\tiny$\pm$0.23}} & 42.13{\tiny$\pm$4.26} / 88.09{\tiny$\pm$3.18} & 77.52{\tiny$\pm$2.05} / 74.31{\tiny$\pm$0.79} & 68.76{\tiny$\pm$3.98} / 74.86{\tiny$\pm$2.42} & 52.09{\tiny$\pm$6.20} / 90.09{\tiny$\pm$0.64} \\ \midrule
% \rowcolor{LightGray} \textbf{i-ODIN} & 51.81{\tiny$\pm$4.82} / 86.21{\tiny$\pm$1.28} & 58.21{\tiny$\pm$2.41} / 87.80{\tiny$\pm$0.36} & {60.38{\tiny$\pm$1.53}} / {77.19{\tiny$\pm$0.66}} & {55.52{\tiny$\pm$2.60}} / {82.59{\tiny$\pm$1.14}} & {54.84{\tiny$\pm$1.33}} / {80.65{\tiny$\pm$0.57}} & {37.55{\tiny$\pm$6.46}} / {90.99{\tiny$\pm$0.76}} \\
\rowcolor{LightGray} \textbf{\ours} & \textbf{47.94{\tiny$\pm$5.38} / 89.75{\tiny$\pm$1.01}} & \textbf{40.76{\tiny$\pm$1.13} / 91.86{\tiny$\pm$0.20}} & \textbf{11.37{\tiny$\pm$0.42} / 97.48{\tiny$\pm$0.10}} & \textbf{42.62{\tiny$\pm$0.76} / 86.50{\tiny$\pm$0.89}} & \textbf{29.90{\tiny$\pm$0.76} / 91.35{\tiny$\pm$0.13}} & \textbf{7.69{\tiny$\pm$0.29} / 98.32{\tiny$\pm$0.07}} \\
\bottomrule
\end{tabular}
}
\caption{OOD detection results (FPR@95$\downarrow$ / AUROC$\uparrow$) on fine-grained, spurious, and conventional OOD detection. Best results are formatted as \textbf{bold} and second-best results are formatted as \underline{underline}. Same formatting is applied to subsequent tables. See Appendix \ref{sec:complete_results}.}
\label{tab:main_benchmark}
\end{table*}

\section{Experiments}
\label{sec:experiments}
\begin{table}[!h]
\centering
\label{tab:id_datasets}
\adjustbox{max width=0.99\linewidth}{%
\begin{tabular}{ccc}
\toprule
% \rowcolor[HTML]{EFEFEF}
\textbf{Conventional setting} & \textbf{Spurious setting} & \textbf{Fine-grained setting} \\ \midrule
CIFAR-10/100~\citep{cifar-10,cifar-100} & CelebA~\citep{liu2015faceattributes} & Aircraft $(\text{ID/OOD categories}=90/10)$~\citep{maji2013fine} \\
ImageNet-100~\citep{imagenet} & Waterbirds~\citep{Sagawa2020Distributionally} & Car $(\text{ID/OOD categories}=150/46)$~\cite{jkrause3DRR2013} \\ \bottomrule
\end{tabular}}
\caption{ID datasets used in our experiments (See \ref{appendix:sec:spurious_dataset_formalization}).}
\end{table}

\noindent\textbf{OOD Datasets:} The details regarding spurious OOD (of Waterbirds and CelebA) and fine-grained OOD (of Aircraft and Car) datasets are provided in \cref{appendix:sec:spurious_dataset_formalization}. For conventional OOD datasets under both spurious and fine-grained setup, we use NINCO~\citep{ninco}, SUN~\citep{xiao2010sun}, OpenImage-O~\citep{haoqi2022vim}, iNaturalist~\citep{van2018inaturalist}, and Textures~\citep{cimpoi2014describing} datasets. We use following conventional OOD datasets for CIFAR-10/100 ID datasets: MNIST~\cite{deng2012mnist}, SVHN~\cite{svhn}, iSUN~\cite{xu2015turkergaze} Textures~\citep{cimpoi2014describing}, Places365~\citep{zhou2017places} and for ImageNet-100: SSB-Hard~\citep{vaze2022openset}, OpenImage-O~\citep{haoqi2022vim}, iNaturalist~\citep{van2018inaturalist}, and Textures~\citep{cimpoi2014describing}. \\
\noindent\textbf{Experimental details.}  We adhere closely to the training procedures outlined in OpenOOD~\citep{yang2022openood,zhang2023openood} with a few modifications. The experiments in fine-grained settings are performed with a batch size of 32. We use ResNet-18 model in spurious and conventional settings (CIFAR-10/100), while we use ResNet-50 model in fine-grained settings. We perform experiments in the conventional setting (CIFAR-10/100) from scratch while other settings follow a fine-tuning approach. For spurious and fine-grained settings, we fine-tune the (ImageNet) pre-trained model with an initial learning rate of 0.01 for 30 epochs. For ImageNet-100 experiments, we adopt the experimental setup of Dream-OOD~\citep{du2023dream}. We fine-tune the pre-trained ResNet-34 base model for 20 epochs with a batch size of 40 and a learning rate of 0.0005. For LogitNorm~\citep{wei2022mitigating} training in the spurious setting, we set the temperature to 1. Please refer to \cref{appendix:sec:additional_details} for complete details.

\noindent\textbf{Metrics:} We evaluate OOD detection using AUROC (Area Under Receiver-Operator Characteristics) and FPR@95 (False Positive Rate at 95\% True Positive Rate), where higher AUROC indicates better OOD/ID discrimination and lower FPR reflects fewer ID samples misclassified as OOD.

% \noindent\textbf{Baselines:} MSP, TempScale, ODIN, MDS, MDSEns, RMDS, Gram, EBO, GradNorm, ReAct, MLS, KLM, VIM, DICE, RankFeat, ASH, SHE, GEN, NNGuide, Relation, SCALE, FDBD, ConfBranch, RotPred, G-ODIN, MOS, VOS, LogitNorm, CIDER, NPOS, OE, MixOE, DreamOOD.
\noindent\textbf{Baselines:} MSP~\citep{msp17iclr}, GEN~\citep{liu2023gen}, ODIN~\citep{odin18iclr}, MDS~\citep{mahalanobis18nips}, MDSEns~\citep{mahalanobis18nips}, TempScale~\citep{guo2017calibration},  RMDS~\citep{rmd21arxiv}, Gram~\citep{gram20icml}, EBO~\citep{energyood20nips}, GradNorm~\citep{huang2021importance}, ReAct~\citep{react21nips}, MLS~\citep{species22icml}, KLM~\citep{species22icml}, VIM~\citep{haoqi2022vim}, DICE~\citep{sun2021dice}, RankFeat~\citep{song2022rankfeat}, ASH~\citep{djurisic2023extremely}, SHE~\citep{she23iclr}, NNGuide~\citep{park2023nearest}, Relation~\citep{kim2024neural}, SCALE~\citep{xu2024scaling}, FDBD~\citep{fdbd}, ConfBranch~\cite{confbranch2018arxiv}, RotPred~\citep{rotpred}, G-ODIN~\citep{godin20cvpr},  MOS~\citep{mos21cvpr}, VOS~\citep{vos22iclr} , LogitNorm~\citep{wei2022mitigating}, CIDER~\citep{cider2023iclr}, NPOS~\citep{npos2023iclr}, OE~\citep{oe18iclr}, MixOE~\citep{mixoe23wacv}, DreamOOD~\citep{du2023dream}.
\section{Results}
\begin{figure}[t!]
    \centering
    \adjustbox{width=0.99\linewidth}{
    \begin{subfigure}{0.485\linewidth}
        \centering
        \adjustbox{width=\linewidth,clip}{        \includegraphics{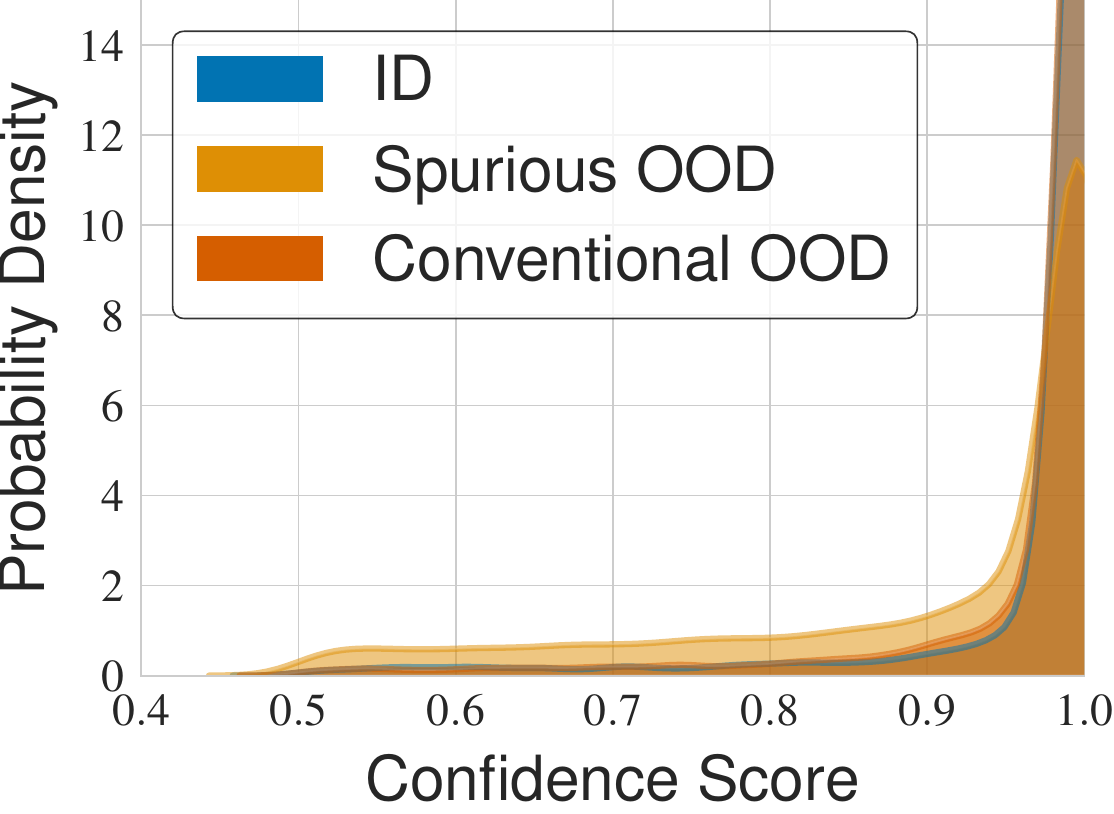}
        }
        \caption{Baseline}
        \label{fig:conf_waterbird_baseline}
    \end{subfigure}
    \hfill
    \begin{subfigure}{0.485\linewidth}
        \centering
        \adjustbox{width=\linewidth,clip}{        \includegraphics{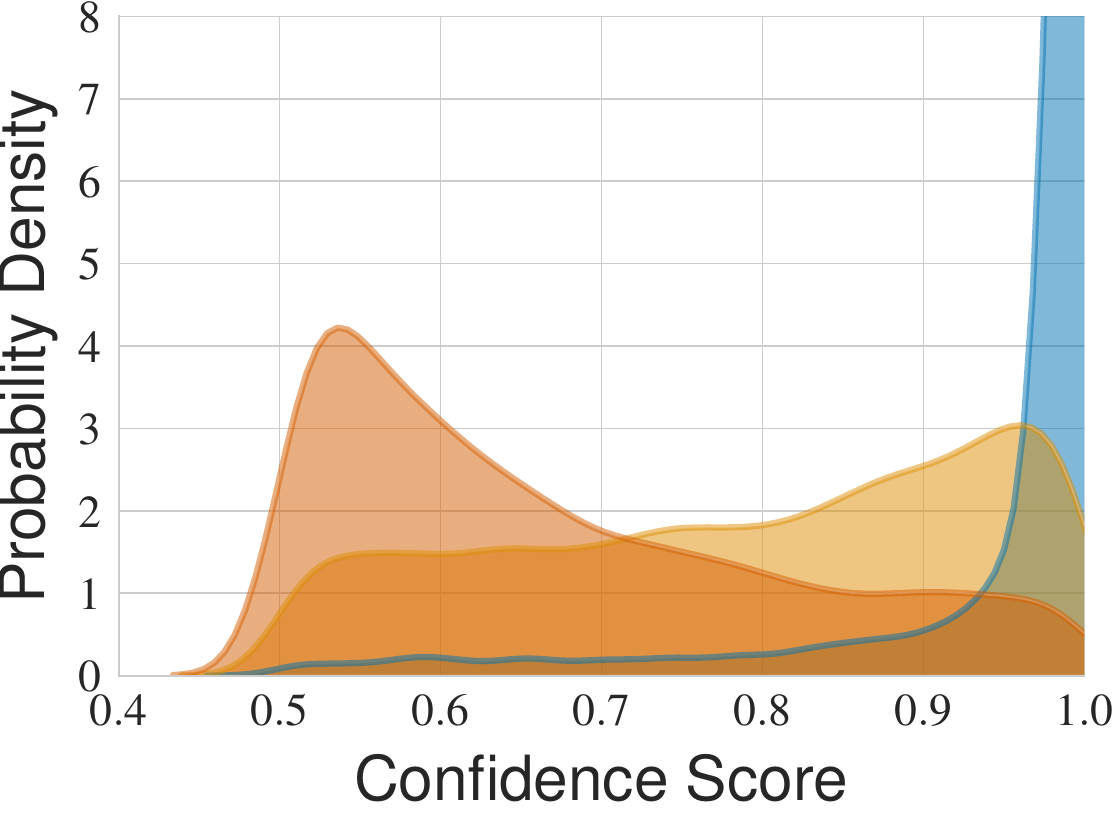}
        }
        \caption{\ours}
        \label{fig:conf_waterbird_ascood}
    \end{subfigure}
    }
    \caption{Visualization of confidence scores (MSP) of (a) cross-entropy baseline and (b) \ours~in Waterbirds benchmark. The confidence scores of ID and (spurious and conventional (iNaturalist)) OOD are relatively well-separated in case of (b) \ours~in comparison to (a) cross-entropy baseline.}
\end{figure}
\noindent\textbf{Fine-grained OOD detection.} We assess OOD detection performance in fine-grained setting using Aircraft and Car benchmarks. Results for fine-grained OOD datasets are presented in Table \ref{tab:main_benchmark}, with performance on conventional OOD datasets deferred to \cref{appendix:sec:complete_aircraft} and \cref{appendix:sec:complete_car}. \ours~outperforms the nearest competitors GEN and RMDS by $\sim$\textbf{3} AUROC points in Aircraft datasets. Notably, many training regularization approaches (RotPred, G-ODIN, VOS, LogitNorm, CIDER, NPOS) fail to even match performance of MSP baseline in Car benchmark. On the other hand, \ours~achieves the best performance among all 30 competing methods, surpassing third-best rival ReAct in FPR@95 / AUROC metric by $\sim$\textbf{5}/\textbf{3} points. MixOE narrows this performance gap in Car datasets by leveraging external OOD datasets $\mathbb{D}_\text{out}$ (SUN datasets), though this benefit doesn't extend to Aircraft datasets under same conditions.

\noindent\textbf{Spurious OOD detection.} Spurious OOD detection is summarized in Table \ref{tab:main_benchmark} using Waterbirds and CelebA benchmarks. On Waterbirds benchmark, \ours~outperforms all 30 competing methods by a substantial margin in both FPR@95 and AUROC metrics. Specifically, \ours~achieves a performance improvement over the nearest competitor Relation by $\sim$\textbf{59\%} in FPR@95 metric. Moreover, \ours~sustains its superiority on the CelebA benchmark too, surpassing second-best FPR@95 metric of CIDER by $\sim$\textbf{15\%}. Such impressive performance of \ours~can be partly attributed to nature of virtual outliers as they potentially contain spurious features that are present in ID samples. However, using external OOD datasets (OE) or mixing them with ID samples (MixOE) may either lack spurious features or lead to their comparatively stronger degradation. Leveraging Car datasets as external OOD datasets, both OE and MixOE outperform other training regularization methods in Waterbirds benchmark, yet still fall short of \ours's performance. Quantitatively, \ours~demonstrates a substantial advantage, outperforming OE and MixOE by \textbf{66\%} and \textbf{73\%} in the FPR@95 metric on the Waterbirds datasets. This superior performance is consistent across datasets, with \ours~outperforming OE by \textbf{57\%} and MixOE by \textbf{45\%} on CelebA datasets. \Cref{fig:conf_waterbird_baseline,fig:conf_waterbird_ascood} visualize confidence scores for (a) cross-entropy baseline and (b) \ours~in Waterbirds benchmark. These plots demonstrate that \ours~achieves relatively more pronounced separation of confidence scores between ID and spurious OOD, as well as between ID and conventional OOD (iNaturalist).

\noindent\textbf{Conventional OOD detection.} We evaluate OOD detection performance in conventional setting using the CIFAR-10/100 benchmarks as presented in Table \ref{tab:main_benchmark}. Consistent with results in the fine-grained and spurious setting, \ours~outperforms all 30 competing methods by a significant margin. Specifically, \ours~exceeds the performance of the strong RotPred baseline on the CIFAR-100 benchmark, improving the FPR@95 metric by $\sim$\textbf{16\%}. Additionally, an AUROC improvement of $\sim$\textbf{3} points is observed between \ours~and RotPred. While CIFAR-10 is considered a comparatively easier benchmark on which many prior methods perform well, \ours~still demonstrates superior performance across both metrics. Apart from CIFAR benchmarks, we conduct experiments in large-scaling settings using ImageNet-100 ID datasets following the experimental settings of Dream-OOD~\citep{du2023dream}. The results presented in \Cref{tab:imagenet100} demonstrate the strong empirical effectiveness of ASCOOD in large-scale settings. Furthermore, when evaluated on SSB-Hard~\citep{vaze2022openset}, ASCOOD achieves the highest AUROC score (\textbf{83.91}), surpassing the second-best score of DreamOOD ({83.30}). (See \cref{appendix:sec:imagenet100} and \cref{appendix:sec:imagenet100_ssb} for complete results.)

\begin{table}[b]
  \centering
  \resizebox{0.99\linewidth}{!}{%
    \begin{tabular}{@{}lcccc}
      \toprule
      Method & \textbf{iNaturalist} & \textbf{Textures} & \textbf{OpenImage} & \textbf{Average} \\
      \midrule
        MSP & 21.09 / 94.87 & 62.82 / 83.45 & 39.82 / 88.08 & 41.24 / 88.80 \\
        ODIN & 21.02 / 95.43 & 79.42 / 80.92 & 50.42 / 85.60 & 50.29 / 87.32 \\
        EBO & 22.24 / 94.03 & 74.67 / 81.04 & 43.98 / 86.66 & 46.96 / 87.24 \\
        ReAct & 20.89 / 94.68 & 65.98 / 82.42 & 41.27 / 87.41 & 42.71 / 88.17 \\        
        SHE & 29.51 / 93.28 & 82.47 / 82.13 & 61.76 / 81.77 & 57.91 / 85.72 \\
        GEN & 18.64 / 94.62 & 65.60 / 82.48 & 39.51 / 88.45 & 41.25 / 88.52 \\
        NNGuide & 19.73 / 95.59 & 71.31 / 84.52 & 43.00 / 87.52 & 44.68 / 89.21 \\
        SCALE & \textbf{12.91} / \textbf{97.32} & 54.13 / 89.77 & 38.58 / 89.96 & 35.21 / 92.35 \\ \midrule
        RotPred & 19.64 / 93.57 & 50.51 / 88.69 & 37.44 / 88.64 & 35.87 / 90.30 \\
        VOS & 21.20 / 94.25 & 64.96 / 83.72 & 36.82 / 88.26 & 40.99 / 88.74 \\
        LogitNorm & 18.38 / 95.75 & 49.96 / 87.23 & 34.69 / 89.51 & 34.34 / 90.83 \\
        CIDER & 24.91 / 95.03 & \textbf{21.84} / \textbf{96.20} & 46.29 / 89.41 & \underline{31.01} / \underline{93.54} \\
        Dream-OOD (EBO) & \underline{14.47} / \underline{96.09} & 60.73 / 84.79 & \underline{32.67} / \underline{90.16} & 35.96 / 90.35 \\ \midrule
        % \rowcolor{LightGray} \textbf{i-ODIN} & 21.04 / 94.88 & 62.84 / {83.46} & {39.80 / 88.09} & {41.23 / 88.81} \\        
        \rowcolor{LightGray} \textbf{ASCOOD} (EBO) & 18.11 / 95.73 & \underline{25.20} / \underline{94.40} & \textbf{26.04 / 91.95} & \textbf{23.12 / 94.02} \\
      \bottomrule
    \end{tabular}
  }
  \caption{Conventional OOD detection (FPR@95 $\downarrow$/ AUROC $\uparrow$) in large-scale setting using ImageNet-100 datasets.}
  \label{tab:imagenet100}
\end{table}

\noindent\textbf{ODIN vs. i-ODIN.} Unlike ODIN, which perturbs all color channel intensities, we propose i-ODIN that modifies the variable number of significant color channel intensities (determined via pixel attribution) of the image. We compare ODIN and i-ODIN in challenging cases as shown in \Cref{tab:odin_comparison} which demonstrate significant gain of i-ODIN over ODIN establishing superiority of the former. For instance, i-ODIN outperforms ODIN in FPR@95 metric by \textbf{20\%} in CIFAR-10 (ID) vs CIFAR-100 (OOD) and by \textbf{33\%} in CIFAR-100 (ID) vs TIN (OOD). Furthermore, a non-trivial performance improvement of i-ODIN over ODIN is also observed in fine-grained setting, especially in the FPR@95 metric. Specifically, we find perturbing only the single most significant color channel intensity yields substantially better performance than perturbing all color channel intensities. However, in trivial cases involving only two classes (e.g., Waterbirds and CelebA datasets), this improvement is not observed. Please see \Cref{appendix:sec:complete_results:odin_compare} for complete results.
\begin{table}[t]
	\centering
	\adjustbox{max width=0.95\linewidth}{%
		\begin{tabular}{l c cc cc}
			\toprule
			\multirow{2}{*}{Benchmark} & \multirow{2}{*}{Method} & \multicolumn{2}{c}{\textbf{CIFAR-100}} & \multicolumn{2}{c}{\textbf{TIN}} \\
            \cmidrule(lr){3-4} \cmidrule(lr){5-6}
 & & FPR@95 $\downarrow$ & AUROC $\uparrow$ & FPR@95 $\downarrow$ & AUROC $\uparrow$ \\
			\midrule
	 \multirow{2}{*}{CIFAR-10} & ODIN & 77.00{\tiny$\pm$5.74} & 82.18{\tiny$\pm$1.87} & 75.38{\tiny$\pm$6.42} & 83.55{\tiny$\pm$1.84} \\ 
		    & \cellcolor{LightGray} i-ODIN & \cellcolor{LightGray} \textbf{61.33{\tiny$\pm$1.18}} & \cellcolor{LightGray} \textbf{86.87{\tiny$\pm$0.14}} & \cellcolor{LightGray} \textbf{50.20{\tiny$\pm$2.39}} & \cellcolor{LightGray} \textbf{88.96{\tiny$\pm$0.25}} \\ \midrule

            \multirow{2}{*}{Benchmark} & \multirow{2}{*}{Method} & \multicolumn{2}{c}{\textbf{Car}} & \multicolumn{2}{c}{\textbf{Aircraft}} \\
            \cmidrule(lr){3-4} \cmidrule(lr){5-6}
            & & FPR@95 $\downarrow$ & AUROC $\uparrow$ & FPR@95 $\downarrow$ & AUROC $\uparrow$ \\ \midrule

            \multirow{2}{*}{Fine-grained setting} & ODIN & 64.78{\tiny$\pm$1.28} & 86.41{\tiny$\pm$0.29} & 54.55{\tiny$\pm$5.54} & \textbf{86.23{\tiny$\pm$1.47}} \\ 

              & \cellcolor{LightGray} i-ODIN & \cellcolor{LightGray} \textbf{58.21{\tiny$\pm$2.41}} & \cellcolor{LightGray} \textbf{87.80{\tiny$\pm$0.36}} & \cellcolor{LightGray} \textbf{51.81{\tiny$\pm$4.82}} & \cellcolor{LightGray} \textbf{86.21{\tiny$\pm$1.28}} \\
        \bottomrule
		\end{tabular}
	}
	\caption{ODIN vs. i-ODIN in challenging cases.}
	\label{tab:odin_comparison}
\end{table}

\noindent\textbf{Accuracy:} It is undesirable to trade off accuracy with OOD detection performance. \ours~achieves accuracies of $\sim$87.27\%, 76.63\%, 94.95\%, 93.59\%, and 96.58\% on ImageNet-100, CIFAR-100, CIFAR-10, Waterbirds, and CelebA respectively, closely aligning with the baseline accuracies of $\sim$87.33\%, 77.25\%, 95.06\%, 93.72\%, and 96.72\%. In fine-grained setting, \ours~achieves slightly better accuracies of $\sim$\textbf{94.20}\% and \textbf{89.61}\% in Car and Aircraft datasets respectively bettering baseline accuracies of $\sim$92.73\% and 87.85\%. It shows efficacy of \ours~in enhancing OOD detection without harming accuracy.

\subsection{Ablation studies}
\noindent\textbf{Outlier synthesis.} For a sufficiently high value of $\alpha$ used in synthesizing virtual outliers $\mathbf{x}^{\prime}$, the invariant features of $\mathbf{x}$ are destroyed, irrespective of whether gradient addition or subtraction is employed. As the subtraction of the gradient reduces the logit value associated with the true class in resulting outliers, the model tends to exhibit increased uncertainty towards them. Consequently, the resulting outliers are less challenging and offer limited potential for enhancing predictive uncertainty towards OOD samples. Therefore, substantial performance gains are not anticipated with this approach. Conversely, gradient addition increases the true class logit while simultaneously compromising invariant features (with high $\alpha$), creating an opportunity to improve predictive uncertainty towards outliers. By incentivizing predictive uncertainty for these virtual outliers, the model learns an improved discrimination between known and unknown data. We empirically analyze impact of these outlier synthesis strategies along with invariant pixel shuffling on fine-grained OOD detection, reporting results on Aircraft and Car datasets in \Cref{tab:gradient_addition_vs_subtraction}. The results verify that gradient addition is a superior choice for outlier synthesis.

\begin{table}[t]
\centering
\resizebox{0.95\linewidth}{!}{
\begin{tabular}{@{}lcccc@{}}
\toprule
 \multirow{2}{*}{$\mathbf{x}^{\prime}$} & \multicolumn{2}{c}{\textbf{Car}} & \multicolumn{2}{c}{\textbf{Aircraft}} \\
\cmidrule(lr){2-3} \cmidrule(lr){4-5}
 & FPR@95 $\downarrow$ & AUROC $\uparrow$ & FPR@95 $\downarrow$ & AUROC $\uparrow$ \\
\midrule
$\mathbf{x}' = \psi(~\texttt{shuffle} (\mathbf{x}_\text{inv}),~\mathbf{e}~)$ & 63.84{\tiny$\pm$3.21} & 85.38{\tiny$\pm$0.29} & \underline{47.98{\tiny$\pm$2.04}} & \underline{83.87{\tiny$\pm$0.38}} \\
$\mathbf{x}^{\prime} = \mathbf{x} - \alpha \cdot \mathbf{G}_\text{inv}$ & \underline{60.20{\tiny$\pm$1.91}} & \underline{86.27{\tiny$\pm$0.34}} & 50.15{\tiny$\pm$4.80} & 83.64{\tiny$\pm$0.82}  \\
\rowcolor{LightGray} $\mathbf{x}^{\prime} = \mathbf{x} + \alpha \cdot \mathbf{G}_\text{inv}$ & \textbf{40.76{\tiny$\pm$1.13}} & \textbf{91.86{\tiny$\pm$0.20}} & \textbf{47.94{\tiny$\pm$5.38}} &  \textbf{89.75{\tiny$\pm$1.01}} \\
\bottomrule
\end{tabular}}
\caption{Ablation of outlier synthesis methods on challenging fine-grained OOD detection. (See \cref{appendix:sec:gradient_addition_vs_subtraction} for complete results.)}
\label{tab:gradient_addition_vs_subtraction}
\end{table}

\begin{figure}[tbh]
    \centering
    \adjustbox{width=0.9\linewidth}{
    \begin{subfigure}{0.45\linewidth}
        \centering
        \adjustbox{width=\linewidth,clip}{\includegraphics{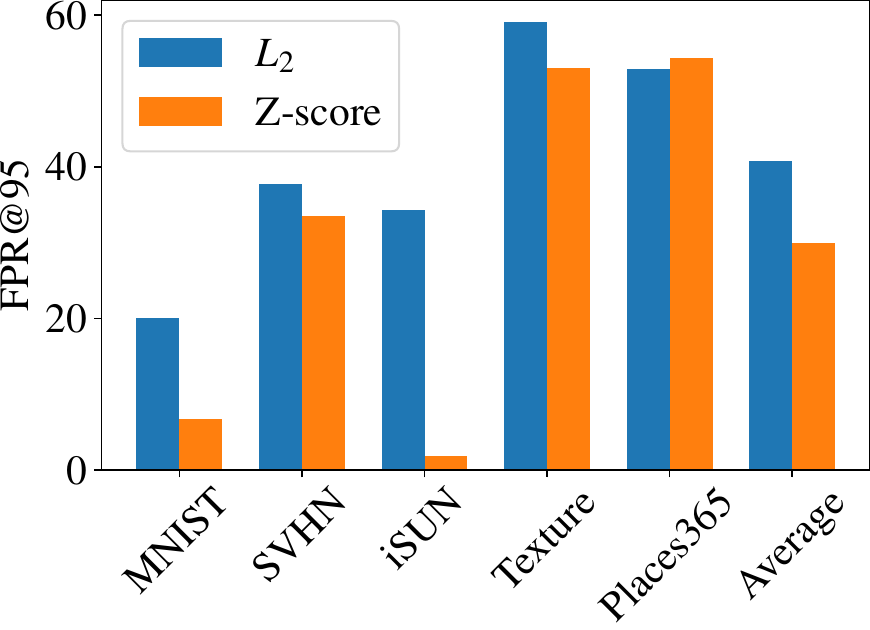}}
        \caption{FPR@95 in various datasets.}
        \label{fig:fpr}
    \end{subfigure}
    \hfill
    \begin{subfigure}{0.45\linewidth}
        \centering
        \adjustbox{width=\linewidth,clip}{\includegraphics{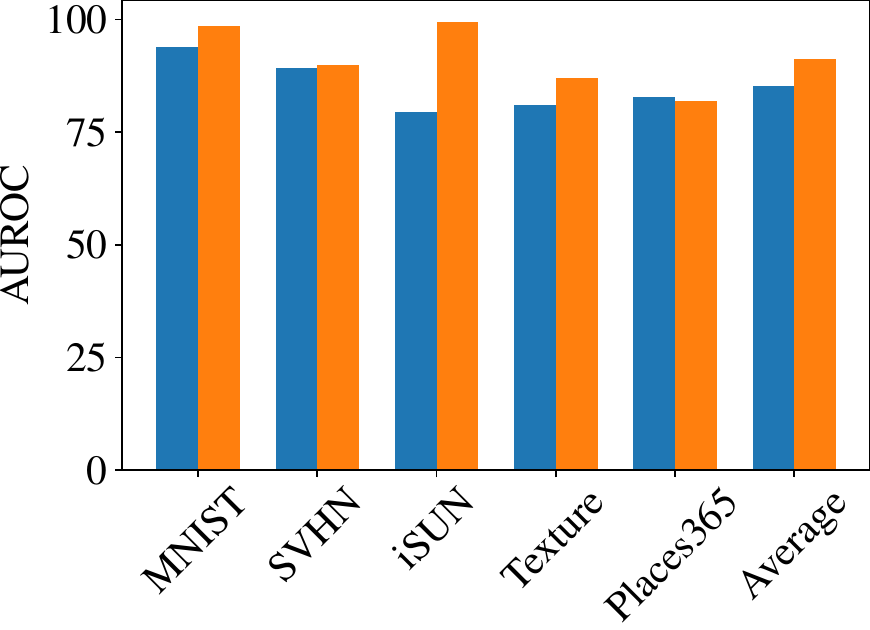}}
        \caption{AUROC in various datasets.}
        \label{fig:auroc}
    \end{subfigure}
    }
    \caption{Comparison of $L_2$ normalization and Z-score normalization in terms of FPR@95 and AUROC in CIFAR-100 datasets.}
\end{figure}

\noindent\textbf{Standardized feature space:} As $L_2$ normalization techniques~\citep{wei2022mitigating,regmi2023t2fnorm,Park_2023_ICCV} have been explored in prior works, its empirical effectiveness in comparison to z-score normalization is unknown. We make this comparison in CIFAR-100 benchmark with bar charts in Figure \ref{fig:fpr} and Figure \ref{fig:auroc} using invariant pixel shuffling for outlier synthesis. We observe that $L_2$ normalization yields average OOD detection performance (FPR@95/AUROC: $40.81${\tiny$\pm3.06$}/$85.26${\tiny$\pm3.10$}) which is inferior to the average performance of \ours~($29.90${\tiny$\pm0.76$}/$91.35${\tiny$\pm0.13$}). Furthermore, we also conduct experiments in fine-grained settings using gradient addition for outlier synthesis. The results, presented in \Cref{tab:standardize_vs_no_standardize}, consistently demonstrate z-score normalization to be superior. For instance, z-score normalization leads to improvement of $\textbf{19}\%$ and $\textbf{31}\%$ in FPR@95 metric in Aircraft and Car datasets, respectively. Furthermore, we can also observe that employing $\mathcal{L}_\text{KL}$ for $\mathcal{L}_\text{OOD}$ yields superior results in comparison to $\mathcal{L}_\text{energy}$ (used in prior works~\citep{vos22iclr,du2023dream}) in fine-grained setting. Please refer \cref{appendix:sec:additional_details} for additional empirical studies.

\begin{table}[t]
  \centering
  \resizebox{0.99\linewidth}{!}{%
    \begin{tabular}{@{}lclcc}
      \toprule
      $\mathbb{D}_{\text{in}}$ & $\mathcal{L}_\text{OOD}$ & Feature space & \textbf{Fine-grained OOD} & \textbf{Conventional OOD} \\
      \midrule
      \multirow{7}{*}{\rotatebox{90}{Aircraft}} & \multirow{3}{*}{$\mathcal{L}_\text{energy}$} & $\mathbf{h}$ & 72.57{\tiny$\pm$2.52} / 78.13{\tiny$\pm$0.81} & 3.21{\tiny$\pm$0.04} / 99.09{\tiny$\pm$0.04} \\
                                & & $\mathbf{h}~/~||\mathbf{h}||_2$ & 57.56{\tiny$\pm$8.62} / 85.17{\tiny$\pm$1.10} & 40.50{\tiny$\pm$7.75} / 88.10{\tiny$\pm$1.77} \\
                                & & $\left(\mathbf{h} - \mu_{\mathbf{h}} \right) \cdot \sigma / \sigma_{\mathbf{h}}$ & 58.70{\tiny$\pm$4.90} / 84.47{\tiny$\pm$0.71} & 84.63{\tiny$\pm$8.10} / 53.51{\tiny$\pm$6.27} \\
                                \cmidrule(lr){2-5}
                                & \multirow{3}{*}{$\mathcal{L}_\text{KL}$} & $\mathbf{h}$ & 55.21{\tiny$\pm$5.98} / \underline{88.73{\tiny$\pm$0.89}} & 6.84{\tiny$\pm$0.95} / 98.47{\tiny$\pm$0.11} \\
                                & & $\mathbf{h}~/~||\mathbf{h}||_2$ &  \underline{54.30{\tiny$\pm$5.87}} / 85.70{\tiny$\pm$1.48} &  \underline{2.58{\tiny$\pm$0.06}} /  \underline{99.37{\tiny$\pm$0.04}} \\
                                & & \cellcolor{LightGray} $\left(\mathbf{h} - \mu_{\mathbf{h}} \right) \cdot \sigma / \sigma_{\mathbf{h}}$ & \cellcolor{LightGray} {\textbf{47.94{\tiny$\pm$5.38}} / \textbf{89.75{\tiny$\pm$1.01}}} & \cellcolor{LightGray} {\textbf{0.55{\tiny$\pm$0.07}} / \textbf{99.84{\tiny$\pm$0.02}}} \\
      \midrule
      \multirow{7}{*}{\rotatebox{90}{Car}} & \multirow{3}{*}{$\mathcal{L}_\text{energy}$} & $\mathbf{h}$ & {54.15{\tiny$\pm$1.23} / 88.09{\tiny$\pm$0.12}} & {6.98{\tiny$\pm$0.55} / 98.73{\tiny$\pm$0.08}} \\
                                    & & $\mathbf{h}~/~||\mathbf{h}||_2$ &  61.80{\tiny$\pm$0.81} / 86.57{\tiny$\pm$0.04} &  7.16{\tiny$\pm$2.87} / 98.70{\tiny$\pm$0.39} \\
                                    & & $\left(\mathbf{h} - \mu_{\mathbf{h}} \right) \cdot \sigma / \sigma_{\mathbf{h}}$ & 58.84{\tiny$\pm$1.05} / 86.46{\tiny$\pm$0.37} & {\textbf{3.07{\tiny$\pm$1.02}} / \textbf{99.41{\tiny$\pm$0.18}}} \\
                                    \cmidrule(lr){2-5}
                                    & \multirow{3}{*}{$\mathcal{L}_\text{KL}$} & $\mathbf{h}$ & 54.89{\tiny$\pm$2.51} / \underline{90.21{\tiny$\pm$0.21}} & {14.32{\tiny$\pm$2.43} / 96.58{\tiny$\pm$0.48}} \\
                                    & & $\mathbf{h}~/~||\mathbf{h}||_2$ &  \underline{51.86{\tiny$\pm$4.10}} / 89.70{\tiny$\pm$0.23} & 9.55{\tiny$\pm$2.82} / 97.82{\tiny$\pm$0.39} \\
                                    & & \cellcolor{LightGray} $\left(\mathbf{h} - \mu_{\mathbf{h}} \right) \cdot \sigma / \sigma_{\mathbf{h}}$ & \cellcolor{LightGray} {\textbf{40.76{\tiny$\pm$1.13}} / \textbf{91.86{\tiny$\pm$0.20}}} & \cellcolor{LightGray} {\underline{4.28{\tiny$\pm$0.53}} / \underline{98.79{\tiny$\pm$0.12}}} \\
      \bottomrule
    \end{tabular}
  }
  \caption{Ablation study of feature space $\mathbf{h}$ and uncertainty loss $\mathcal{L}_\text{OOD}$ in fine-grained setting in FPR@95$\downarrow$ / AUROC$\uparrow$.}
  \label{tab:standardize_vs_no_standardize}
\end{table}
\section{Related Works}

\noindent\textbf{OOD detection} Since Nguyen \etal~\citep{nguyen2015deep} highlighted the overconfidence of neural networks towards OOD data, numerous studies have proposed to mitigate this issue. Post-hoc methods for OOD detection offer a practical solution by operating on pre-trained models.  Hendrycks \etal~\citep{msp17iclr} uses the maximum softmax probability (MSP) as a measure of confidence. Liang \etal~\citep{odin18iclr} improves upon MSP by incorporating temperature scaling and input preprocessing. Liu \etal~\citep{energyood20nips} utilizes energy function derived from the softmax denominator. Lee \etal~\citep{mahalanobis18nips} estimates class-conditional Gaussian distributions in feature space and uses maximum Mahalanobis distance to all class centroids as a measure of OOD uncertainty. Methods offered by Sun \etal~\citep{react21nips} and Ahn \etal~\citep{Ahn_2023_CVPR} analyze the activation patterns of neurons to identify and leverage distinctive features for OOD detection. Yang \etal~\citep{yang2022openood} and Zhang \etal~\citep{zhang2023openood} provide the comprehensive benchmark of such many other post-hoc methods~\citep{rmd21arxiv,species22icml,sun2022knnood,she23iclr}.

Beyond post-hoc methods, various regularization strategies have been explored for OOD detection. DeVries \etal~\citep{confbranch2018arxiv} proposes learning confidence estimates by attaching an auxiliary branch to a pre-trained classifier. Hsu \etal~\citep{godin20cvpr} proposes novel confidence scoring decomposition and input preprocessing. Huang \etal~\citep{mos21cvpr} describes a group-based framework to refine decision boundaries. Yu \etal~\citep{Yu_2023_CVPR} demonstrates the importance of feature norms, while Wei \etal~\citep{wei2022mitigating} and Regmi \etal~\citep{regmi2023t2fnorm} study the utility of normalization in OOD Detection. Recent works also investigate the role of contrastive learning~\citep{csi20nips,2021ssd,cider2023iclr,regmi2023reweightood,PALM2024} and self-supervised learning~\citep{rotpred} for OOD detection. More recent advancements include leveraging masked image modeling~\citep{Li_2023_CVPR}, balancing energy regularization~\citep{Choi_2023_CVPR}, decoupling MaxLogit~\citep{Zhang_2023_CVPR}, exploring binary neuron patterns~\citep{Olber_2023_CVPR}, and applying uncertainty-aware optimal transport~\citep{Lu_2023_CVPR}. Additionally, there has been growing interest in simultaneously addressing OOD detection and generalization~\cite{liu2024neuron,yang2023full,bai2024aha}.

Several works such as OE~\citep{oe18iclr}, MCD~\citep{mcd19iccv}, and UDG~\citep{yang2021scood} tackle OOD detection by incorporating external OOD data during training. Though similar works~\citep{mixoe23wacv,techapanurak2021practical,perera2019deep} do the same to enhance OOD detection in fine-grained settings, curating OOD datasets that don't overlap with ID and ensuring their diversity~\citep{yao2024outofdistribution,jiang2024dos,zhu2023diversified} can be a significant hurdle. To avoid reliance on real outlier data, few recent works~\citep{vos22iclr,npos2023iclr,gao2024oalenhancingooddetection,li2025outlier,gong2024outofdistributiondetectionprototypicaloutlier} synthesize virtual outliers in feature space but such approach is computationally expensive. Recent studies have increasingly explored foundation models for OOD detection ~\citep{Wang_2023_ICCV,Gao_2023_ICCV,lapt,Li_2024_CVPR,Bai_2024_CVPR}. In contrast, our approach synthesizes virtual outliers in image space from ID data without relying on foundation models, akin to VoSo~\citep{voso}. While Roy \etal~\citep{roy2022does} and Ahmed \etal~\citep{roy2022does} address challenging scenarios, their studies are limited to cases where ID and OOD share semantic similarities. There is a scarcity of studies addressing spurious correlations in the context of OOD detection. Ming \etal~\citep{ming2021impact}, Zhang \etal~\citep{zhang2023robustness}, Kirby~\citep{kim2023key} and BackMix~\citep{wang2025backmix} share some similarities with our work in terms of motivation and goal. Ming \etal~\citep{ming2021impact} first analyze the impact of spurious settings in of OOD detection. Zhang \etal~\citep{zhang2023robustness} further extend this analysis and propose a reweighting solution. In contrast to these works, our approach incorporates virtual outlier synthesis using pixel attribution. Similar to Kirby~\citep{kim2023key} and BackMix~\citep{wang2025backmix}, we synthesize virtual outliers by removing invariant features. But, our work shows gradient addition is superior; it not only destroys these features but also makes resulting outliers challenging. More broadly, our work aims to address spurious, fine-grained, as well as conventional OOD inputs comprehensively within a unified framework.

\section{Conclusion} In this work, we introduce a novel training method \textbf{\ours} designed to improve OOD detection in both conventional and challenging cases. \ours~trains the model by incentivizing joint optimization of ID classification and predictive uncertainty towards virtual outliers. \ours~synthesizes virtual outliers by approximately destroying invariant features from ID images. These invariant features are determined by the pixel attribution method using the model being learned. For effective dual optimization, it employs constrained optimization in a standardized feature space. By mitigating the impact of spurious correlations and promoting the capture of fine-grained attributes, \ours~demonstrates improved performance in spurious, fine-grained, and conventional setups, as evidenced by extensive experiments across seven datasets. Importantly, \ours~operates without relying on external OOD datasets, making it a promising approach for OOD detection.
{
    \small
    \bibliographystyle{ieeenat_fullname}
    \bibliography{main}
}

\newpage
\appendix
\setcounter{page}{1}
\onecolumn

{\centering
\Large
\textbf{\thetitle}\\
\vspace{0.5em}Supplementary Material \\
\vspace{1.0em}
}
\section{Datasets} 
\label{appendix:sec:spurious_dataset_formalization}

\subsection{Spurious setting} 
We use Waterbirds and CelebA as ID datasets for the OOD evaluation in the spurious setting.
\begin{figure}[h!]
  \centering
  \resizebox{0.85\textwidth}{!}{%
    \begin{subfigure}[t]{0.45\textwidth}
      \centering
      \begin{tabular}{cccc}
        \includegraphics[width=0.22\linewidth, height=0.22\linewidth]{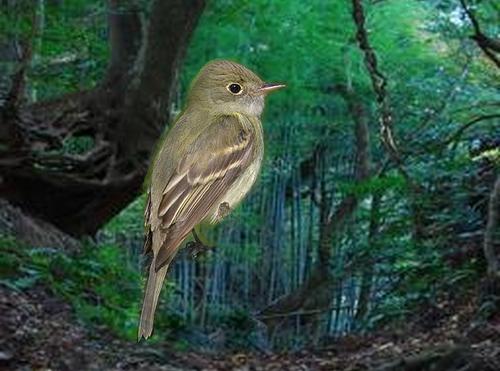} & \includegraphics[width=0.22\linewidth, height=0.22\linewidth]{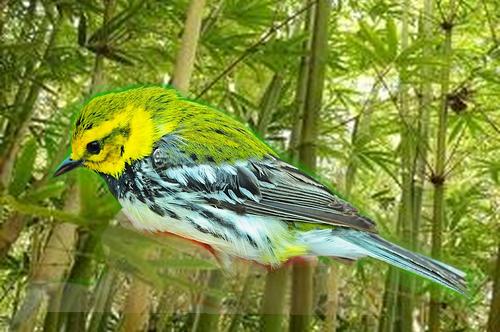} & \includegraphics[width=0.22\linewidth, height=0.22\linewidth]{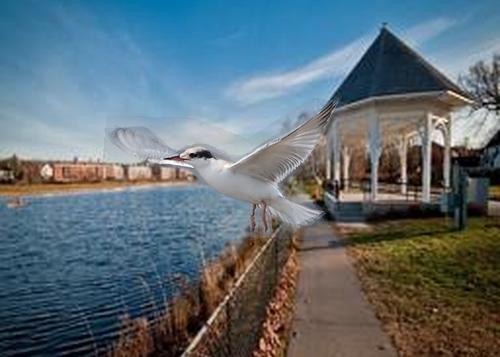} & \includegraphics[width=0.22\linewidth, height=0.22\linewidth]{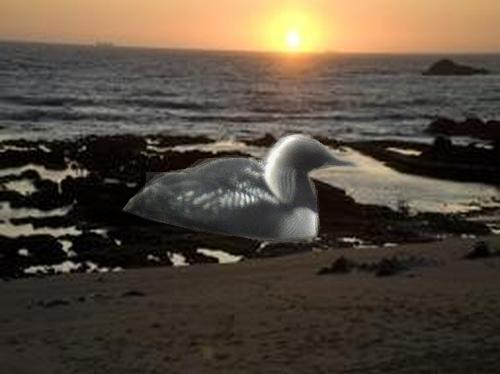} \\
        \includegraphics[width=0.22\linewidth, height=0.22\linewidth]{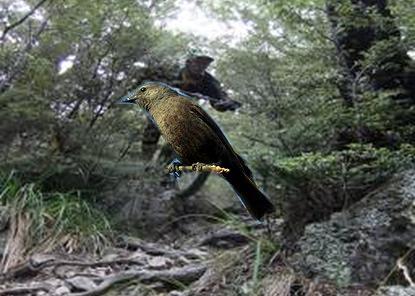} & \includegraphics[width=0.22\linewidth, height=0.22\linewidth]{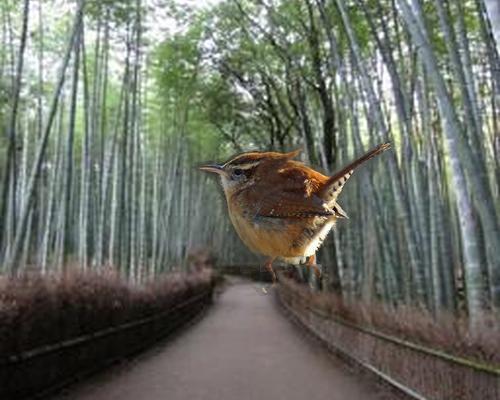} & \includegraphics[width=0.22\linewidth, height=0.22\linewidth]{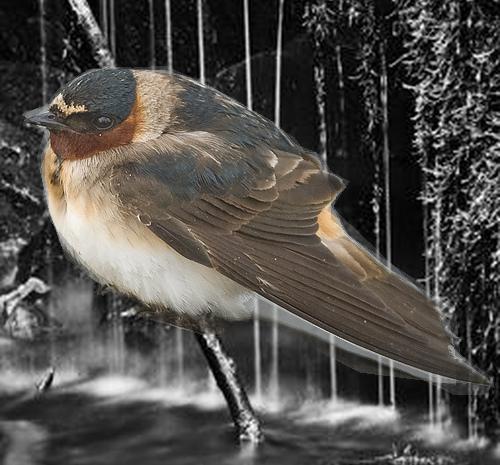} & \includegraphics[width=0.22\linewidth, height=0.22\linewidth]{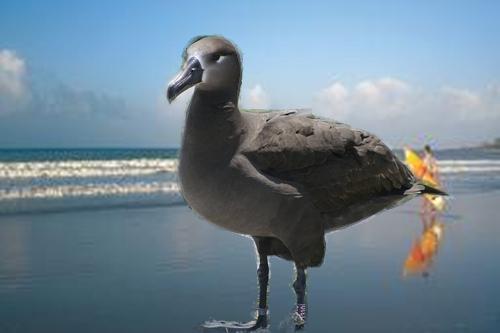} \\
        \includegraphics[width=0.22\linewidth, height=0.22\linewidth]{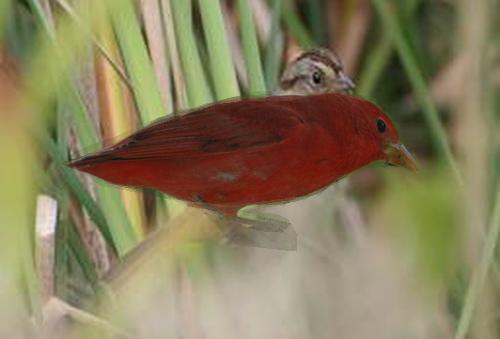} & \includegraphics[width=0.22\linewidth, height=0.22\linewidth]{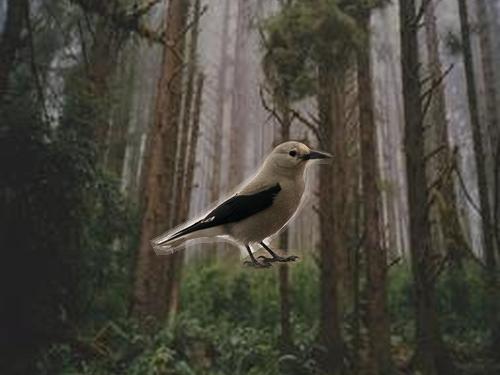} & \includegraphics[width=0.22\linewidth, height=0.22\linewidth]{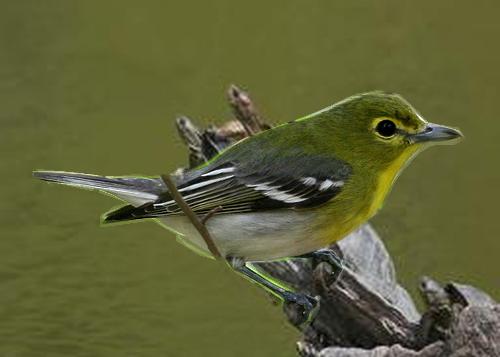} & \includegraphics[width=0.22\linewidth, height=0.22\linewidth]{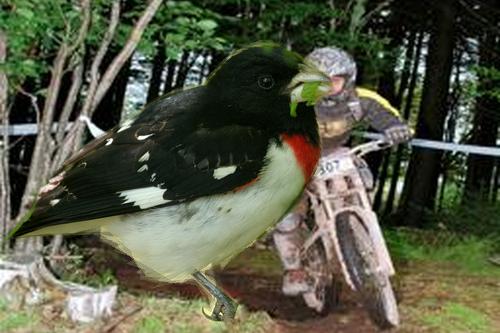} \\
        \includegraphics[width=0.22\linewidth, height=0.22\linewidth]{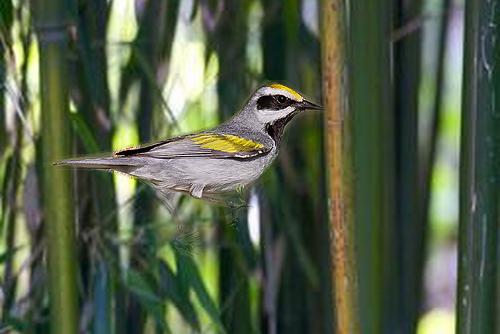} & \includegraphics[width=0.22\linewidth, height=0.22\linewidth]{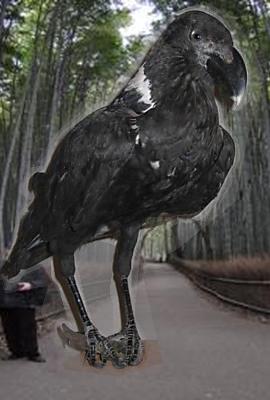} & \includegraphics[width=0.22\linewidth, height=0.22\linewidth]{images/waterbird_id/Acadian_Flycatcher_0007_795600.jpg} & \includegraphics[width=0.22\linewidth, height=0.22\linewidth]{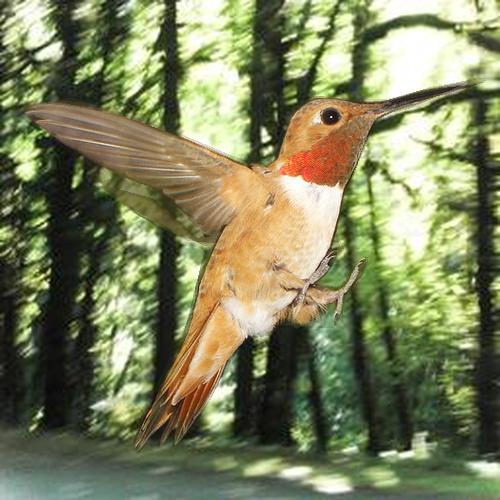}
      \end{tabular}
      \caption{ID images of Waterbirds datasets.}
    \end{subfigure}
    \hspace{0.1\textwidth} % Adds a gap between the two grids
    % Right side grid
    \begin{subfigure}[t]{0.45\textwidth}
      \centering
      \begin{tabular}{cccc}
        \includegraphics[width=0.22\linewidth, height=0.22\linewidth]{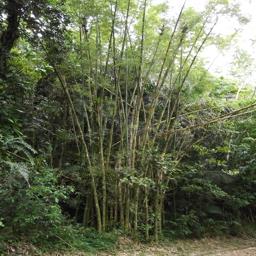} & \includegraphics[width=0.22\linewidth, height=0.22\linewidth]{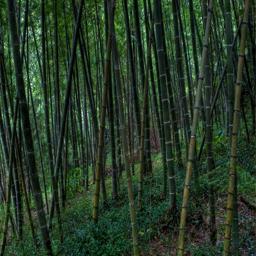} & \includegraphics[width=0.22\linewidth, height=0.22\linewidth]{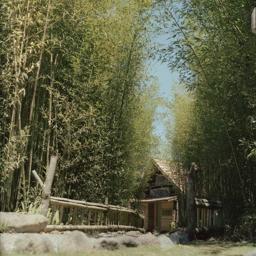} & \includegraphics[width=0.22\linewidth, height=0.22\linewidth]{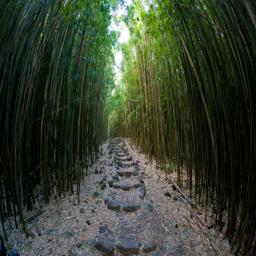} \\
        \includegraphics[width=0.22\linewidth, height=0.22\linewidth]{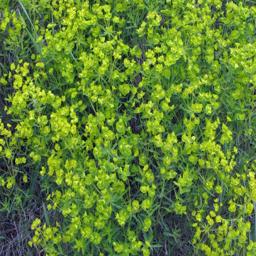} & \includegraphics[width=0.22\linewidth, height=0.22\linewidth]{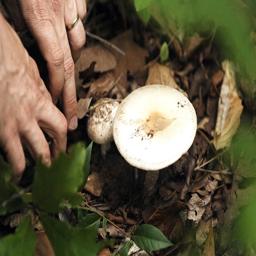} & \includegraphics[width=0.22\linewidth, height=0.22\linewidth]{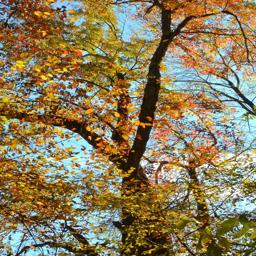} & \includegraphics[width=0.22\linewidth, height=0.22\linewidth]{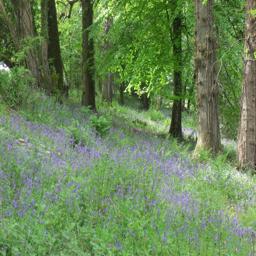} \\
        \includegraphics[width=0.22\linewidth, height=0.22\linewidth]{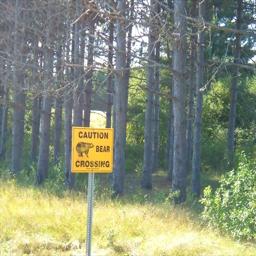} & \includegraphics[width=0.22\linewidth, height=0.22\linewidth]{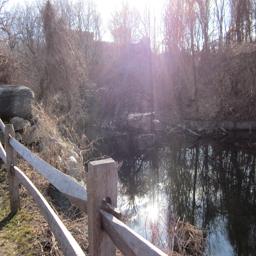} & \includegraphics[width=0.22\linewidth, height=0.22\linewidth]{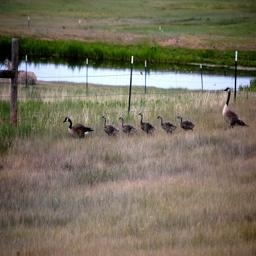} & \includegraphics[width=0.22\linewidth, height=0.22\linewidth]{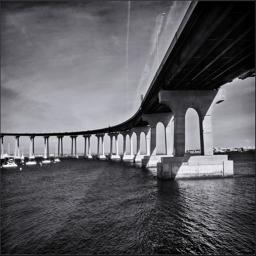} \\
        \includegraphics[width=0.22\linewidth, height=0.22\linewidth]{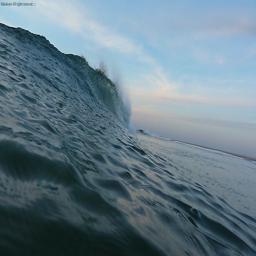} & \includegraphics[width=0.22\linewidth, height=0.22\linewidth]{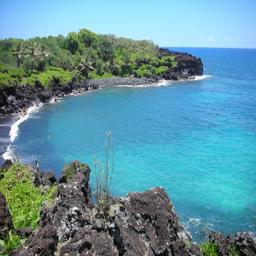} & \includegraphics[width=0.22\linewidth, height=0.22\linewidth]{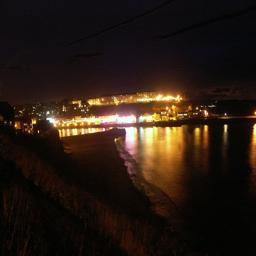} & \includegraphics[width=0.22\linewidth, height=0.22\linewidth]{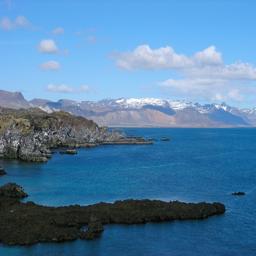}
      \end{tabular}
      \caption{Spurious OOD images of Waterbirds datasets.}
    \end{subfigure}
  }
  \caption{Examples of ID and spurious OOD images of Waterbirds datasets.}
  \label{fig:waterbirds}
\end{figure}

\noindent\textbf{Waterbirds:} Waterbirds $\mathbb{D}_\text{in}$~\citep{Sagawa2020Distributionally} is constructed from the combination of photographs from the Caltech-UCSD Birds-200-2011 (CUB) datasets \cite{wah2011caltech} $\mathcal{X}_\text{inv}$ with image backgrounds $\mathcal{E}$ from the Places datasets \cite{zhou2017places}. Each bird is labeled as $\mathcal{Y} = \{\texttt{waterbird}, \texttt{landbird}\}$ and placed on an environment $\mathcal{E}=\{\texttt{water background}, \texttt{land background}\}$, i.e., $\mathbf{x} =\psi(\mathbf{x}_{\text{inv}}, \mathbf{e})$. Images of land and water alone are considered spurious OOD, i.e., $ \mathbf{x}^{\prime} = \psi(\mathbf{x}^{\prime}_{\text{inv}}, \mathbf{e})$, where $\mathbf{x}^{\prime}_{\text{inv}} = \emptyset$. The correlation between land (water) and landbird (waterbird) in the training set is set to $\sim 0.9$. The examples of ID and spurious OOD images of the Waterbirds datasets are presented in \Cref{fig:waterbirds}.
\begin{figure}[h!]
  \centering
  \resizebox{0.85\textwidth}{!}{%
    \begin{subfigure}[t]{0.45\textwidth}
      \centering
      \begin{tabular}{cccc}
        \includegraphics[width=0.22\linewidth, height=0.22\linewidth]{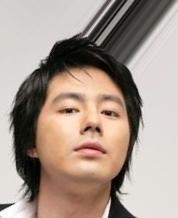} & \includegraphics[width=0.22\linewidth, height=0.22\linewidth]{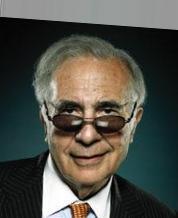} & \includegraphics[width=0.22\linewidth, height=0.22\linewidth]{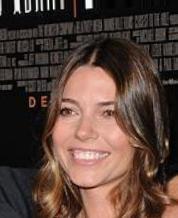} & \includegraphics[width=0.22\linewidth, height=0.22\linewidth]{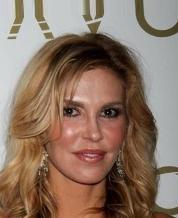} \\
        \includegraphics[width=0.22\linewidth, height=0.22\linewidth]{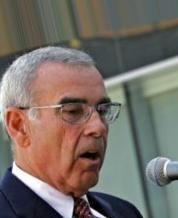} & \includegraphics[width=0.22\linewidth, height=0.22\linewidth]{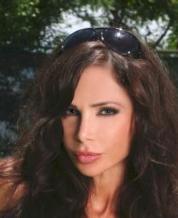} & \includegraphics[width=0.22\linewidth, height=0.22\linewidth]{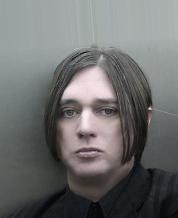} & \includegraphics[width=0.22\linewidth, height=0.22\linewidth]{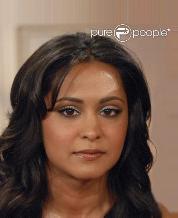} \\
        \includegraphics[width=0.22\linewidth, height=0.22\linewidth]{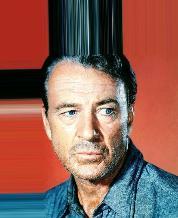} & \includegraphics[width=0.22\linewidth, height=0.22\linewidth]{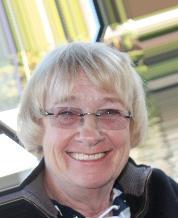} & \includegraphics[width=0.22\linewidth, height=0.22\linewidth]{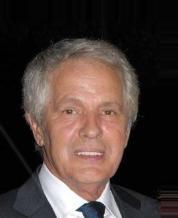} & \includegraphics[width=0.22\linewidth, height=0.22\linewidth]{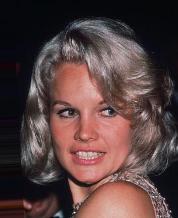} \\
        \includegraphics[width=0.22\linewidth, height=0.22\linewidth]{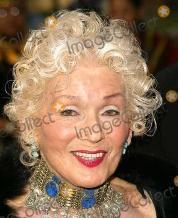} & \includegraphics[width=0.22\linewidth, height=0.22\linewidth]{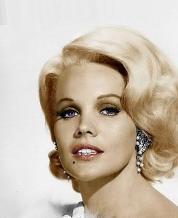} & \includegraphics[width=0.22\linewidth, height=0.22\linewidth]{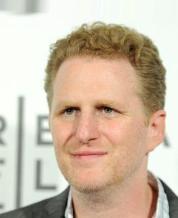} & \includegraphics[width=0.22\linewidth, height=0.22\linewidth]{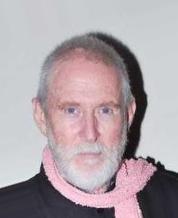}
      \end{tabular}
      \caption{ID images of CelebA datasets.}
    \end{subfigure}
    \hspace{0.1\textwidth}
    \begin{subfigure}[t]{0.45\textwidth}
      \centering
      \begin{tabular}{cccc}
        \includegraphics[width=0.22\linewidth, height=0.22\linewidth]{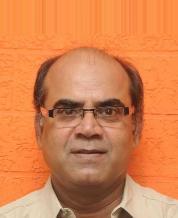} & \includegraphics[width=0.22\linewidth, height=0.22\linewidth]{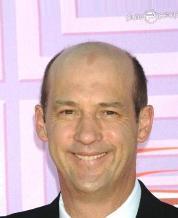} & \includegraphics[width=0.22\linewidth, height=0.22\linewidth]{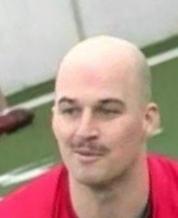} & \includegraphics[width=0.22\linewidth, height=0.22\linewidth]{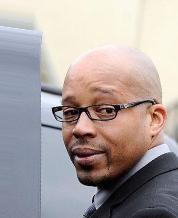} \\
        \includegraphics[width=0.22\linewidth, height=0.22\linewidth]{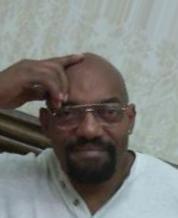} & \includegraphics[width=0.22\linewidth, height=0.22\linewidth]{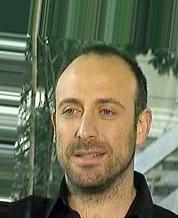} & \includegraphics[width=0.22\linewidth, height=0.22\linewidth]{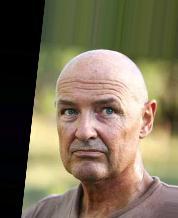} & \includegraphics[width=0.22\linewidth, height=0.22\linewidth]{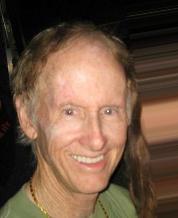} \\
        \includegraphics[width=0.22\linewidth, height=0.22\linewidth]{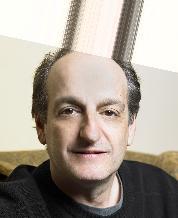} & \includegraphics[width=0.22\linewidth, height=0.22\linewidth]{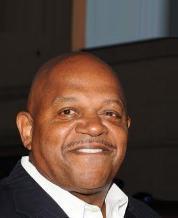} & \includegraphics[width=0.22\linewidth, height=0.22\linewidth]{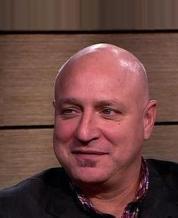} & \includegraphics[width=0.22\linewidth, height=0.22\linewidth]{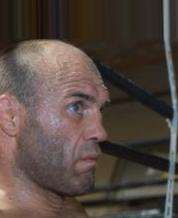} \\
        \includegraphics[width=0.22\linewidth, height=0.22\linewidth]{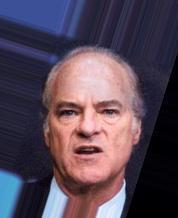} & \includegraphics[width=0.22\linewidth, height=0.22\linewidth]{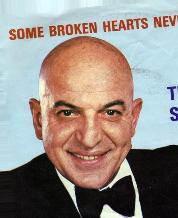} & \includegraphics[width=0.22\linewidth, height=0.22\linewidth]{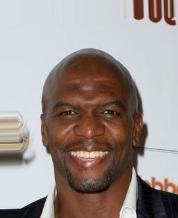} & \includegraphics[width=0.22\linewidth, height=0.22\linewidth]{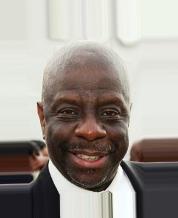}
      \end{tabular}
      \caption{Spurious OOD images of CelebA datasets.}
    \end{subfigure}
  }
  \caption{Examples of ID and spurious OOD images of CelebA datasets.}
  \label{fig:celebA}
\end{figure}

\noindent\textbf{CelebA:} CelebA $\mathbb{D}_\text{in}$~\citep{liu2015faceattributes} is a large-scale face attributes datasets containing hair color attributes $\{\texttt{grey}, \texttt{non-grey}\}$. We consider the label space as $\mathcal{Y} = \{\texttt{grey hair}, \texttt{nongrey hair}\}$. The environments $\mathcal{E} = \{\texttt{male}, \texttt{female}\}$ denote the gender of the person. The correlation between $\texttt{grey hair}$ and $\texttt{male}$ gender (environmental feature) is $\sim0.8$ in the training set. Spurious OOD inputs consist of \texttt{bald male}, which contain environmental feature (gender) without invariant feature (hair). \\

\noindent The examples of ID and spurious OOD images of CelebA datasets are presented in \Cref{fig:celebA}. For a detailed formalization of ID and spurious OOD of Waterbirds and CelebA datasets, we refer readers to Ming \etal~\cite{ming2021impact}. \\

\subsection{Fine-grained setting} 
\label{sec:fine_grained}
In this setting, ID/OOD splits are established through a holdout class approach. Specifically, a subset of categories is designated as ID, while the remaining categories are excluded from the training set and treated as OOD during testing. \\

\begin{figure}[h!]
  \centering
  \resizebox{0.85\textwidth}{!}{%
    % Left side grid
    \begin{subfigure}[t]{0.45\textwidth}
      \centering
      \begin{tabular}{cccc}
        \includegraphics[width=0.22\linewidth, height=0.22\linewidth]{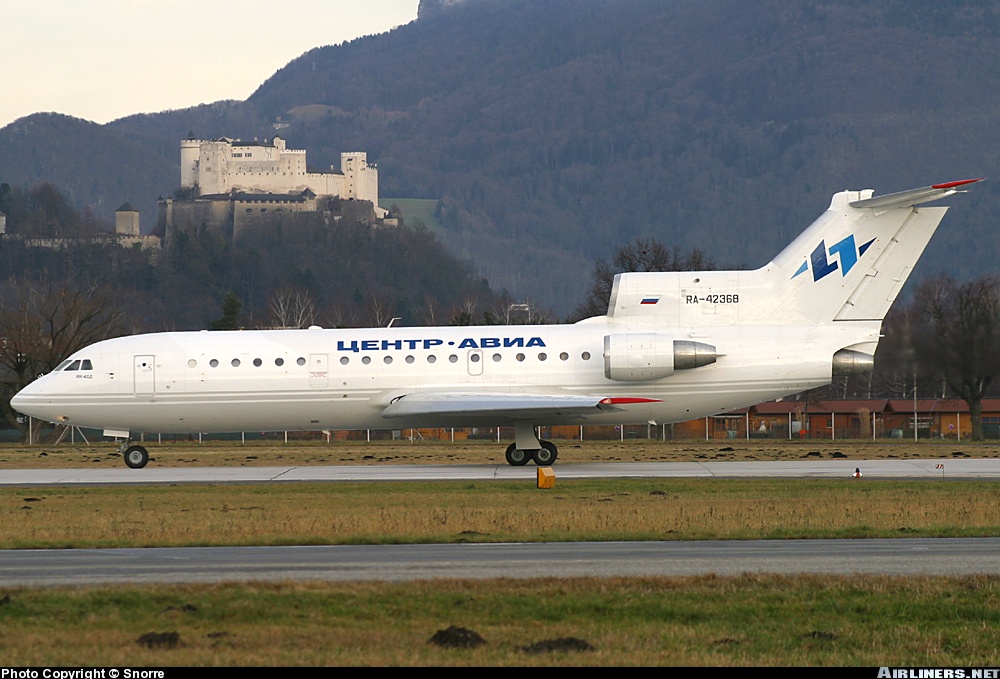} & \includegraphics[width=0.22\linewidth, height=0.22\linewidth]{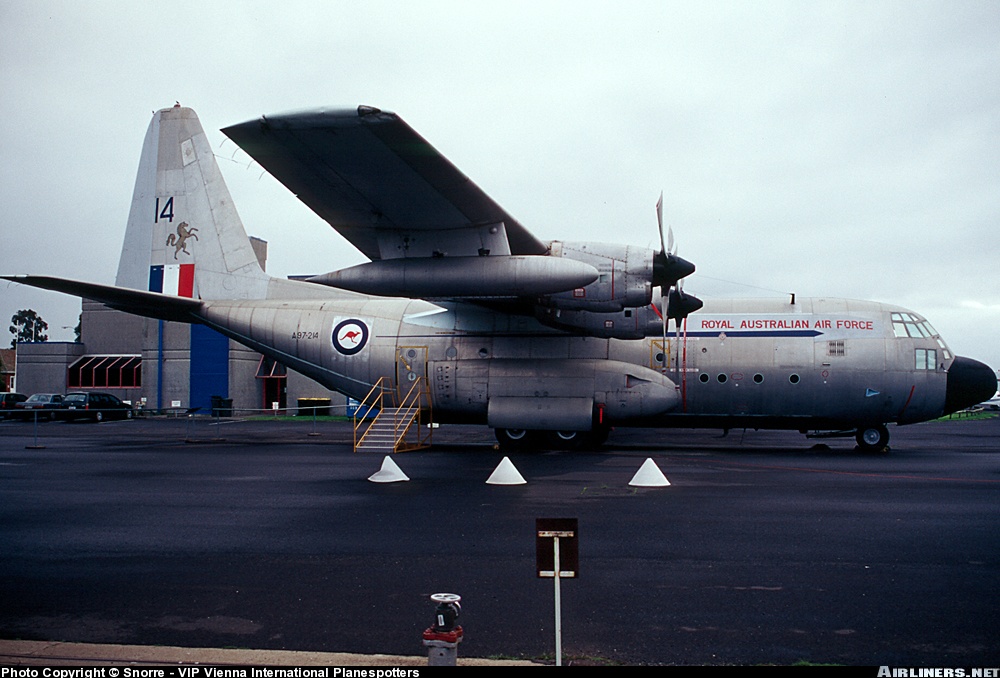} & \includegraphics[width=0.22\linewidth, height=0.22\linewidth]{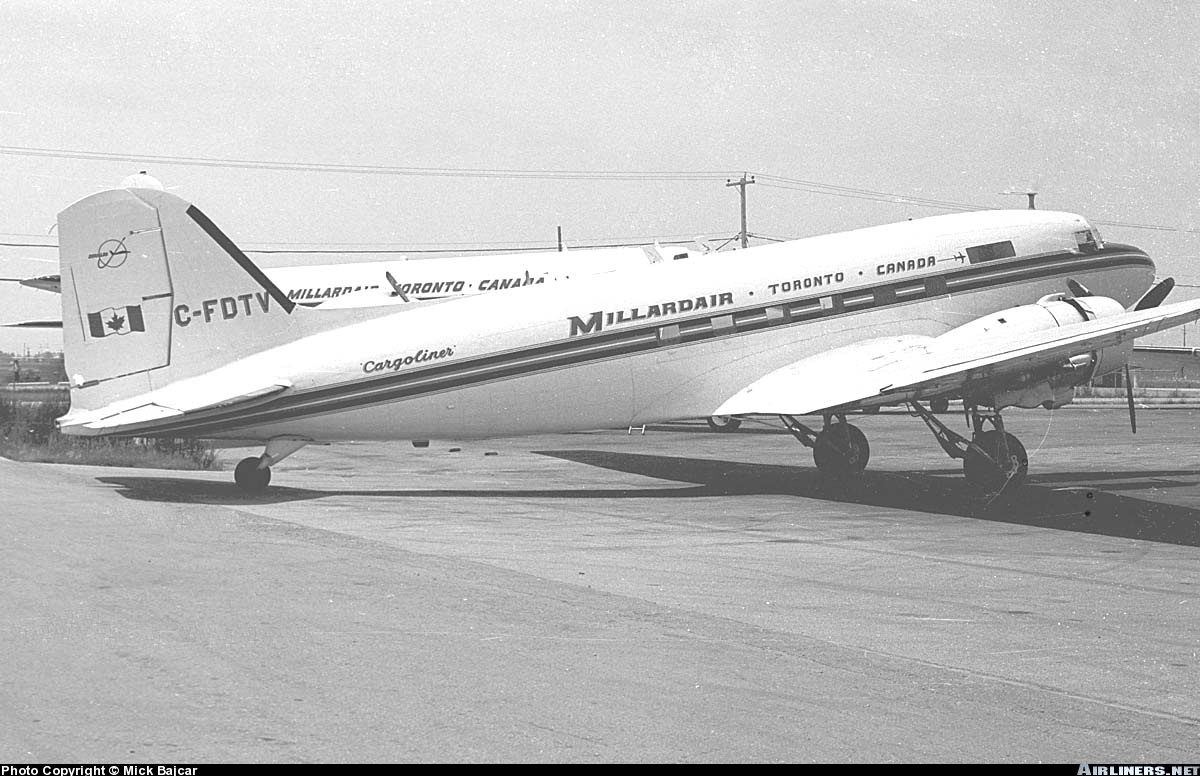} & \includegraphics[width=0.22\linewidth, height=0.22\linewidth]{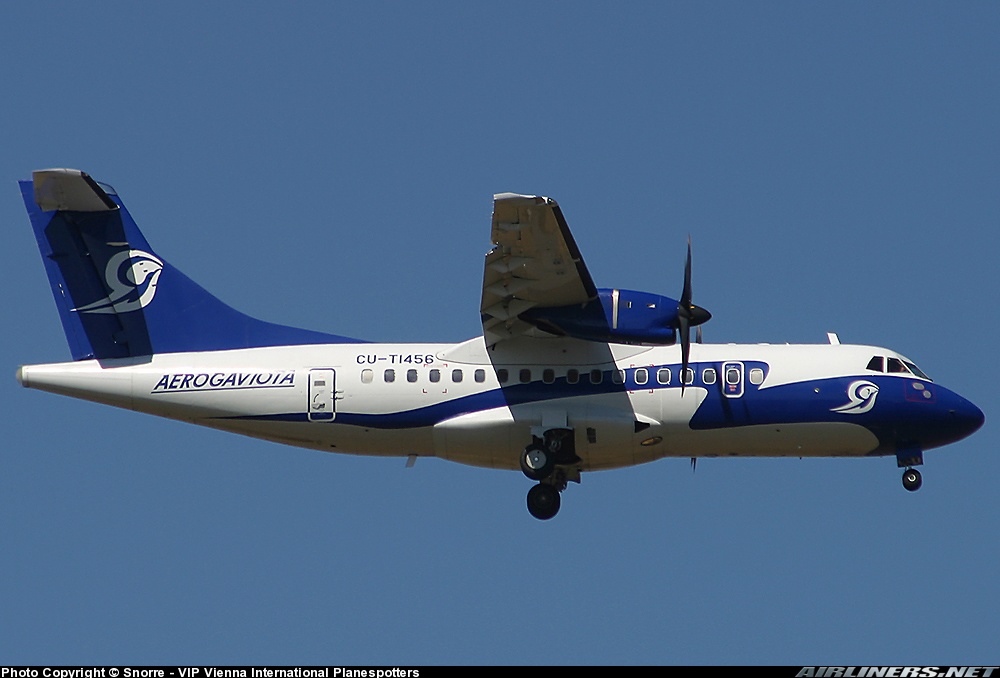} \\
        \includegraphics[width=0.22\linewidth, height=0.22\linewidth]{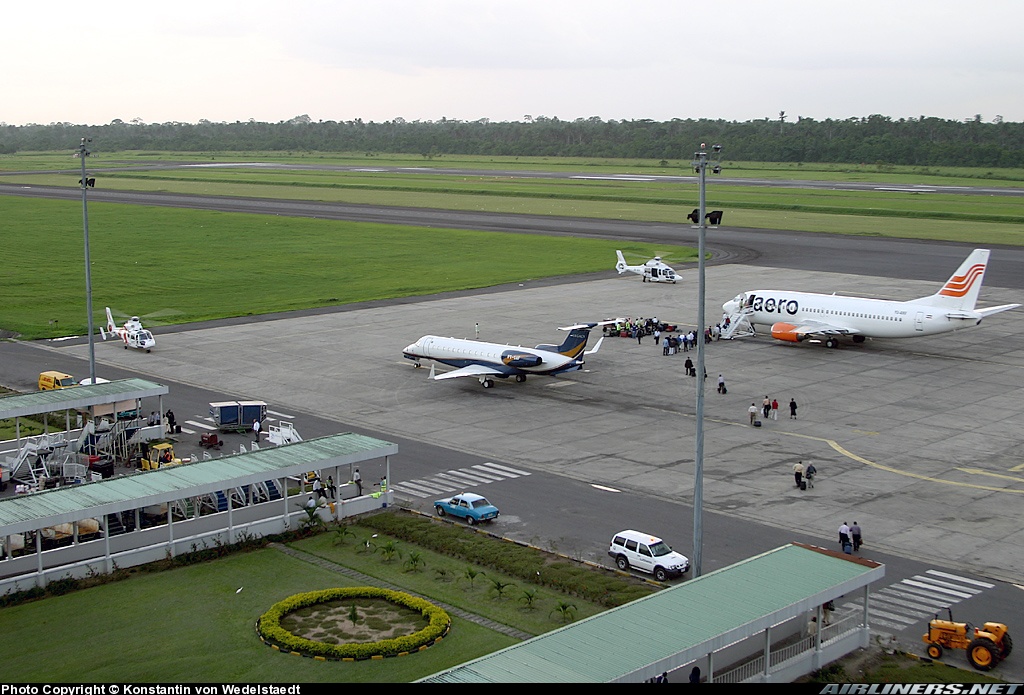} & \includegraphics[width=0.22\linewidth, height=0.22\linewidth]{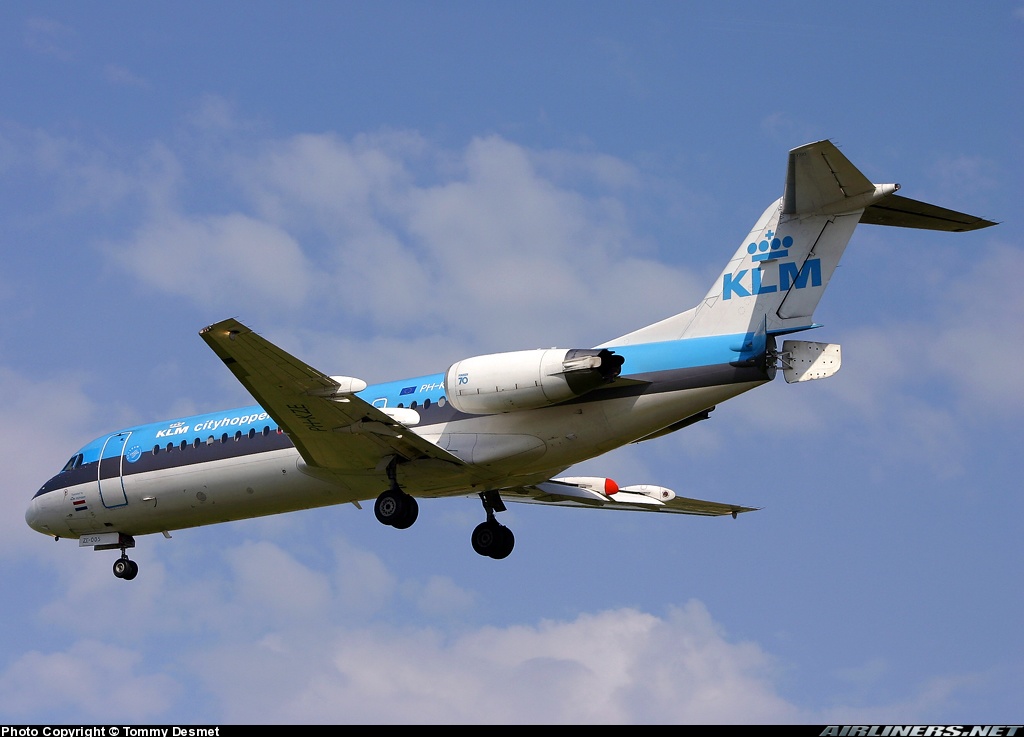} & \includegraphics[width=0.22\linewidth, height=0.22\linewidth]{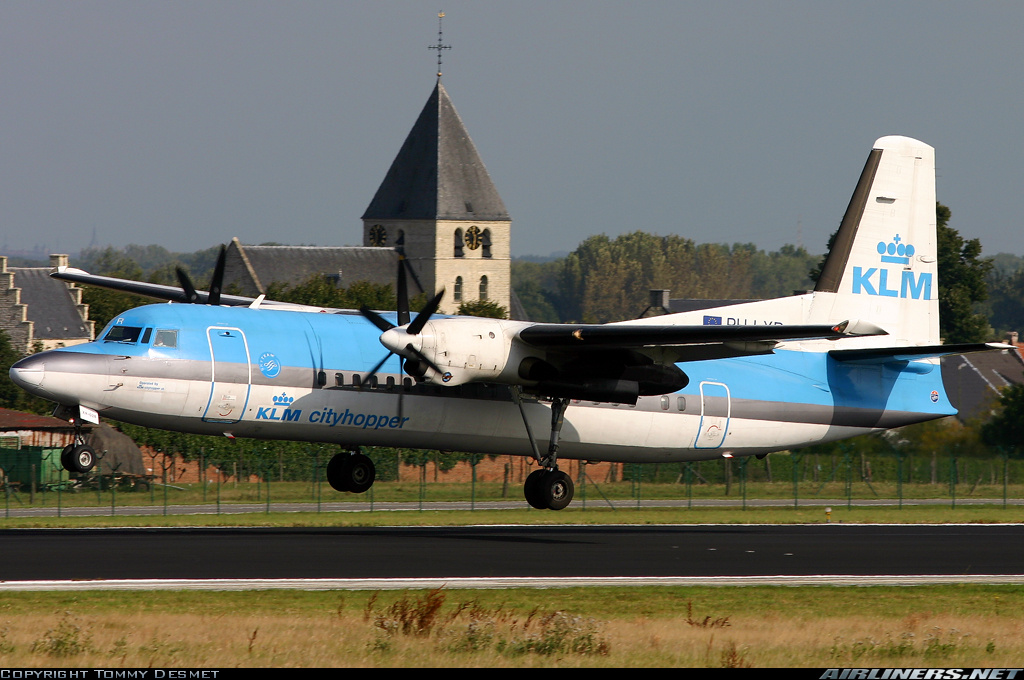} & \includegraphics[width=0.22\linewidth, height=0.22\linewidth]{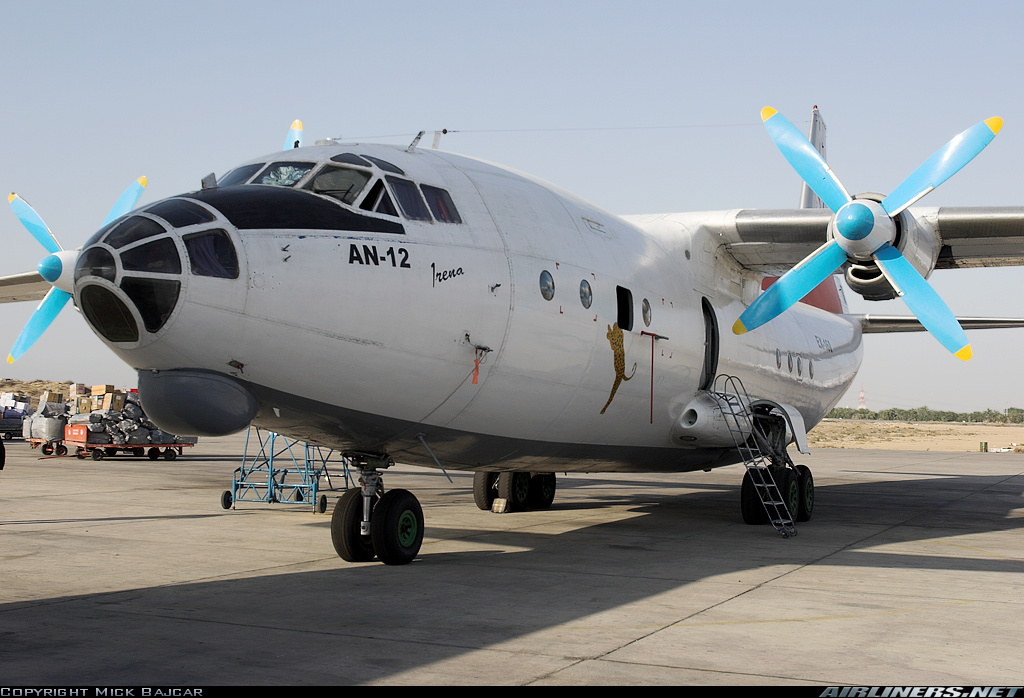} \\
        \includegraphics[width=0.22\linewidth, height=0.22\linewidth]{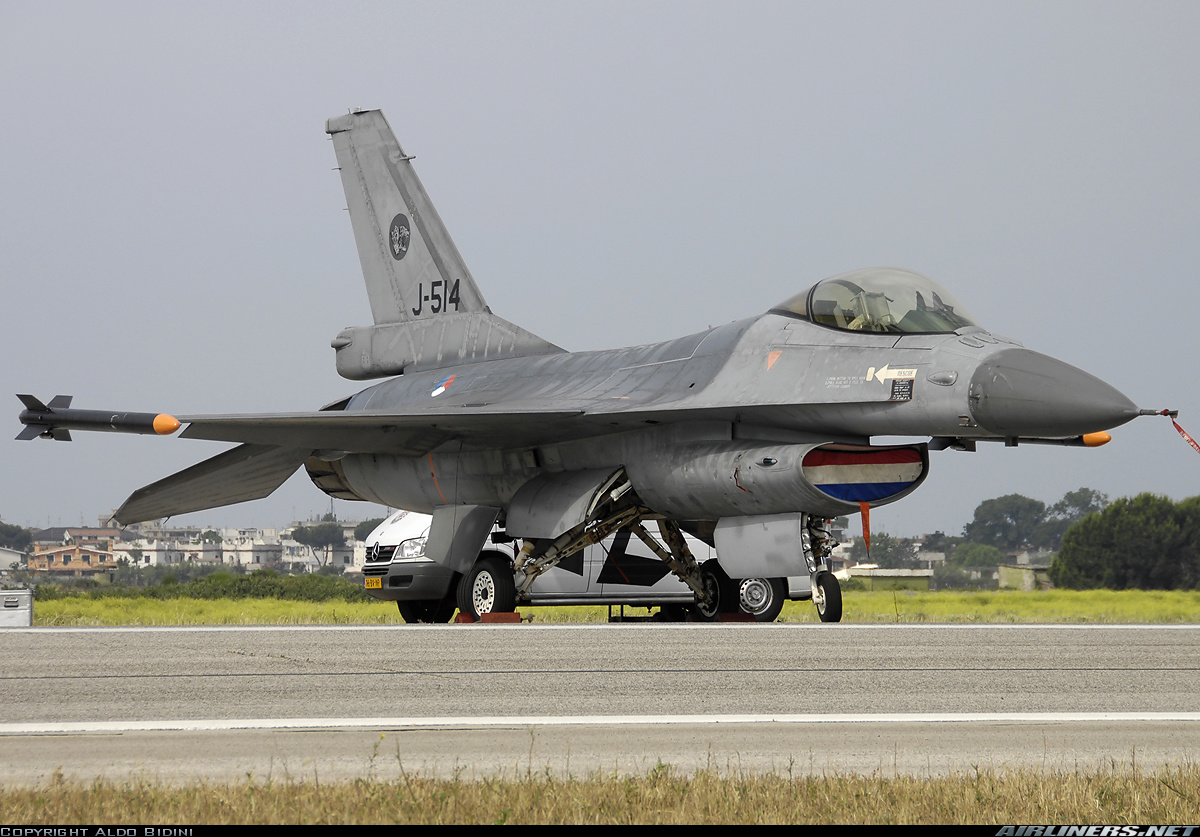} & \includegraphics[width=0.22\linewidth, height=0.22\linewidth]{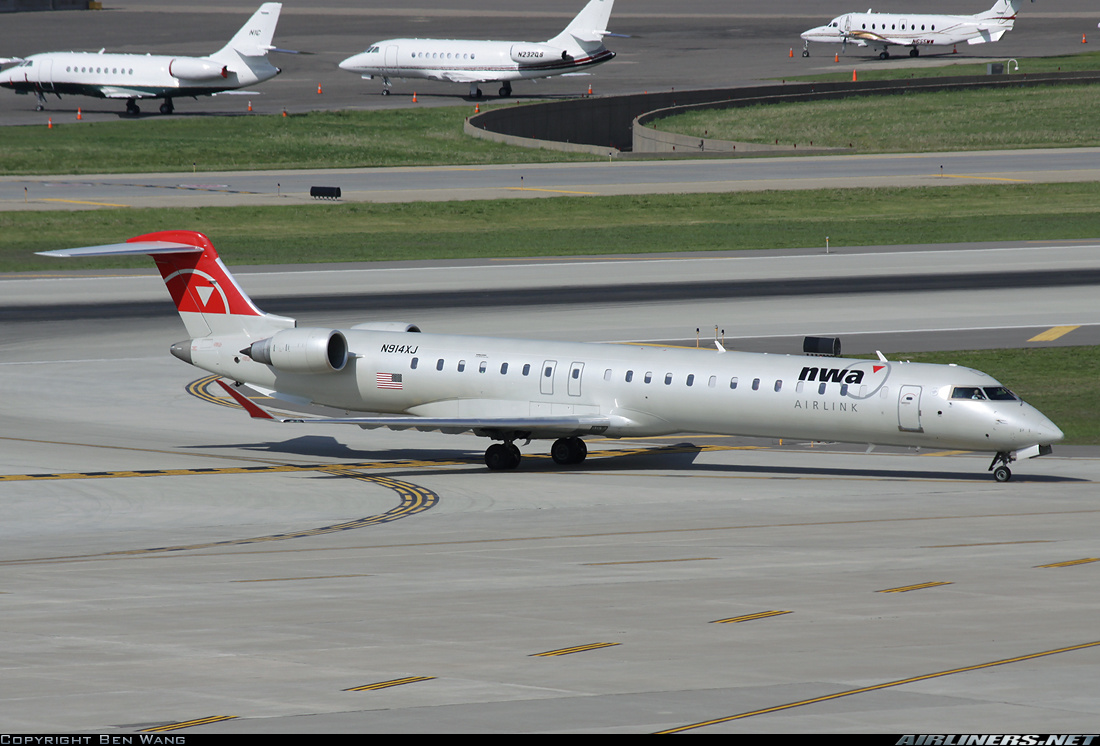} & \includegraphics[width=0.22\linewidth, height=0.22\linewidth]{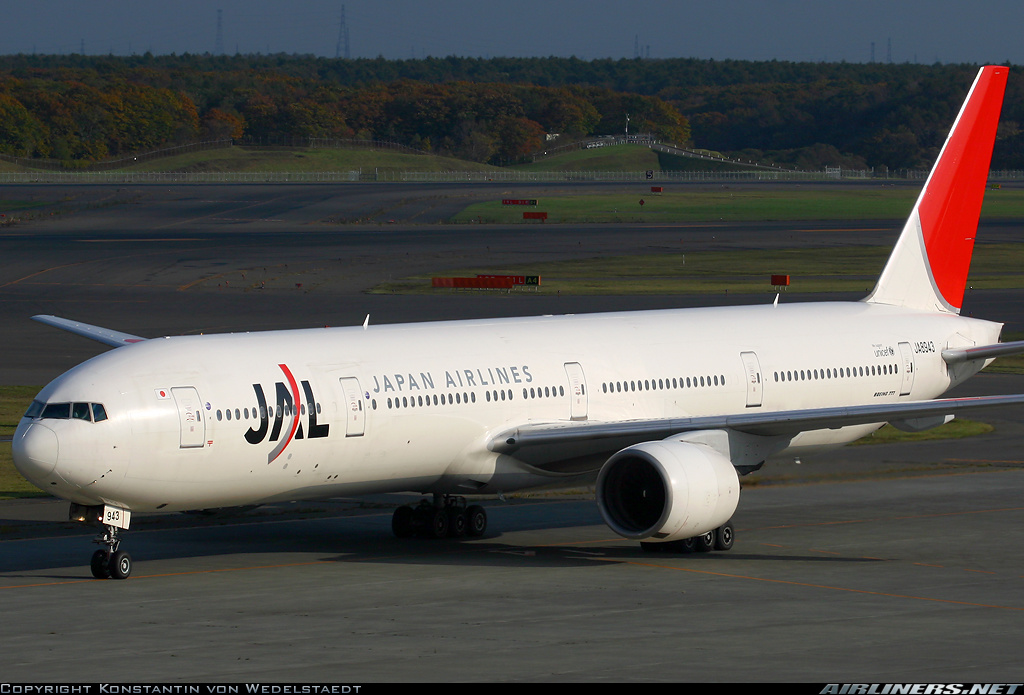} & \includegraphics[width=0.22\linewidth, height=0.22\linewidth]{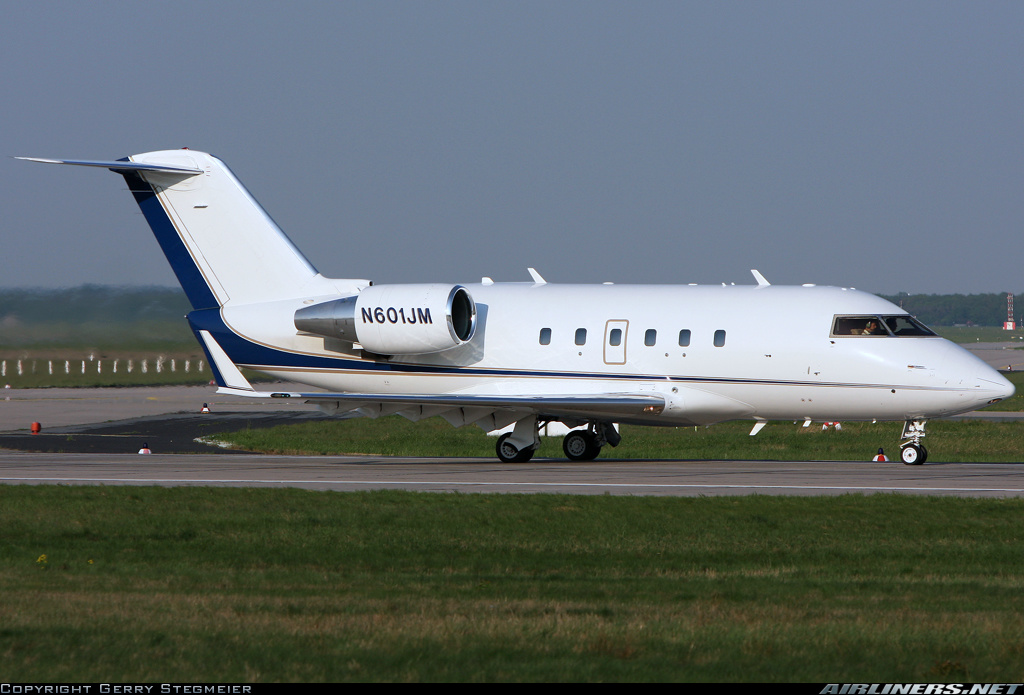} \\
        \includegraphics[width=0.22\linewidth, height=0.22\linewidth]{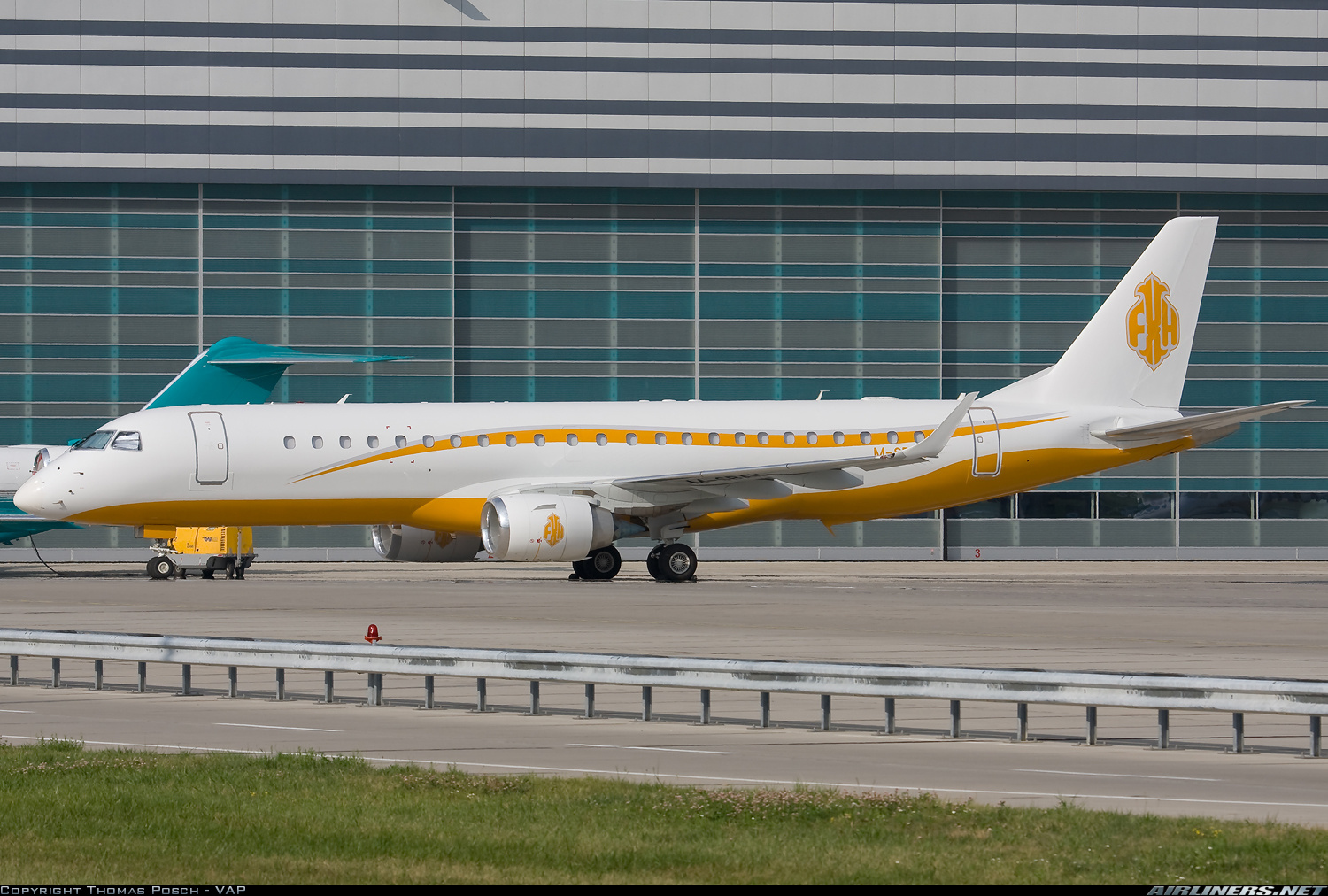} & \includegraphics[width=0.22\linewidth, height=0.22\linewidth]{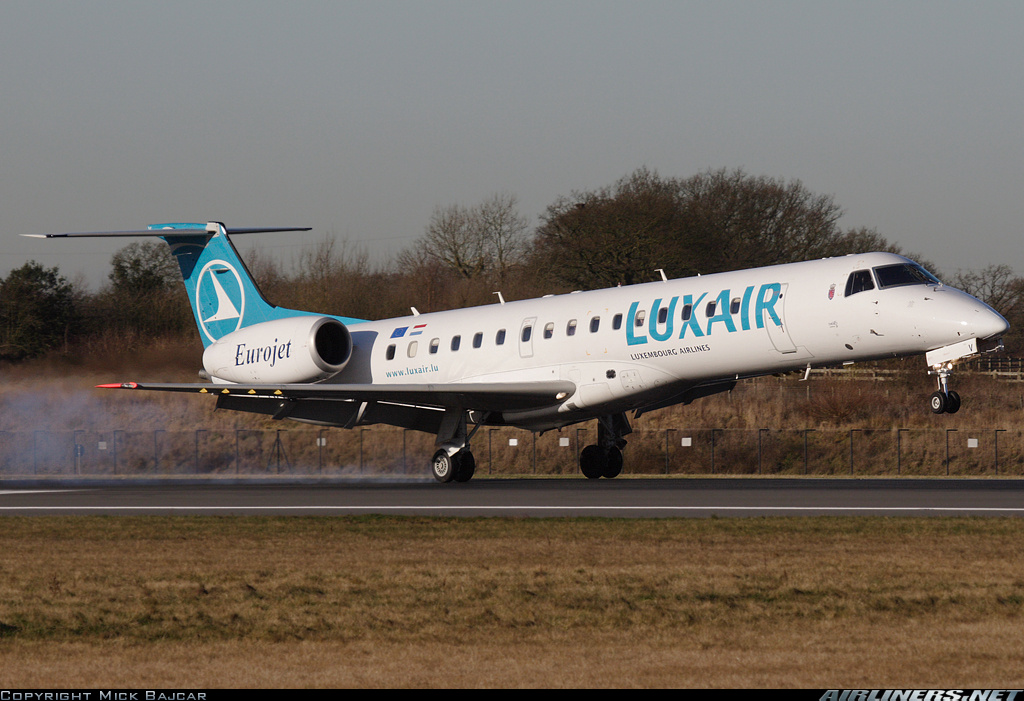} & \includegraphics[width=0.22\linewidth, height=0.22\linewidth]{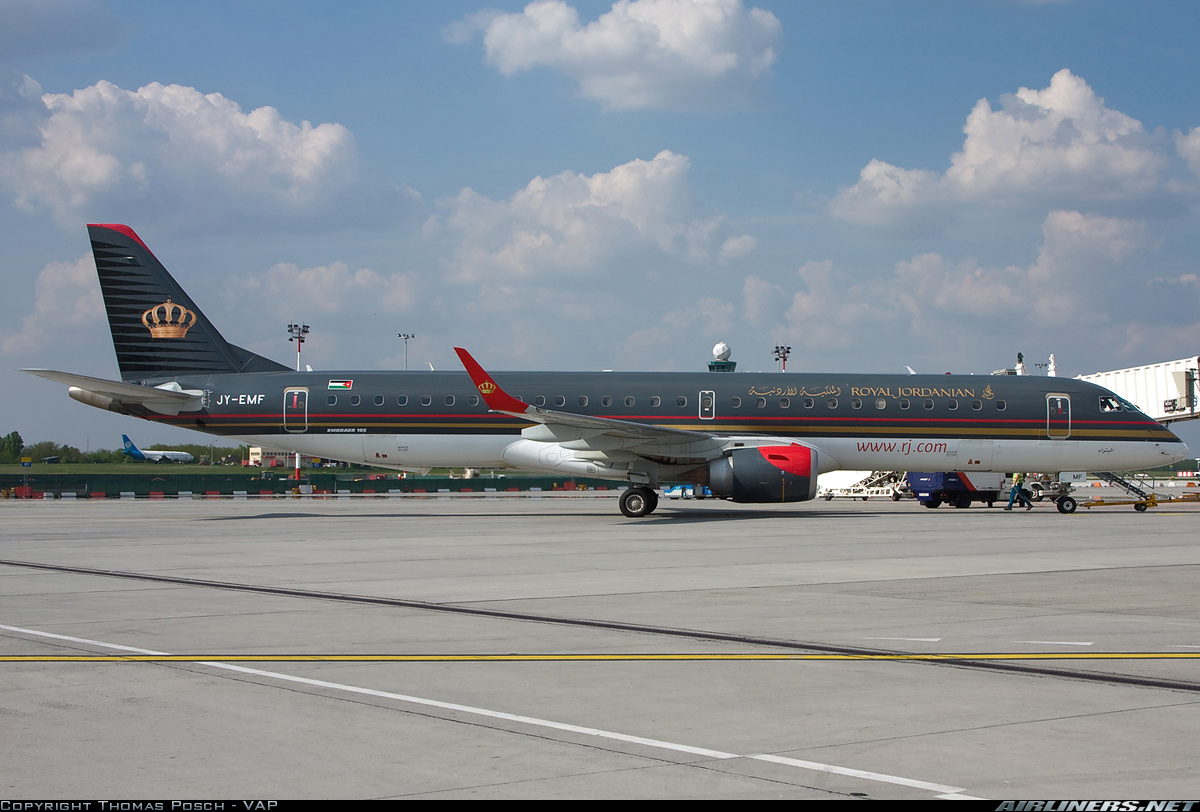} & \includegraphics[width=0.22\linewidth, height=0.22\linewidth]{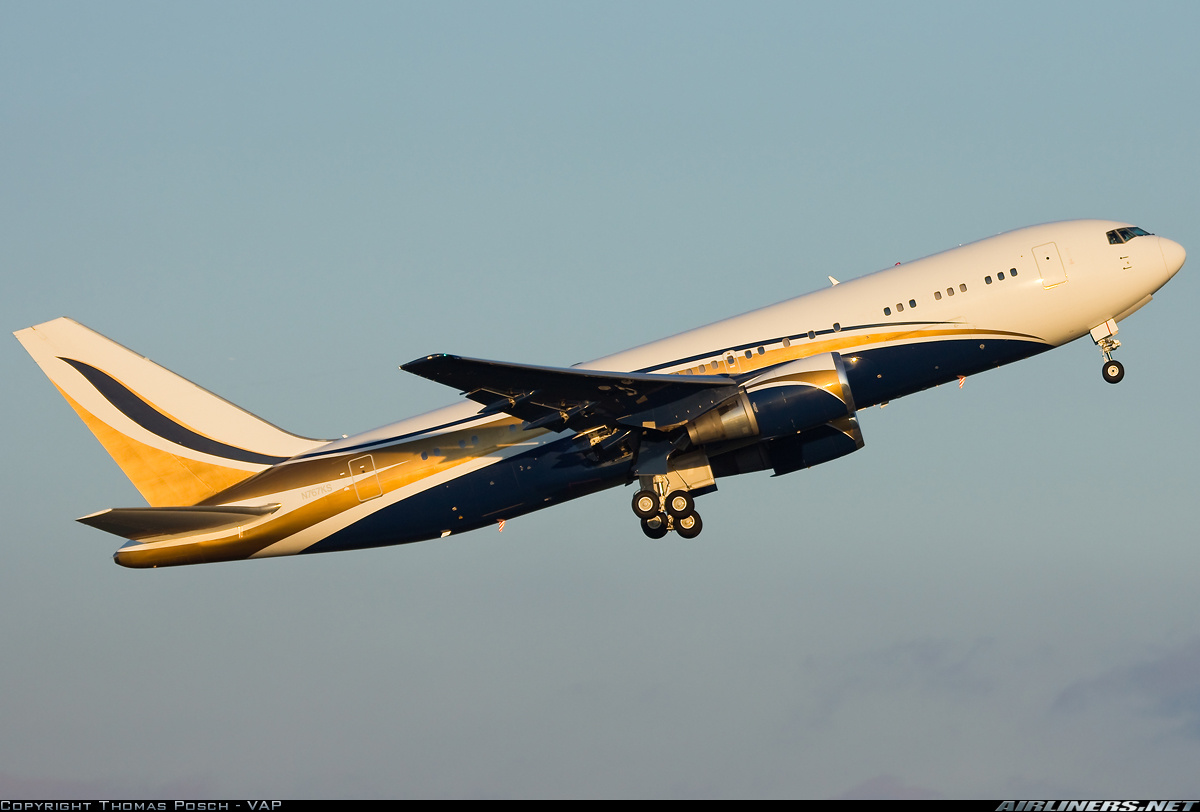}
      \end{tabular}
      \caption{ID images of Aircraft datasets.}
    \end{subfigure}
    \hspace{0.1\linewidth} % Adds a gap between the two grids
    % Right side grid
    \begin{subfigure}[t]{0.45\textwidth}
      \centering
      \begin{tabular}{cccc}
        \includegraphics[width=0.22\linewidth, height=0.22\linewidth]{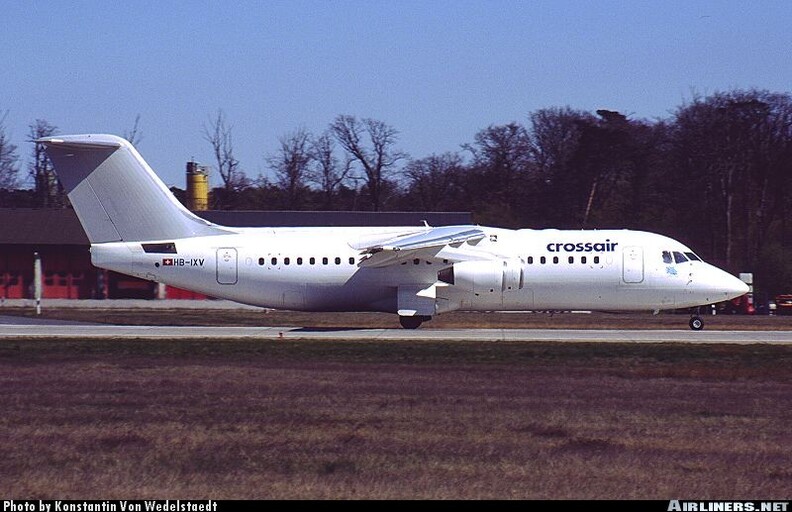} & \includegraphics[width=0.22\linewidth, height=0.22\linewidth]{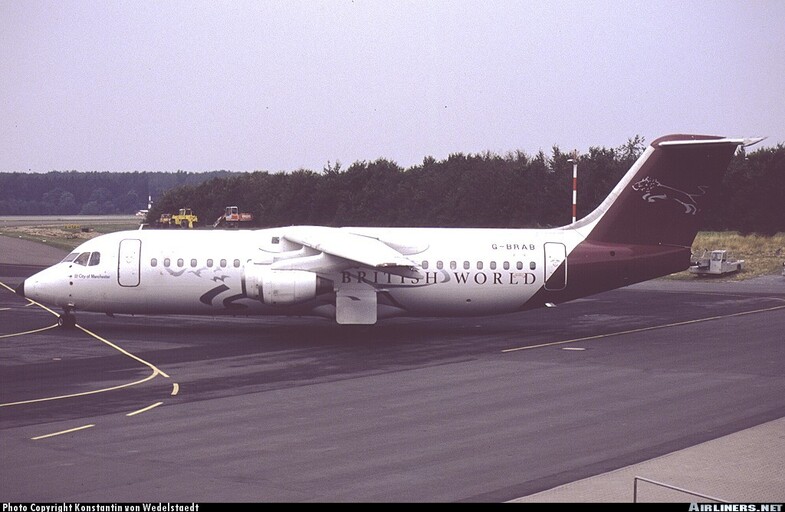} & \includegraphics[width=0.22\linewidth, height=0.22\linewidth]{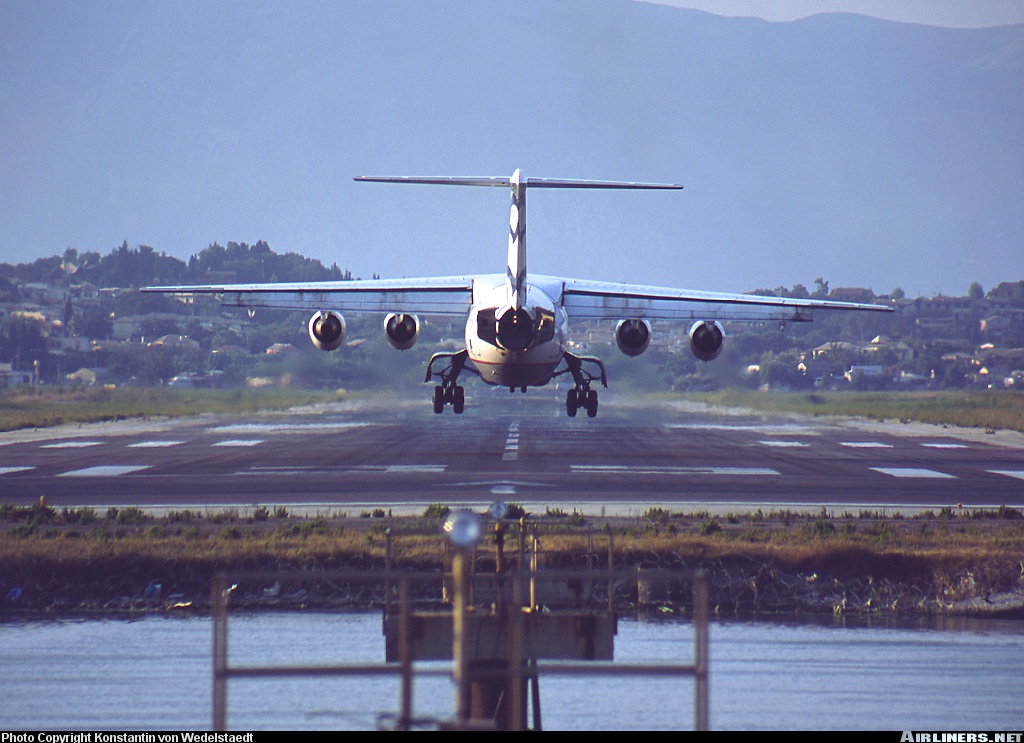} & \includegraphics[width=0.22\linewidth, height=0.22\linewidth]{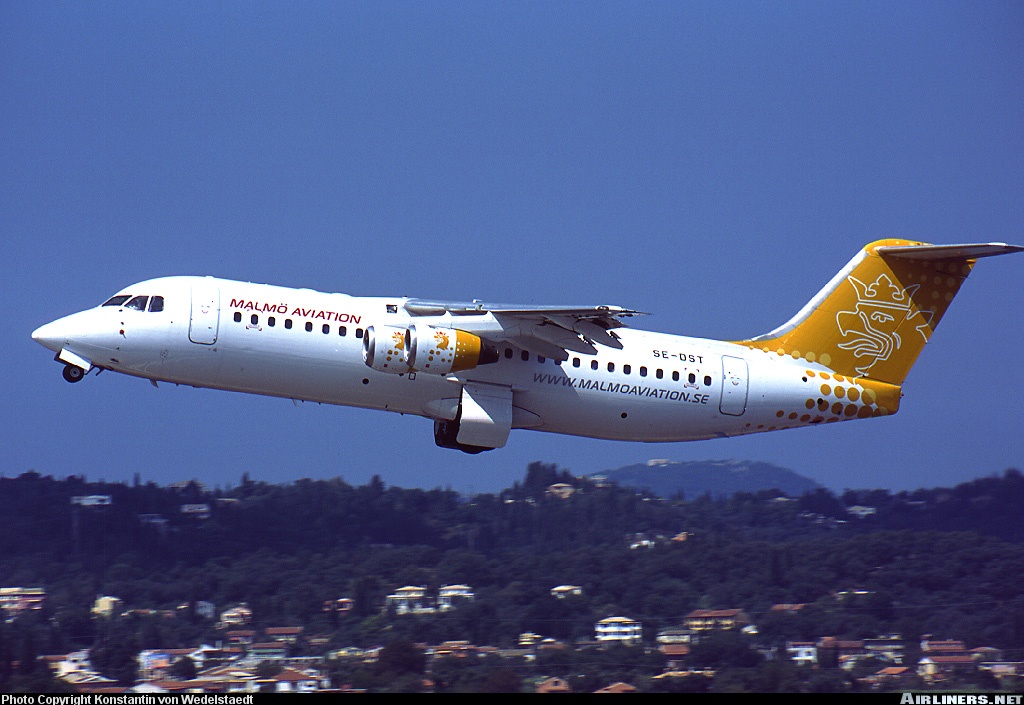} \\
        \includegraphics[width=0.22\linewidth, height=0.22\linewidth]{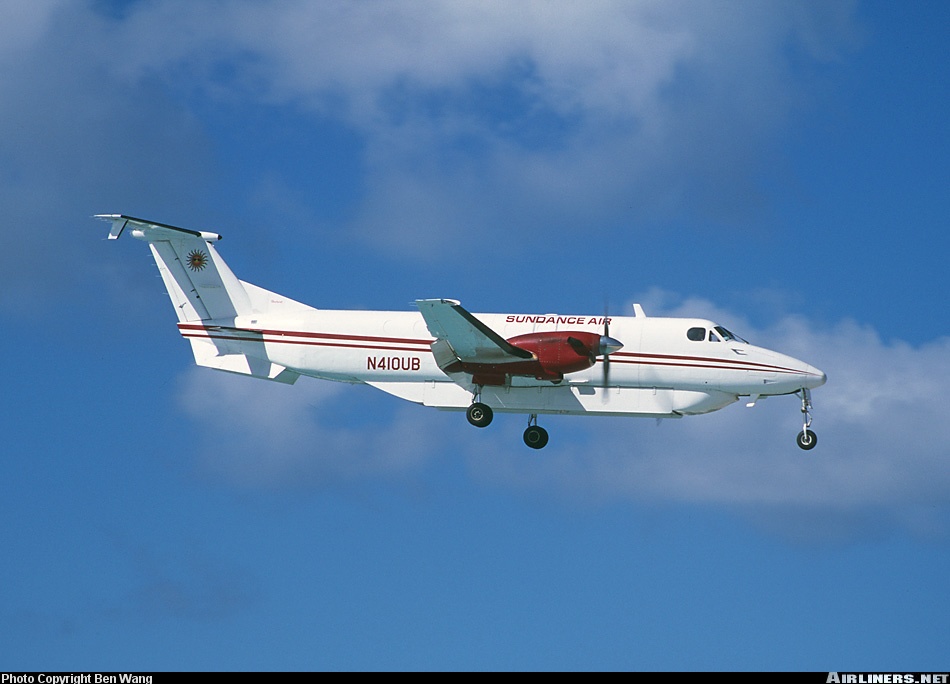} & \includegraphics[width=0.22\linewidth, height=0.22\linewidth]{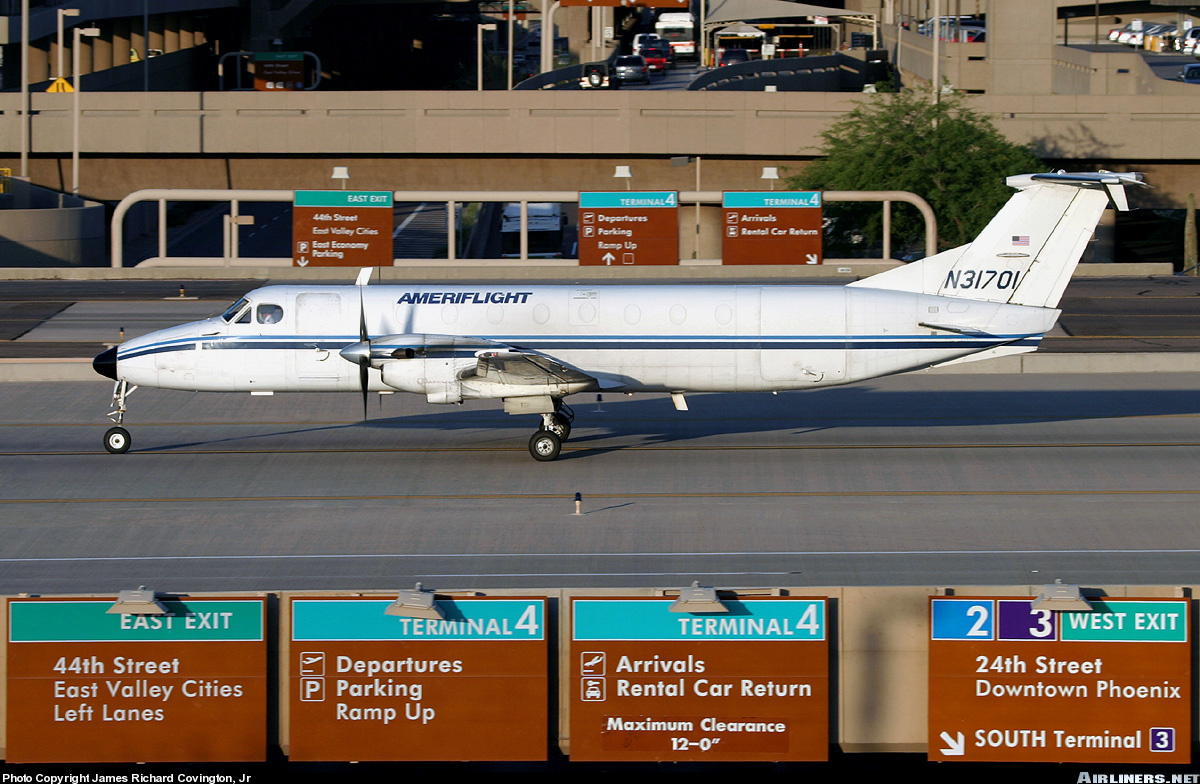} & \includegraphics[width=0.22\linewidth, height=0.22\linewidth]{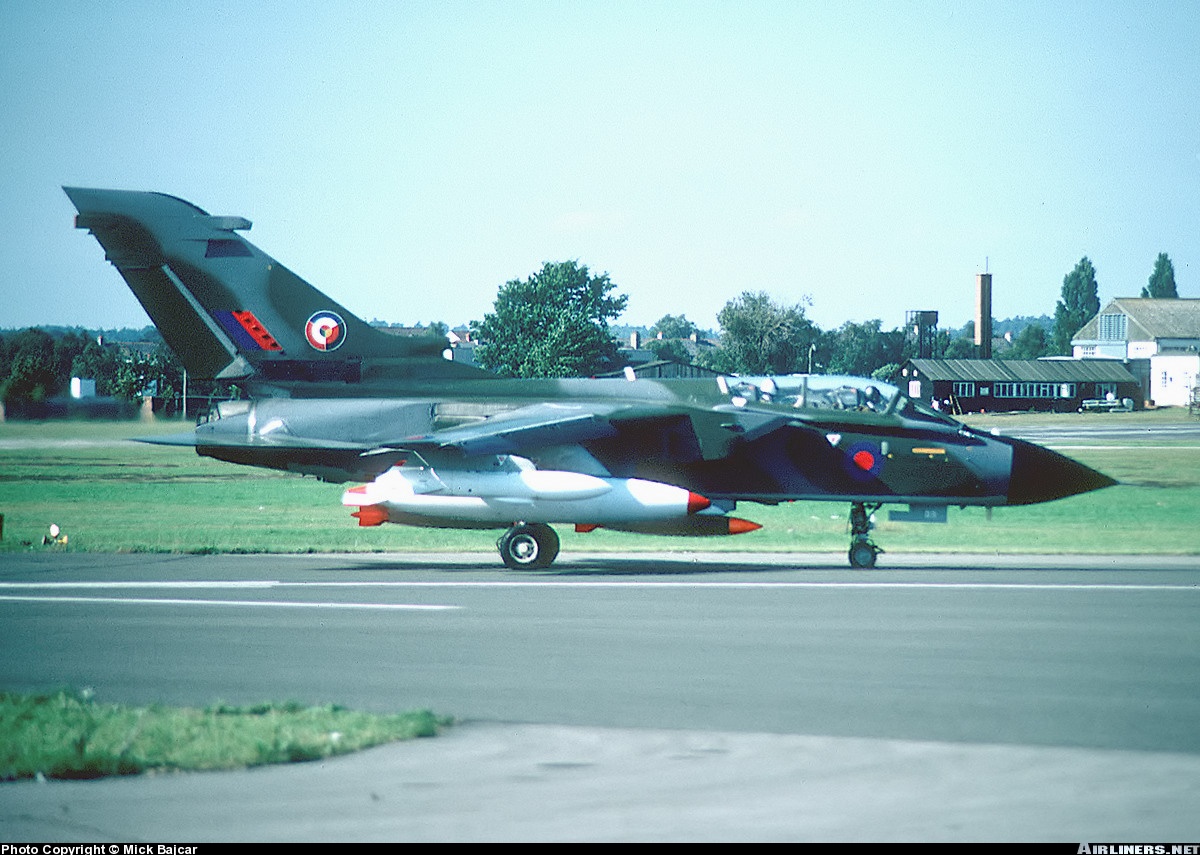} & \includegraphics[width=0.22\linewidth, height=0.22\linewidth]{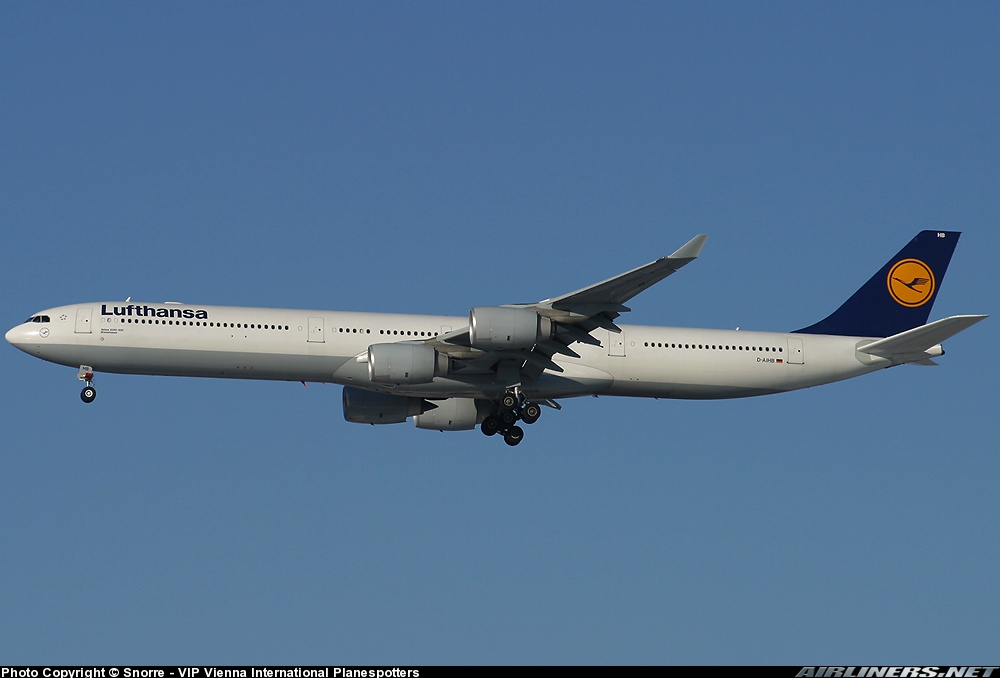} \\
        \includegraphics[width=0.22\linewidth, height=0.22\linewidth]{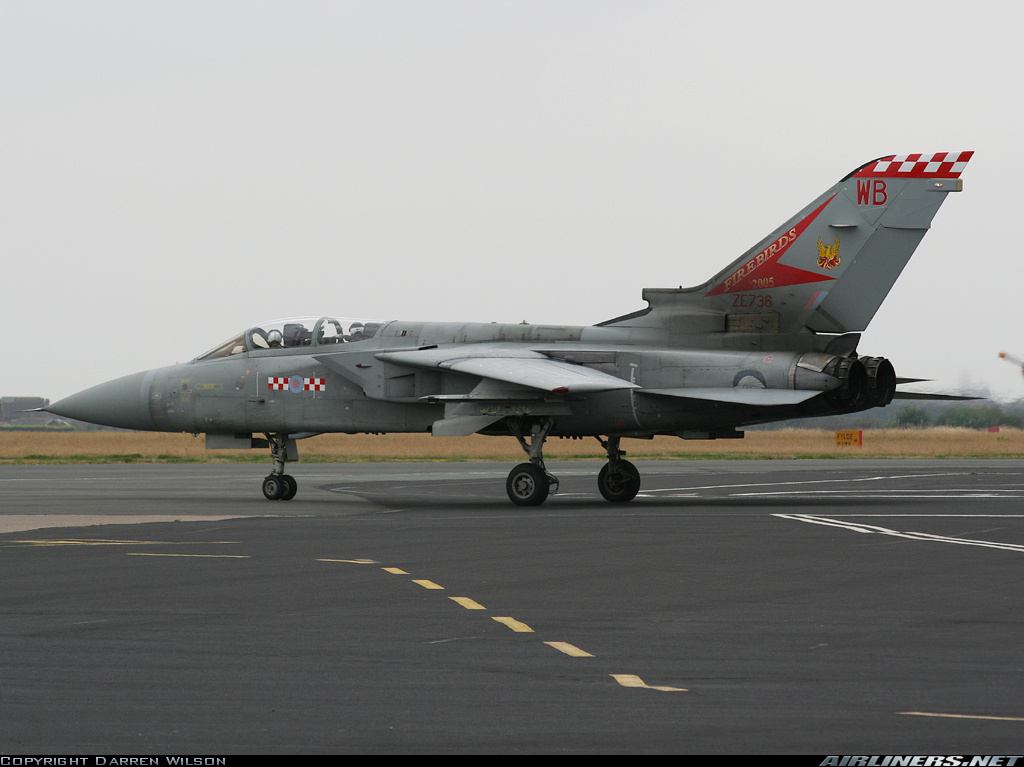} & \includegraphics[width=0.22\linewidth, height=0.22\linewidth]{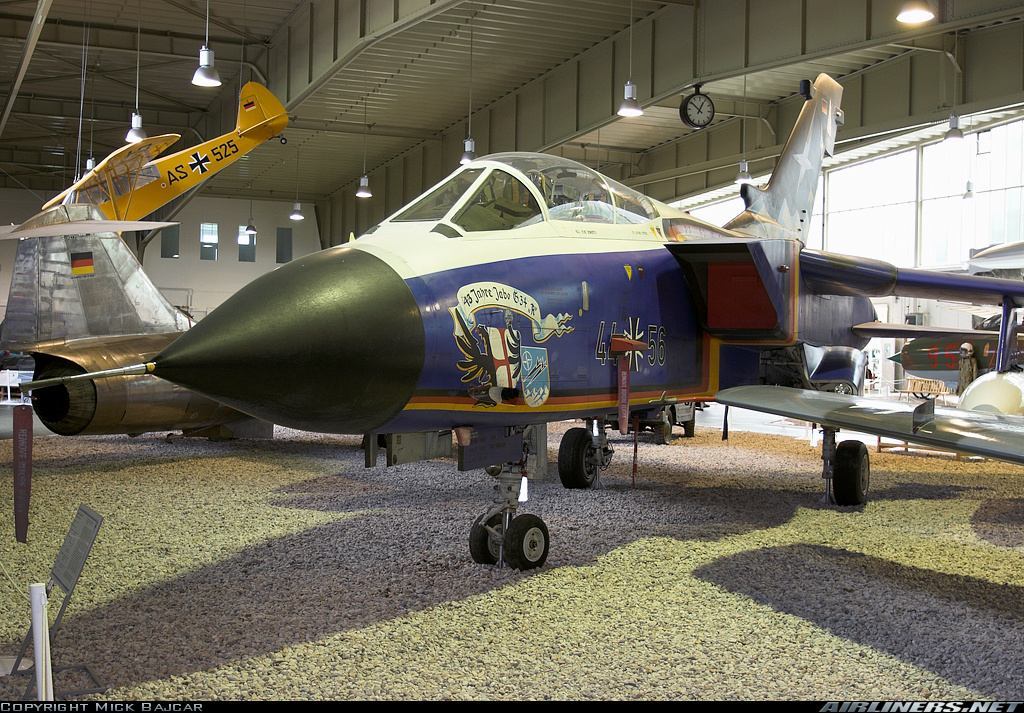} & \includegraphics[width=0.22\linewidth, height=0.22\linewidth]{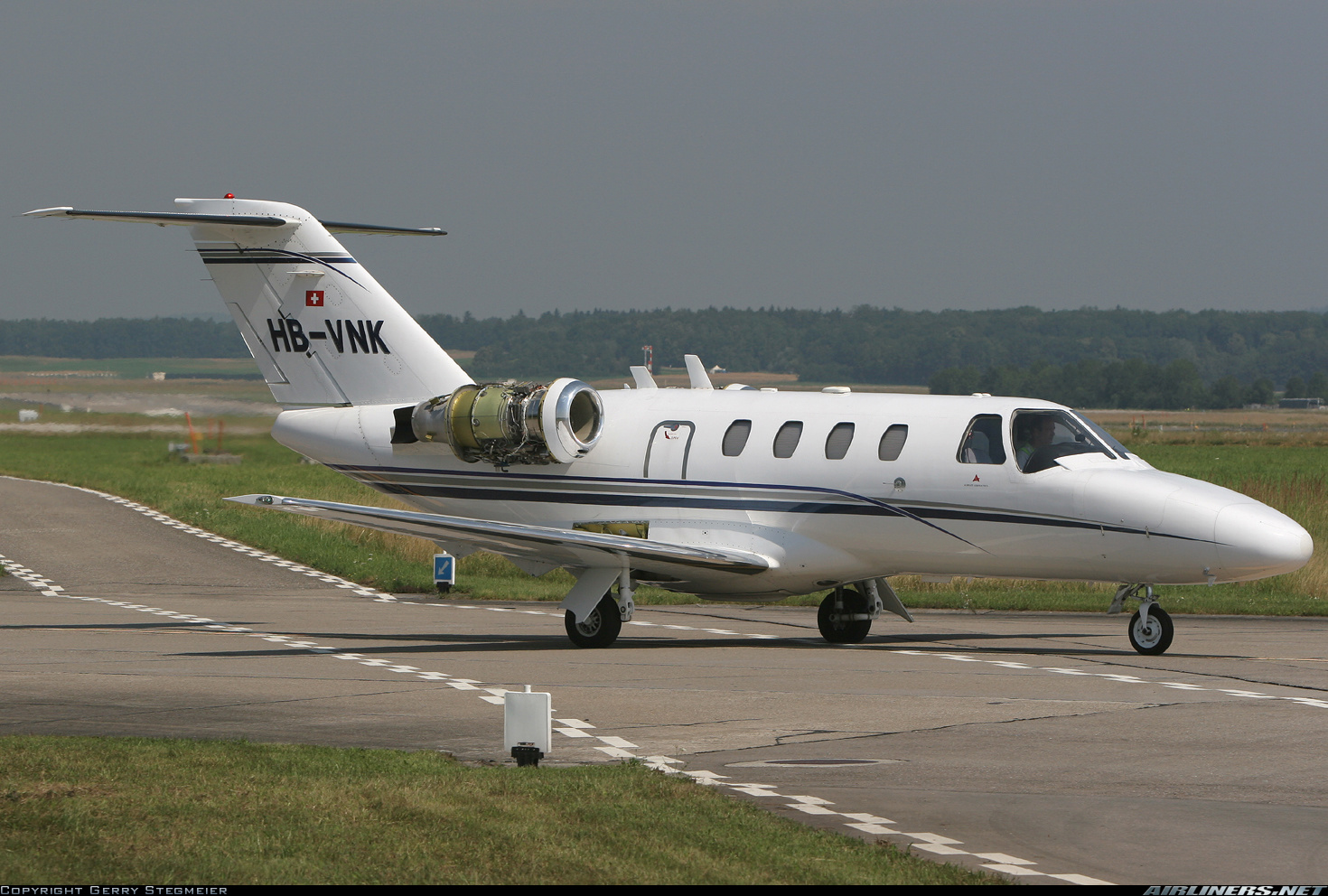} & \includegraphics[width=0.22\linewidth, height=0.22\linewidth]{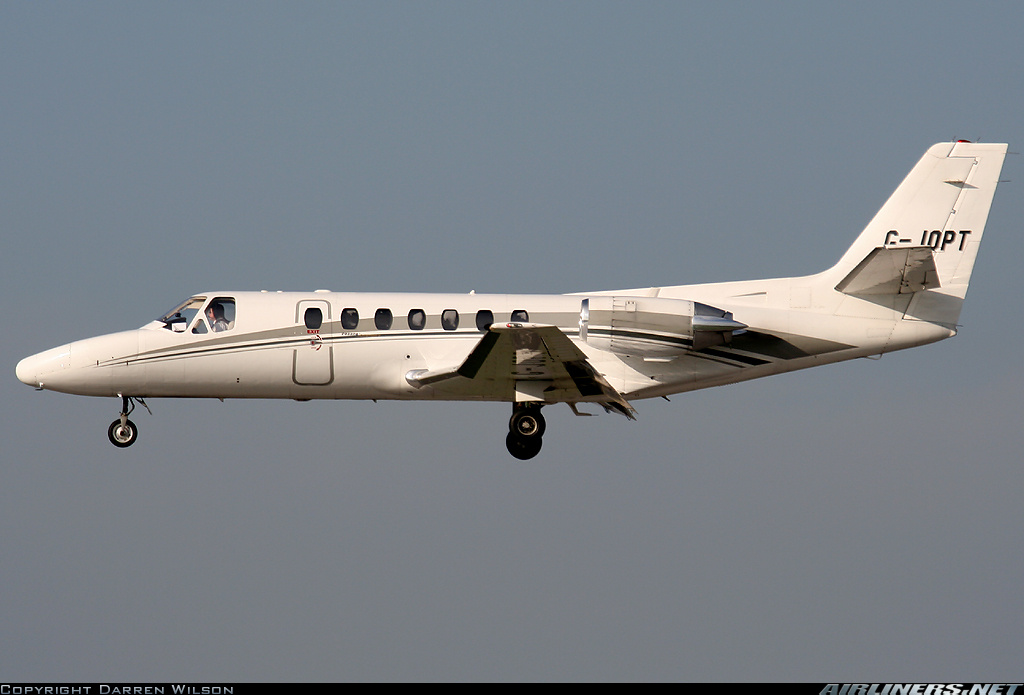} \\
        \includegraphics[width=0.22\linewidth, height=0.22\linewidth]{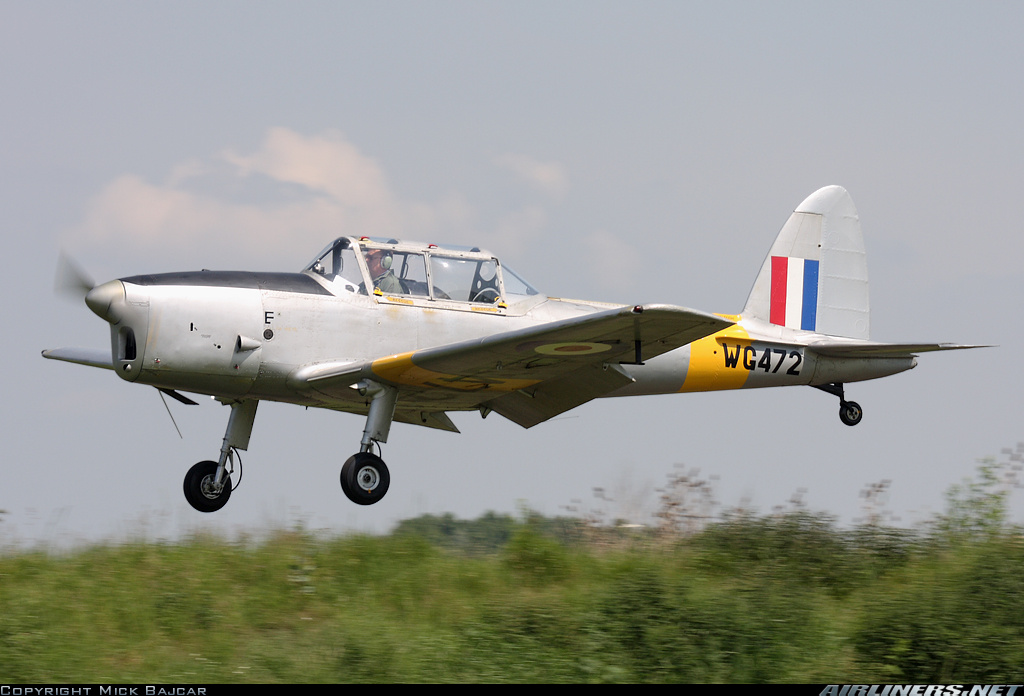} & \includegraphics[width=0.22\linewidth, height=0.22\linewidth]{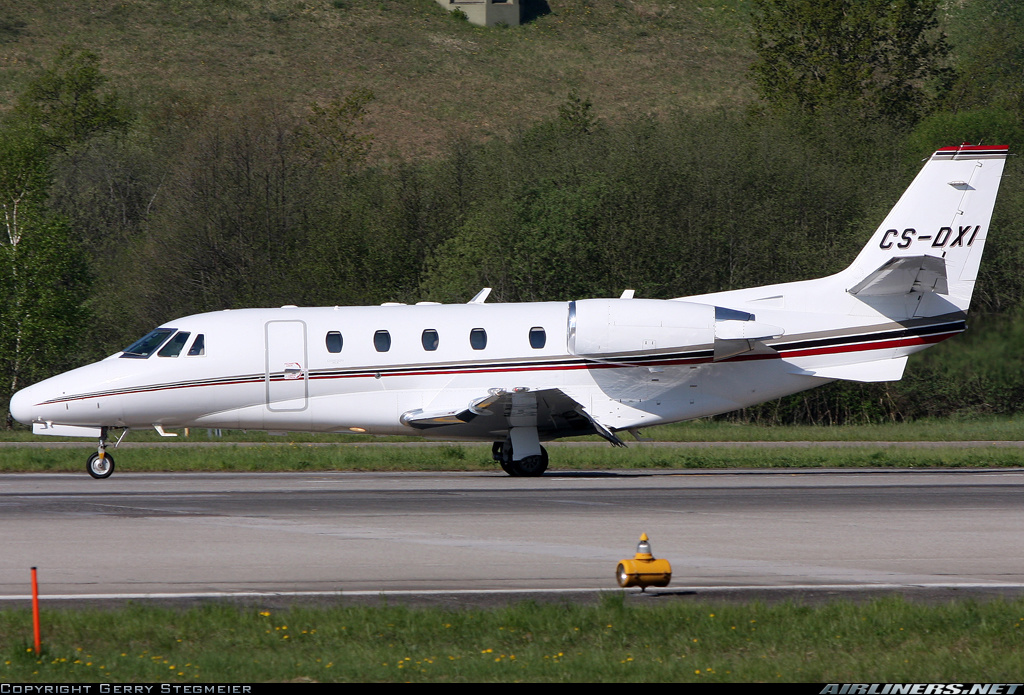} & \includegraphics[width=0.22\linewidth, height=0.22\linewidth]{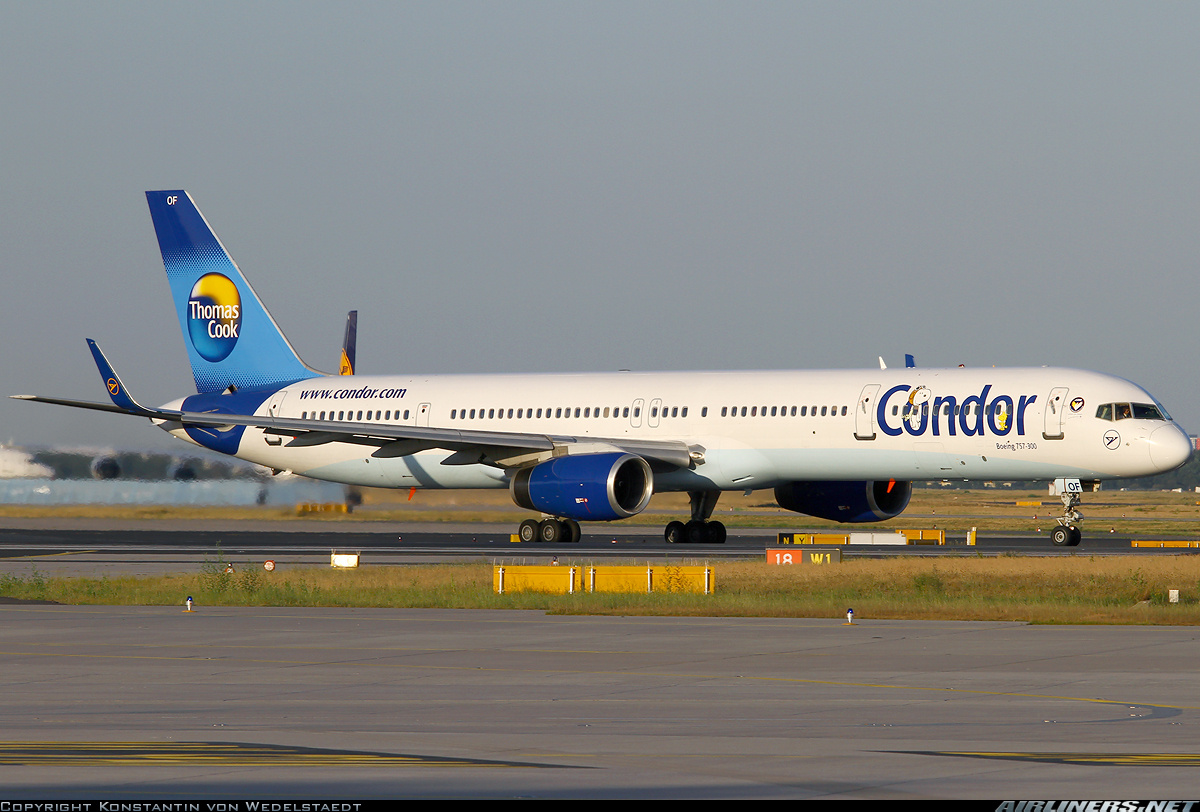} & \includegraphics[width=0.22\linewidth, height=0.22\linewidth]{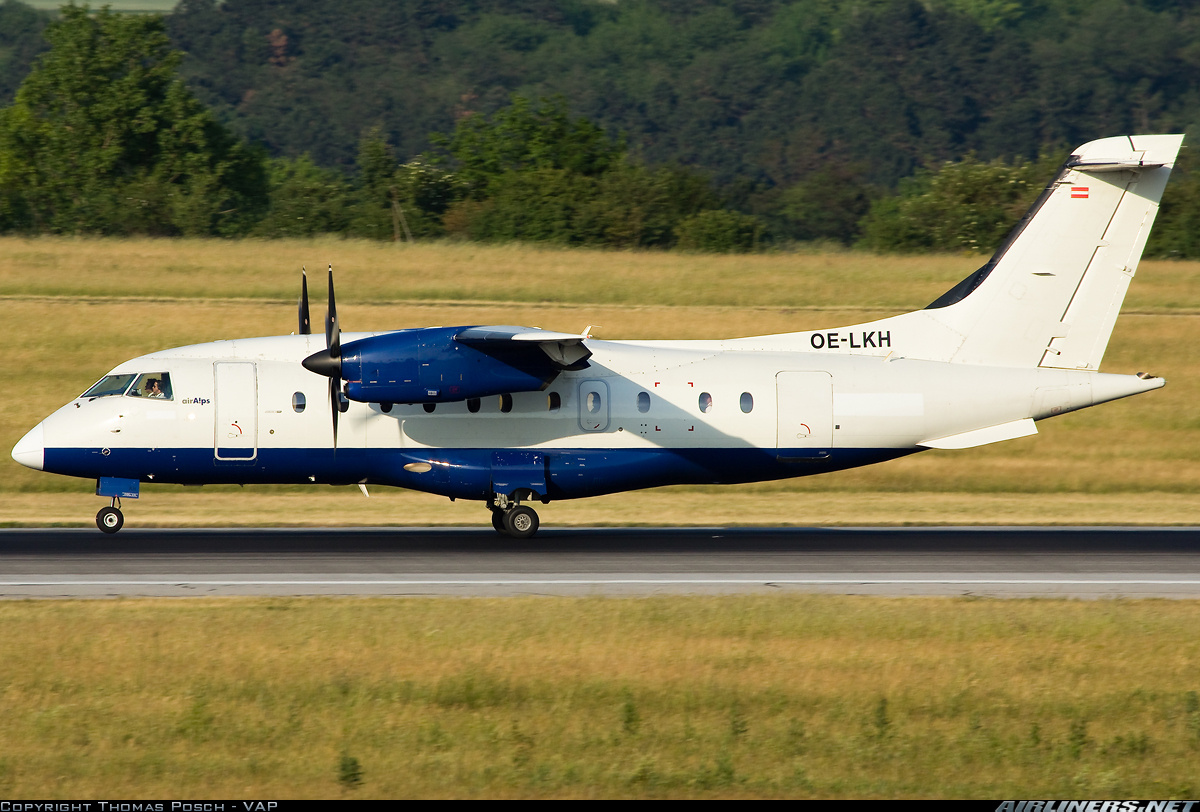}
      \end{tabular}
      \caption{Fine-grained OOD images of Aircraft datasets.}
    \end{subfigure}
  }
  \caption{Examples of ID and fine-grained OOD images of Aircraft datasets.}
  \label{fig:aircraft}
\end{figure}

\noindent\textbf{Aircraft}: We consider 100 classes of the Aircraft datasets. Out of these, 10 classes are randomly selected to serve as fine-grained OOD data, while the remaining classes constitute the ID set. The ID and OOD classes are: \\
\begin{itemize}
    \item ID classes: 0, 1, 2, 3, 4, 5, 6, 7, 8, 9, 10, 11, 12, 13, 14, 16, 18, 19, 20, 21, 22, 23, 24, 25, 26, 27, 28, 29, 30, 31, 33, 34, 35, 36, 37, 39, 41, 42, 43, 44, 45, 46, 47, 48, 51, 52, 53, 54, 55, 56, 57, 59, 60, 61, 62, 64, 65, 66, 67, 68, 69, 70, 71, 72, 73, 74, 75, 76, 77, 78, 79, 80, 81, 82, 83, 84, 85, 86, 87, 88, 89, 90, 91, 92, 93, 94, 95, 97, 98, 99 \\
    \item OOD classes: 15, 17, 32, 38, 40, 49, 50, 58, 63, 96 \\
\end{itemize}
\noindent The examples of ID and fine-grained OOD images of the Aircraft datasets are presented in \Cref{fig:aircraft}.

\begin{figure}[h!]
  \centering
  \resizebox{0.8\textwidth}{!}{%
    \begin{subfigure}[t]{0.45\textwidth}
      \centering
      \begin{tabular}{cccc}
        \includegraphics[width=0.22\linewidth, height=0.22\linewidth]{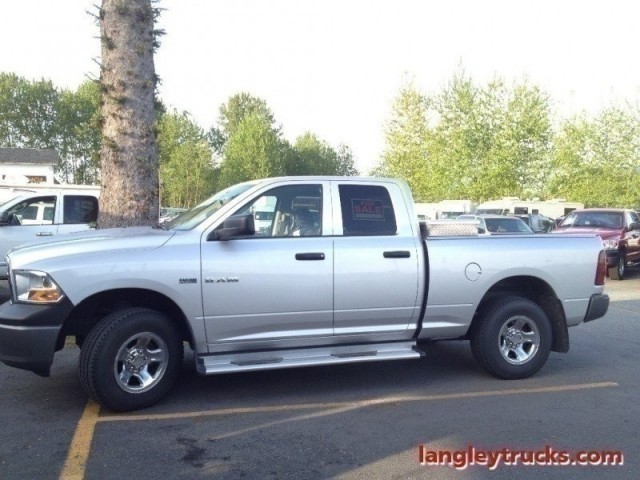} & \includegraphics[width=0.22\linewidth, height=0.22\linewidth]{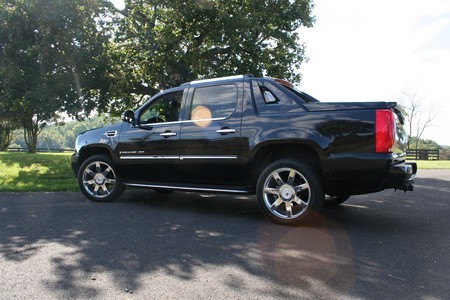} & \includegraphics[width=0.22\linewidth, height=0.22\linewidth]{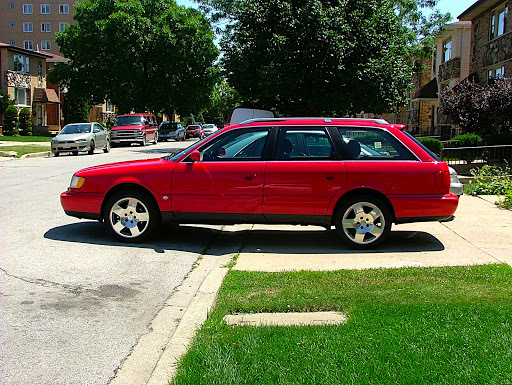} & \includegraphics[width=0.22\linewidth, height=0.22\linewidth]{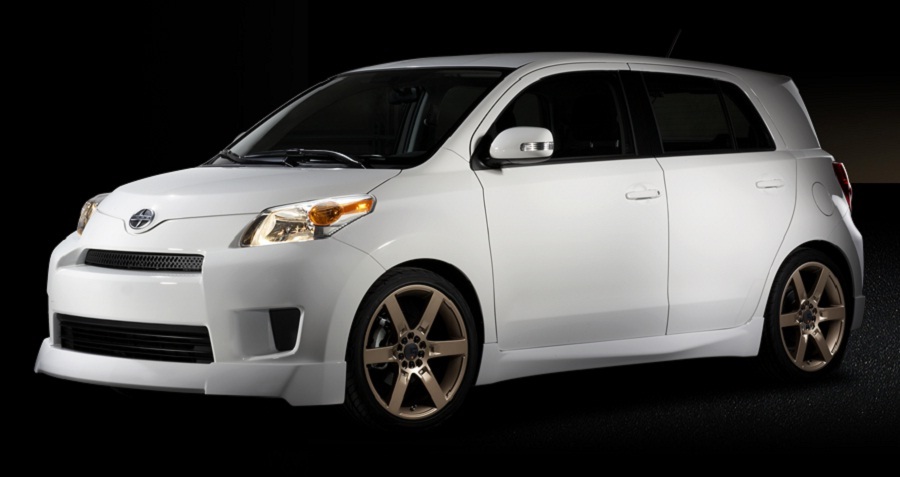} \\
        \includegraphics[width=0.22\linewidth, height=0.22\linewidth]{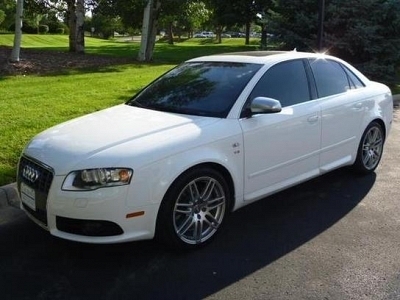} & \includegraphics[width=0.22\linewidth, height=0.22\linewidth]{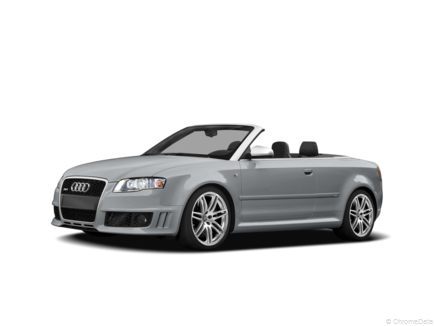} & \includegraphics[width=0.22\linewidth, height=0.22\linewidth]{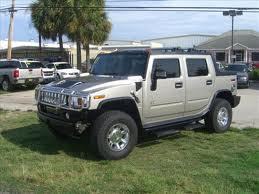} & \includegraphics[width=0.22\linewidth, height=0.22\linewidth]{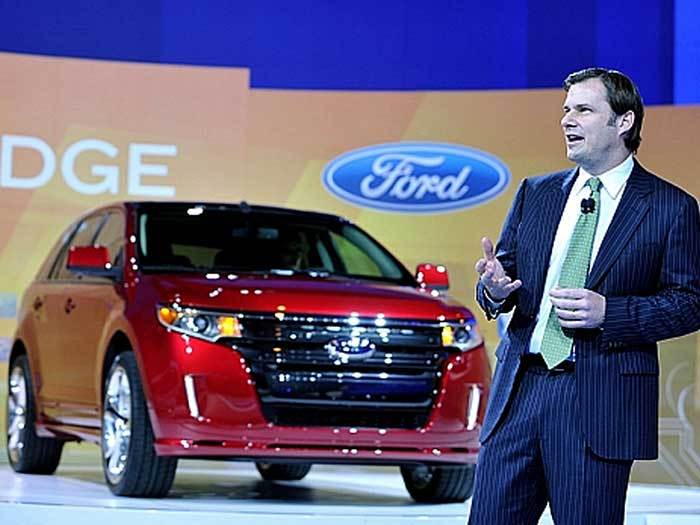} \\
        \includegraphics[width=0.22\linewidth, height=0.22\linewidth]{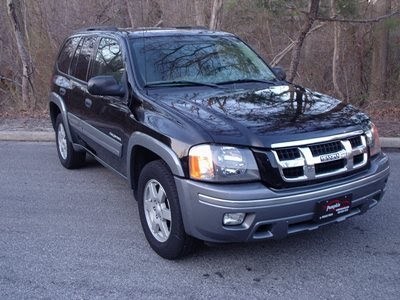} & \includegraphics[width=0.22\linewidth, height=0.22\linewidth]{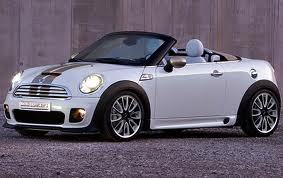} & \includegraphics[width=0.22\linewidth, height=0.22\linewidth]{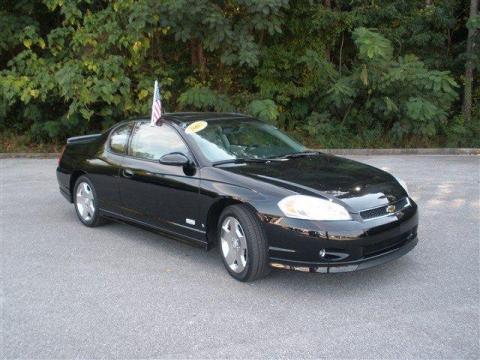} & \includegraphics[width=0.22\linewidth, height=0.22\linewidth]{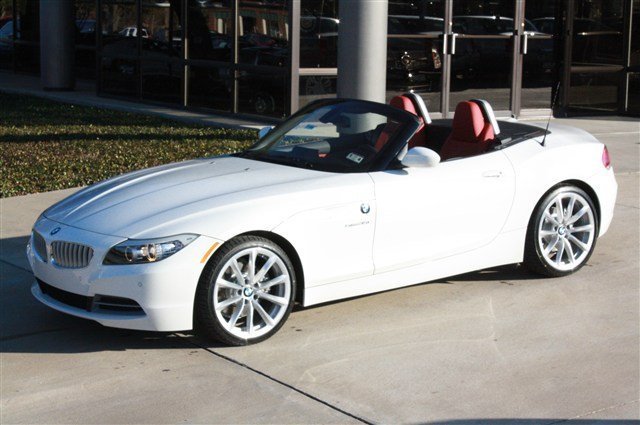} \\
        \includegraphics[width=0.22\linewidth, height=0.22\linewidth]{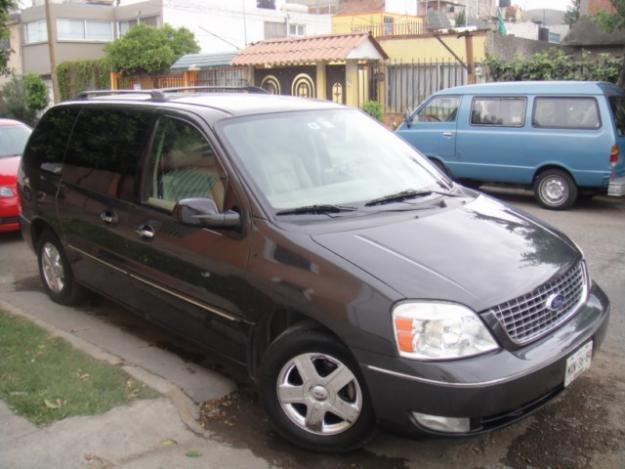} & \includegraphics[width=0.22\linewidth, height=0.22\linewidth]{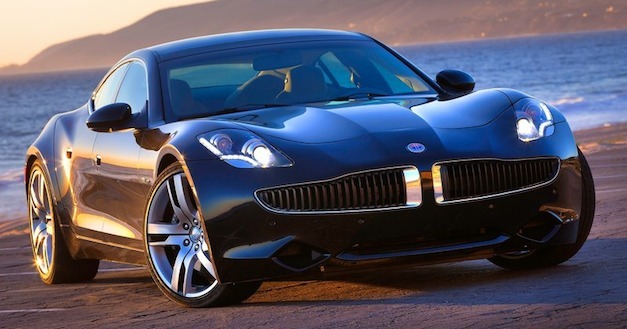} & \includegraphics[width=0.22\linewidth, height=0.22\linewidth]{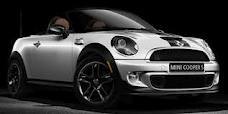} & \includegraphics[width=0.22\linewidth, height=0.22\linewidth]{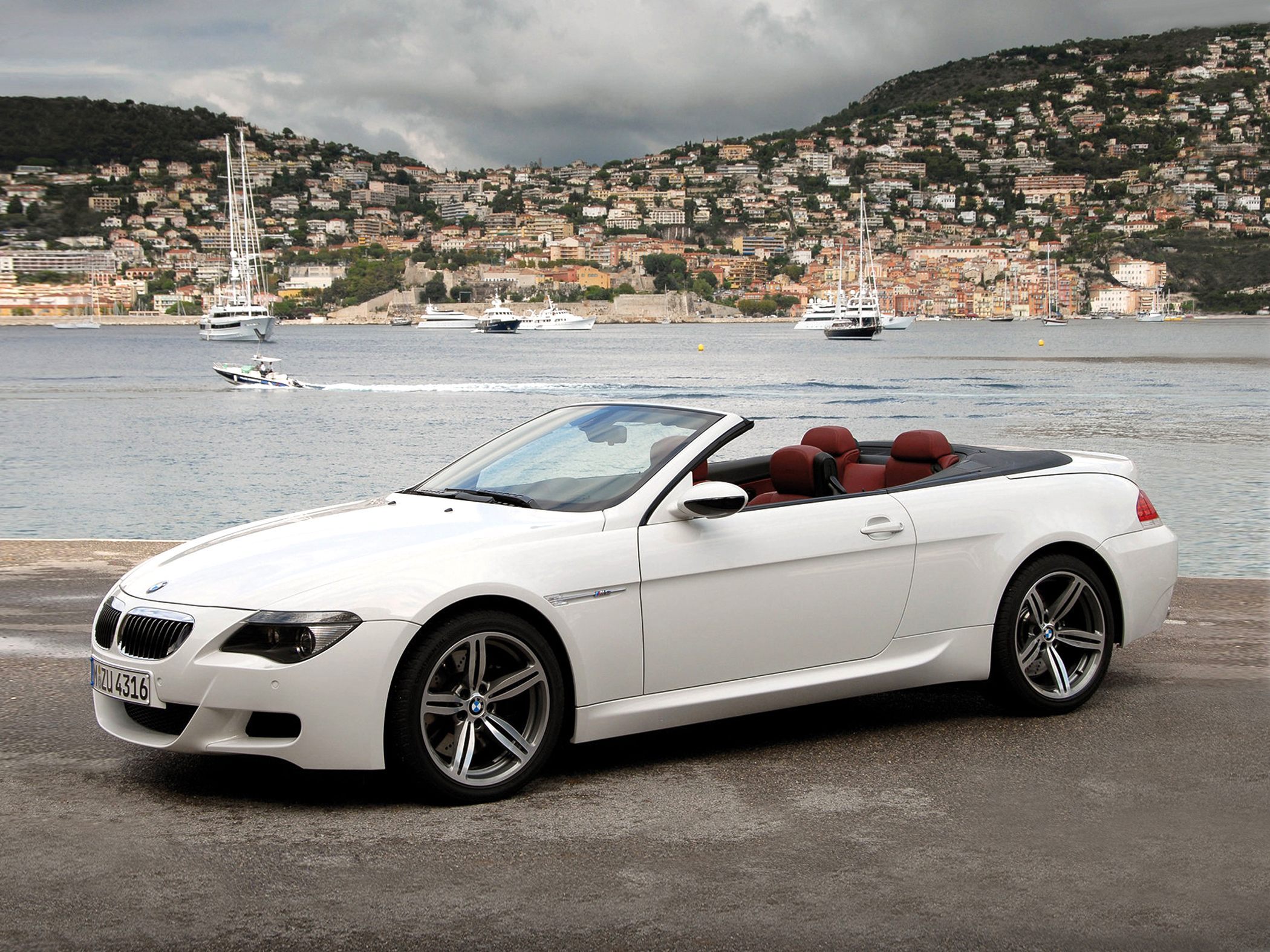}
      \end{tabular}
      \caption{ID images of Car datasets.}
    \end{subfigure}
    \hspace{0.1\linewidth}
    \begin{subfigure}[t]{0.45\textwidth}
      \centering
      \begin{tabular}{cccc}
        \includegraphics[width=0.22\linewidth, height=0.22\linewidth]{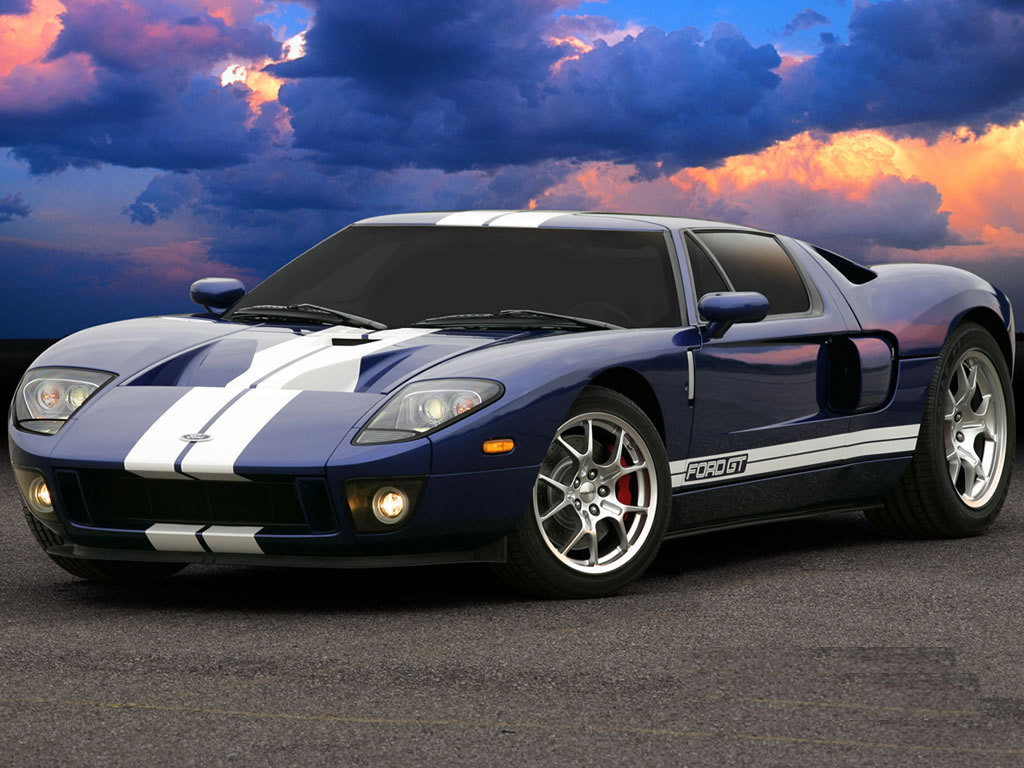} & \includegraphics[width=0.22\linewidth, height=0.22\linewidth]{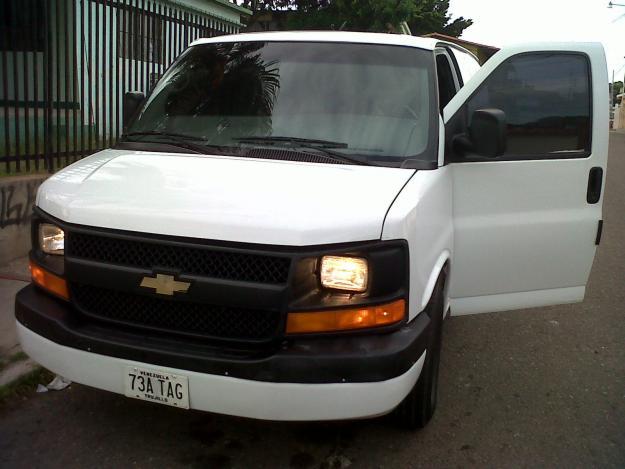} & \includegraphics[width=0.22\linewidth, height=0.22\linewidth]{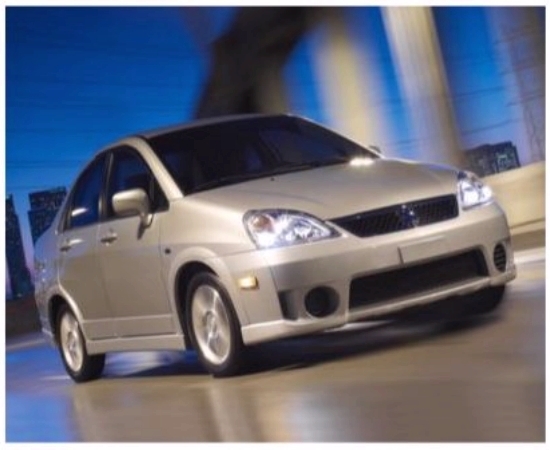} & \includegraphics[width=0.22\linewidth, height=0.22\linewidth]{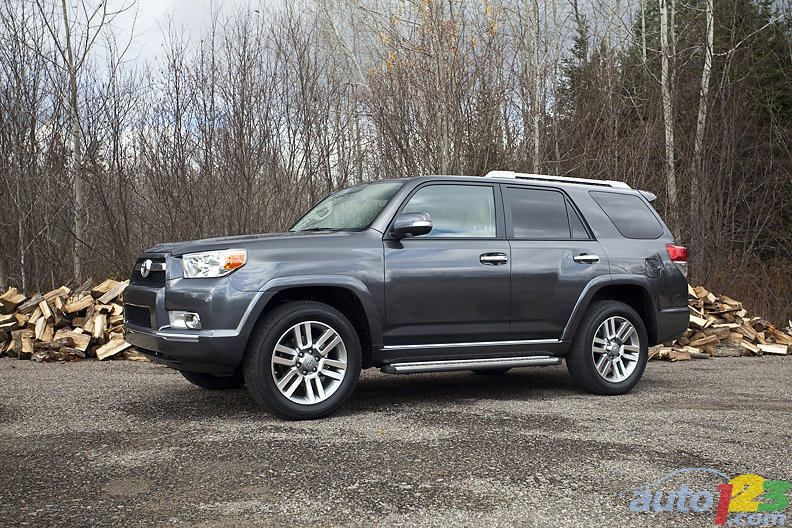} \\
        \includegraphics[width=0.22\linewidth, height=0.22\linewidth]{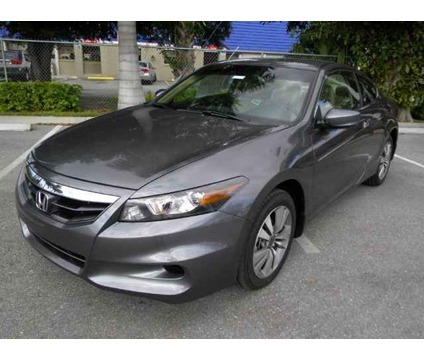} & \includegraphics[width=0.22\linewidth, height=0.22\linewidth]{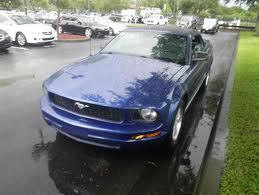} & \includegraphics[width=0.22\linewidth, height=0.22\linewidth]{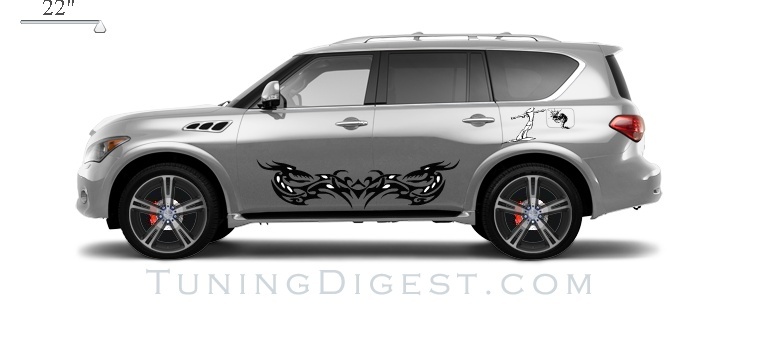} & \includegraphics[width=0.22\linewidth, height=0.22\linewidth]{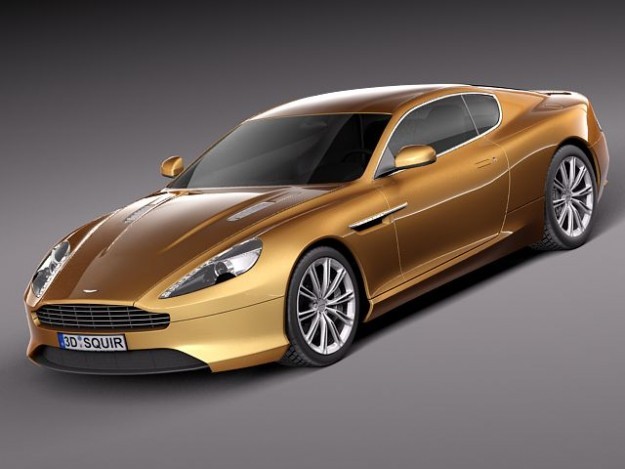} \\
        \includegraphics[width=0.22\linewidth, height=0.22\linewidth]{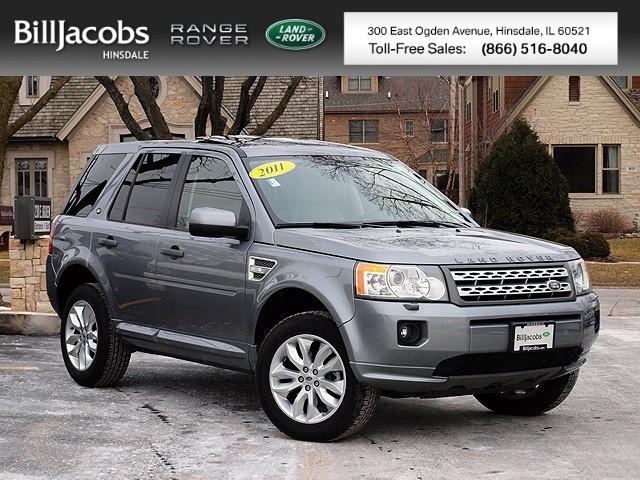} & \includegraphics[width=0.22\linewidth, height=0.22\linewidth]{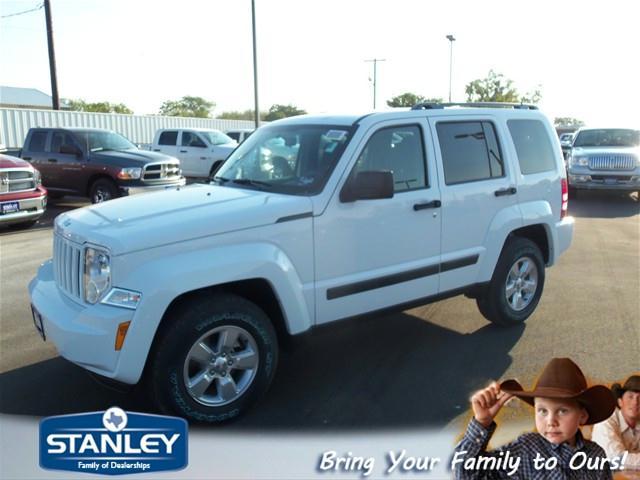} & \includegraphics[width=0.22\linewidth, height=0.22\linewidth]{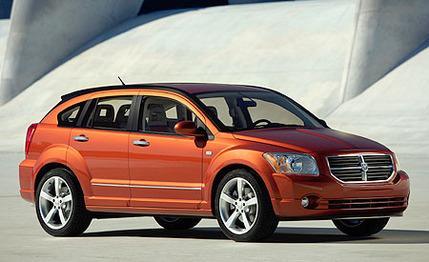} & \includegraphics[width=0.22\linewidth, height=0.22\linewidth]{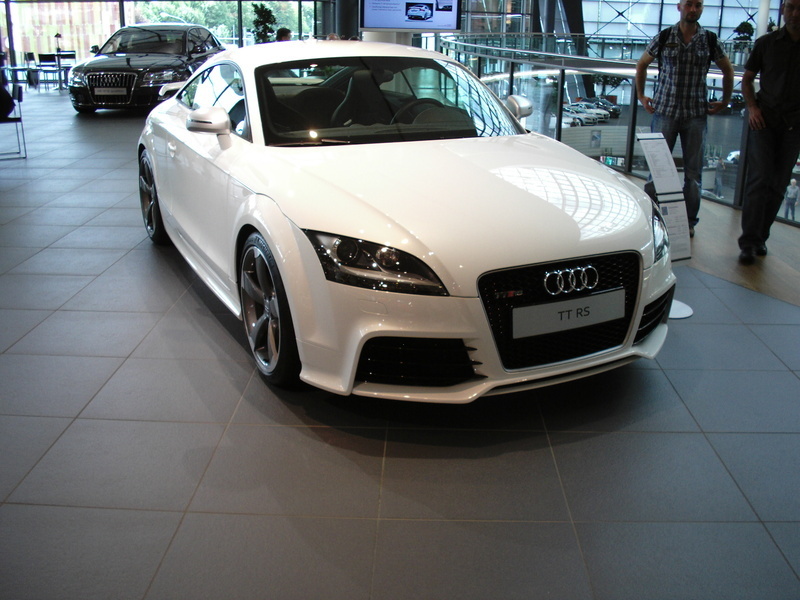} \\
        \includegraphics[width=0.22\linewidth, height=0.22\linewidth]{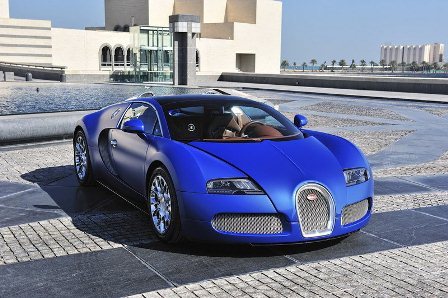} & \includegraphics[width=0.22\linewidth, height=0.22\linewidth]{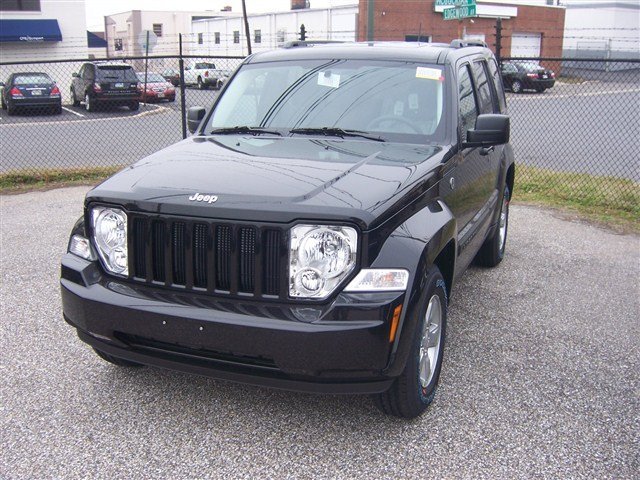} & \includegraphics[width=0.22\linewidth, height=0.22\linewidth]{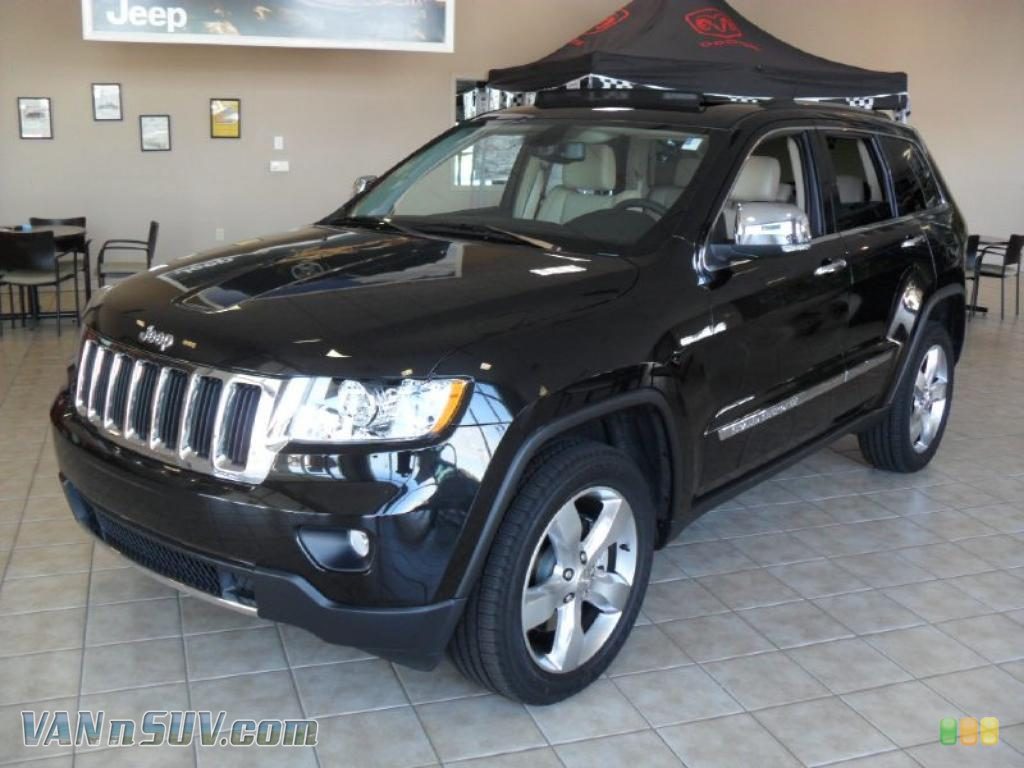} & \includegraphics[width=0.22\linewidth, height=0.22\linewidth]{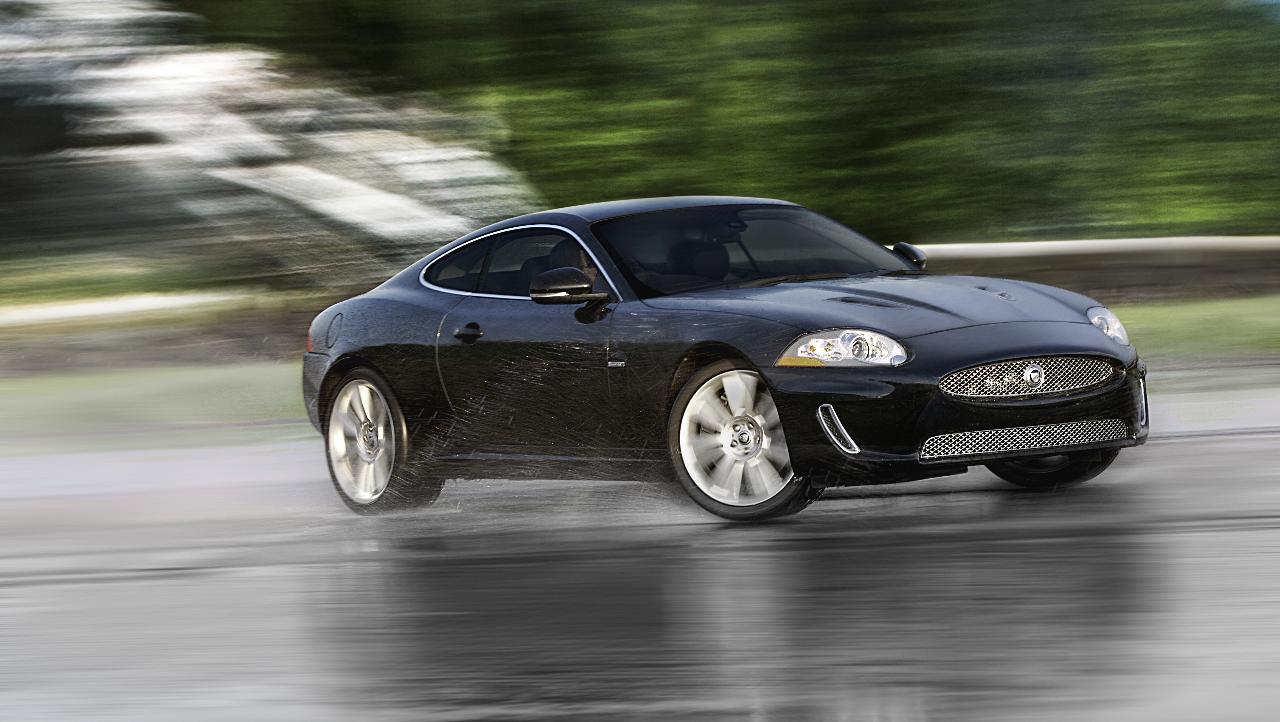}
      \end{tabular}
      \caption{Fine-grained OOD images of Car datasets.}
    \end{subfigure}
  }
  \caption{Examples of ID and fine-grained OOD images of Car datasets.}
  \label{fig:cars}
\end{figure}

\noindent\textbf{Car}: The Car datasets include 196 classes. Among these, 46 classes are randomly assigned as fine-grained OOD data, with the rest retained as ID classes. The ID and OOD classes are:
\begin{itemize}
    \item ID classes: 0, 1, 2, 3, 4, 5, 11, 12, 14, 15, 16, 17, 18, 19, 20, 21, 22, 23, 26, 27, 28, 29, 30, 31, 32, 33, 35, 36, 37, 38, 39, 40, 41, 42, 46, 47, 48, 49, 51, 52, 53, 54, 55, 56, 58, 59, 60, 61, 62, 63, 64, 65, 66, 68, 69, 71, 72, 73, 74, 75, 76, 77, 78, 79, 80, 81, 83, 84, 86, 88, 89, 90, 91, 92, 94, 95, 96, 97, 98, 100, 102, 104, 105, 107, 108, 109, 110, 112, 113, 114, 115, 116, 119, 120, 122, 123, 124, 125, 126, 128, 130, 131, 132, 133, 134, 135, 137, 138, 139, 142, 144, 145, 149, 150, 151, 152, 153, 155, 156, 157, 158, 159, 162, 163, 164, 165, 166, 167, 168, 169, 170, 171, 172, 174, 175, 177, 178, 179, 181, 183, 184, 185, 187, 189, 190, 191, 192, 193, 194, 195 \\
    \item OOD classes: 87, 67, 82, 8, 106, 121, 43, 182, 93, 180, 101, 7, 70, 85, 34, 117, 186, 10, 136, 148, 118, 176, 57, 13, 25, 154, 45, 140, 111, 143, 99, 9, 141, 160, 24, 188, 161, 129, 103, 147, 6, 44, 127, 146, 50, 173 \\
\end{itemize}
\noindent The examples of ID and fine-grained OOD images of the Car datasets are presented in \Cref{fig:cars}.

\section{i-ODIN: invariant-ODIN}
\label{appendix:sec:iodin}
The softmax probability for class $i$ incorporating temperature hyperparameter $T > 0$ is:
\begin{equation}
S_i(\mathbf{x}; T) = \frac{\exp(f_i(\phi_\gamma(\mathbf{x}))/T)}{\sum_{j=1}^C \exp(f_j(\phi_\gamma(\mathbf{x}))/T)}
\end{equation}
The confidence score is given by $S(\mathbf{x}; T) = \max_i S_i(\mathbf{x}; T)$. ODIN improves OOD detection via two components:

\begin{enumerate}
    \item {Temperature Scaling}: Adjusting $T$ sharpens the probability distribution, enhancing ID-OOD separability.
    \item {Input Preprocessing}: Small perturbations refine the confidence gap:
    \begin{equation}
    \tilde{\mathbf{x}} = \mathbf{x} - \varepsilon \cdot \text{sign}(-\nabla_{\mathbf{x}} \log S(\mathbf{x}; T))
    \end{equation}
    where $\varepsilon$ controls perturbation magnitude, increasing softmax scores more for ID than OOD samples. \\
\end{enumerate}

\noindent We propose i-ODIN, an enhanced variant of ODIN that improves OOD detection performance through selective perturbation application. While ODIN applies perturbations across the entire input, i-ODIN introduces a masking mechanism that focuses perturbations on the most discriminative regions. i-ODIN follows the same overall framework as ODIN but introduces a critical modification in the input preprocessing stage. Given input $\mathbf{x} \in \mathcal{X}$, gradients $\nabla_{\mathbf{x}} \log S(\mathbf{x}; T)$ are computed, and the top-$k$ influential features are selected: \\

\begin{enumerate}
    \item Generate mask $\mathcal{M}$ for the top-$p_\text{inv} \%$ gradient magnitudes:
    \begin{equation}
    \mathcal{M} = \text{TopK}(|\nabla_{\mathbf{x}} \log S(\mathbf{x}; T)|,~p_\text{inv}\%)
    \end{equation}
    \item Apply perturbation selectively:
    \begin{equation}
    \tilde{\mathbf{x}} = \mathbf{x} - \varepsilon \cdot \text{sign}(-\nabla_{\mathbf{x}} \log S(\mathbf{x}; T)) \odot \mathcal{M}
    \end{equation}
\end{enumerate}

\noindent This targeted approach enhances ID-OOD separation by focusing on the most relevant features.

\newpage
\section{UMAP Visualization of feature space CIFAR-10 datasets and Conventional OOD (iSUN)}
The UMAP visualizations of the feature space of a ResNet-18 model trained in the CIFAR-10 datasets using (a) the cross-entropy baseline and (b) \ours~(with random pixel shuffle perturbation for outlier synthesis) are presented in the \Cref{fig:umap_baseline} and \Cref{fig:umap_ios} respectively. Samples from the Conventional OOD datasets (iSUN) are visualized in black color within each plot while the rest of the colors denote samples from various classes of ID datasets. The visualizations indicate a reduced intersection between OOD and ID samples in the case of \ours~compared to the cross-entropy baseline. It suggests that \ours~effectively maps OOD samples further from the ID distribution. This separation ultimately contributes to an enhancement in OOD detection performance.
\begin{figure}[!h]
    \centering
    % \resizebox{0.75\linewidth}{!}{
    \begin{subfigure}{0.65\linewidth}
        \centering
        \fcolorbox{black}{white}{
        \resizebox{\linewidth}{!}{%
        \includegraphics{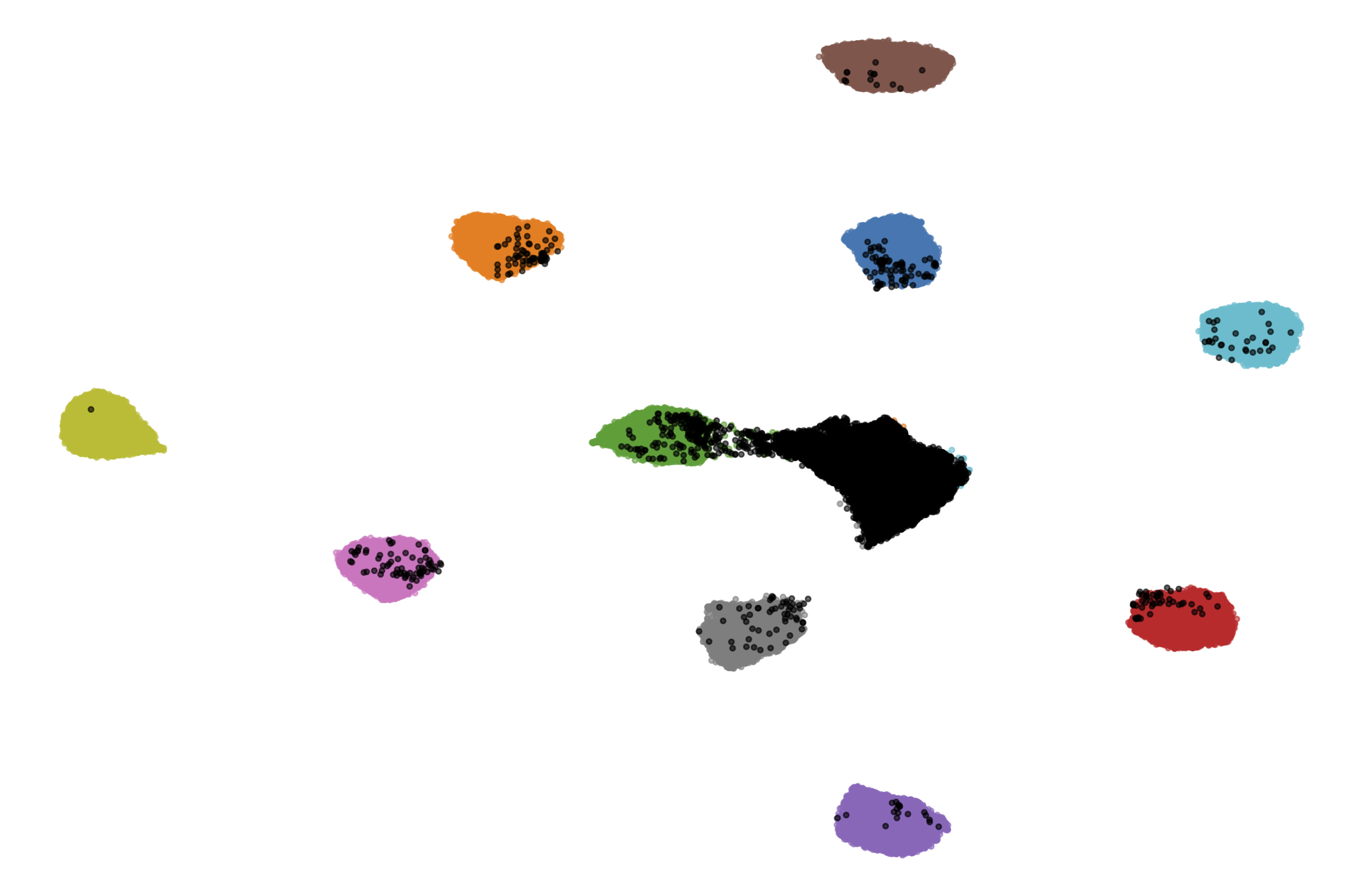}}}
        \caption{Cross-entropy baseline}
        \label{fig:umap_baseline}
    \end{subfigure}
    % \hfill
    \begin{subfigure}{0.65\linewidth}
        \centering
        \fcolorbox{black}{white}{
        \resizebox{\linewidth}{!}{%
        \includegraphics{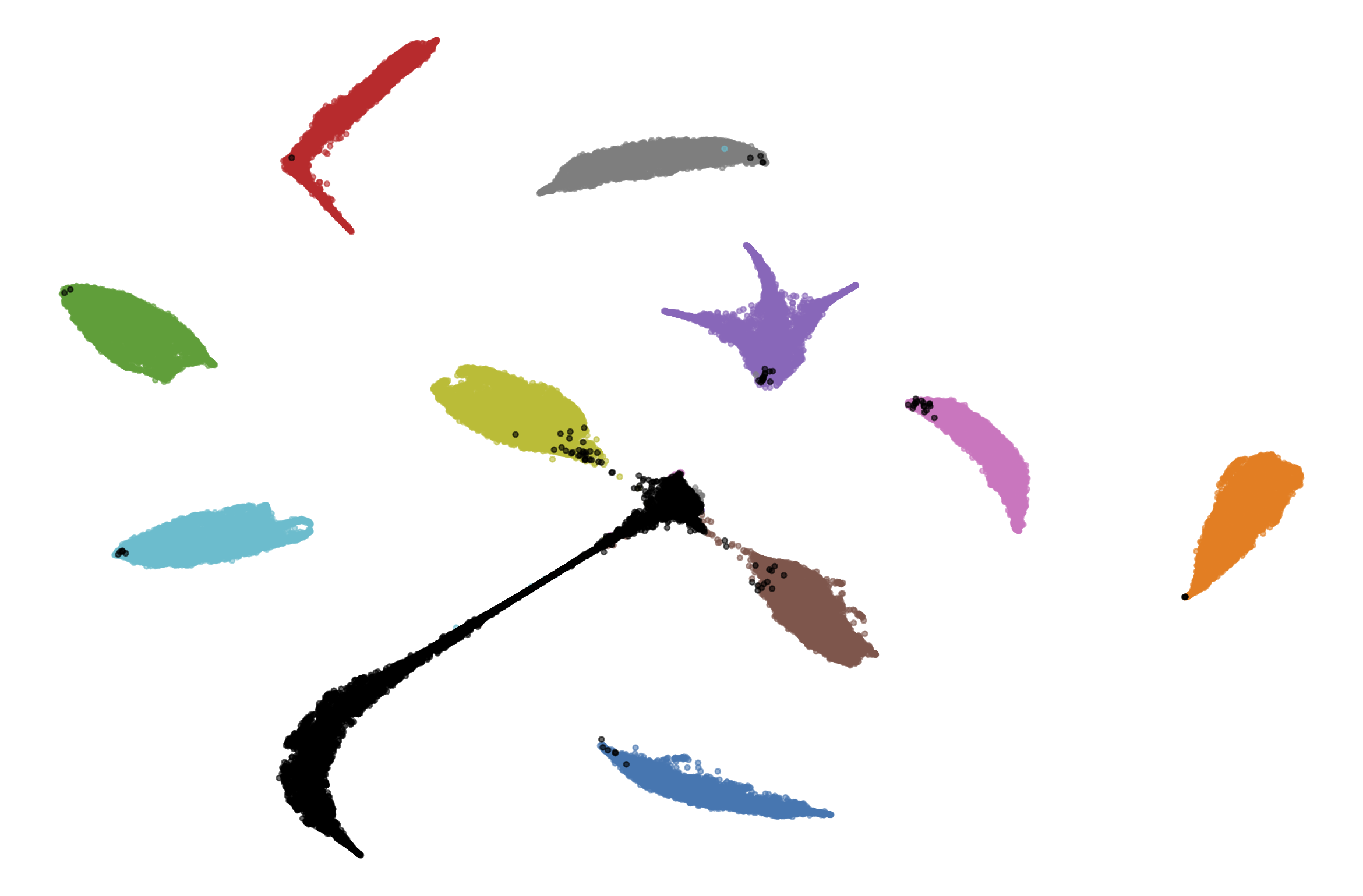}}}
        \caption{\ours}
        \label{fig:umap_ios}
    \end{subfigure}
    % }
    \caption{Comparison of UMAP visualizations: (a) Cross-entropy baseline and (b) \ours. OOD samples are in black color, while other colors represent ID samples. The overlap between ID and OOD is reduced in (b) \ours~compared to (a) Cross-entropy baseline.}
    \label{fig:umap}
\end{figure}

\clearpage

\newpage
\section{Additional Details and Experiments}
\label{appendix:sec:additional_details}
\null
\vfill
We use \texttt{PyTorch} deep learning library and \texttt{torchvision} package to conduct all the experiments. For image pre-processing in spurious and fine-grained settings, we use \texttt{RandomResizedCrop} and \texttt{RandomHorizontalFlip} transforms. For conventional setting, we use \texttt{RandomHorizontalFlip} and \texttt{RandomCrop} transforms. We deal with the spatial dimensions of $(32, 32), (224, 224), \text{and}~(448, 448)$ in conventional, spurious, and fine-grained settings respectively. Additionally, we also deal with ImageNet-100 datasets in (224, 224) spatial dimension. \\
\vfill
\null
% \noindent \textbf{Training schemes.} We adhere closely to the training procedures outlined in OpenOOD~\citep{yang2022openood,zhang2023openood} with a few modifications. We perform experiments in the conventional setting (CIFAR-10/100) from scratch while other settings follow a fine-tuning approach. For ImageNet-100 experiments, we adopt the experimental setup of Dream-OOD~\citep{du2023dream}. We fine-tune the pre-trained ResNet-34 base model for 20 epochs with a batch size of 40 and a learning rate of 0.0005. The experiments in fine-grained settings are performed with a batch size of 32. We use ResNet-18 model in spurious and conventional settings (CIFAR-10/100), while we use ResNet-50 model in fine-grained settings. For spurious and fine-grained settings, we fine-tuned the (ImageNet) pre-trained model with an initial learning rate of 0.01 for 30 epochs. For LogitNorm~\citep{wei2022mitigating} training in the spurious settings, we set the temperature to 1 (which yields optimal performance). We perform the experiments of MOS~\citep{mos21cvpr} only in CIFAR benchmarks. \\
% \vfill
% \null
\noindent \textbf{OOD detection postprocessor.} Unless otherwise noted, we use ODIN and SCALE postprocessors for spurious and conventional settings, and NNGuide and Relation postprocessors for Aircraft and Car benchmarks in ASCOOD experiments. In the case of the ODIN postprocessor, we perform hyperparameter tuning using noise values $\{0.0014, 0.0028, 0.0042, 0.0056, 0.0070, 0.0084, 0.0098\}$ different from just $\{0.0014, 0.0028\}$ specified in OpenOOD for spurious setting. Consistent with OpenOOD, hyperparameters are optimized with respect to the validation OOD datasets (AUROC metric), and model selection is done with respect to validation accuracy. \\
\vfill
\null
\noindent\textbf{\texttt{Shuffle} perturbation:} In this work, we utilize pixel shuffle perturbation for outlier synthesis. Generally, pixel shuffle perturbation requires only the 2D pixel locations to perform shuffling. However, when applying pixel attribution method (using $\mathbf{G}$) to an input image with $C'$ channels, the resulting attribution output contains $C'$ channels containing both positive and negative values. To derive the required 2D pixel locations, we transform $\mathbf{G}$ into a 2D map by computing $\sum_{C'} \exp(\mathbf{G})$. We then select the top $p_\text{inv}\%$ of pixels with the highest magnitudes from the reduced 2D map for shuffling. \\
\vfill
\null
\noindent \textbf{\ours~hyperparameters:} The initial learning rate of FC layer is set to 0.005 for the CIFAR-100 datasets. The final hyperparameters/setup for each datasets is provided in \Cref{appendix:tab:final_hyperparameter}. Additional ablation studies are presented in subsequent sections.
\begin{table}[h!]
\centering
\resizebox{0.9\linewidth}{!}{%
\begin{tabular}{@{}lccccccc}
\toprule
Hyperparameter/Setup & Waterbirds & CelebA & Car & Aircraft & CIFAR-10 & CIFAR-100 & ImageNet-100  \\
\midrule
$\sigma$ & 0.5 & 0.5 & 0.5 & 0.5 & 0.5 & 0.5 & 0.5 \\
$\lambda$ & 0.1 & 1.0 & 1.0 & 1.0 & 1.0 & 5.0 & 1.0 \\
$p_\text{inv}$ & 10\% & 5\% & 10\% & 10\% & 20\% & 10\% & 10\% \\
$\alpha$ ($\mathbf{G}_\text{inv}$) & 300$\rightarrow$30 & 50$\rightarrow$30 & 0.1 & 0.1 & 10 & 10 & 10 \\
$\alpha$ (Gaussian noise) & 0.1 & 0.1 & N/A & N/A & 0.1 & 0.01 & N/A\\
Outlier synthesis & $\mathbf{G}_\text{inv}$ addition & Gaussian noise addition & $\mathbf{G}_\text{inv}$ addition & $\mathbf{G}_\text{inv}$ addition & Random pixel shuffle & Invariant pixel shuffle & $\mathbf{G}_\text{inv}$ addition \\ \bottomrule
\end{tabular}
}
\caption{Final hyperparameter settings for each datasets. (N/A indicates that the corresponding experiment was not conducted/needed.)}
\label{appendix:tab:final_hyperparameter}
\end{table}
\vfill
\null

\newpage

\subsection{ASCOOD vs. OE methods}
\label{sec:oe_comparison_cifar}
We further evaluate the OOD detection performance of our method \ours~against established outlier exposure techniques (OE and MixOE) on the CIFAR-10 and CIFAR-100 benchmarks. Consistent with OpenOOD v1.5, we employ Tiny ImageNet as an external $\mathbb{D}_\text{out}$ for OE and MixOE to promote predictive uncertainty towards OOD samples. We train both OE and MixOE from scratch, unlike the approach taken in OpenOOD v1.5. The results presented in \Cref{appendix:tab:oe_comparison_cifar10} and \Cref{appendix:tab:oe_comparison_cifar100} demonstrate that our method \ours~maintains superior performance on these CIFAR benchmarks. While OE achieves superior performance on some individual datasets within the CIFAR-10 benchmark, our method (\ours) consistently outperforms both OE and MixOE across all datasets in the more challenging CIFAR-100 benchmark.
\begin{table}[h!]
\centering
\resizebox{0.95\linewidth}{!}{%
\begin{tabular}{@{}lcccccc}
\toprule
\multirow{2}{*}{Method} & \multicolumn{6}{c}{\textbf{CIFAR-10}} \\
\cmidrule(lr){2-7}
& MNIST & SVHN & iSUN & Texture & Places365 & Average \\
\midrule
OE & 33.29{\tiny$\pm$8.75} / 86.43{\tiny$\pm$3.94} & \textbf{2.69{\tiny$\pm$1.54}} / \textbf{99.13{\tiny$\pm$0.47}} & 37.95{\tiny$\pm$11.98} / 83.81{\tiny$\pm$5.45} & \textbf{10.74{\tiny$\pm$2.65}} / 97.70{\tiny$\pm$0.51} & \textbf{12.82{\tiny$\pm$2.46}} / \textbf{97.01{\tiny$\pm$0.73}} & 19.50{\tiny$\pm$3.72} / 92.82{\tiny$\pm$1.80} \\
MixOE & 39.29{\tiny$\pm$17.26} / 92.86{\tiny$\pm$2.18} & 34.42{\tiny$\pm$14.02} / 93.37{\tiny$\pm$2.41} & 32.99{\tiny$\pm$6.91} / 93.74{\tiny$\pm$0.60} & 82.73{\tiny$\pm$2.85} / 83.72{\tiny$\pm$0.57} & 71.03{\tiny$\pm$1.24} / 86.73{\tiny$\pm$0.93} & 52.09{\tiny$\pm$6.20} / 90.09{\tiny$\pm$0.64} \\
\midrule
\rowcolor{LightGray} \textbf{\ours} & \textbf{1.37{\tiny$\pm$0.35}} / \textbf{99.68{\tiny$\pm$0.11}} & 6.73{\tiny$\pm$1.37} / 98.61{\tiny$\pm$0.34} &  \textbf{0.94{\tiny$\pm$0.19}} / \textbf{99.79{\tiny$\pm$0.05}} &  10.93{\tiny$\pm$0.52} / \textbf{97.95{\tiny$\pm$0.06}} & 18.47{\tiny$\pm$1.12} / 95.56{\tiny$\pm$0.34} & \textbf{7.69{\tiny$\pm$0.29}} / \textbf{98.32{\tiny$\pm$0.07}} \\
\bottomrule
\end{tabular}
}
\caption{\ours~vs OE methods in FPR@95$\downarrow$ / AUROC$\uparrow$ on CIFAR-10 benchmark.}
\label{appendix:tab:oe_comparison_cifar10}
\end{table}
\vfill
\begin{table}[h!]
\centering
\resizebox{0.95\linewidth}{!}{%
\begin{tabular}{@{}lcccccc}
\toprule
\multirow{2}{*}{Method} & \multicolumn{6}{c}{\textbf{CIFAR-100}} \\
\cmidrule(lr){2-7}
& MNIST & SVHN & iSUN & Texture & Places365 & Average \\
\midrule
OE & 20.21{\tiny$\pm$4.43} / 95.04{\tiny$\pm$0.74} & 40.80{\tiny$\pm$4.65} / 88.99{\tiny$\pm$1.09} & 50.89{\tiny$\pm$19.81} / 87.35{\tiny$\pm$5.83} & 56.14{\tiny$\pm$4.07} / 85.21{\tiny$\pm$0.96} & 66.05{\tiny$\pm$5.77} / 77.60{\tiny$\pm$1.88} & 46.82{\tiny$\pm$4.65} / 86.84{\tiny$\pm$1.44} \\
MixOE & 68.92{\tiny$\pm$6.85} / 74.86{\tiny$\pm$3.88} & 61.62{\tiny$\pm$16.37} / 77.06{\tiny$\pm$7.63} & 68.69{\tiny$\pm$6.71} / 72.69{\tiny$\pm$5.06} & 72.58{\tiny$\pm$3.37} / 74.69{\tiny$\pm$2.19} & 71.97{\tiny$\pm$1.01} / 74.98{\tiny$\pm$0.30} & 68.76{\tiny$\pm$3.98} / 74.86{\tiny$\pm$2.42} \\
\midrule
\rowcolor{LightGray} \textbf{\ours} & \textbf{6.76{\tiny$\pm$1.50}} / \textbf{98.58{\tiny$\pm$0.27}} & \textbf{33.50{\tiny$\pm$3.76}} / \textbf{89.93{\tiny$\pm$1.24}} & \textbf{1.88{\tiny$\pm$0.56}} /
\textbf{99.37{\tiny$\pm$0.09}} & \textbf{53.06{\tiny$\pm$0.81}} / \textbf{86.98{\tiny$\pm$0.81}} & \textbf{54.30{\tiny$\pm$1.00}} / \textbf{81.91{\tiny$\pm$0.77}} & \textbf{29.90{\tiny$\pm$0.76}} / \textbf{91.35{\tiny$\pm$0.13}} \\ \bottomrule
\end{tabular}
}
\caption{\ours~vs OE methods in FPR@95$\downarrow$ / AUROC$\uparrow$ on CIFAR-100 benchmark.}
\label{appendix:tab:oe_comparison_cifar100}
\end{table}
\vfill

\subsection{Ablation study of outlier synthesis in fine-grained settings}

\begin{table}[!h]
\centering
\resizebox{\linewidth}{!}{
\begin{tabular}{@{}lccccc ccccc@{}}
\toprule
 \multirow{3}{*}{Outlier synthesis} & \multirow{3}{*}{$p_\text{inv}$}  & \multicolumn{4}{c}{\textbf{Aircraft}} & \multicolumn{4}{c}{\textbf{Car}} \\
\cmidrule(lr){3-6}\cmidrule(lr){7-10}
 & & \multicolumn{2}{c}{Fine-grained OOD} & \multicolumn{2}{c}{Conventional OOD} & \multicolumn{2}{c}{Fine-grained OOD} & \multicolumn{2}{c}{Conventional OOD} \\
\cmidrule(lr){3-4} \cmidrule(lr){5-6} \cmidrule(lr){7-8} \cmidrule(lr){9-10}
 & & FPR@95$\downarrow$ & AUROC$\uparrow$ & FPR@95$\downarrow$ & AUROC$\uparrow$ & FPR@95$\downarrow$ & AUROC$\uparrow$ & FPR@95$\downarrow$ & AUROC$\uparrow$ \\
\midrule

\multirow{3}{*}{Invariant pixel shuffle}
& 5\% & 48.63{\tiny$\pm$7.37} & 83.75{\tiny$\pm$1.71} & 1.34{\tiny$\pm$0.09} & 99.68{\tiny$\pm$0.03} & 65.23{\tiny$\pm$0.82} & 85.51{\tiny$\pm$0.30} & \textbf{1.33{\tiny$\pm$0.18}} & \textbf{99.71{\tiny$\pm$0.02}} \\
& 10\% & 47.98{\tiny$\pm$2.04} & 83.87{\tiny$\pm$0.38} & 1.19{\tiny$\pm$0.23} & 99.69{\tiny$\pm$0.07} & 63.84{\tiny$\pm$3.21} & 85.38{\tiny$\pm$0.29} & 2.09{\tiny$\pm$0.04} & 99.59{\tiny$\pm$0.02} \\
& 20\% & \textbf{47.35{\tiny$\pm$4.82}} & 83.40{\tiny$\pm$0.74} & 1.24{\tiny$\pm$0.09} & 99.69{\tiny$\pm$0.02} & 62.53{\tiny$\pm$2.97} & 85.48{\tiny$\pm$0.68} & 1.90{\tiny$\pm$0.42} & 99.62{\tiny$\pm$0.06} \\

\midrule

\multirow{5}{*}{$\mathbf{G}_\text{inv}$ addition}
& 1\% & 59.40{\tiny$\pm$2.24} & 88.25{\tiny$\pm$0.42} & 0.60{\tiny$\pm$0.10} & 99.81{\tiny$\pm$0.02} & 46.24{\tiny$\pm$2.54} & 91.02{\tiny$\pm$0.27} & 4.02{\tiny$\pm$0.31} & 98.85{\tiny$\pm$0.11} \\
& 5\% & 56.34{\tiny$\pm$4.31} & 88.32{\tiny$\pm$0.74} & 0.58{\tiny$\pm$0.07} & 99.84{\tiny$\pm$0.02} & 42.96{\tiny$\pm$3.10} & 91.56{\tiny$\pm$0.37} & 3.38{\tiny$\pm$0.23} & 98.92{\tiny$\pm$0.06} \\
& \cellcolor{LightGray} 10\% & \cellcolor{LightGray} {47.94{\tiny$\pm$5.38}} & \cellcolor{LightGray} \textbf{89.75{\tiny$\pm$1.01}} & \cellcolor{LightGray} \textbf{0.55{\tiny$\pm$0.07}} & \cellcolor{LightGray} \textbf{99.84{\tiny$\pm$0.02}} & \cellcolor{LightGray} \textbf{40.76{\tiny$\pm$1.13}} & \cellcolor{LightGray} \textbf{91.86{\tiny$\pm$0.20}} & \cellcolor{LightGray} 4.28{\tiny$\pm$0.53} & \cellcolor{LightGray} 98.79{\tiny$\pm$0.12} \\
& 20\% & 53.13{\tiny$\pm$4.00} & 88.81{\tiny$\pm$0.91} & 0.59{\tiny$\pm$0.08} & 99.83{\tiny$\pm$0.01} & 44.55{\tiny$\pm$1.01} & 91.57{\tiny$\pm$0.06} & 4.59{\tiny$\pm$0.55} & 98.73{\tiny$\pm$0.08} \\
& 100\% & 48.45{\tiny$\pm$5.04} & 89.14{\tiny$\pm$0.78} & 0.60{\tiny$\pm$0.10} & 99.81{\tiny$\pm$0.01} & 43.00{\tiny$\pm$3.33} & 91.32{\tiny$\pm$0.40} & 4.20{\tiny$\pm$0.21} & 98.81{\tiny$\pm$0.02} \\
\bottomrule
\end{tabular}}
\caption{Ablation study of outlier synthesis methods on Aircraft and Car datasets.}
\label{appendix:tab:outlier_ablation_finegrained}
\end{table}

\noindent We evaluate two effective outlier synthesis methods: invariant pixel shuffle and $\mathbf{G}_\text{inv}$ addition in fine-grained settings (Aircraft and Car datasets). The results summarized in \Cref{appendix:tab:outlier_ablation_finegrained} present OOD detection performance for Aircraft and Car datasets. The results show that $\mathbf{G}_\text{inv}$ addition demonstrates overall superior performance. Since the satisfactorily high accuracy of $89.59\%$ and $94.14\%$ is achieved in Aircraft and Car datasets, the gradient $\mathbf{G}$ becomes highly reliable for outlier synthesis as soon as satisfactory accuracy is achieved. So, it leads to effective perturbation of invariant features. The results also indicate that using an invariant pixel shuffle for outlier synthesis may be effective for conventional OOD detection. Still, its performance falls quite short in fine-grained OOD detection.

\newpage

\subsection{Gradient of the logit versus gradient of softmax probability for outlier synthesis}
\label{appendix:sec:gradient_logit_softmax}
We investigate how the choice between using gradient of the logit and gradient of softmax probability for outlier synthesis affects OOD detection performance. \Cref{appendix:tab:gradient_of_logit_vs_prob} reports results obtained in fine-grained settings using $\mathbf{G}_\text{inv}$ addition for outlier synthesis. For gradient of the logit, optimal $\alpha$ is set to 1, while for the gradient of the softmax probability, $\alpha$ is set to 0.1. The results demonstrate that both methods achieve comparable performance with a gradient of the logit slightly performing better in overall performance. Consequently, we simply adopt gradient of the logit for outlier synthesis.

\begin{table}[h!]
\centering
\resizebox{\linewidth}{!}{
\begin{tabular}{@{}lcccccc cccccc@{}}
\toprule
 \multirow{3}{*}{$\mathbf{G}$} & \multirow{3}{*}{$p_\text{inv}$} & \multicolumn{5}{c}{\textbf{Aircraft}} & \multicolumn{5}{c}{\textbf{Car}} \\
\cmidrule(lr){3-7}\cmidrule(lr){8-12}
& & \multicolumn{2}{c}{Fine-grained OOD} & \multicolumn{2}{c}{Conventional OOD} & \multirow{2}{*}{Acc} & \multicolumn{2}{c}{Fine-grained OOD} & \multicolumn{2}{c}{Conventional OOD} & \multirow{2}{*}{Acc}  \\
\cmidrule(lr){3-6} \cmidrule(lr){8-11}
& & FPR@95$\downarrow$ & AUROC$\uparrow$ & FPR@95$\downarrow$ & AUROC$\uparrow$ & & FPR@95$\downarrow$ & AUROC$\uparrow$ & FPR@95$\downarrow$ & AUROC$\uparrow$ & \\
\midrule
\multirow{3}{*}{$\mathbf{G} = \frac{\partial \mathbf{z}_{c}}{\partial \mathbf{x}}$} & 5\% & 56.34{\tiny$\pm$4.31} & 88.32{\tiny$\pm$0.74} & 0.58{\tiny$\pm$0.07} & 99.84{\tiny$\pm$0.02} & 88.42{\tiny$\pm$0.13}\% & 42.96{\tiny$\pm$3.10} & 91.56{\tiny$\pm$0.37} & 3.38{\tiny$\pm$0.23} & 98.92{\tiny$\pm$0.06} & 93.51{\tiny$\pm$0.08}\% \\
& \cellcolor{LightGray} 10\% & \cellcolor{LightGray} \textbf{47.94{\tiny$\pm$5.38}} & \cellcolor{LightGray} \textbf{89.75{\tiny$\pm$1.01}} & \cellcolor{LightGray} \textbf{0.55{\tiny$\pm$0.07}} & \cellcolor{LightGray} \textbf{99.84{\tiny$\pm$0.02}} & \cellcolor{LightGray} \textbf{89.61{\tiny$\pm$0.19}}\% & \cellcolor{LightGray} \textbf{40.76{\tiny$\pm$1.13}} & \cellcolor{LightGray} \textbf{91.86{\tiny$\pm$0.20}} & \cellcolor{LightGray} 4.28{\tiny$\pm$0.53} & \cellcolor{LightGray} 98.79{\tiny$\pm$0.12} & \cellcolor{LightGray} \textbf{94.20{\tiny$\pm$0.11}}\% \\
& 100\% & 48.45{\tiny$\pm$5.04} & 89.14{\tiny$\pm$0.78} & 0.60{\tiny$\pm$0.10} & 99.81{\tiny$\pm$0.01} & 88.93{\tiny$\pm$0.07}\% & 43.00{\tiny$\pm$3.33} & 91.32{\tiny$\pm$0.40} & 4.20{\tiny$\pm$0.21} & 98.81{\tiny$\pm$0.02} & 94.00{\tiny$\pm$0.12}\% \\
\midrule
\multirow{3}{*}{$\mathbf{G} = \frac{\partial \mathbf{p}_{c}}{\partial \mathbf{x}}$} & 5\% & 51.55{\tiny$\pm$4.12} & 89.07{\tiny$\pm$1.13} & 0.56{\tiny$\pm$0.06} & 99.82{\tiny$\pm$0.01} & 89.46{\tiny$\pm$0.29}\% & 44.59{\tiny$\pm$2.47} & 91.37{\tiny$\pm$0.29} & 4.27{\tiny$\pm$0.40} & 98.79{\tiny$\pm$0.09} & 94.15{\tiny$\pm$0.19}\% \\
& 10\% & 50.07{\tiny$\pm$4.55} & 88.96{\tiny$\pm$0.56} & 0.60{\tiny$\pm$0.04} & 99.82{\tiny$\pm$0.00} & 89.16{\tiny$\pm$0.13}\% & 42.11{\tiny$\pm$0.83} & {91.56{\tiny$\pm$0.12}} & 4.34{\tiny$\pm$0.69} & 98.00{\tiny$\pm$0.13} & {94.00{\tiny$\pm$0.02}}\% \\
& 100\% & 48.73{\tiny$\pm$3.02} & 89.37{\tiny$\pm$0.44} & 0.59{\tiny$\pm$0.06} & 99.83{\tiny$\pm$0.02} & 89.26{\tiny$\pm$0.25}\% & 40.94{\tiny$\pm$2.22} & 91.77{\tiny$\pm$0.23} & \textbf{4.02{\tiny$\pm$0.37}} & \textbf{98.85{\tiny$\pm$0.14}} & 94.14{\tiny$\pm$0.10}\% \\ \bottomrule
\end{tabular}} 
\caption{Gradient of logit vs gradient of softmax probability for outlier synthesis.}
\label{appendix:tab:gradient_of_logit_vs_prob}
\end{table}

\subsection{Sensitivity of $\lambda$}
 The sensitivity analysis of $\lambda$ in Aircraft and Car datasets is shown in \Cref{appendix:tab:sensitivity_lambda}. We use $\mathbf{x}^{\prime} = \mathbf{x} + \alpha \cdot \mathbf{G}_\text{inv}$ for outlier synthesis and report the OOD detection results. The results indicate that improved classification accuracy correlates with enhanced OOD detection performance. Setting $\lambda = 1.0$ leads to better accuracy and enhanced OOD detection simultaneously.
\begin{table}[h!]
\centering
\resizebox{0.99\linewidth}{!}{
\begin{tabular}{@{}lccccc cccccc@{}}
\toprule
 \multirow{3}{*}{$\lambda$} & \multicolumn{5}{c}{\textbf{Aircraft}} & \multicolumn{5}{c}{\textbf{Car}} \\
\cmidrule(lr){2-6}\cmidrule(lr){7-11}
 & \multicolumn{2}{c}{Fine-grained OOD} & \multicolumn{2}{c}{Conventional OOD} & \multirow{2}{*}{Acc} & \multicolumn{2}{c}{Fine-grained OOD} & \multicolumn{2}{c}{Conventional OOD} & \multirow{2}{*}{Acc}  \\
\cmidrule(lr){2-5} \cmidrule(lr){7-10}
 & FPR@95$\downarrow$ & AUROC$\uparrow$ & FPR@95$\downarrow$ & AUROC$\uparrow$ & & FPR@95$\downarrow$ & AUROC$\uparrow$ & FPR@95$\downarrow$ & AUROC$\uparrow$ & \\
\midrule
0.1 & 53.56{\tiny$\pm$4.89} & 88.22{\tiny$\pm$0.98} & 1.13{\tiny$\pm$0.22} & 99.69{\tiny$\pm$0.04} & 88.42{\tiny$\pm$0.13}\% & 45.46{\tiny$\pm$2.08} & 90.89{\tiny$\pm$0.28} & \textbf{1.97{\tiny$\pm$0.25}} & \textbf{99.61{\tiny$\pm$0.04}} & 93.51{\tiny$\pm$0.08}\% \\
\rowcolor{LightGray} 1.0 & \textbf{47.94{\tiny$\pm$5.38}} &  \textbf{89.75{\tiny$\pm$1.01}} & \textbf{0.55{\tiny$\pm$0.07}} & \textbf{99.84{\tiny$\pm$0.02}} & \textbf{89.61{\tiny$\pm$0.19}}\% & \textbf{40.76{\tiny$\pm$1.13}} & \textbf{91.86{\tiny$\pm$0.20}} & 4.28{\tiny$\pm$0.53} & 98.79{\tiny$\pm$0.12} & \textbf{94.20{\tiny$\pm$0.11}}\% \\
10.0 & 58.10{\tiny$\pm$4.82} & 78.79{\tiny$\pm$0.88} & 0.63{\tiny$\pm$0.09} & 99.83{\tiny$\pm$0.01} & 88.93{\tiny$\pm$0.07}\% & 99.99{\tiny$\pm$0.01} & 50.00{\tiny$\pm$0.00} & 99.99{\tiny$\pm$0.01} & 50.0{\tiny$\pm$0.00} & 94.00{\tiny$\pm$0.12}\% \\
\bottomrule
\end{tabular}}
\caption{Ablation study of outlier synthesis methods on Aircraft and Car datasets.}
\label{appendix:tab:sensitivity_lambda}
\end{table}

\subsection{Gradient addition vs. subtraction for outlier synthesis}
\label{appendix:sec:gradient_addition_vs_subtraction}
The complete result of \Cref{tab:gradient_addition_vs_subtraction} is provided in \cref{appendix:tab:gradient_addition_vs_subtraction}. We set the hyperparameters as: $\lambda=1$, $p_\text{inv}=10\%$, and $\alpha=0.1$. Interestingly, gradient addition for outlier synthesis not only leads to significantly better fine-grained OOD detection performance but also better classification accuracy.

\begin{table}[h!]
\centering
\resizebox{0.99\linewidth}{!}{
\begin{tabular}{@{}lccccc cccccc@{}}
\toprule
 \multirow{4}{*}{$\mathbf{x}^{\prime}$} & \multicolumn{5}{c}{\textbf{Aircraft}} & \multicolumn{5}{c}{\textbf{Car}} \\
\cmidrule(lr){2-6}\cmidrule(lr){7-11}
 & \multicolumn{2}{c}{Fine-grained OOD} & \multicolumn{2}{c}{Conventional OOD} & \multirow{2}{*}{Acc} & \multicolumn{2}{c}{Fine-grained OOD} & \multicolumn{2}{c}{Conventional OOD} & \multirow{2}{*}{Acc}  \\
\cmidrule(lr){2-5} \cmidrule(lr){7-10}
 & FPR@95$\downarrow$ & AUROC$\uparrow$ & FPR@95$\downarrow$ & AUROC$\uparrow$ & & FPR@95$\downarrow$ & AUROC$\uparrow$ & FPR@95$\downarrow$ & AUROC$\uparrow$ & \\
\midrule
$\mathbf{x}^{\prime} = \mathbf{x} - \alpha \cdot \mathbf{G}_\text{inv}$ & 50.15{\tiny$\pm$4.80} & 83.64{\tiny$\pm$0.82} & 1.30{\tiny$\pm$0.19} & 99.68{\tiny$\pm$0.04} & 87.43{\tiny$\pm$0.20}\% & 60.20{\tiny$\pm$1.91} & 86.27{\tiny$\pm$0.34} & \textbf{2.00{\tiny$\pm$0.45}} & \textbf{99.57{\tiny$\pm$0.08}} & 93.59{\tiny$\pm$0.13}\% \\
\rowcolor{LightGray} $\mathbf{x}^{\prime} = \mathbf{x} + \alpha \cdot \mathbf{G}_\text{inv}$ & \textbf{47.94{\tiny$\pm$5.38}} &  \textbf{89.75{\tiny$\pm$1.01}} & \textbf{0.55{\tiny$\pm$0.07}} & \textbf{99.84{\tiny$\pm$0.02}} & \textbf{89.61{\tiny$\pm$0.19}}\% & \textbf{40.76{\tiny$\pm$1.13}} & \textbf{91.86{\tiny$\pm$0.20}} & 4.28{\tiny$\pm$0.53} & 98.79{\tiny$\pm$0.12} & \textbf{94.20{\tiny$\pm$0.11}}\% \\
\bottomrule
\end{tabular}}
\caption{Ablation of gradient-based outlier synthesis methods (addition vs. subtraction) on Aircraft and Car datasets.}
\label{appendix:tab:gradient_addition_vs_subtraction}
\end{table}

\newpage
\subsection{Hyperparameter studies in Waterbirds datasets}
\begin{figure}[h!]
    \centering
    \adjustbox{width=0.75\linewidth}{
    \begin{subfigure}{0.31\linewidth}
        \centering
        \fcolorbox{black}{white}{\adjustbox{width=\linewidth,clip}{\includegraphics{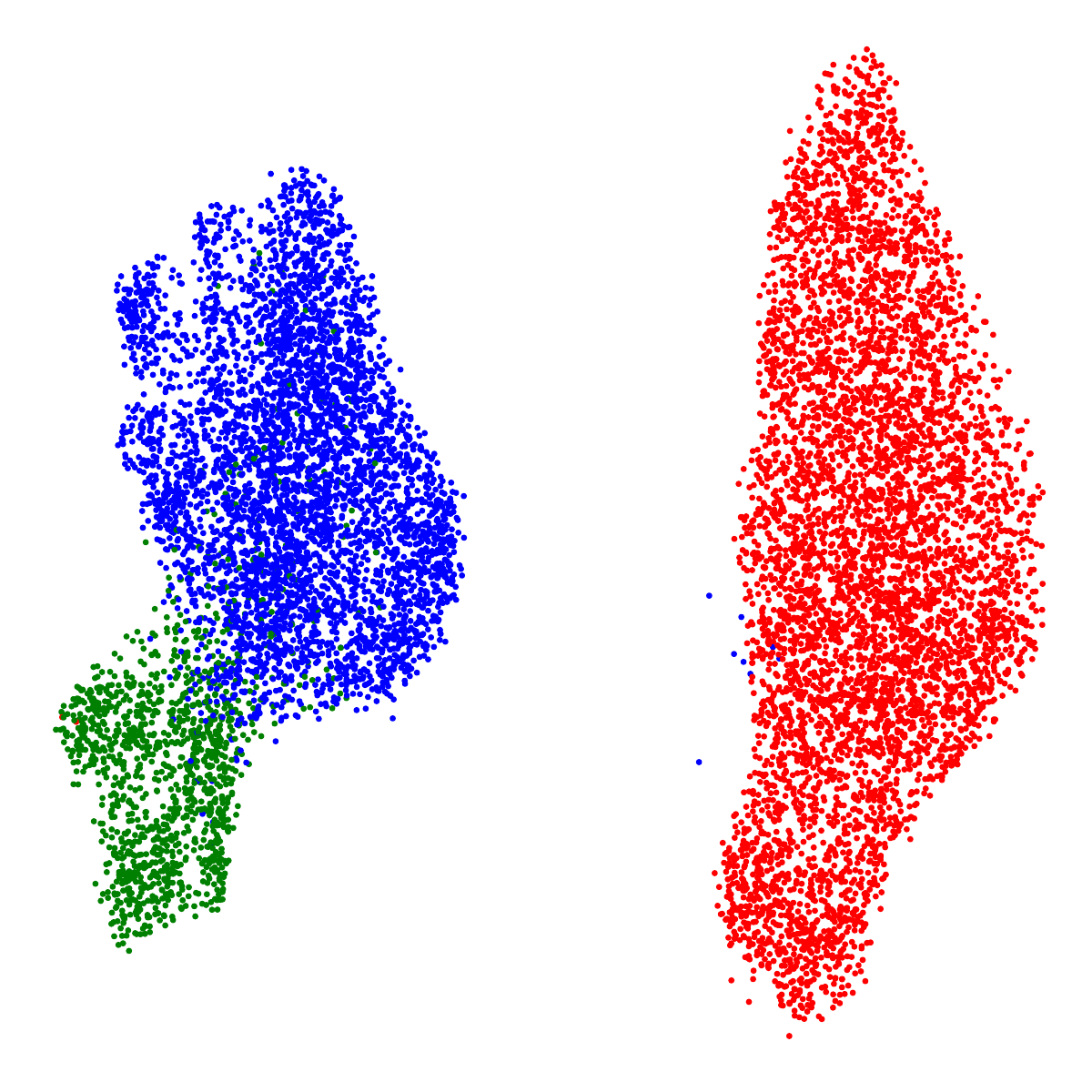}}}
        \caption{$\alpha\sim300$}
        \label{fig:alpha_1}
    \end{subfigure}
    \hfill
    \begin{subfigure}{0.31\linewidth}
        \centering
        \fcolorbox{black}{white}{\adjustbox{width=\linewidth,clip}{\includegraphics{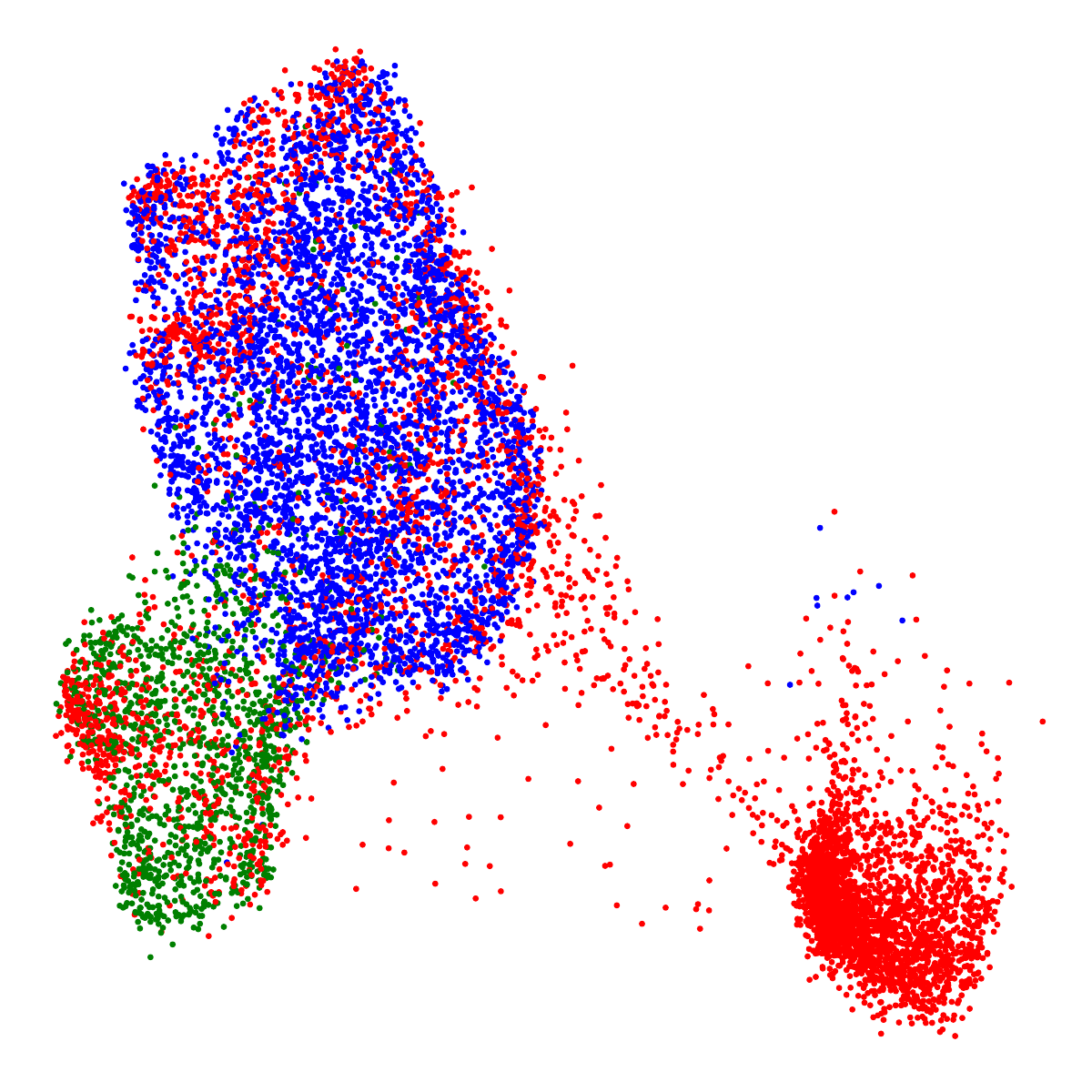}}}
        \caption{$\alpha\sim30$}
        \label{fig:alpha_0.1}
    \end{subfigure}
    \hfill
    \begin{subfigure}{0.31\linewidth}
        \centering
        \fcolorbox{black}{white}{\adjustbox{width=\linewidth,clip}{\includegraphics{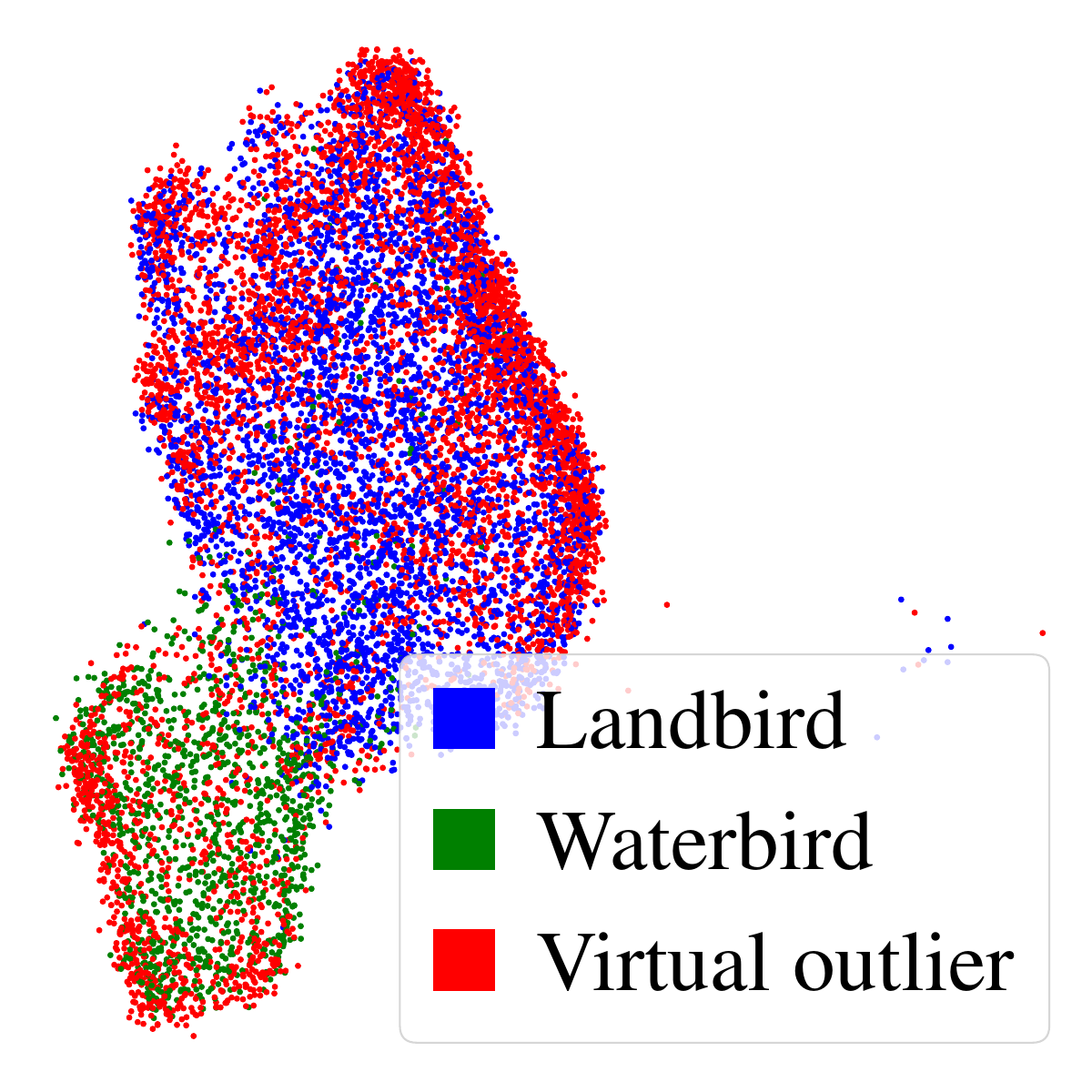}}}
        \caption{$\alpha\sim1$}
        \label{fig:alpha_0.01}
    \end{subfigure}
    }
    \caption{UMAP visualization of feature space of Waterbirds datasets with varying values of $\alpha$ (used for outlier synthesis).}
\end{figure}
\noindent The hyperparameter $\alpha$ determines the nature of synthesized outliers. To illustrate its effect in feature space, \Cref{fig:alpha_1}, \Cref{fig:alpha_0.1}, and \Cref{fig:alpha_0.01} present UMAP visualizations (with cross-entropy baseline on Waterbirds) of virtual outliers alongside ID samples for different values of $\alpha$. \Cref{fig:alpha_1} shows a clear separation of virtual outliers from the ID cluster at high $\alpha$, while \Cref{fig:alpha_0.1} demonstrates partial overlap with some outliers remaining distant, reflecting a moderate $\alpha$. In \Cref{fig:alpha_0.01}, virtual outliers and ID samples significantly overlap due to a very low $\alpha$. Ideally, virtual outliers should be sufficiently challenging but lack invariant features of ID data for optimal spurious OOD detection. The results presented in \Cref{appendix:tab:sensitivity_alpha_waterbirds} demonstrate that optimal spurious OOD detection is achieved by linearly decreasing $\alpha$ from 300 to 30 during training. Dynamically adjusting $\alpha$ during training -- starting high to compensate for $\mathbf{G}$'s initial unreliability and then decreasing to generate more challenging outliers as $\mathbf{G}$ improves -- facilitates enhanced spurious OOD detection. This dynamic adjustment, however, did not show gains in fine-grained and conventional OOD detection. The sensitivity of
$p_\text{inv}$ with both dynamic and static values of $\alpha$ is presented in \Cref{appendix:tab:p_inv_waterbird}, which reinforces the same prior conclusion of dynamic $\alpha$ being superior. It is also evident that $\alpha$ is a more critical hyperparameter than $p_\text{inv}$. Regardless of $p_\text{inv}$ not being highly critical, destroying invariant features while preserving environmental ones still occurs as gradients are higher in invariant regions. Moreover, we also present a sensitivity analysis of $\sigma$ in Waterbirds datasets in \Cref{appendix:tab:sensitivity_sigma_waterbirds}. It suggests the optimal value of $\sigma$ is around 0.5.

\begin{table}[h]
  \centering
    \subfloat[Sensitivity of $p_\text{inv}$]{
      \resizebox{0.5\linewidth}{!}{%
        \begin{tabular}{@{}lccccc}
          \toprule
          \multirow{2}{*}{$\alpha$} & \multirow{2}{*}{$p_\text{inv}$} & \multicolumn{2}{c}{\textbf{Spurious OOD}} & \multicolumn{2}{c}{\textbf{Conventional OOD}} \\
            \cmidrule(lr){3-4} \cmidrule(lr){5-6}
          & & FPR@95$\downarrow$ & AUROC$\uparrow$ & FPR@95$\downarrow$ & AUROC$\uparrow$ \\
          \midrule
          \multirow{4}{*}{$30$} 
                & 1\% & 23.41{\tiny$\pm$2.52} & 94.71{\tiny$\pm$0.41} & 13.98{\tiny$\pm$1.58} & 97.19{\tiny$\pm$0.22} \\
                & 10\% & 17.26{\tiny$\pm$1.19} & 96.27{\tiny$\pm$0.28} & 10.18{\tiny$\pm$0.54} & 98.05{\tiny$\pm$0.12} \\  
                & 30\% & 18.18{\tiny$\pm$1.13} & 96.20{\tiny$\pm$0.27} & \underline{9.52{\tiny$\pm$0.79}} & \underline{98.24{\tiny$\pm$0.15}} \\
                & 100\% & 17.29{\tiny$\pm$1.65} & 96.39{\tiny$\pm$0.27} & \textbf{9.23{\tiny$\pm$0.93}} &  \textbf{98.27{\tiny$\pm$0.17}} \\
          \midrule
          \multirow{4}{*}{$300\rightarrow30$} 
                & 1\% & 14.50{\tiny$\pm$0.37} & 96.97{\tiny$\pm$0.06} & 12.98{\tiny$\pm$0.79} & 97.49{\tiny$\pm$0.18} \\
                & \cellcolor{LightGray} 10\% & \cellcolor{LightGray} \textbf{11.37{\tiny$\pm$0.42}} & \cellcolor{LightGray} \textbf{97.48{\tiny$\pm$0.10}} & \cellcolor{LightGray} 10.02{\tiny$\pm$0.38} & \cellcolor{LightGray} 98.04{\tiny$\pm$0.02} \\
                & 30\% & \underline{11.74{\tiny$\pm$1.19}} & \underline{97.42{\tiny$\pm$0.11}} & 10.61{\tiny$\pm$1.33} & 97.98{\tiny$\pm$0.16} \\
                & 100\% & 13.92{\tiny$\pm$1.38} & 97.06{\tiny$\pm$0.36} & 10.89{\tiny$\pm$0.75} & 97.91{\tiny$\pm$0.19} \\
          \bottomrule
        \end{tabular}%
      }
      \label{appendix:tab:p_inv_waterbird}
    }
    \subfloat[Sensitivity of $\alpha$]{
      \resizebox{0.5\linewidth}{!}{%
        \begin{tabular}{@{}lcccc}
          \toprule
          \multirow{2}{*}{$\alpha$} & \multicolumn{2}{c}{\textbf{Spurious OOD}} & \multicolumn{2}{c}{\textbf{Conventional OOD}} \\
            \cmidrule(lr){2-3} \cmidrule(lr){4-5}
          & FPR@95$\downarrow$ & AUROC$\uparrow$ & FPR@95$\downarrow$ & AUROC$\uparrow$ \\
          \midrule
          300 & 17.92{\tiny$\pm$2.20} & 96.37{\tiny$\pm$0.46} & 15.47{\tiny$\pm$1.28} & 97.04{\tiny$\pm$0.19} \\
          100 & \underline{15.93{\tiny$\pm$1.81}} & \underline{96.86{\tiny$\pm$0.40}} & 12.34{\tiny$\pm$0.77} & 97.72{\tiny$\pm$0.18} \\
          30 & 17.26{\tiny$\pm$1.19} & 96.27{\tiny$\pm$0.28} & \underline{10.18{\tiny$\pm$0.54}} & \underline{98.04{\tiny$\pm$0.12}} \\
          10 & 35.73{\tiny$\pm$6.31} & 90.61{\tiny$\pm$1.53} & 18.41{\tiny$\pm$1.49} & 96.34{\tiny$\pm$0.23} \\
          \rowcolor{LightGray} 300 $\rightarrow$ 30 & \textbf{11.37{\tiny$\pm$0.42}} & \textbf{97.48{\tiny$\pm$0.10}} & \textbf{10.02{\tiny$\pm$0.38}} & 
          \textbf{98.04{\tiny$\pm$0.02}} \\
          \bottomrule
        \end{tabular}
      }
      \label{appendix:tab:sensitivity_alpha_waterbirds}
    } \\ \qquad \qquad \qquad
    \subfloat[Sensitivity of $\alpha$]{
      \resizebox{0.3\linewidth}{!}{%
        \begin{tabular}{@{}lcccc}
        \toprule
        \multirow{2}{*}{$\sigma$} & \multicolumn{2}{c}{\textbf{Spurious OOD}} \\  \cmidrule{2-3}
        & FPR@95$\downarrow$ & AUROC$\uparrow$ \\
        \midrule
        0.1 & 34.01{\tiny$\pm$0.70} & 90.21{\tiny$\pm$1.16} \\
        \rowcolor{LightGray} 0.5 & \cellcolor{LightGray}\textbf{11.37{\tiny$\pm$0.42}} & \textbf{97.48{\tiny$\pm$0.10}} \\
        2.5 & 51.75{\tiny$\pm$31.35} & 80.33{\tiny$\pm$18.22} \\ \bottomrule
        \end{tabular}
      }
      \label{appendix:tab:sensitivity_sigma_waterbirds}
    }
  \caption{Sensitivity analysis in Waterbirds datasets}
\end{table}

\subsection{Hyperparameter studies in CelebA datasets} 
\Cref{appendix:tab:sensitivity_alpha_celeba} presents the sensitivity analysis of the $\alpha$ hyperparameter in CelebA datasets. Due to the relative simplicity of the ID task, nearly perfect performance in conventional OOD detection is achieved across all values of $\alpha$. The sensitivity analysis of $\alpha$ (keeping $p_\text{inv}$ = 5\%) and $p_\text{inv}$ (keeping $\alpha$ = $50\rightarrow30$) is presented in the \Cref{appendix:tab:sensitivity_alpha_celeba} and Table \Cref{appendix:tab:p_inv_celeba} respectively. The results indicate that optimal performance is achieved when $\alpha$ is reduced from 50 to 30 and $p_\text{inv}$ is set to 5\%. \Cref{appendix:tab:sensitivity_sigma_celeba} shows the sensitivity analysis of $\sigma$ for spurious OOD detection in CelebA ($p_\text{inv}=5\%$) datasets. The results demonstrate that setting $\sigma$ to 0.5 yields the best OOD detection performance in CelebA datasets. In each study, outliers are synthesized as $\mathbf{x}^{\prime} = \mathbf{x} + \alpha \cdot \mathbf{G}_\text{inv}$. 
\vfill
\begin{table}[!ht]
\centering
    \resizebox{0.55\linewidth}{!}{%
    \begin{tabular}{@{}lcccc}
        \toprule
        \multirow{2}{*}{$\alpha$} & \multicolumn{2}{c}{\textbf{Spurious OOD}} & \multicolumn{2}{c}{\textbf{Conventional OOD}} \\
        \cmidrule(lr){2-3} \cmidrule(lr){4-5}
        & FPR@95$\downarrow$ & AUROC$\uparrow$ & FPR@95$\downarrow$ & AUROC$\uparrow$ \\
        \midrule
        40 & 51.90{\tiny$\pm$8.35} & 83.14{\tiny$\pm$1.81} & \textbf{0.01{\tiny$\pm$0.01}} & \textbf{99.99{\tiny$\pm$0.00}} \\
        30 & 47.99{\tiny$\pm$7.47} & 83.98{\tiny$\pm$2.01} & \textbf{0.01{\tiny$\pm$0.02}} & \textbf{99.98{\tiny$\pm$0.00}} \\
        10 & 50.49{\tiny$\pm$3.86} & 83.09{\tiny$\pm$0.99} & \textbf{0.00{\tiny$\pm$0.00}} & \textbf{99.99{\tiny$\pm$0.00}} \\
        5.0 & 55.03{\tiny$\pm$8.16} & 82.16{\tiny$\pm$2.04} & \textbf{0.02{\tiny$\pm$0.03}} & \textbf{99.98{\tiny$\pm$0.01}} \\
        \rowcolor{LightGray} 50 $\rightarrow$ 30 & \textbf{46.61{\tiny$\pm$0.63}} & \textbf{84.12{\tiny$\pm$0.55}} & \textbf{0.01{\tiny$\pm$0.01}} & \textbf{99.99{\tiny$\pm$0.00}} \\
        \bottomrule
    \end{tabular}}
    \caption{Sensitivity of $\alpha$ hyperparameter in CelebA datasets.}
    \label{appendix:tab:sensitivity_alpha_celeba}
\end{table}
\vfill
\begin{table}[h!]
\centering
\resizebox{0.75\linewidth}{!}{%
\begin{tabular}{@{}lccccc}
\toprule
\multirow{2}{*}{Metrics} & \multicolumn{3}{c}{\textbf{$p_\text{inv}$}} \\
\cmidrule(lr){2-4}
& 1\% & 5\% & 10\%\\
\midrule
FPR@95 $\downarrow$ / AUROC $\uparrow$ & 49.42{\tiny$\pm$4.25} / 83.62{\tiny$\pm$0.55} & \cellcolor{LightGray} \textbf{46.61{\tiny$\pm$0.63}} / \textbf{84.12{\tiny$\pm$0.55}} & 52.32{\tiny$\pm$5.67} / 82.19{\tiny$\pm$2.15} \\ \bottomrule
\end{tabular}
}
\caption{Sensitivity of $p_\text{inv}$ in terms of spurious OOD detection in CelebA datasets.}
\label{appendix:tab:p_inv_celeba}
\end{table}
\vfill

\begin{table}[h!]
\centering
\resizebox{0.25\linewidth}{!}{%
    \begin{tabular}{@{}lcc}
        \toprule
        \multirow{2}{*}{$\sigma$} & \multicolumn{2}{c}{\textbf{Spurious OOD}} \\ \cmidrule{2-3}
        & FPR@95$\downarrow$ & AUROC$\uparrow$ \\
        \midrule
        0.1 & 66.16{\tiny$\pm$0.59} & 66.08{\tiny$\pm$3.24} \\
        \rowcolor{LightGray} 0.5 & \cellcolor{LightGray}\textbf{46.61{\tiny$\pm$0.63}} & \textbf{84.12{\tiny$\pm$0.55}} \\
        2.5 & 61.08{\tiny$\pm$12.02} & 80.89{\tiny$\pm$3.43} \\
        \bottomrule
    \end{tabular}}
    \caption{Sensitivity of $\sigma$ hyperparameter in terms of spurious OOD detection in CelebA datasets.}
    \label{appendix:tab:sensitivity_sigma_celeba}
\end{table}
\vfill
\newpage

\vfill

\newpage
\subsection{Ablation study of outlier synthesis in CIFAR-10/100 datasets}
The ablation study of outlier synthesis in CIFAR-10/100 datasets is presented in \Cref{appendix:tab:cifar_outlier_ablation}. Our study examines various mechanisms for synthesizing outliers and reveals that shuffling invariant pixels yields optimal results for the CIFAR-100 benchmark. Furthermore, we find that even utilizing mere random shuffling of pixels can achieve good OOD detection performance in the context of the relatively easier CIFAR-10 benchmark. This highlights the effectiveness of pixel shuffle perturbation in outlier synthesis. We observe that adding Gaussian noise and $\mathbf{G}_{\text{inv}}$ to ID for synthesizing virtual outliers performs comparably to other methods on the CIFAR-10 benchmark but significantly underperforms in average OOD detection on the CIFAR-100 benchmark. The datasets yielding higher accuracy are more likely to benefit from utilizing the gradient
$\mathbf{G}$ addition for outlier synthesis, as increased accuracy suggests reliable outlier synthesis in later training stages. Hence, we hypothesize that using $\mathbf{G}_\text{inv}$ addition for outlier synthesis is less effective than mere random pixel shuffle perturbation in CIFAR-100 because of the modest accuracy ($\sim76.63\%$).
\begin{table}[h!]
  \centering
  \resizebox{0.80\linewidth}{!}{
    \begin{tabular}{@{}lcccccc@{}}
      \toprule
      \multirow{2}{*}{Outlier synthesis} & \multirow{2}{*}{$p_\text{inv}/\alpha$} & \multicolumn{2}{c}{\textbf{CIFAR-10}} & \multicolumn{2}{c}{\textbf{CIFAR-100}} \\
      \cmidrule(lr){3-4} \cmidrule(lr){5-6}
      &  & FPR@95$\downarrow$ & AUROC$\uparrow$ & FPR@95$\downarrow$ & AUROC$\uparrow$ \\
      \midrule

      \multirow{6}{*}{Random pixel shuffle} 
      & 5\%  & 9.17{\tiny$\pm$0.76} & 97.95{\tiny$\pm$0.19} &  35.05{\tiny$\pm$1.24} & 89.41{\tiny$\pm$0.54} \\
      & 10\% & 8.67{\tiny$\pm$0.49} & 98.13{\tiny$\pm$0.10} &  34.15{\tiny$\pm$0.77} & 89.13{\tiny$\pm$0.26} \\
      & 15\% & 10.28{\tiny$\pm$0.76} & 97.79{\tiny$\pm$0.18} &  35.60{\tiny$\pm$2.69} & 89.58{\tiny$\pm$0.93} \\
      & 20\% & \cellcolor{LightGray} \textbf{7.69{\tiny$\pm$0.29}} & \cellcolor{LightGray} \textbf{98.32{\tiny$\pm$0.07}} &  34.18{\tiny$\pm$1.37} & 90.23{\tiny$\pm$0.88} \\
      & 30\% & 9.04{\tiny$\pm$0.58} & 98.05{\tiny$\pm$0.10} &  35.02{\tiny$\pm$2.44} & 89.99{\tiny$\pm$0.63} \\
      & 100\% & 10.04{\tiny$\pm$0.74} & 97.78{\tiny$\pm$0.15} & 36.60{\tiny$\pm$0.90} & 89.85{\tiny$\pm$0.33} \\
      \midrule

      \multirow{4}{*}{Invariant pixel shuffle} 
      & 5\%  & 8.88{\tiny$\pm$0.29} & 98.07{\tiny$\pm$0.07} &  32.47{\tiny$\pm$3.19} & 89.91{\tiny$\pm$1.31} \\
      & 10\% & 8.09{\tiny$\pm$0.42} & 98.29{\tiny$\pm$0.06} &  \cellcolor{LightGray} \textbf{29.90{\tiny$\pm$0.76}} & \cellcolor{LightGray} \textbf{91.35{\tiny$\pm$0.13}} \\
      & 15\% & {7.84{\tiny$\pm$0.27}} & {98.31{\tiny$\pm$0.07}} &  32.15{\tiny$\pm$1.30} & 90.42{\tiny$\pm$0.31} \\
      & 20\% & 8.66{\tiny$\pm$0.55} & 98.14{\tiny$\pm$0.15} &  35.43{\tiny$\pm$1.67} & 89.63{\tiny$\pm$0.87} \\      
      \midrule

      \multirow{4}{*}{$\mathbf{G}_\text{inv}$ addition} 
      & 1.0  & 10.43{\tiny$\pm$0.97} & 97.71{\tiny$\pm$0.24} &  46.39{\tiny$\pm$8.48} & 85.56{\tiny$\pm$3.10} \\
      & 10.0 & 9.28{\tiny$\pm$0.95} & 98.00{\tiny$\pm$0.19} &  39.23{\tiny$\pm$1.63} & 88.31{\tiny$\pm$0.95} \\
      & 100.0 & 9.99{\tiny$\pm$0.88} & 97.85{\tiny$\pm$0.23} &  48.22{\tiny$\pm$2.11} & 84.60{\tiny$\pm$0.86} \\      
      \midrule

      \multirow{4}{*}{Gaussian noise addition} 
      & 0.01  & 8.74{\tiny$\pm$0.26} & 97.95{\tiny$\pm$0.18} &  42.64{\tiny$\pm$3.98} & 88.33{\tiny$\pm$1.50} \\
      & 0.1 & 8.61{\tiny$\pm$0.25} & 98.10{\tiny$\pm$0.05} &  43.81{\tiny$\pm$5.01} & 86.48{\tiny$\pm$1.44} \\
      & 1.0 & 12.58{\tiny$\pm$1.75} & 97.12{\tiny$\pm$0.43} &  47.34{\tiny$\pm$3.92} & 84.86{\tiny$\pm$1.20} \\
      & 10.0 & 13.90{\tiny$\pm$1.21} & 96.75{\tiny$\pm$0.28} &  53.82{\tiny$\pm$2.08} & 81.40{\tiny$\pm$1.32} \\      
      \bottomrule
    \end{tabular}}
  \caption{Ablation study of outlier synthesis in CIFAR-10/100 datasets.}
  \label{appendix:tab:cifar_outlier_ablation}
\end{table}
\vfill

\subsection{Ablation study of outlier synthesis in Waterbirds datasets}
\Cref{appendix:tab:waterbird_outlier_ablation} presents an ablation study of outlier synthesis approaches in the Waterbirds datasets. From the results, we can observe a trend in both random pixel shuffle and invariant pixel shuffle: as the level of pixel shuffling increases, the presence of ID environmental features in the outliers diminishes, making the outliers increasingly trivial. For a given level of pixel shuffling $p_\text{inv}$, shuffling invariant pixels derived from $\mathbf{G}_{\text{inv}}$ results in superior performance compared to random pixel shuffling. This indicates that $\mathbf{G}_{\text{inv}}$, as the model continues learning ID discrimination, indicates pixels most crucial for class recognition. Shuffling these pixels effectively disrupts the semantic content synthesizing reliable outliers. Moreover, synthesizing outliers by adding Gaussian noise to ID data leads to a relative decline in OOD detection performance. Gaussian noise introduces extraneous information and results in relatively more distortion of environmental/background features within the synthesized outliers. However, preserving environmental/background features in these outliers is critical in mitigating the effect of high spurious correlations in the training set. Hence, the outlier synthesis approach resulting in minimal alteration in environmental features is favorable in Waterbirds datasets.

\begin{table}[h!]
  \centering
  \resizebox{0.99\linewidth}{!}{%
    \begin{tabular}{@{}l ccc ccc ccc@{}}
      \toprule
      \multirow{2}{*}{OOD type} & \multicolumn{3}{c}{\textbf{Random pixel shuffle}} & \multicolumn{3}{c}{\textbf{Invariant pixel shuffle}} & \multicolumn{3}{c}{\textbf{Gaussian noise}} \\
      \cmidrule(lr){2-4} \cmidrule(lr){5-7} \cmidrule(lr){8-10}
      & $p_\text{inv}$ & FPR@95$\downarrow$ & AUROC$\uparrow$ & $p_\text{inv}$ & FPR@95$\downarrow$ & AUROC$\uparrow$ & $\alpha$ & FPR@95$\downarrow$ & AUROC$\uparrow$ \\
      \midrule

      \multirow{6}{*}{Spurious} 
      & 5\%  & \cellcolor{LightGray} \textbf{19.48{\tiny$\pm$1.18}} & \cellcolor{LightGray} \textbf{96.49{\tiny$\pm$0.10}} & 5\%  & \cellcolor{LightGray} \textbf{13.21{\tiny$\pm$1.24}} & \cellcolor{LightGray} \textbf{97.43{\tiny$\pm$0.16}} & 0.01  & 77.37{\tiny$\pm$3.56} & 74.52{\tiny$\pm$3.23} \\
      & 10\% & 24.32{\tiny$\pm$2.09} & 95.33{\tiny$\pm$0.39} & 10\% & 14.80{\tiny$\pm$0.19} & 97.26{\tiny$\pm$0.05} & 0.05 & \textbf{36.48{\tiny$\pm$6.03}} & 89.89{\tiny$\pm$2.37} \\
      & 15\% & 28.71{\tiny$\pm$2.45} & 94.32{\tiny$\pm$0.35} & 15\% & 17.63{\tiny$\pm$3.78} & 96.73{\tiny$\pm$0.60} & 0.1 & 40.44{\tiny$\pm$6.72} & 90.75{\tiny$\pm$1.81} \\
      & 20\% & 30.23{\tiny$\pm$3.79} & 93.87{\tiny$\pm$0.82} & 20\% & 20.91{\tiny$\pm$1.20} & 96.17{\tiny$\pm$0.20} & 0.5 & \cellcolor{LightGray} 37.86{\tiny$\pm$4.93} & \cellcolor{LightGray} \textbf{91.71{\tiny$\pm$1.38}} \\
      & 30\% & 32.43{\tiny$\pm$3.45} & 92.81{\tiny$\pm$1.02} & 30\% & 28.02{\tiny$\pm$1.76} & 94.50{\tiny$\pm$0.61} & 1.0 & 40.47{\tiny$\pm$2.21} & 90.48{\tiny$\pm$0.56} \\
      & 100\% & 45.16{\tiny$\pm$2.75} & 89.33{\tiny$\pm$1.22} & 100\% & 45.16{\tiny$\pm$2.75} & 89.33{\tiny$\pm$1.22} & 10.0 & 49.93{\tiny$\pm$4.98} & 86.40{\tiny$\pm$1.27} \\
      \midrule

      \multirow{6}{*}{Conventional} 
      & 5\%  & \cellcolor{LightGray} \textbf{22.53{\tiny$\pm$1.44}} & \cellcolor{LightGray} \textbf{95.81{\tiny$\pm$0.35}} & 5\%  & \cellcolor{LightGray} \textbf{15.36{\tiny$\pm$1.79}} & \cellcolor{LightGray} \textbf{97.12{\tiny$\pm$0.24}} & 0.01  & 39.77{\tiny$\pm$6.38} & 86.20{\tiny$\pm$5.50} \\
      & 10\% & 25.95{\tiny$\pm$2.26} & 94.82{\tiny$\pm$0.65} & 10\% & 16.58{\tiny$\pm$0.74} & 96.99{\tiny$\pm$0.07} & 0.05 & 29.12{\tiny$\pm$2.01} & 94.01{\tiny$\pm$0.57} \\
      & 15\% & 27.76{\tiny$\pm$1.83} & 94.14{\tiny$\pm$0.33} & 15\% & 20.35{\tiny$\pm$2.04} & 96.27{\tiny$\pm$0.28} & 0.1 & \cellcolor{LightGray} \textbf{23.49{\tiny$\pm$1.18}} & \cellcolor{LightGray} \textbf{95.88{\tiny$\pm$0.28}} \\
      & 20\% & 28.52{\tiny$\pm$3.30} & 94.10{\tiny$\pm$0.78} & 20\% & 21.51{\tiny$\pm$1.48} & 96.05{\tiny$\pm$0.32} & 0.5 & 33.57{\tiny$\pm$2.17} & 92.43{\tiny$\pm$0.65} \\
      & 30\% & 31.27{\tiny$\pm$2.66} & 92.94{\tiny$\pm$0.92} & 30\% & 27.43{\tiny$\pm$1.44} & 94.37{\tiny$\pm$0.14} & 1.0 & 37.66{\tiny$\pm$1.91} & 90.99{\tiny$\pm$0.93} \\
      & 100\% & 36.05{\tiny$\pm$2.38} & 91.46{\tiny$\pm$1.01} & 100\% & 36.05{\tiny$\pm$2.38} & 91.46{\tiny$\pm$1.01} & 10.0 & 45.55{\tiny$\pm$2.06} & 87.72{\tiny$\pm$0.65} \\
      \bottomrule
    \end{tabular}}
  \caption{Ablation study of outlier synthesis in Waterbirds datasets.}
  \label{appendix:tab:waterbird_outlier_ablation}
\end{table}

\subsection{Ablation study of outlier synthesis in CelebA datasets}
\Cref{appendix:tab:celeba_outlier_ablation} presents the ablation study of outlier synthesis approaches in the CelebA datasets. The table compares three methods of outlier synthesis. In any setting, it can be observed that \ours~achieves near-perfect conventional OOD Detection performance. It is due to the trivial nature of ID classification which is the task of color identification. Similar to the case in Waterbirds datasets, a notable trend observed in random pixel shuffling is that increasing the percentage of shuffled pixels degrades OOD detection performance. A similar effect is evident when using the invariant pixel shuffle approach. We attribute this to the disruption of environmental features as more pixels are shuffled, making the outliers less challenging and thereby diminishing detection efficacy. Additionally, we observe that Gaussian noise addition for outlier synthesis yields the best performance on the spurious OOD Detection in CelebA benchmark among these compared approaches. This approach also forms challenging outliers because of the use of ID data in outlier synthesis.

\begin{table}[!h]
  \centering
  \resizebox{0.99\linewidth}{!}{%
    \begin{tabular}{@{}l ccc ccc ccc@{}}
      \toprule
      \multirow{2}{*}{OOD type} & \multicolumn{3}{c}{\textbf{Random pixel shuffle}} & \multicolumn{3}{c}{\textbf{Invariant pixel shuffle}} & \multicolumn{3}{c}{\textbf{Gaussian noise addition}} \\
      \cmidrule(lr){2-4} \cmidrule(lr){5-7} \cmidrule(lr){8-10}
      & $p_\text{inv}$ & FPR@95$\downarrow$ & AUROC$\uparrow$ & $p_\text{inv}$ & FPR@95$\downarrow$ & AUROC$\uparrow$ & $\alpha$ & FPR@95$\downarrow$ & AUROC$\uparrow$ \\
      \midrule

      \multirow{6}{*}{Spurious} 
      & 5\%  & \cellcolor{LightGray} \textbf{52.79{\tiny$\pm$1.29}} & \cellcolor{LightGray} \textbf{84.77{\tiny$\pm$0.63}} & 5\%  & \cellcolor{LightGray} \textbf{47.27{\tiny$\pm$1.39}} & \cellcolor{LightGray} \textbf{85.37{\tiny$\pm$0.38}} & 0.01  & 76.83{\tiny$\pm$6.17} & 61.75{\tiny$\pm$5.48} \\
      & 10\% & 55.05{\tiny$\pm$2.80} & 83.59{\tiny$\pm$0.89} & 10\% & 50.34{\tiny$\pm$5.00} & 84.70{\tiny$\pm$0.70} & 0.05 & 49.30{\tiny$\pm$3.07} & 83.38{\tiny$\pm$0.87} \\
      & 15\% & 55.41{\tiny$\pm$3.38} & 83.29{\tiny$\pm$1.02} & 15\% & 51.39{\tiny$\pm$4.82} & 84.50{\tiny$\pm$1.86} & 0.1 & \cellcolor{LightGray} \textbf{42.62{\tiny$\pm$0.76}} & \cellcolor{LightGray} \textbf{86.50{\tiny$\pm$0.89}} \\
      & 20\% & 59.05{\tiny$\pm$2.85} & 82.19{\tiny$\pm$0.78} & 20\% & 59.05{\tiny$\pm$2.85} & 82.19{\tiny$\pm$0.78} & 0.5 & 52.90{\tiny$\pm$1.54} & 83.05{\tiny$\pm$2.00} \\
      & 30\% & 58.60{\tiny$\pm$3.99} & 82.41{\tiny$\pm$1.80} & 30\% & 58.60{\tiny$\pm$3.99} & 82.41{\tiny$\pm$1.80} & 1.0 & 60.30{\tiny$\pm$4.89} & 82.58{\tiny$\pm$1.63} \\
      & 100\% & 62.05{\tiny$\pm$1.81} & 81.74{\tiny$\pm$1.07} & 100\% & 62.05{\tiny$\pm$1.81} & 81.74{\tiny$\pm$1.07} & 10.0 & 60.40{\tiny$\pm$8.12} & 80.85{\tiny$\pm$2.61} \\
      \midrule

      \multirow{6}{*}{Conventional} 
      & 5\%  & \cellcolor{LightGray} \textbf{0.02{\tiny$\pm$0.02}} & \cellcolor{LightGray} \textbf{99.97{\tiny$\pm$0.01}} & 5\%  & \cellcolor{LightGray} \textbf{0.02{\tiny$\pm$0.03}} & \cellcolor{LightGray} \textbf{99.98{\tiny$\pm$0.00}} & 0.01  & 74.26{\tiny$\pm$6.23} & 37.09{\tiny$\pm$7.04} \\
      & 10\% & 0.03{\tiny$\pm$0.04} & 99.97{\tiny$\pm$0.01} & 10\% & 0.03{\tiny$\pm$0.04} & 99.97{\tiny$\pm$0.00} & 0.05 & 0.02{\tiny$\pm$0.02} & 99.98{\tiny$\pm$0.00} \\
      & 15\% & 0.04{\tiny$\pm$0.02} & 99.97{\tiny$\pm$0.01} & 15\% & 0.03{\tiny$\pm$0.02} & 99.97{\tiny$\pm$0.00} & 0.1 & \cellcolor{LightGray} \textbf{0.02{\tiny$\pm$0.02}} & \cellcolor{LightGray} \textbf{99.98{\tiny$\pm$0.00}} \\
      & 20\% & 0.05{\tiny$\pm$0.03} & 99.97{\tiny$\pm$0.01} & 20\% & 0.05{\tiny$\pm$0.03} & 99.97{\tiny$\pm$0.01} & 0.5 & 0.32{\tiny$\pm$0.05} & 99.90{\tiny$\pm$0.01} \\
      & 30\% & 0.06{\tiny$\pm$0.01} & 99.97{\tiny$\pm$0.00} & 30\% & 0.06{\tiny$\pm$0.01} & 99.97{\tiny$\pm$0.00} & 1.0 & 0.08{\tiny$\pm$0.05} & 99.95{\tiny$\pm$0.02}\\
      & 100\% & 0.13{\tiny$\pm$0.09} & 99.95{\tiny$\pm$0.01} & 100\% & 0.13{\tiny$\pm$0.09} & 99.95{\tiny$\pm$0.01} & 10.0 & 0.32{\tiny$\pm$0.05} & 99.90{\tiny$\pm$0.01} \\
      \bottomrule
    \end{tabular}
  }
  \caption{Ablation study of outlier synthesis in CelebA datasets.}
  \label{appendix:tab:celeba_outlier_ablation}
\end{table}

\newpage
\section{Confidence score plots with conventional OOD datasets:}
\label{sec:confidence_plot_conventional}

\subsection{Waterbirds datasets}
\begin{figure}[htbp]
    \centering
    \adjustbox{width=0.95\linewidth}{
        \begin{tabular}{c c c c c}
            \includegraphics[width=0.19\linewidth]{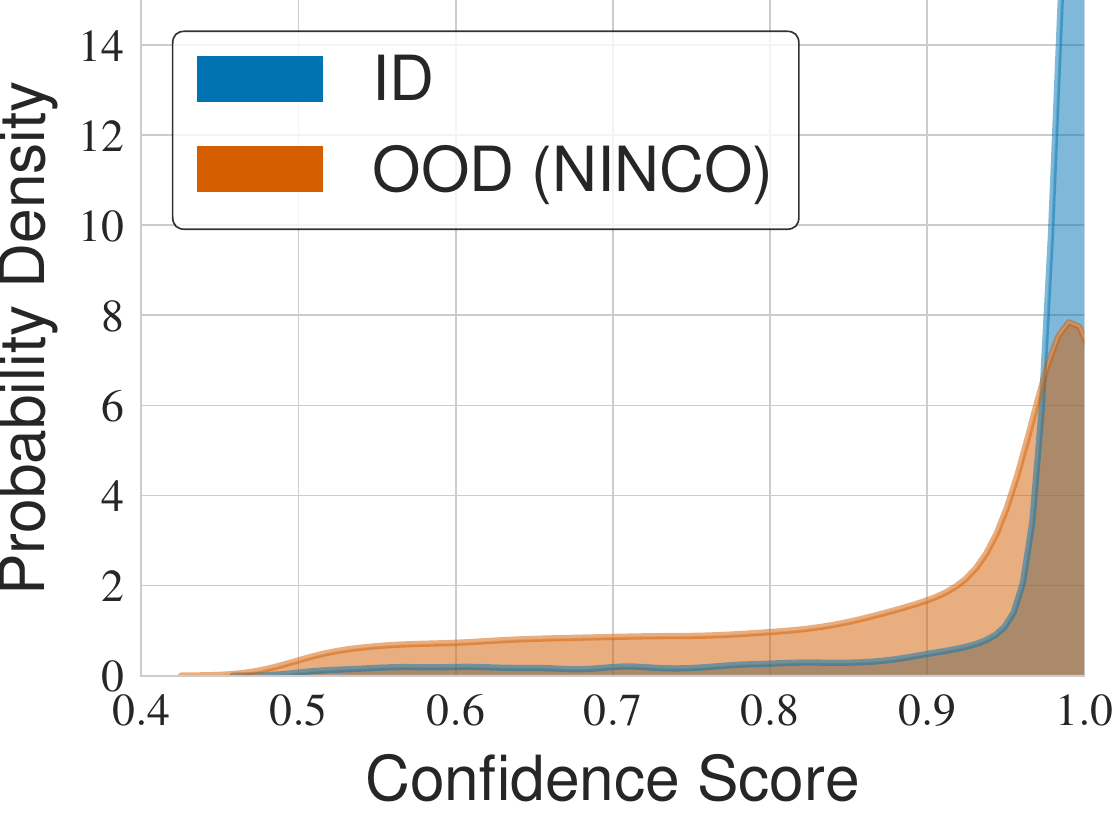} &
            \includegraphics[width=0.19\linewidth]{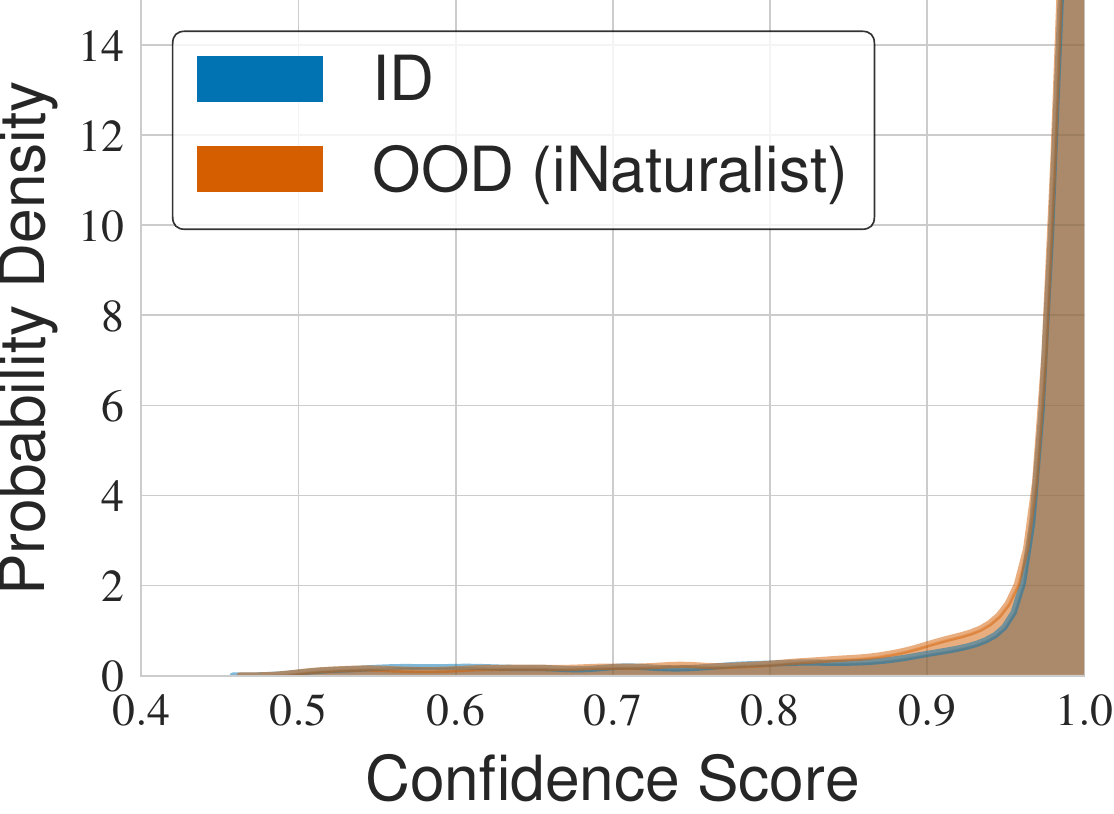} &
            \includegraphics[width=0.19\linewidth]{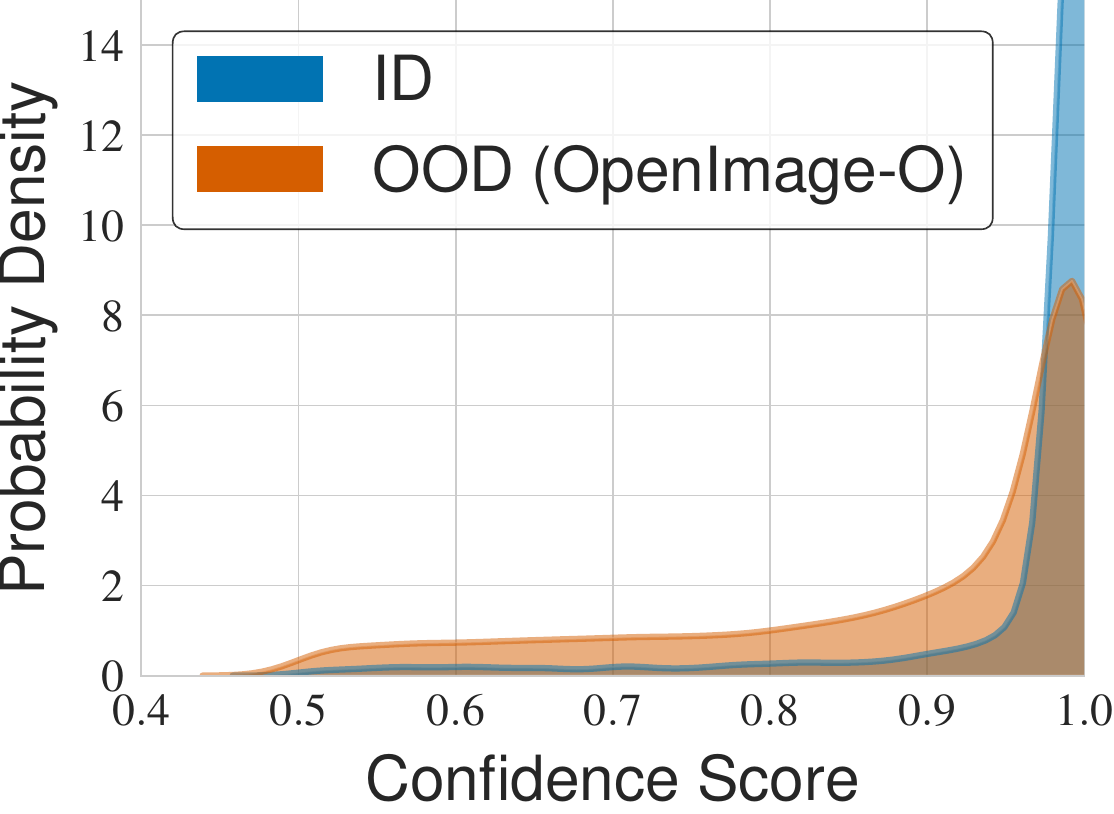} &
            \includegraphics[width=0.19\linewidth]{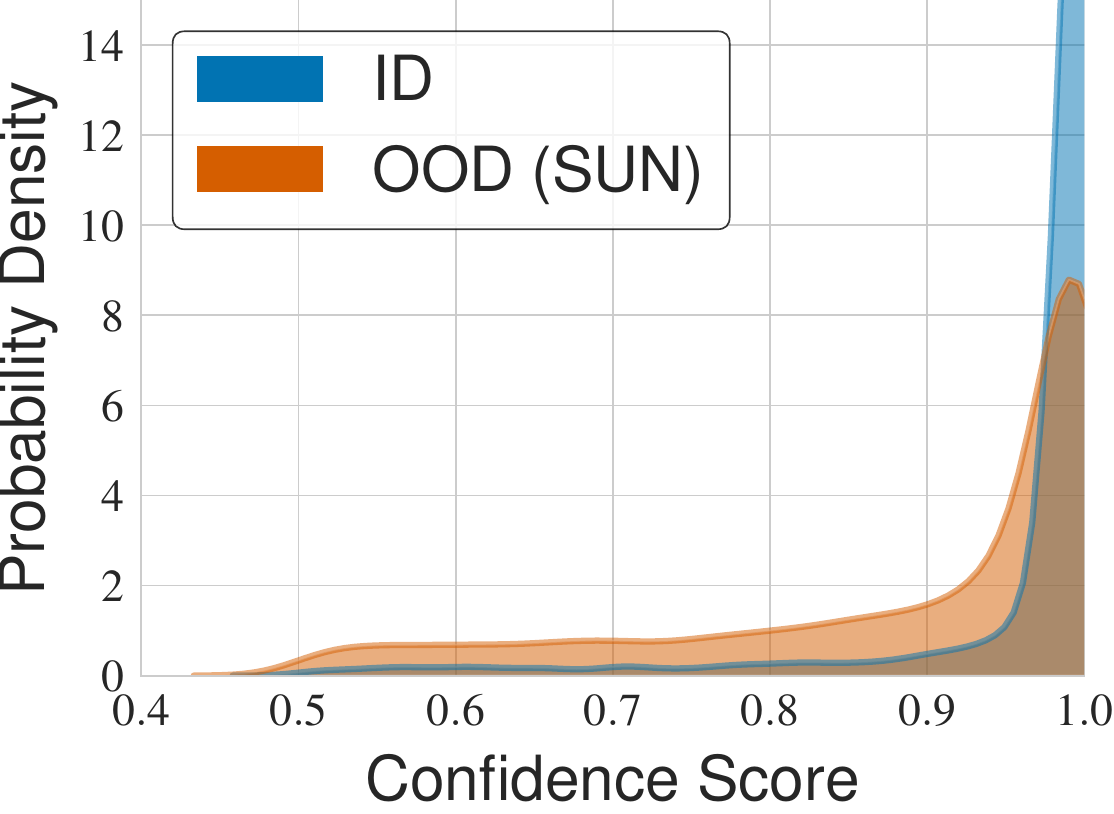} &
            \includegraphics[width=0.19\linewidth]{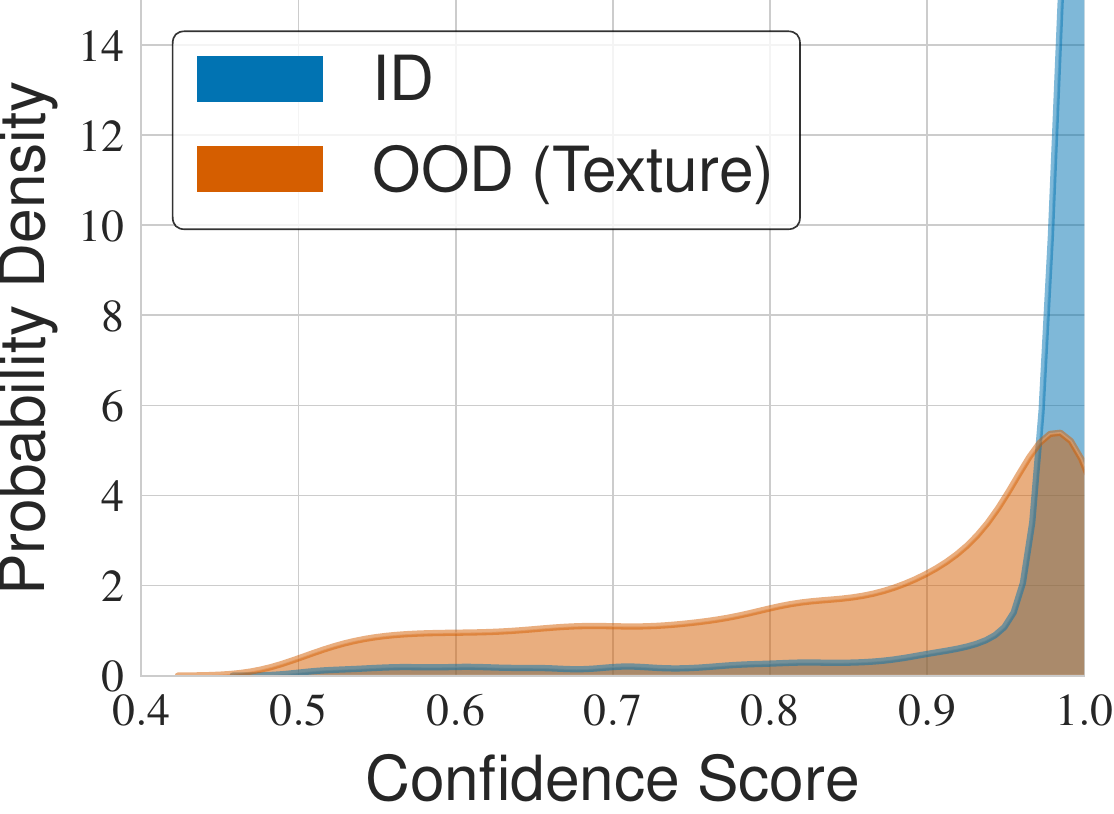} \\

            \includegraphics[width=0.19\linewidth]{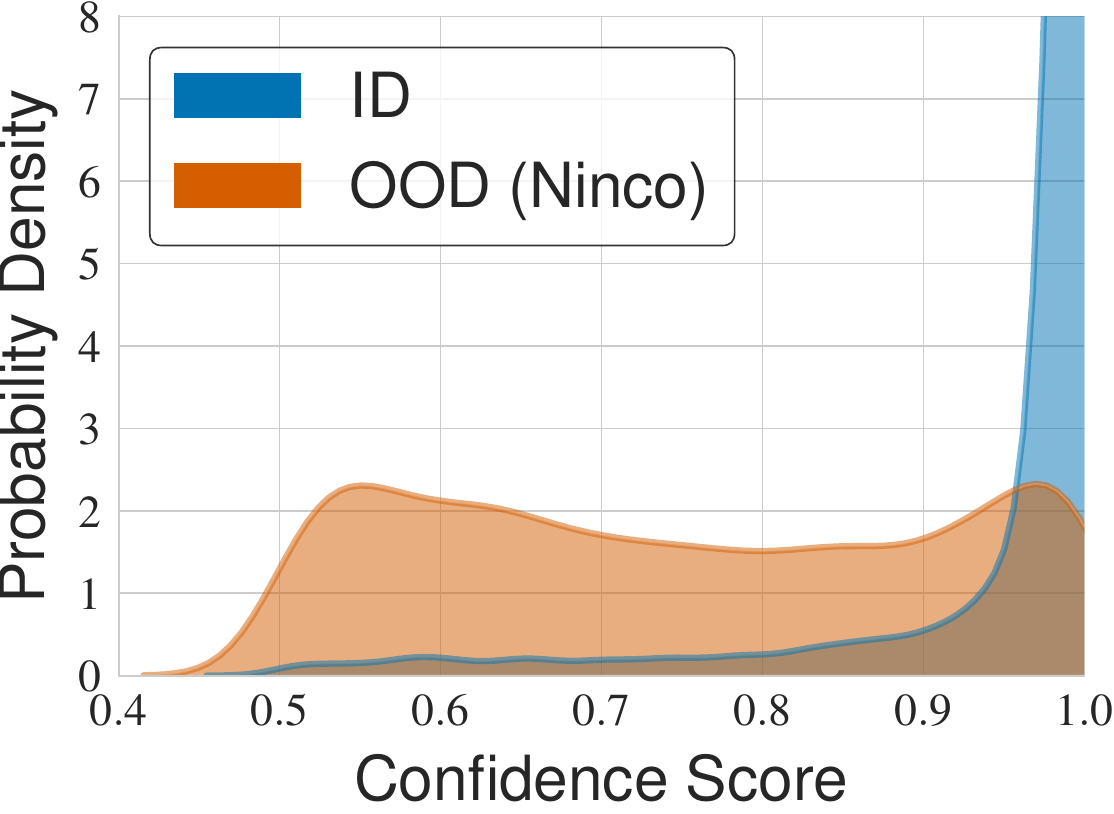} &
            \includegraphics[width=0.19\linewidth]{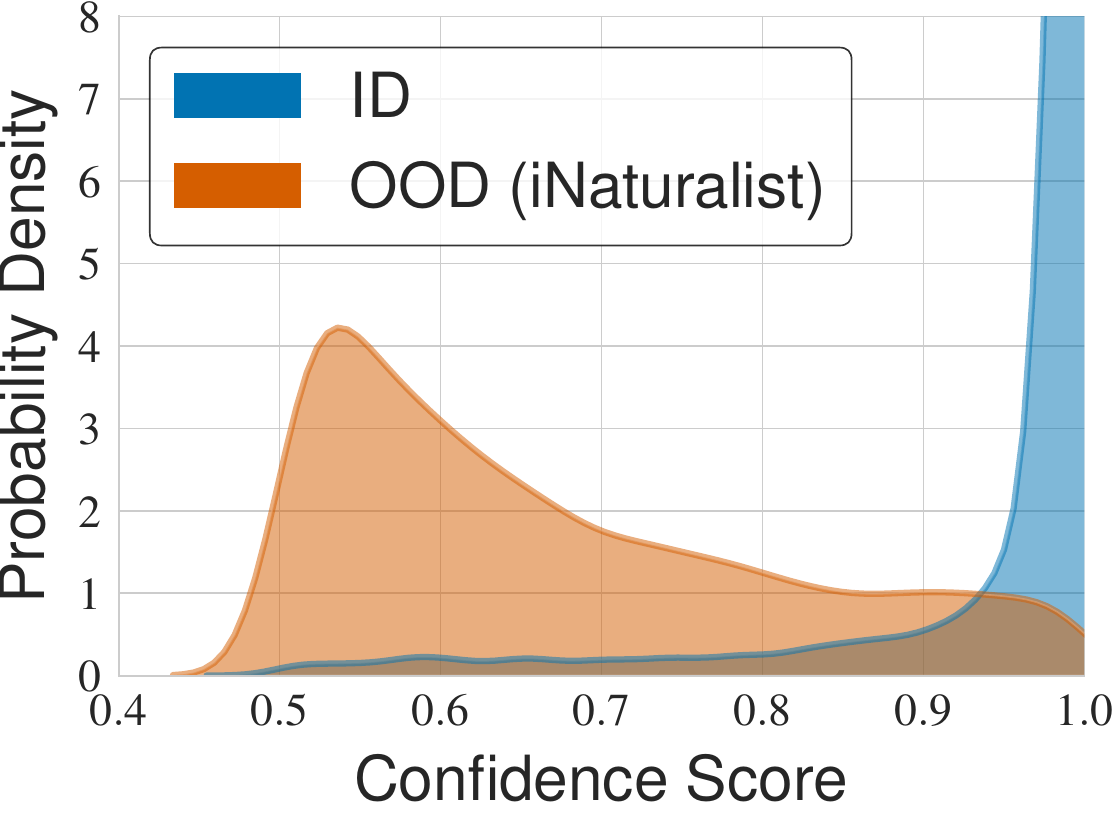} &
            \includegraphics[width=0.19\linewidth]{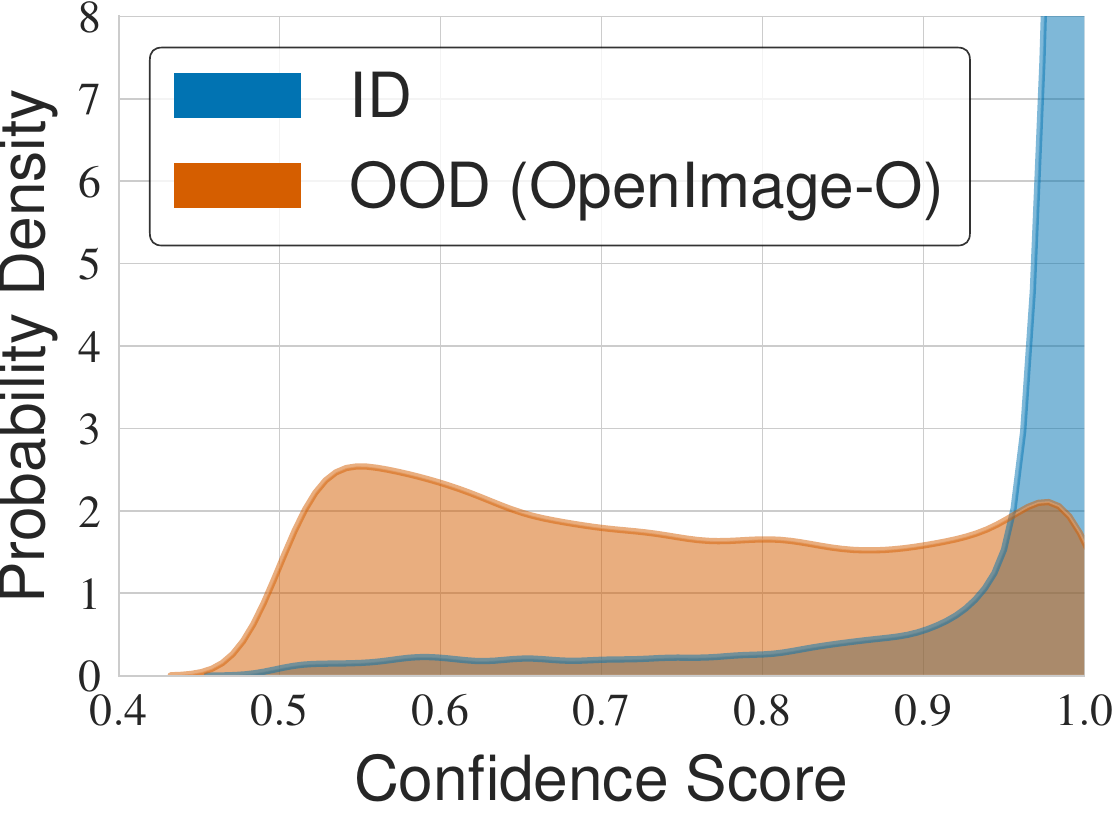} &
            \includegraphics[width=0.19\linewidth]{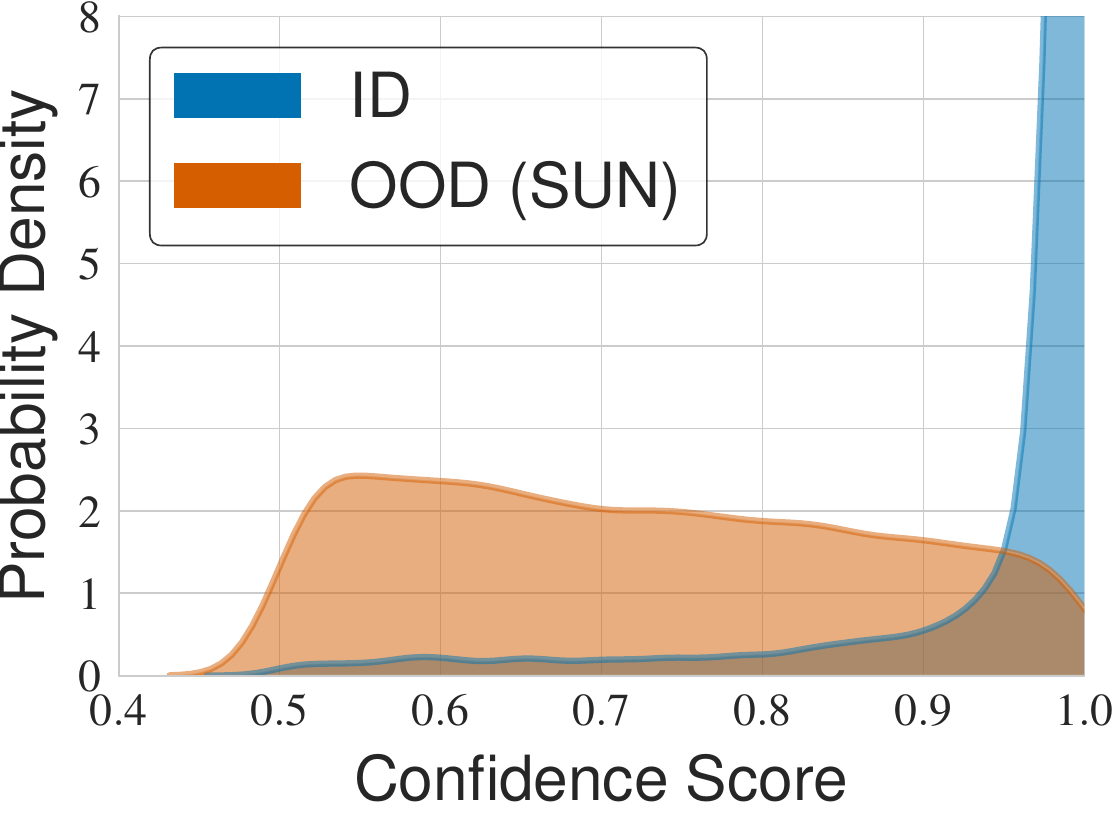} &
            \includegraphics[width=0.19\linewidth]{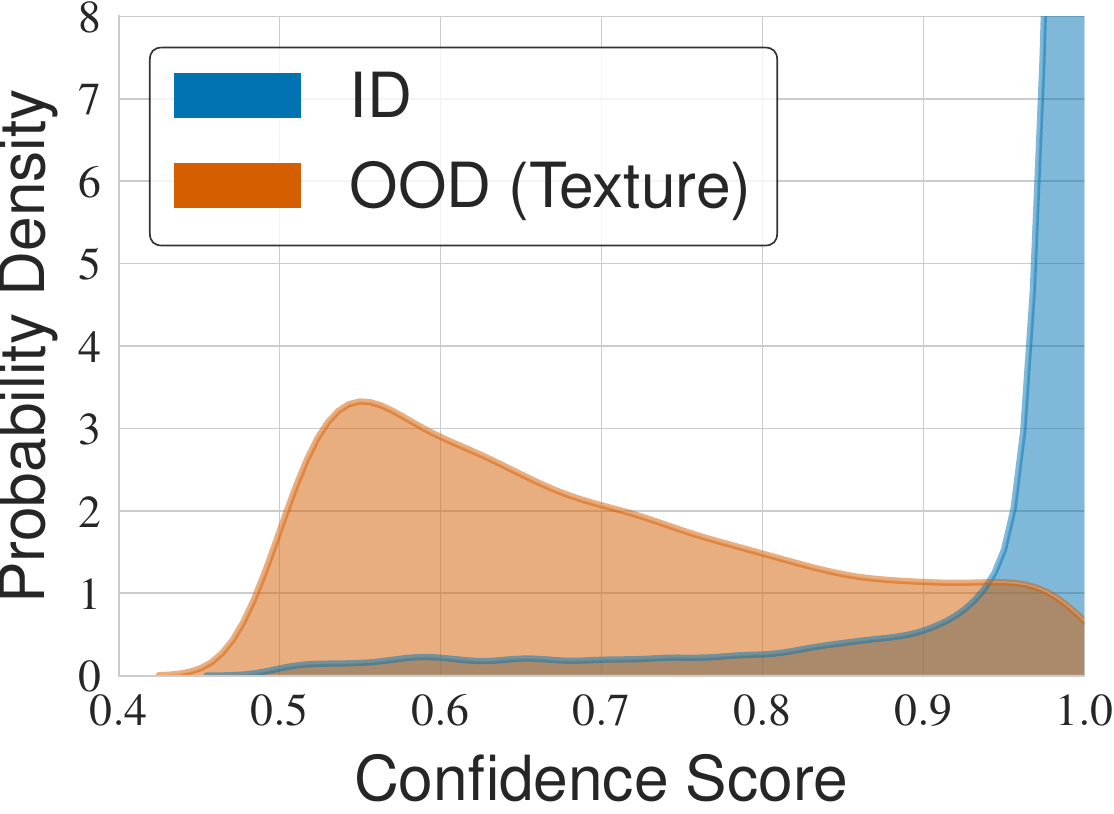}
        \end{tabular}
    }
    \caption{\textbf{Top row:} Visualization of confidence scores of cross-entropy baseline for Waterbirds ID and various conventional OOD datasets. \textbf{Bottom row:} Visualization of confidence scores of \ours~for Waterbirds ID and various conventional OOD datasets.}
\end{figure}

\subsection{CelebA datasets}
\begin{figure}[htbp]
    \centering
    \adjustbox{width=0.95\linewidth}{
        \begin{tabular}{c c c c c}
            \includegraphics[width=0.19\linewidth]{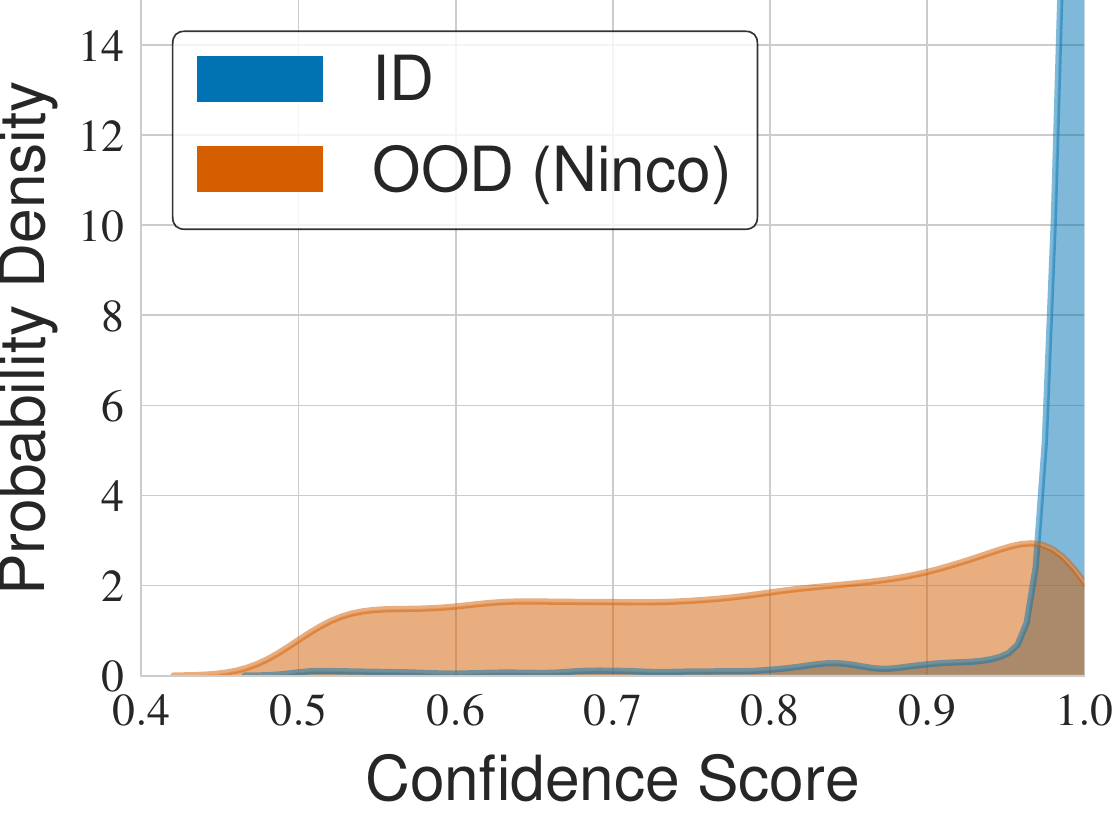} &
            \includegraphics[width=0.19\linewidth]{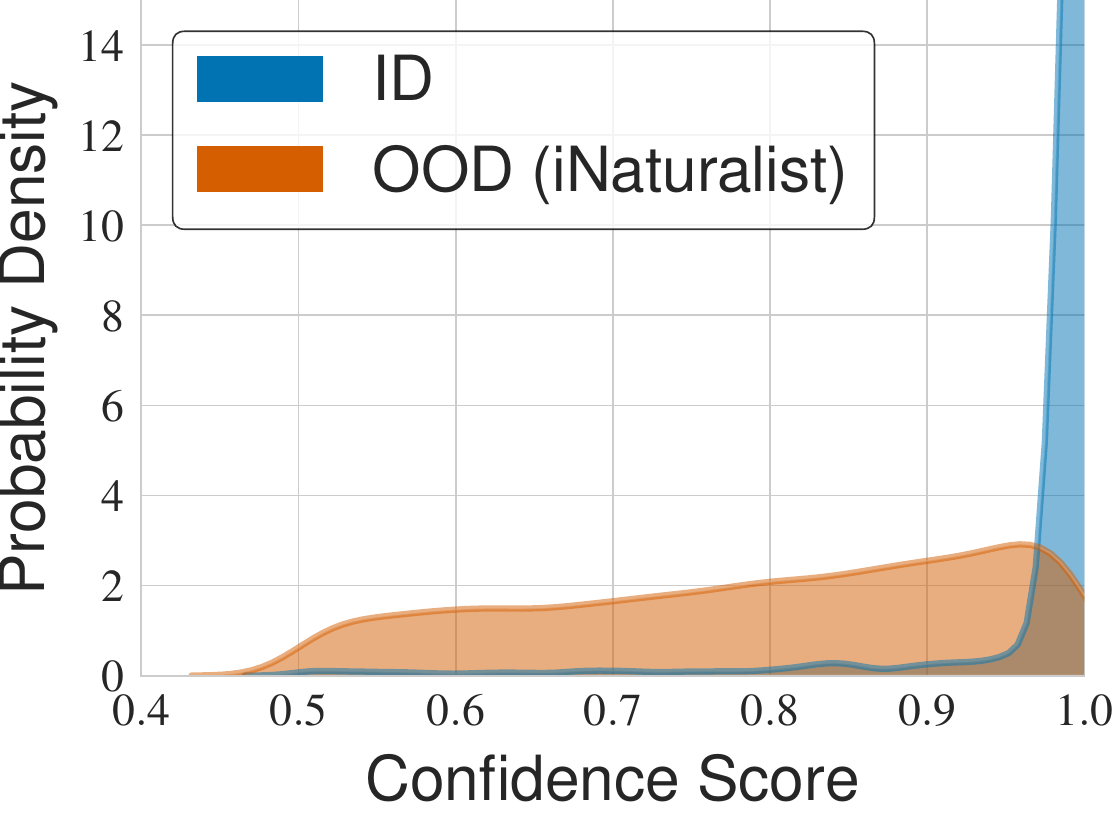} &
            \includegraphics[width=0.19\linewidth]{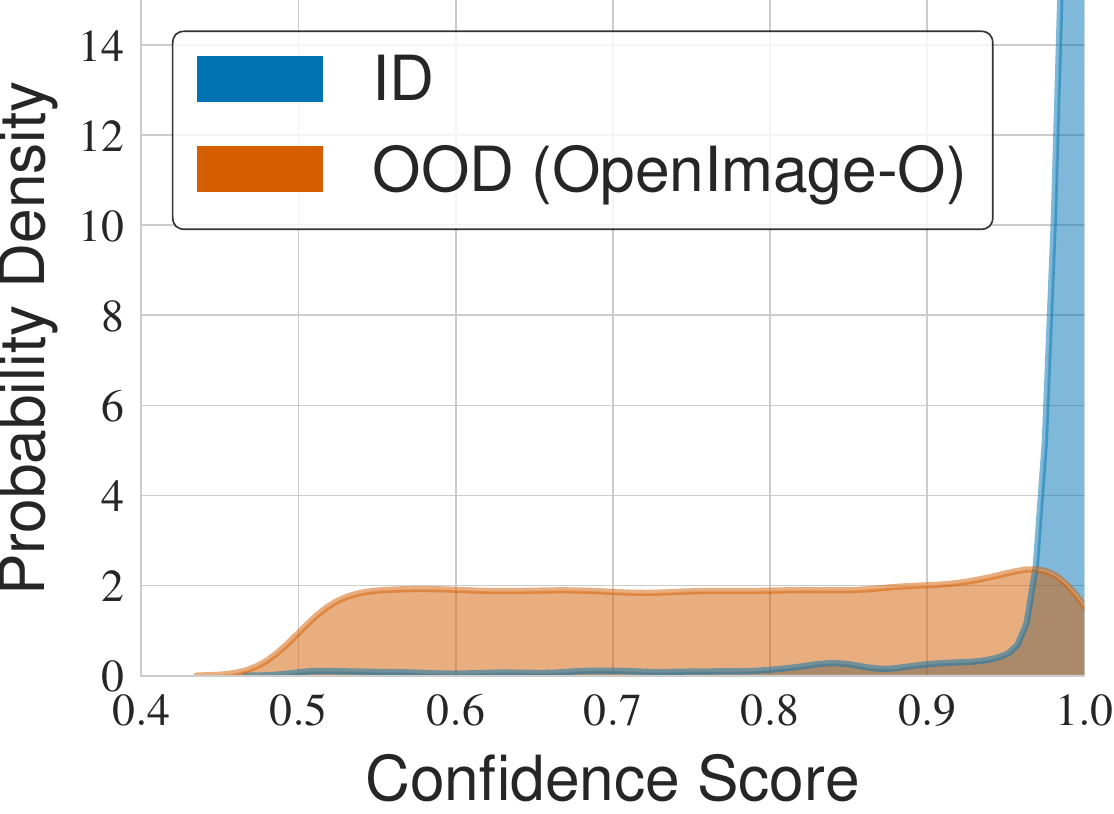} &
            \includegraphics[width=0.19\linewidth]{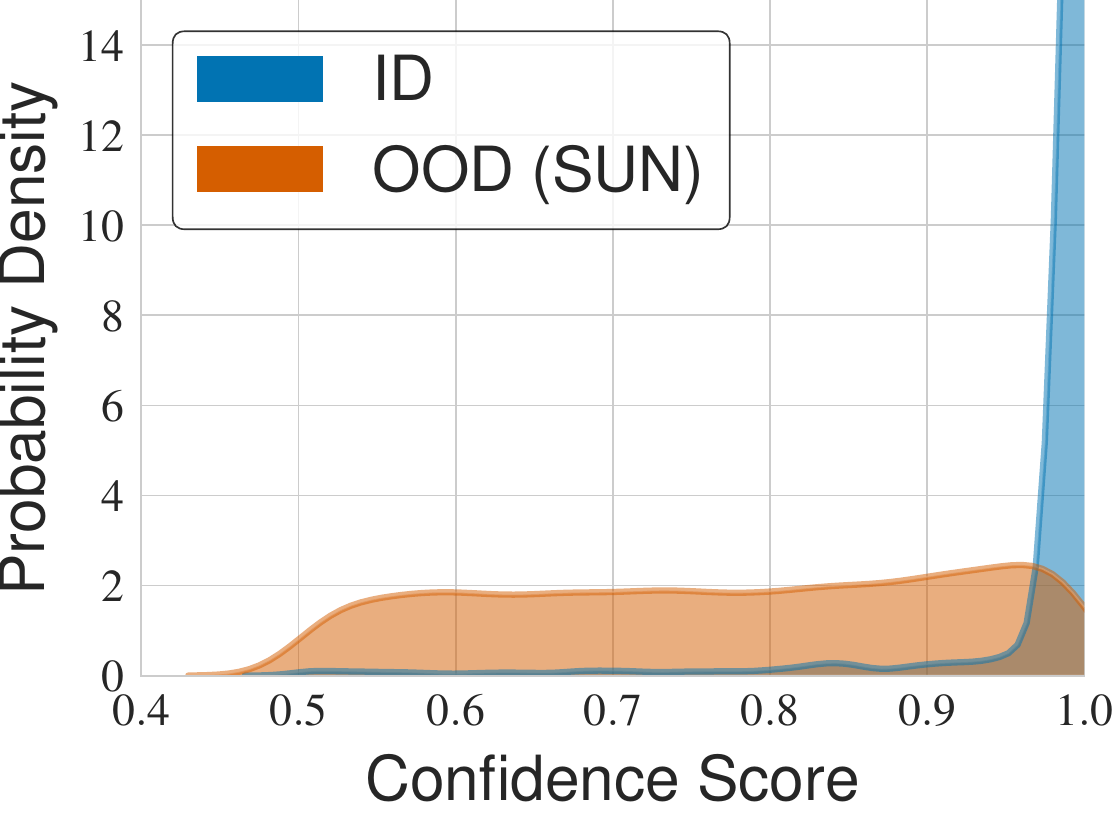} &
            \includegraphics[width=0.19\linewidth]{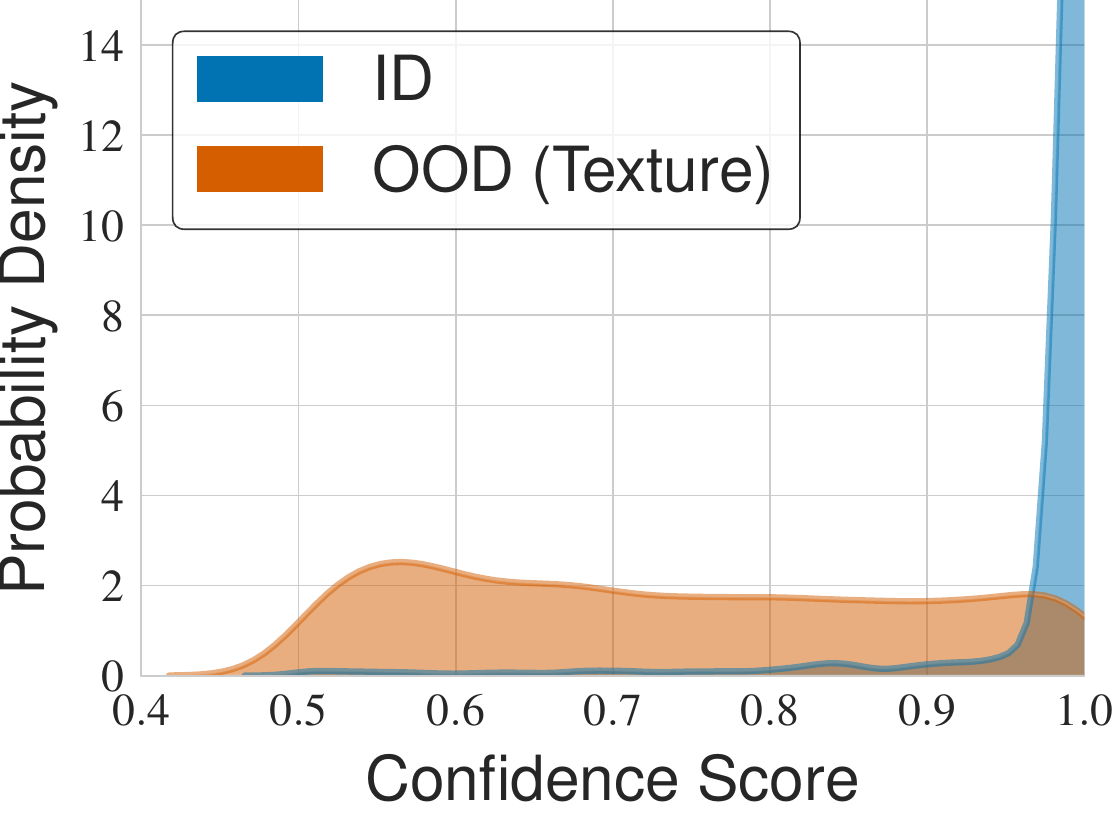} \\

            \includegraphics[width=0.19\linewidth]{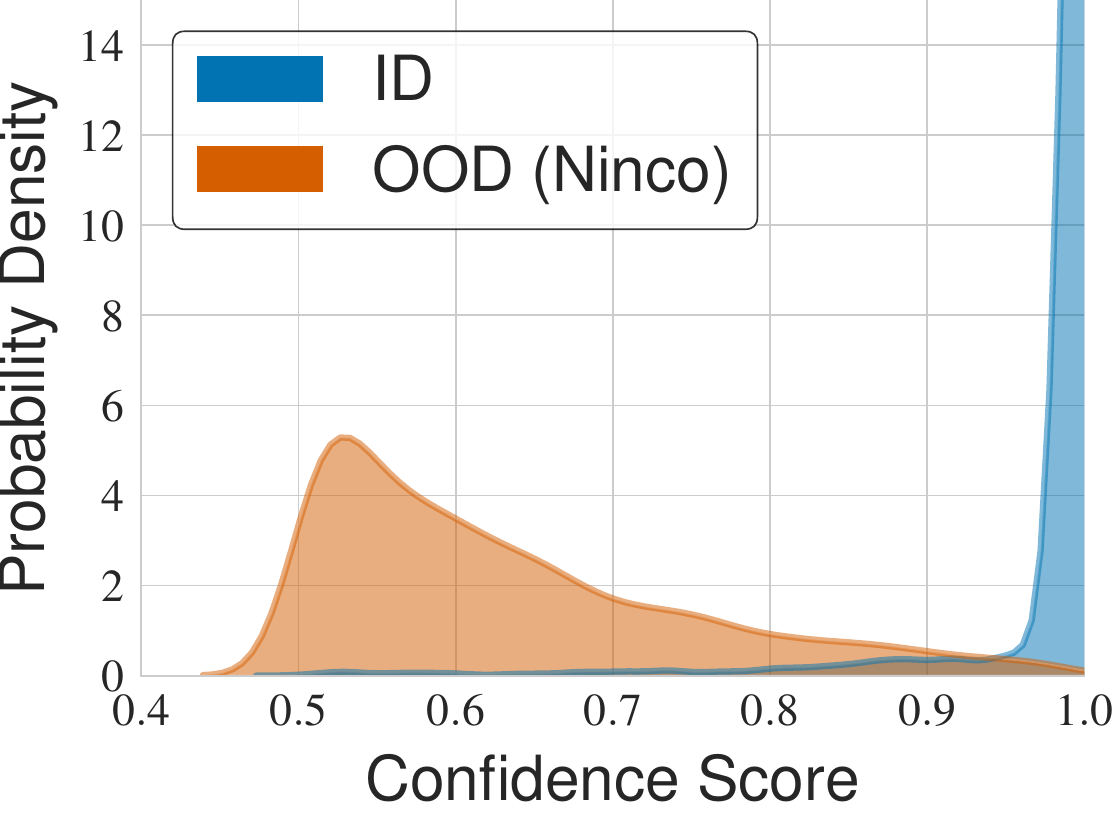} &
            \includegraphics[width=0.19\linewidth]{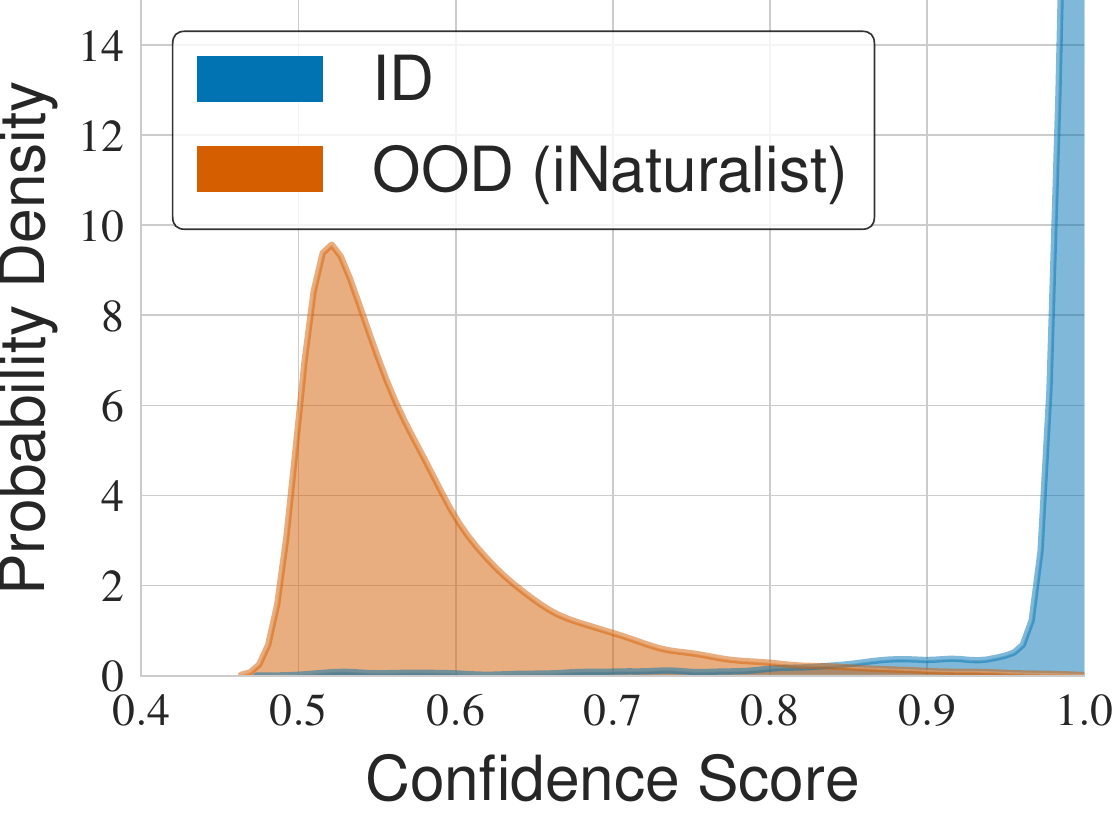} &
            \includegraphics[width=0.19\linewidth]{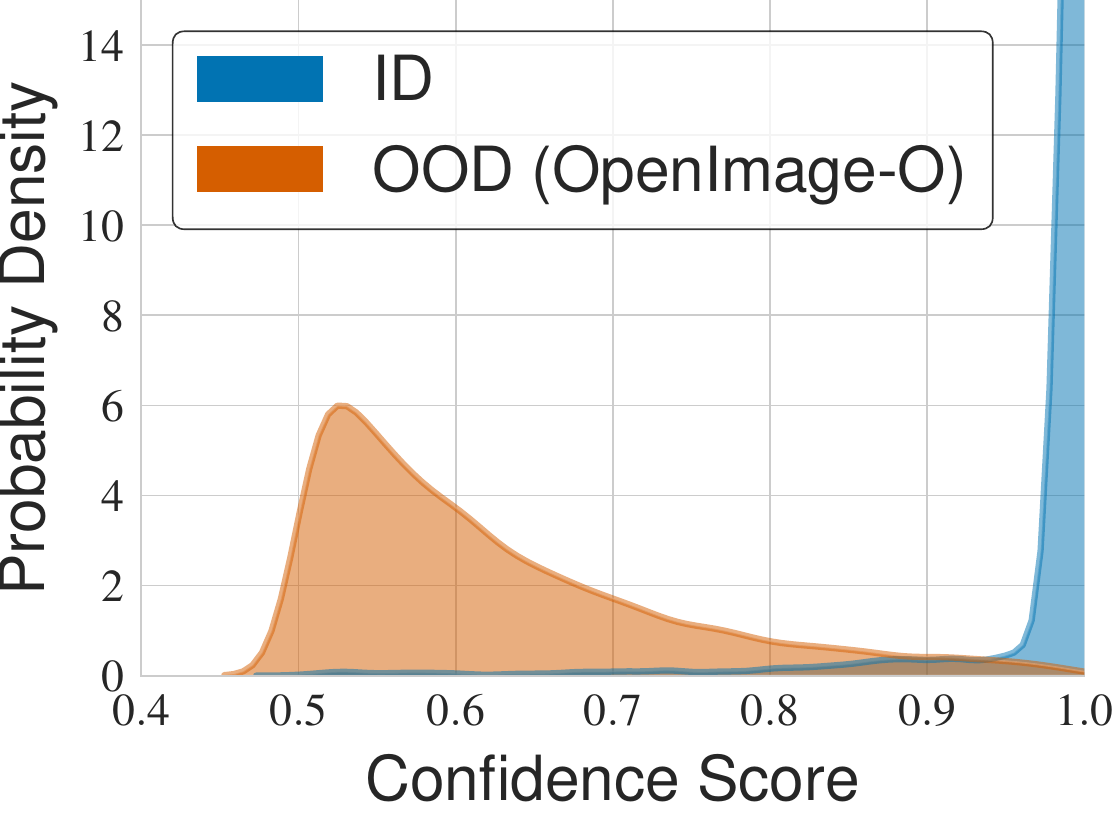} &
            \includegraphics[width=0.19\linewidth]{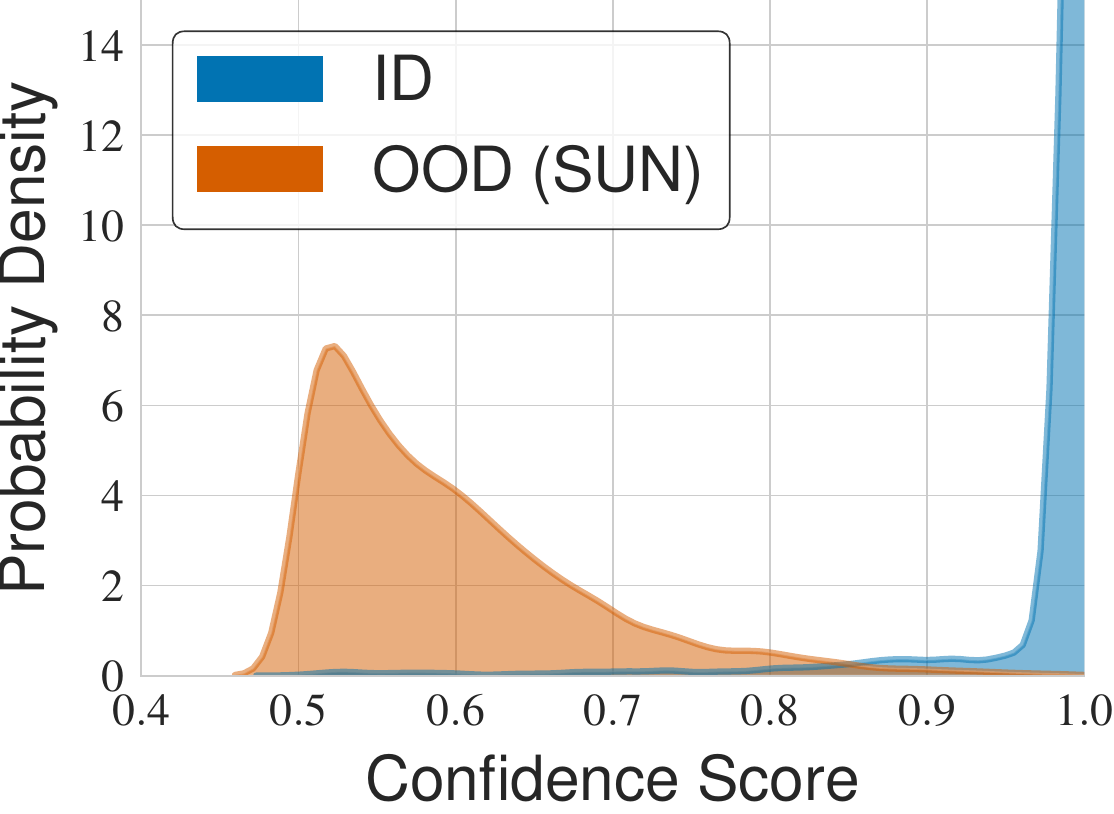} &
            \includegraphics[width=0.19\linewidth]{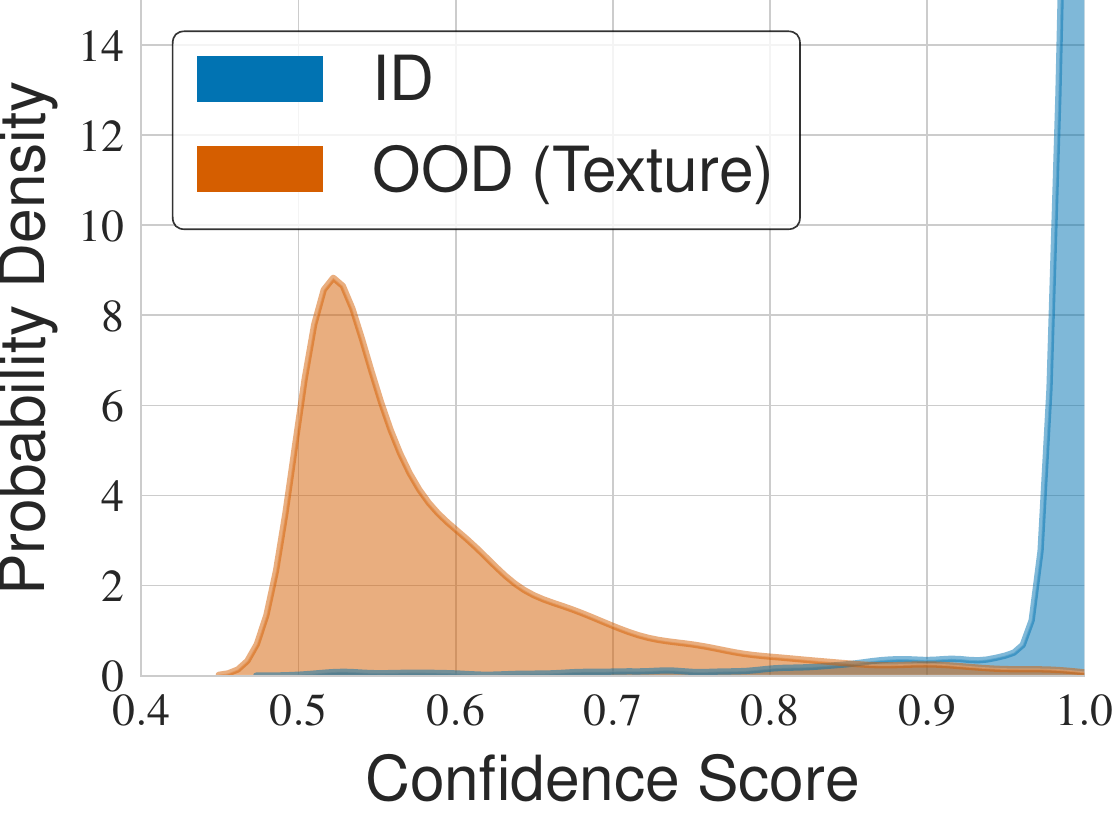}
        \end{tabular}
    }
    \caption{\textbf{Top row:} Visualization of confidence scores of cross-entropy baseline for CelebA ID and various conventional OOD datasets. \textbf{Bottom row:} Visualization of confidence scores of \ours~for CelebA ID and various conventional OOD datasets.}
\end{figure}

\subsection{CIFAR-10 datasets}
\begin{figure}[htbp]
    \centering
    \adjustbox{width=0.95\linewidth}{
        \begin{tabular}{c c c c c}
            \includegraphics[width=0.19\linewidth]{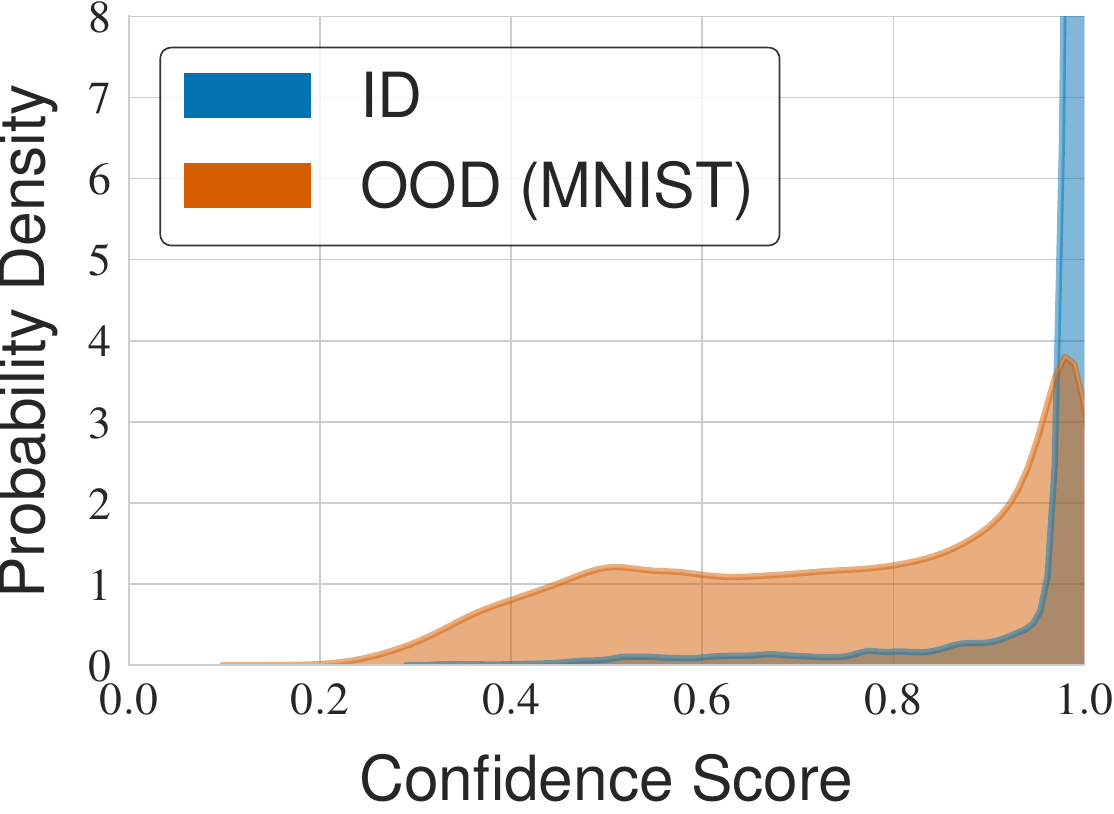} &
            \includegraphics[width=0.19\linewidth]{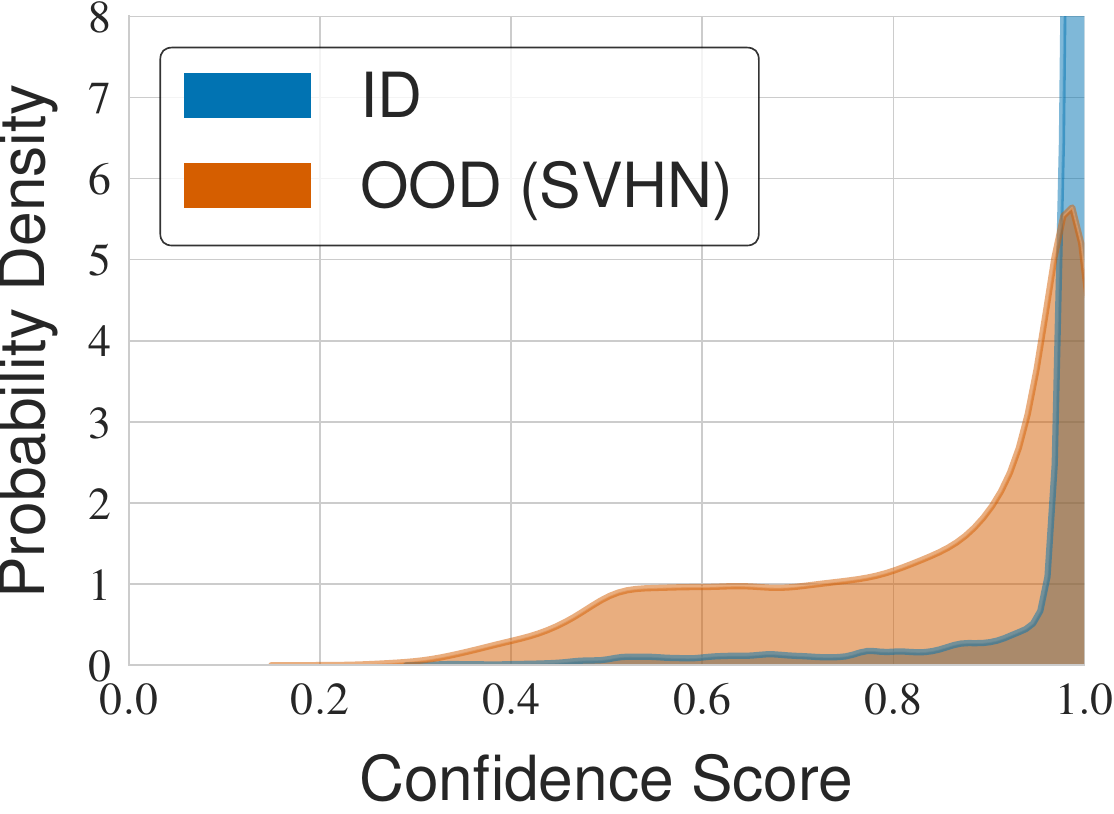} &
            \includegraphics[width=0.19\linewidth]{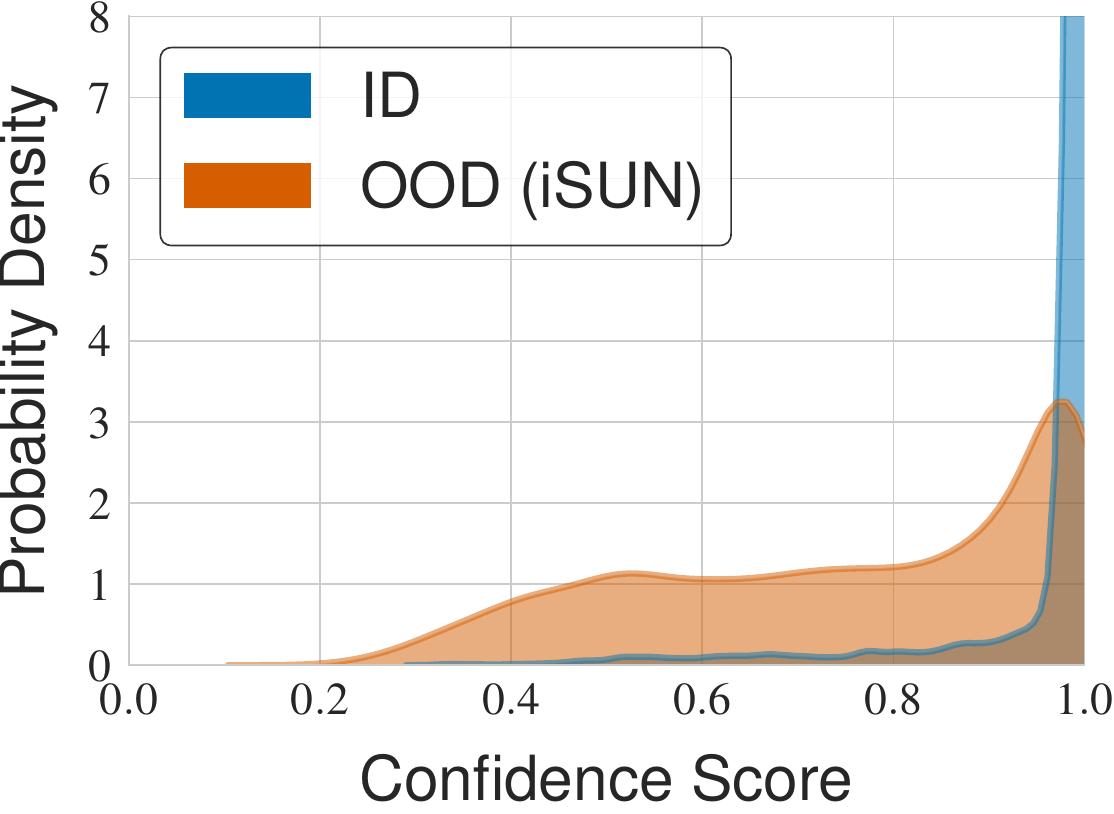} &
            \includegraphics[width=0.19\linewidth]{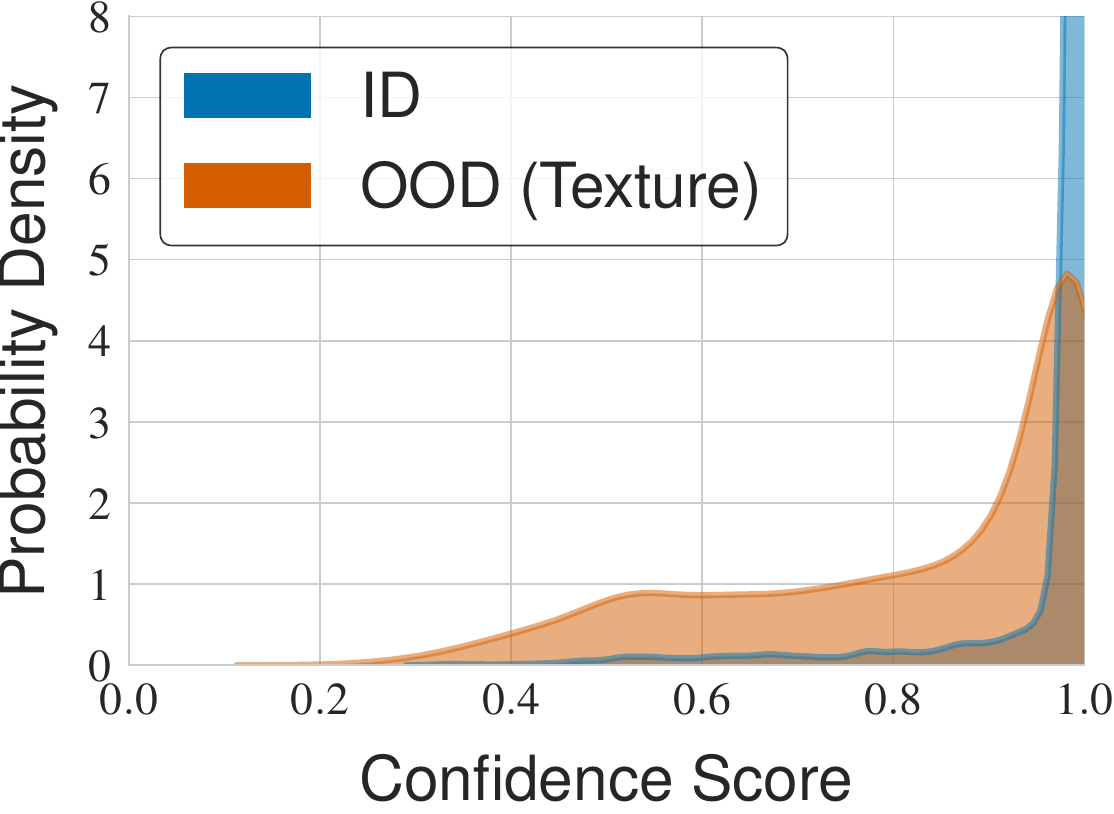} &
            \includegraphics[width=0.19\linewidth]{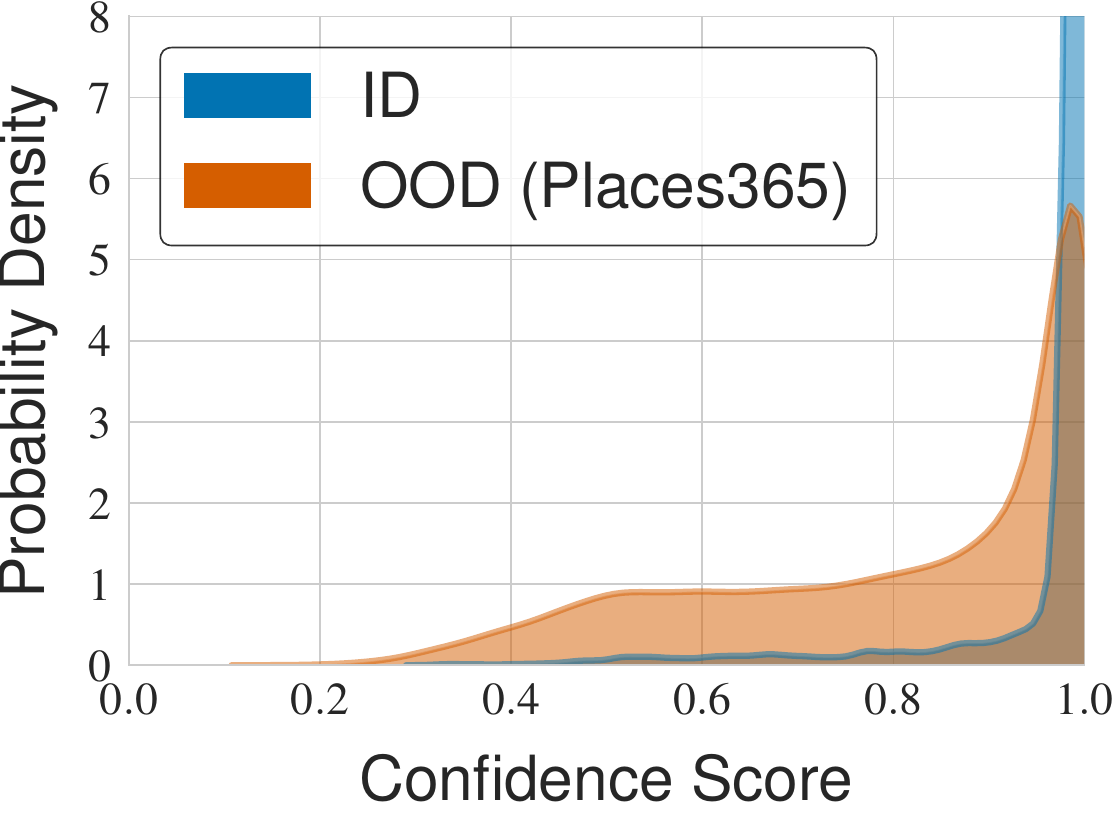} \\

            \includegraphics[width=0.19\linewidth]{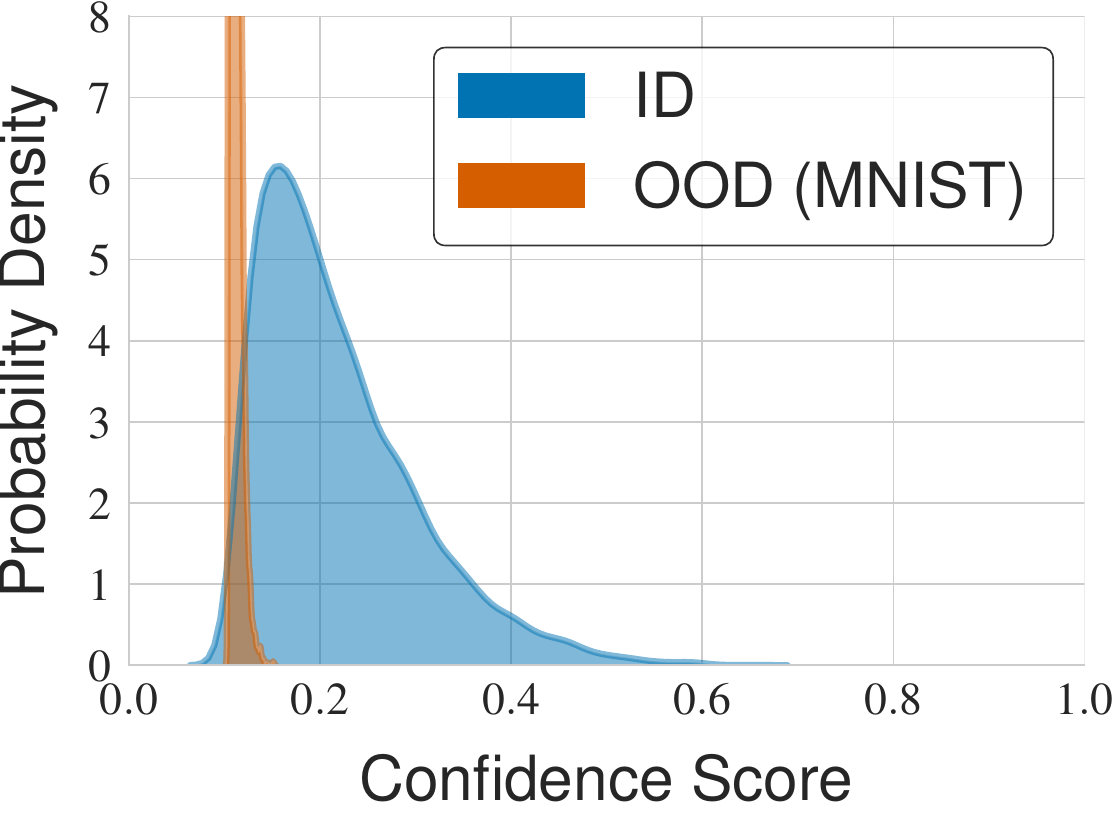} &
            \includegraphics[width=0.19\linewidth]{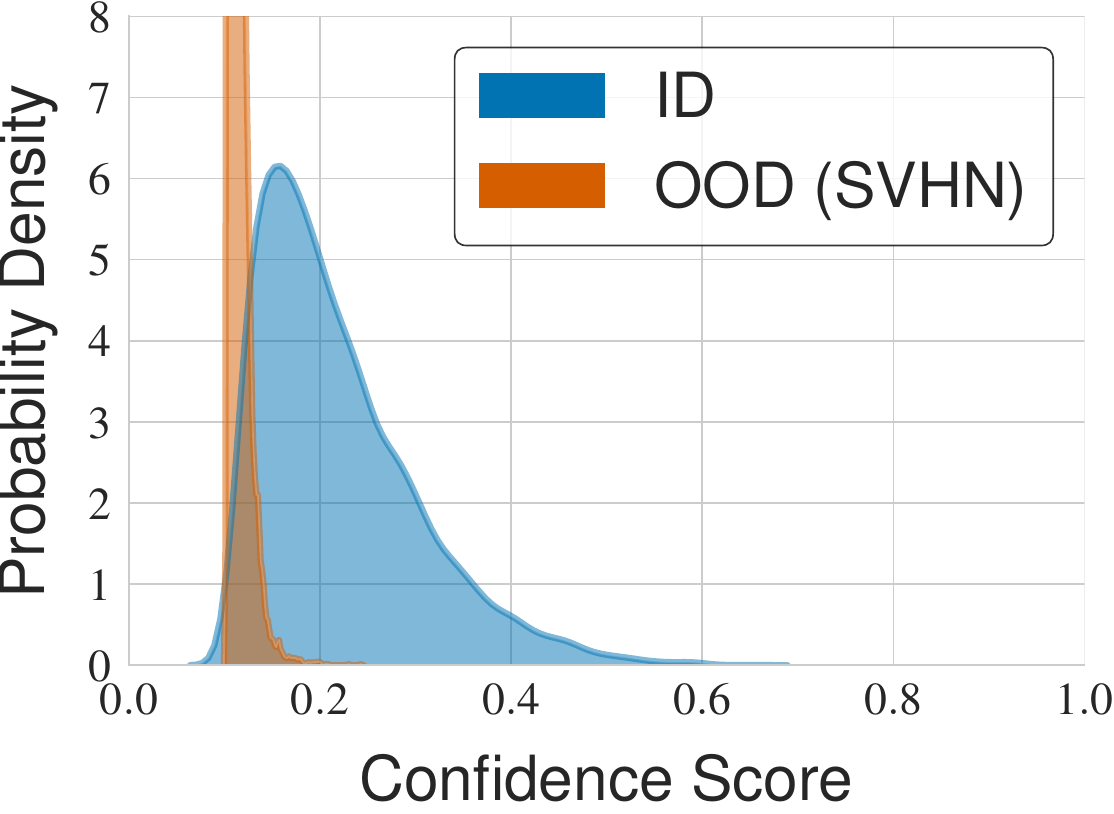} &
            \includegraphics[width=0.19\linewidth]{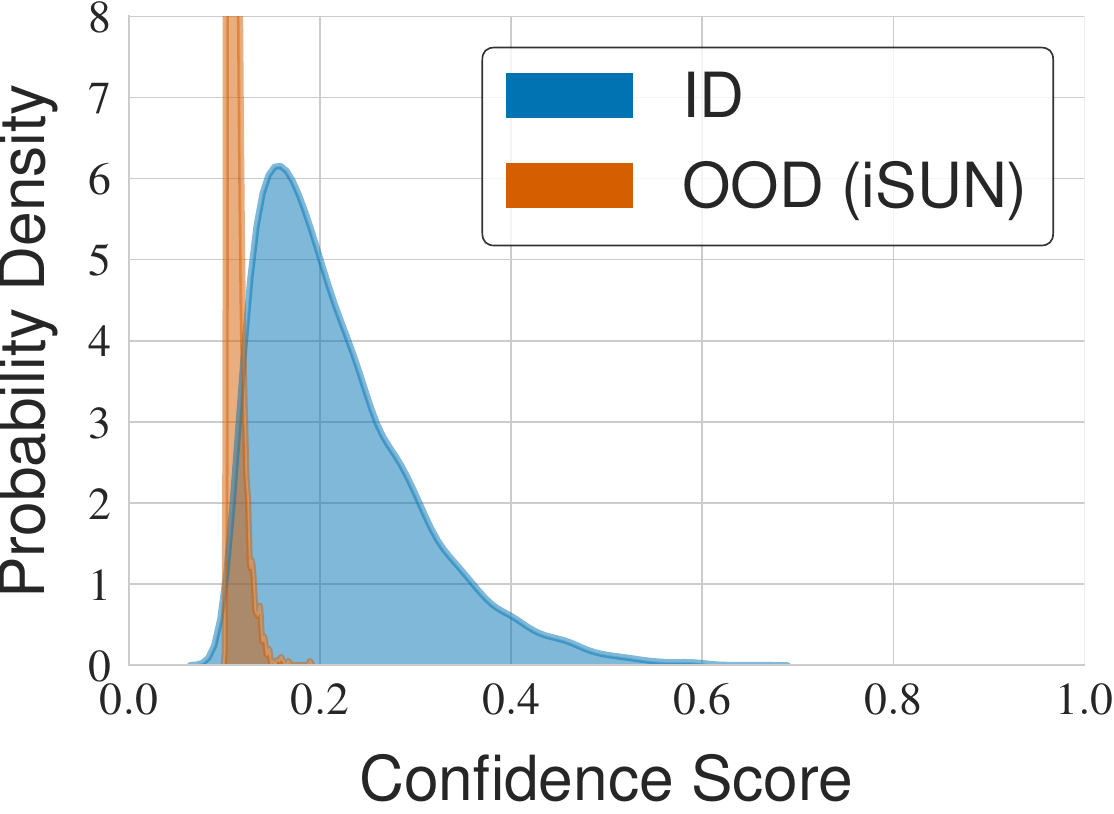} &
            \includegraphics[width=0.19\linewidth]{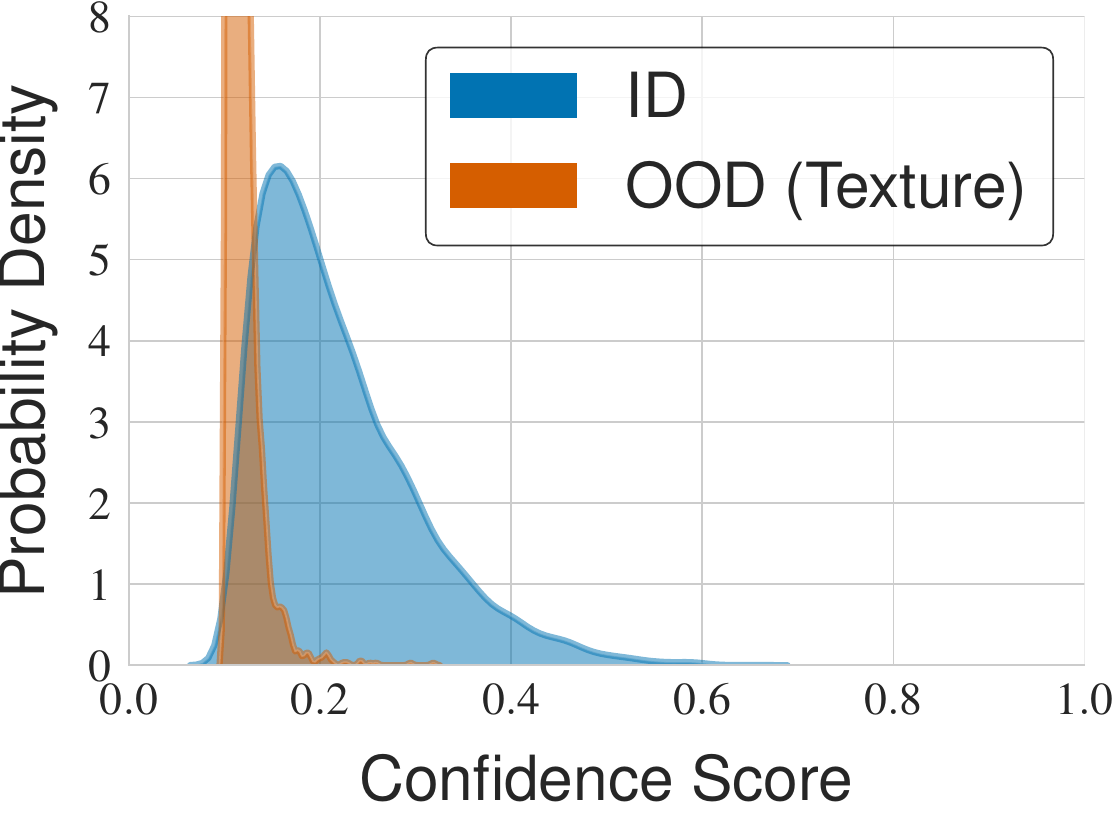} &
            \includegraphics[width=0.19\linewidth]{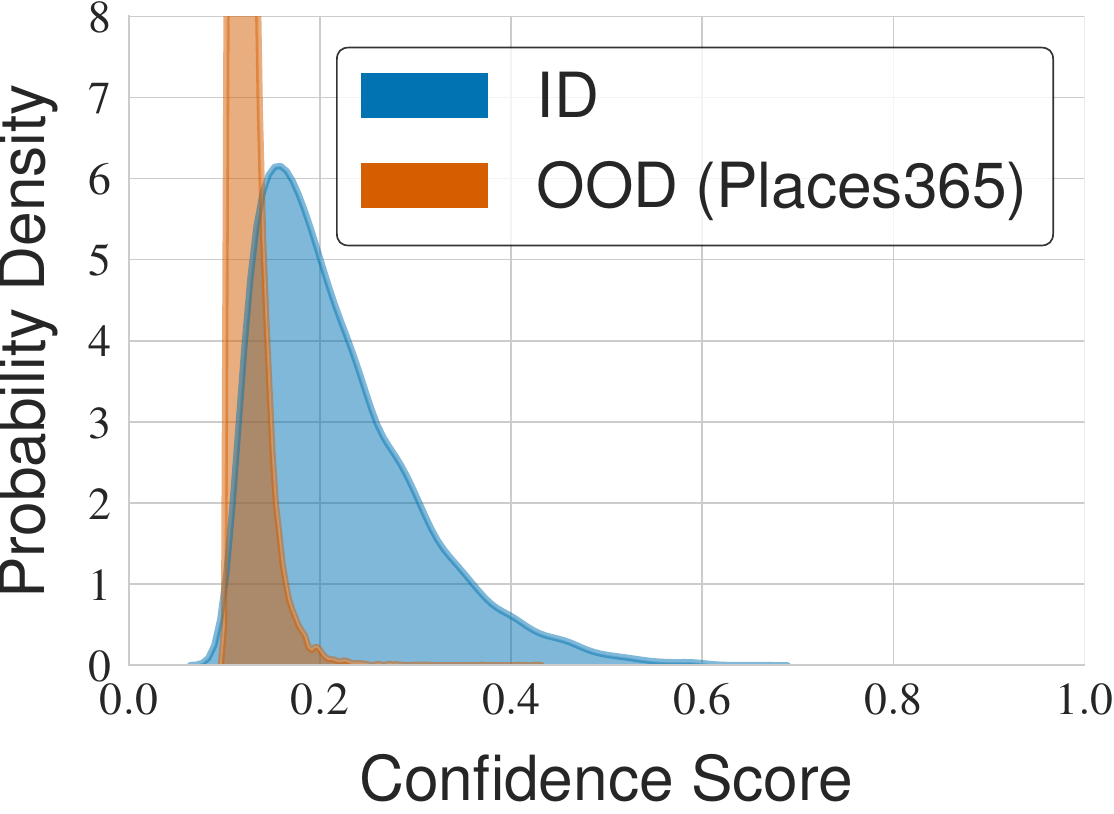}
        \end{tabular}
    }
    \caption{\textbf{Top row:} Visualization of confidence scores of cross-entropy baseline for CIFAR-10 ID and various conventional OOD datasets. \textbf{Bottom row:} Visualization of confidence scores of \ours~for CIFAR-10 ID and various conventional OOD datasets.}
\end{figure}

\newpage
\subsection{CIFAR-100 datasets}
\begin{figure}[h!]
    \centering
    \adjustbox{width=0.95\linewidth}{
        \begin{tabular}{c c c c c}
            \includegraphics[width=0.19\linewidth]{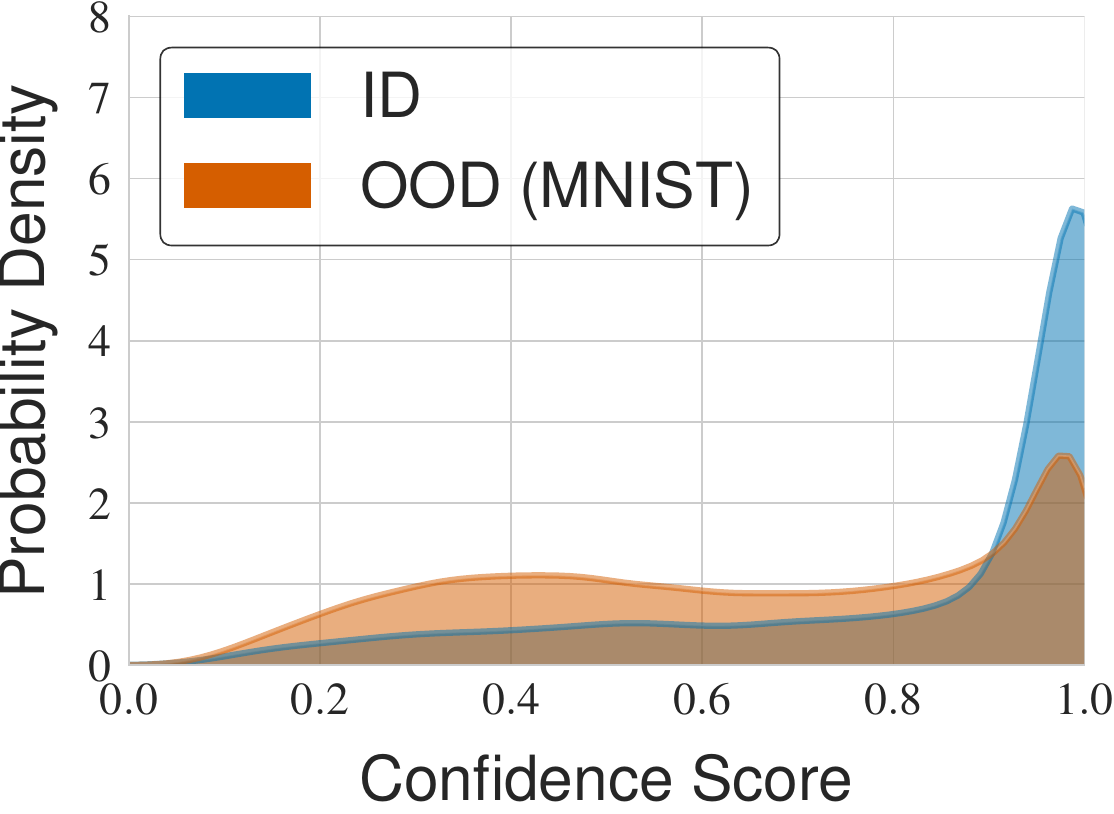} &
            \includegraphics[width=0.19\linewidth]{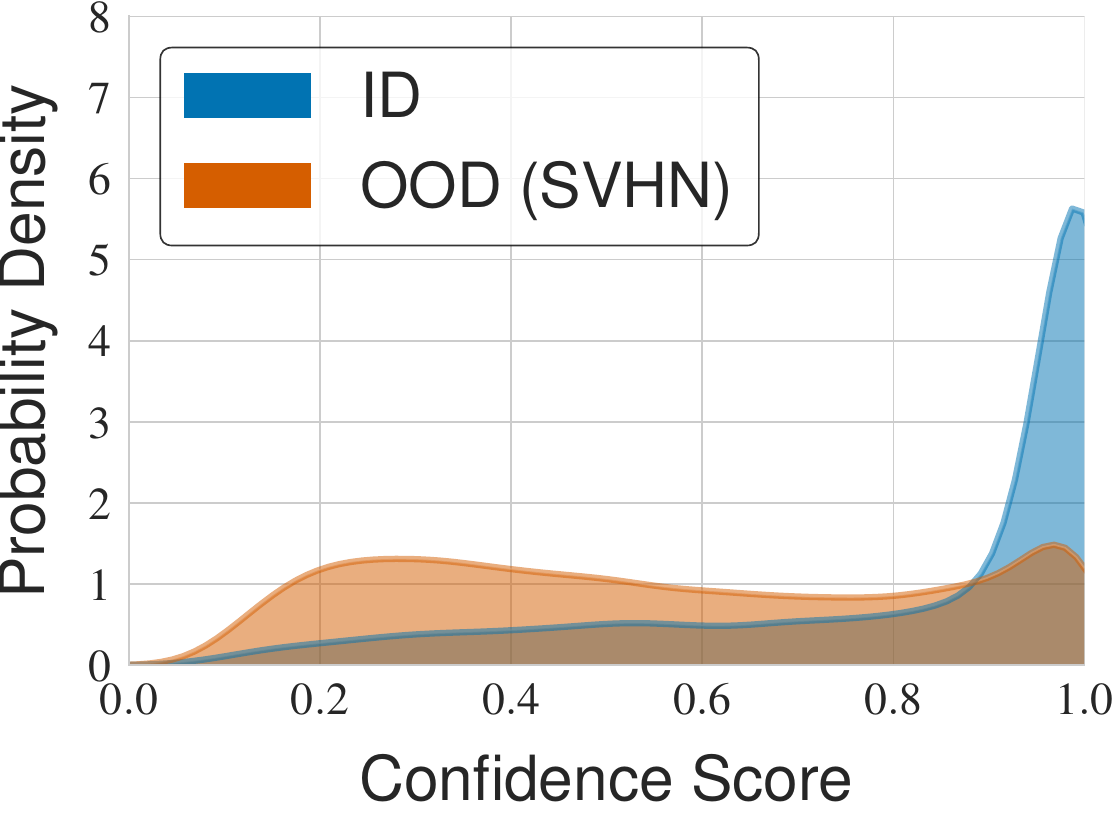} &
            \includegraphics[width=0.19\linewidth]{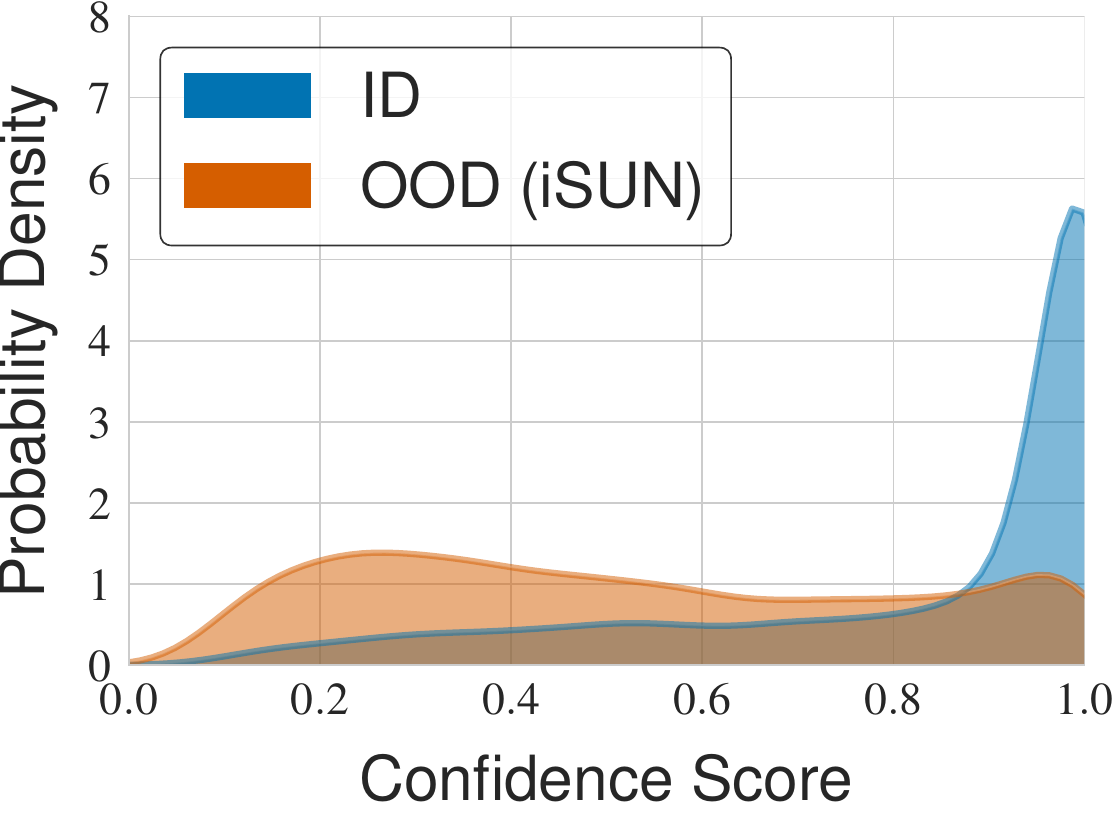} &
            \includegraphics[width=0.19\linewidth]{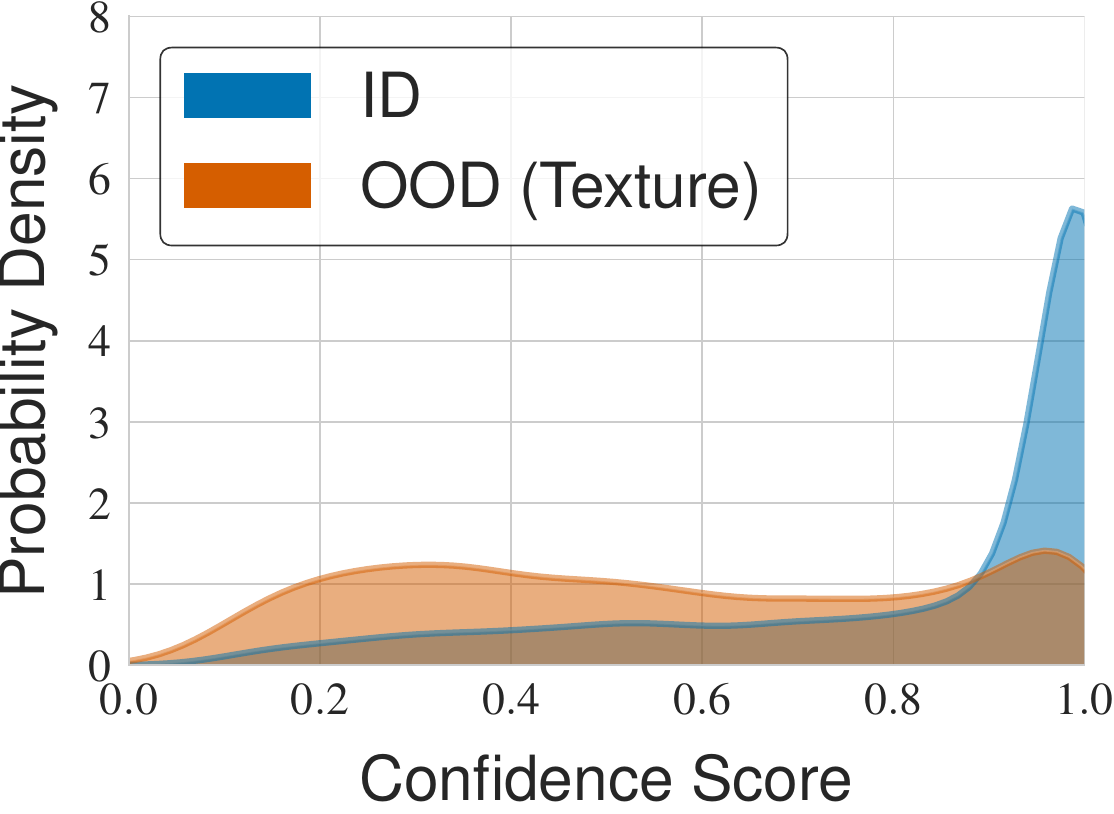} &
            \includegraphics[width=0.19\linewidth]{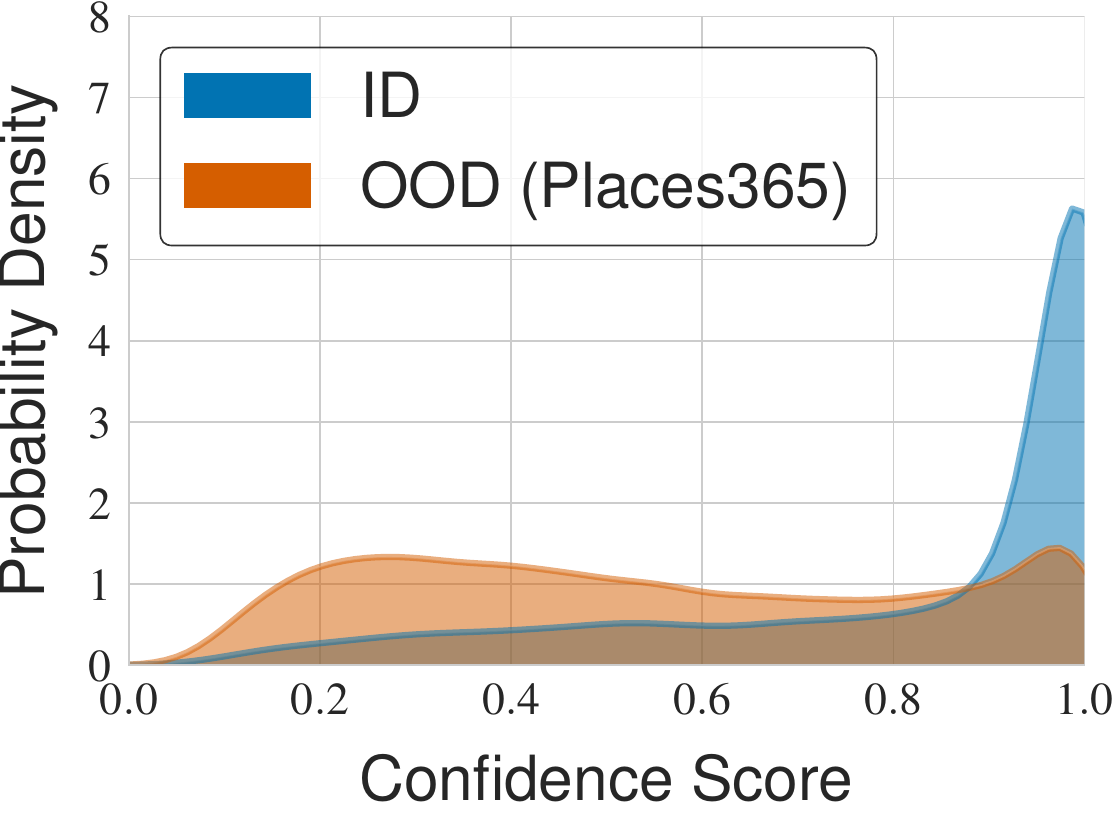} \\

            \includegraphics[width=0.19\linewidth]{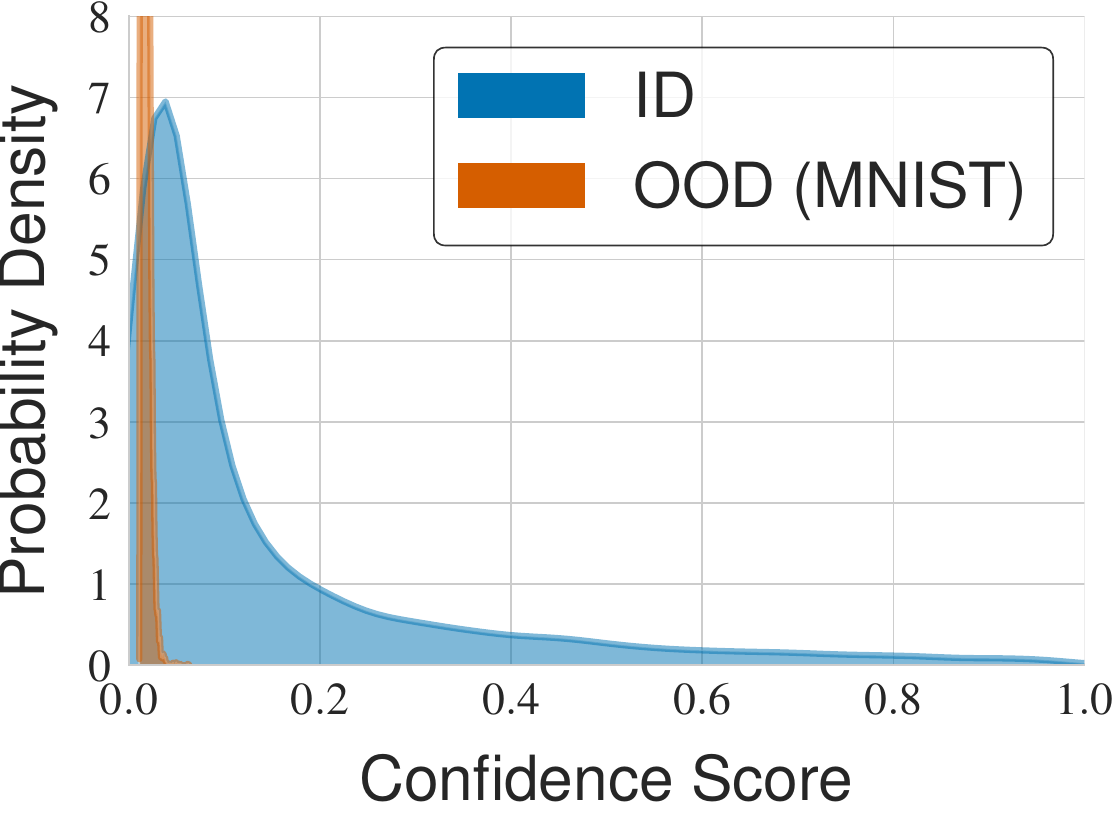} &
            \includegraphics[width=0.19\linewidth]{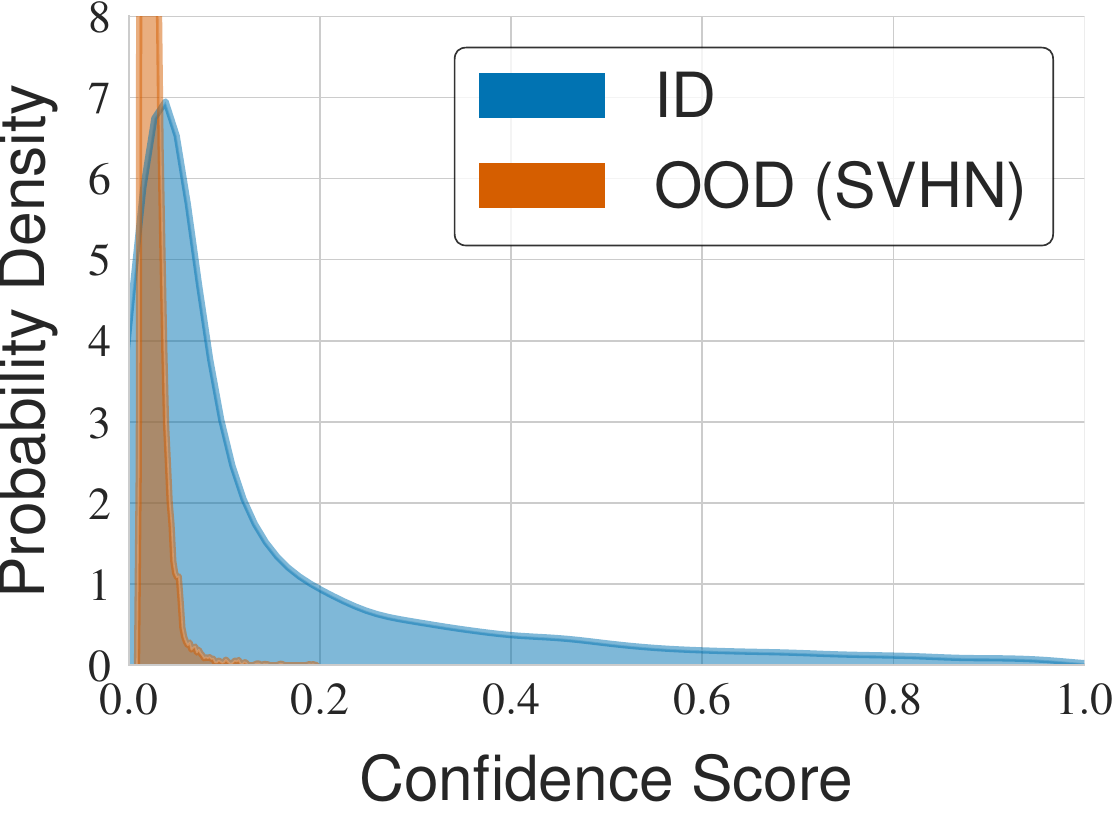} &
            \includegraphics[width=0.19\linewidth]{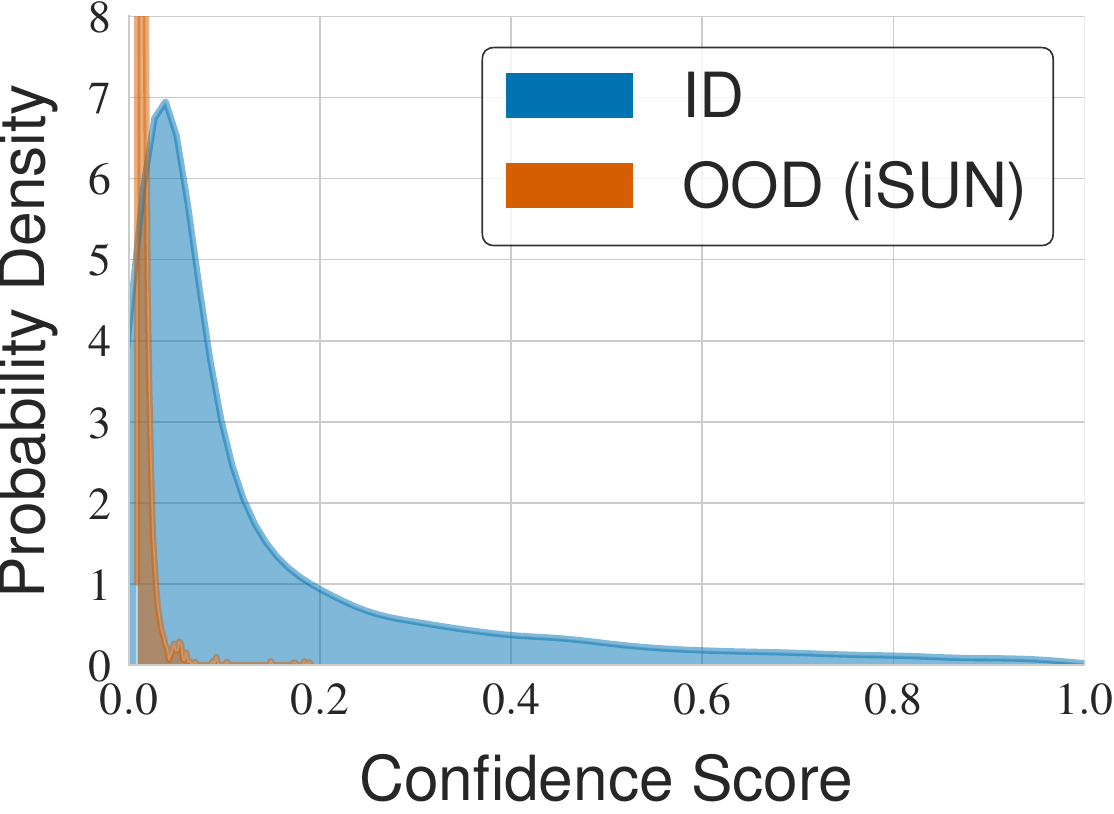} &
            \includegraphics[width=0.19\linewidth]{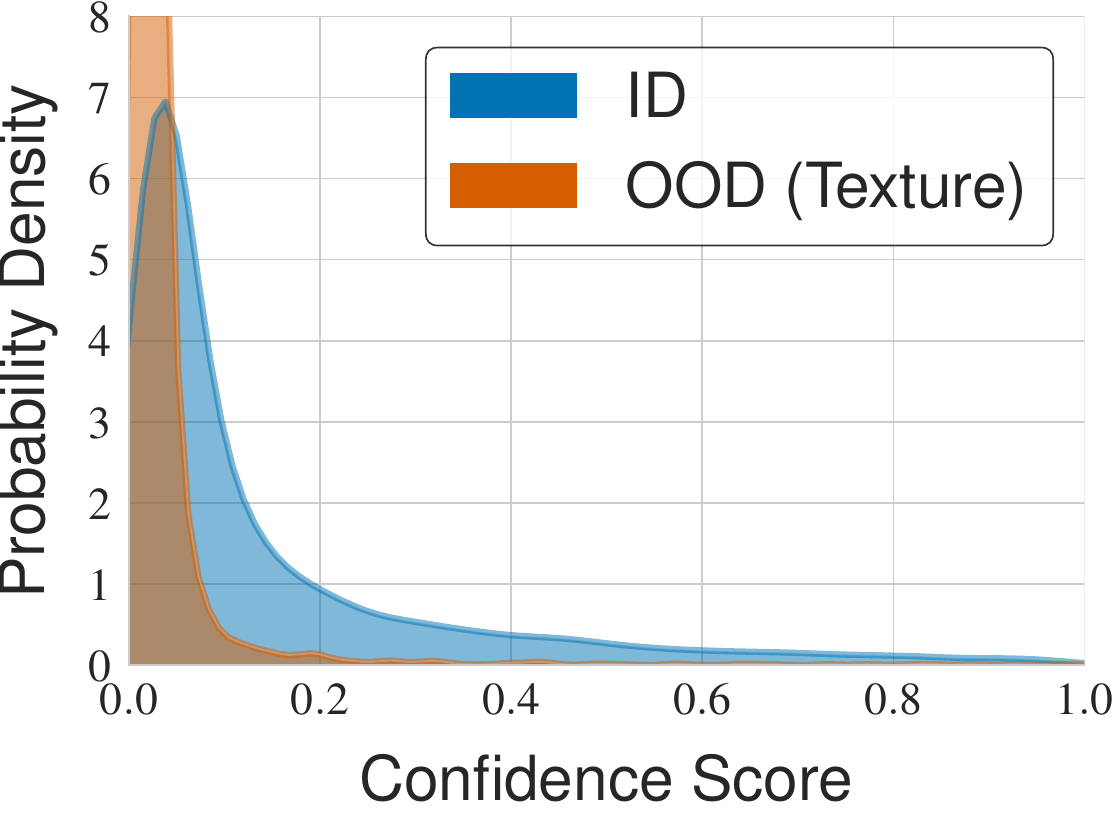} &
            \includegraphics[width=0.19\linewidth]{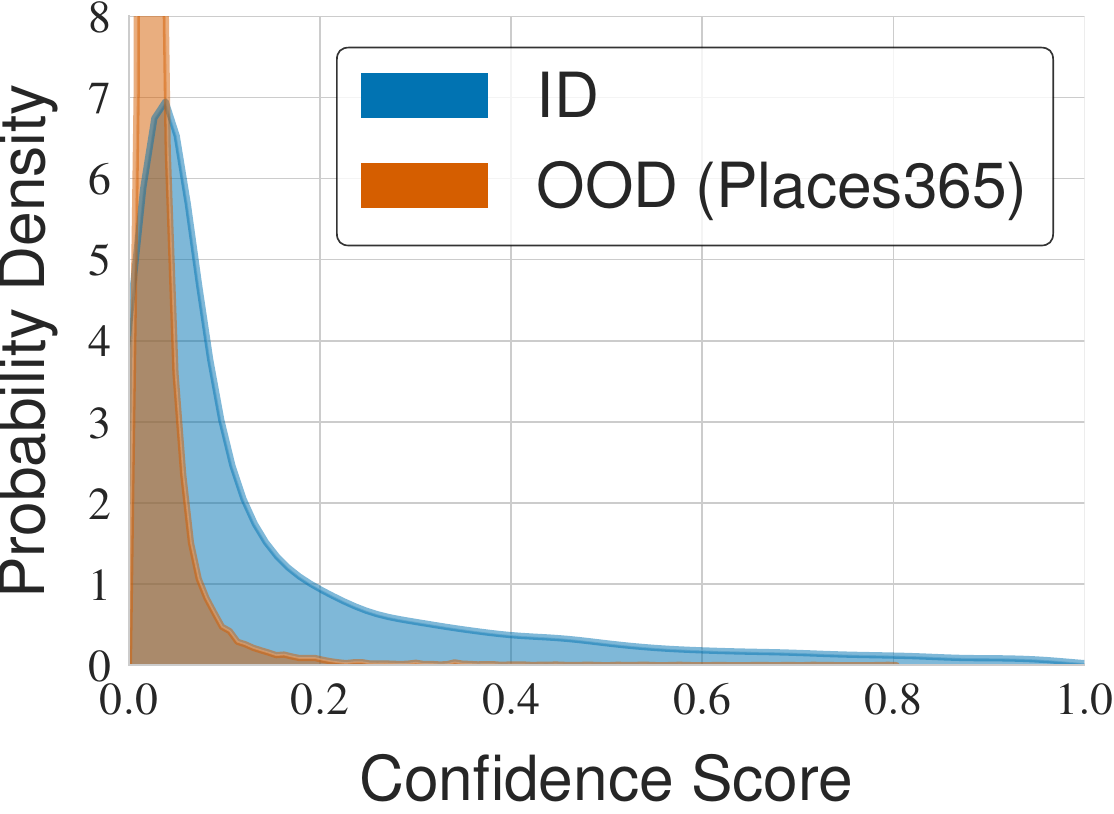}
        \end{tabular}
    }
    \caption{\textbf{Top row:} Visualization of confidence scores of cross-entropy baseline for CIFAR-100 ID and various conventional OOD datasets. \textbf{Bottom row:} Visualization of confidence scores of \ours~for CIFAR-100 ID and various conventional OOD datasets.}
\end{figure}

\subsection{Aircraft datasets}
\begin{figure}[htbp]
    \centering
    \adjustbox{width=0.95\linewidth}{
        \begin{tabular}{c c c c c}
            \includegraphics[width=0.19\linewidth]{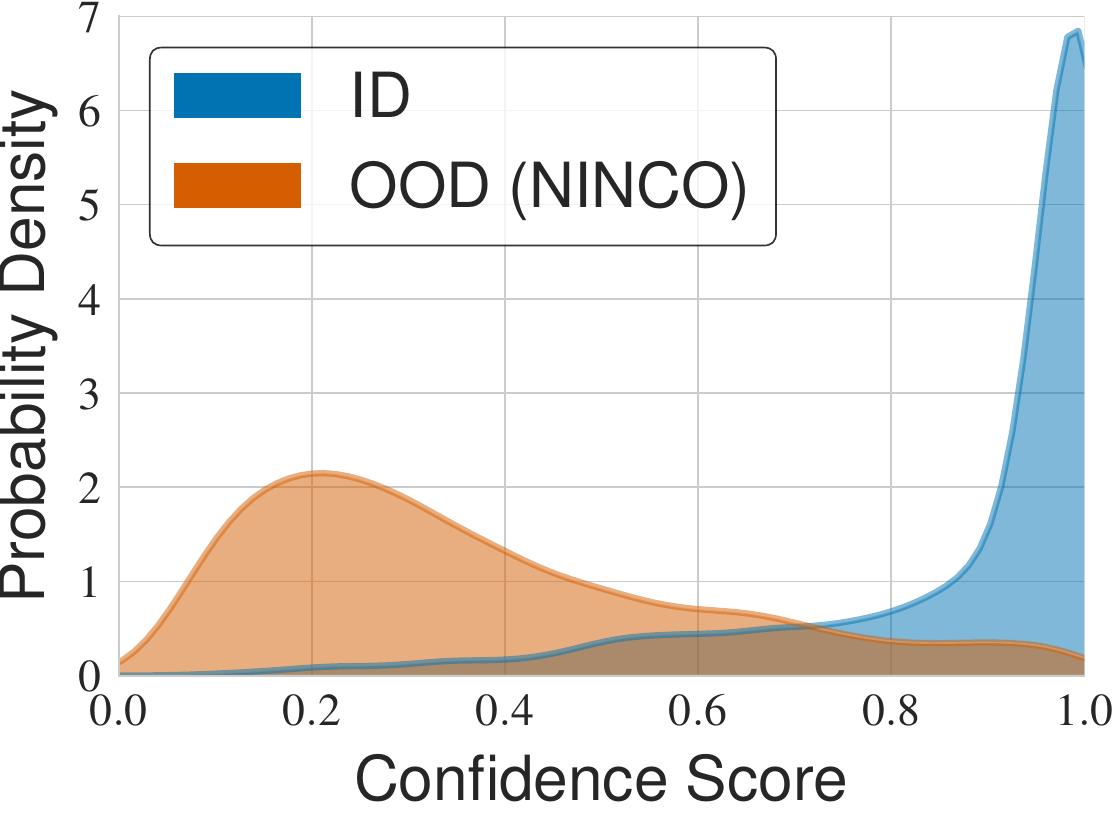} &
            \includegraphics[width=0.19\linewidth]{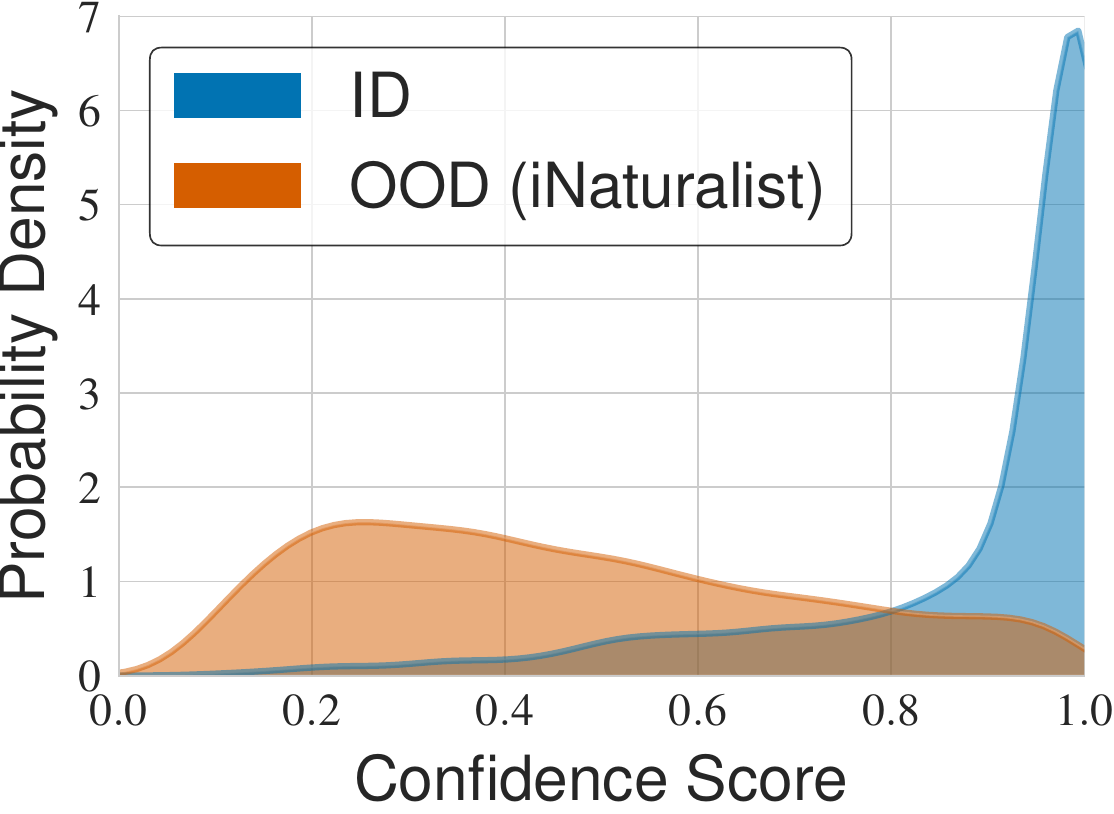} &
            \includegraphics[width=0.19\linewidth]{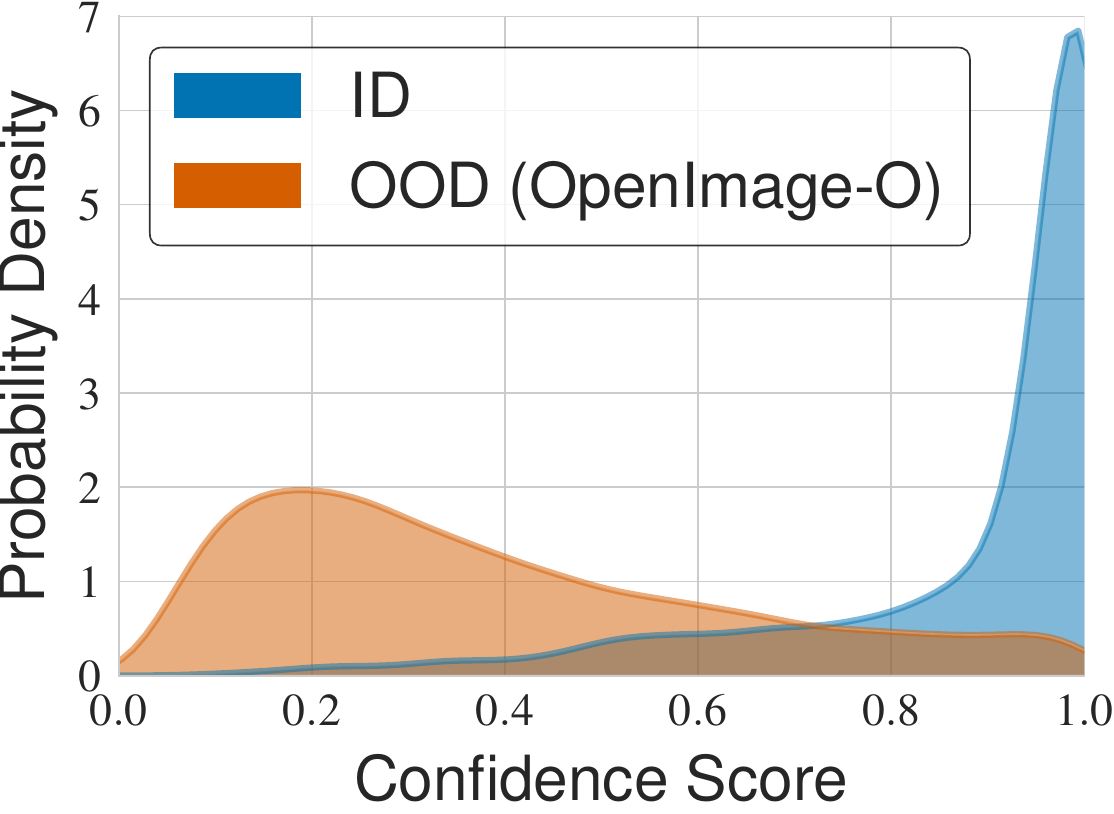} &
            \includegraphics[width=0.19\linewidth]{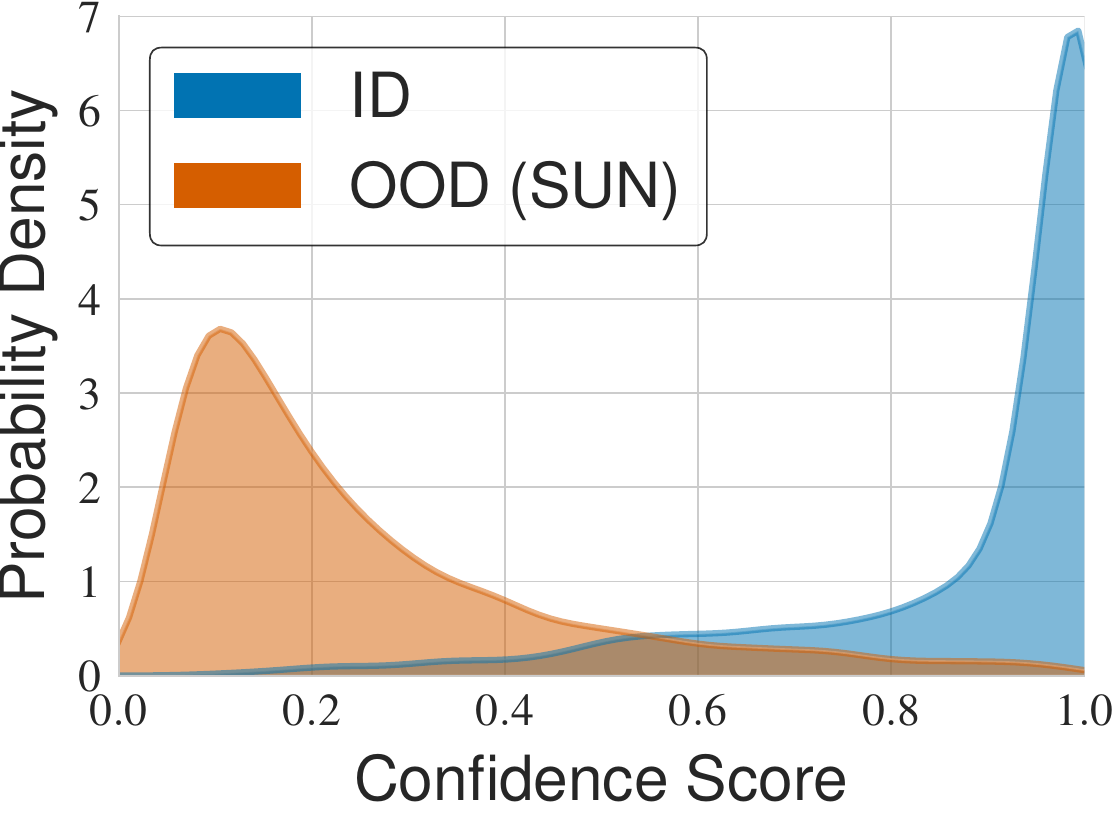} &
            \includegraphics[width=0.19\linewidth]{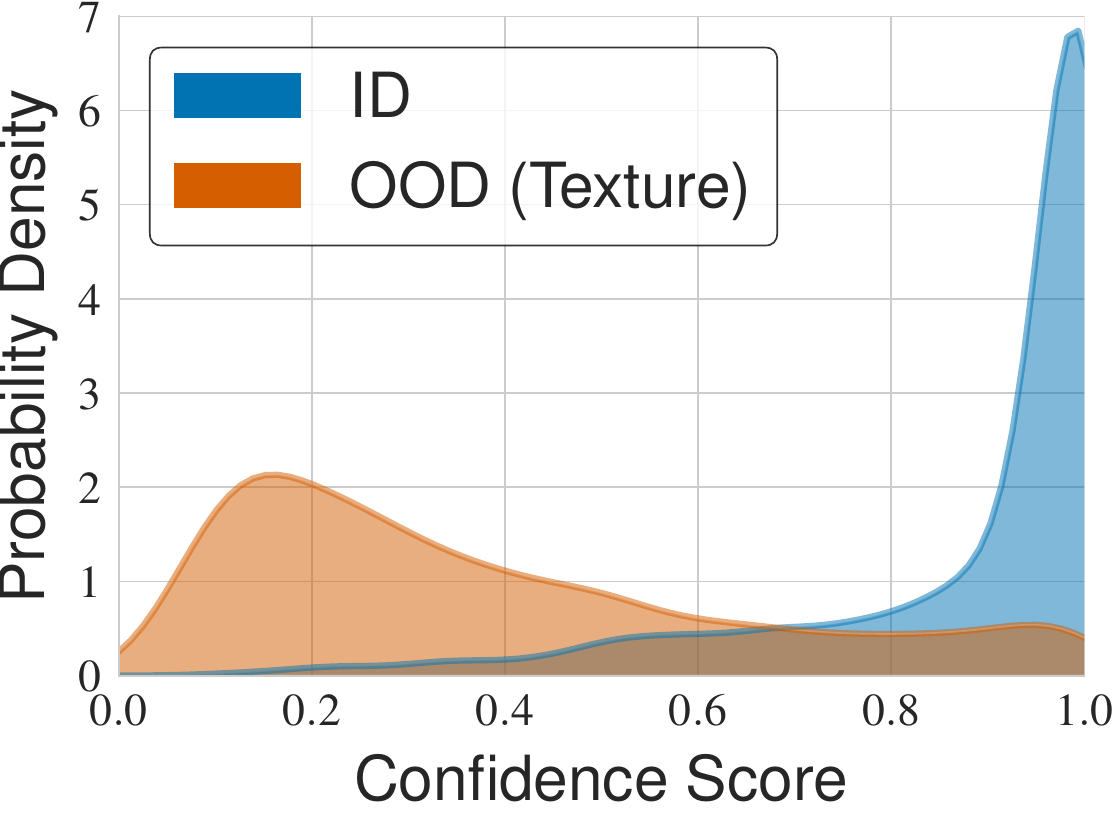} \\

            \includegraphics[width=0.19\linewidth]{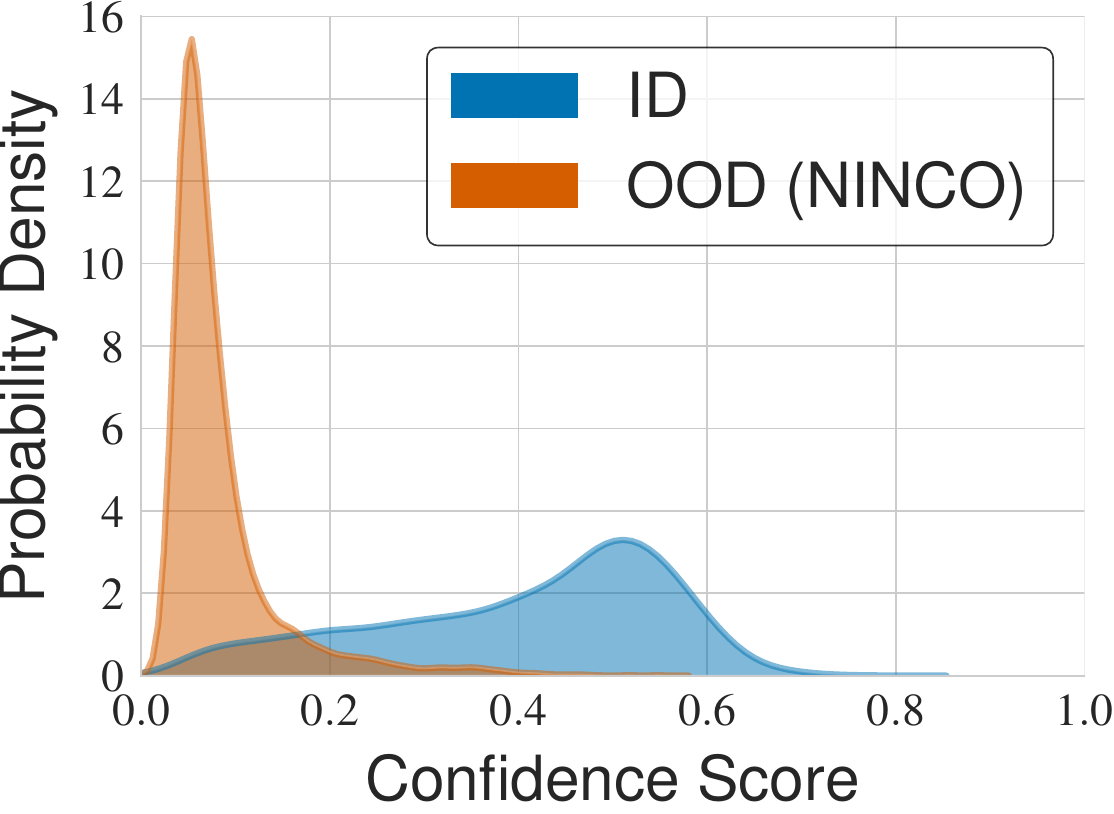} &
            \includegraphics[width=0.19\linewidth]{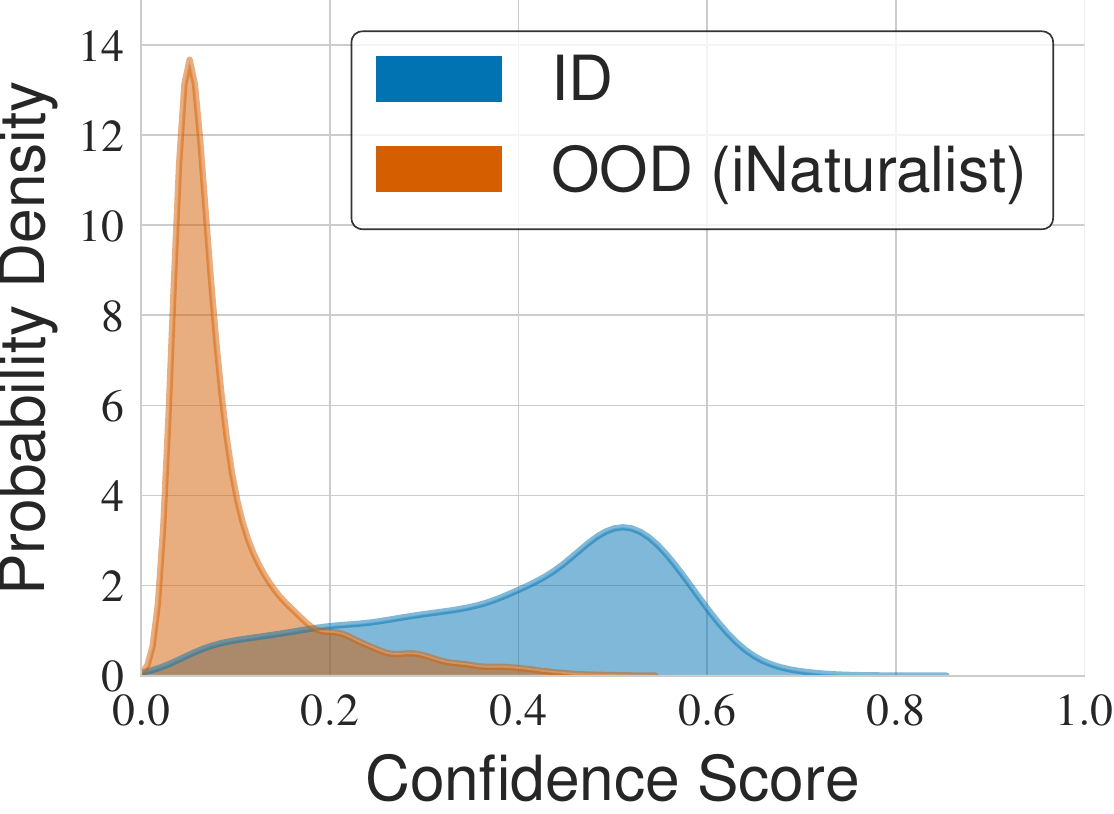} &
            \includegraphics[width=0.19\linewidth]{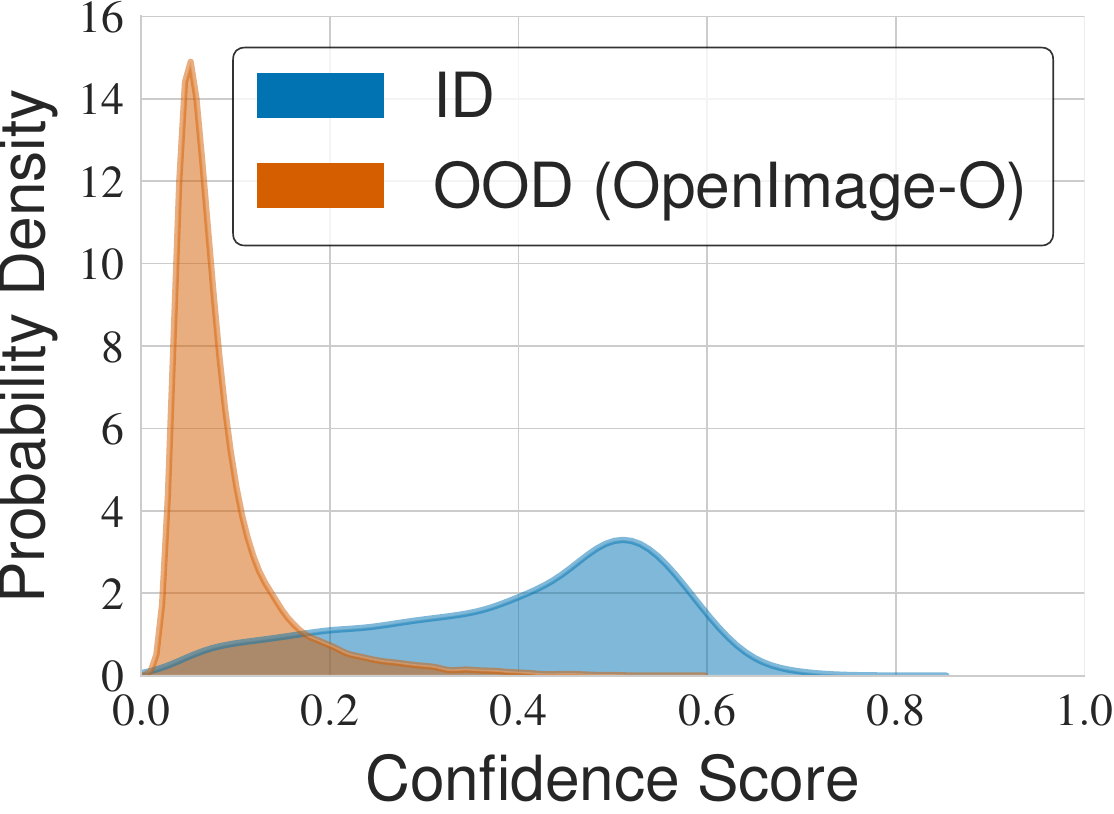} &
            \includegraphics[width=0.19\linewidth]{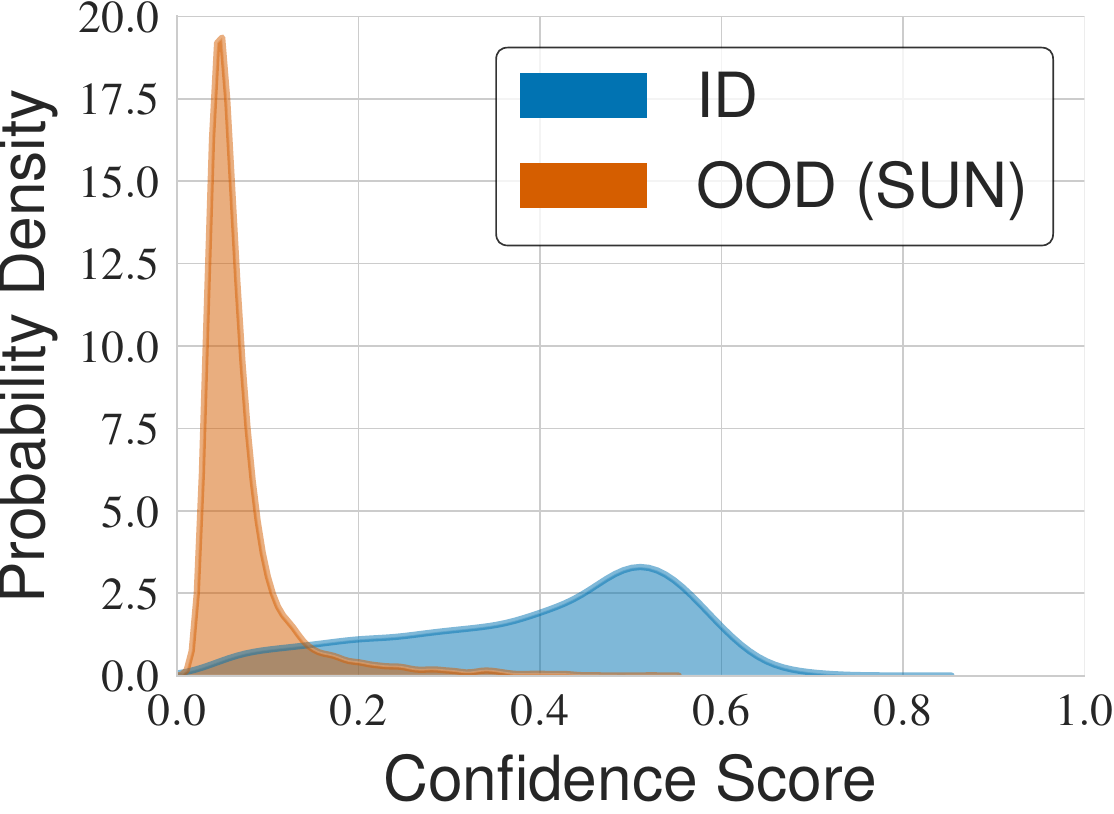} &
            \includegraphics[width=0.19\linewidth]{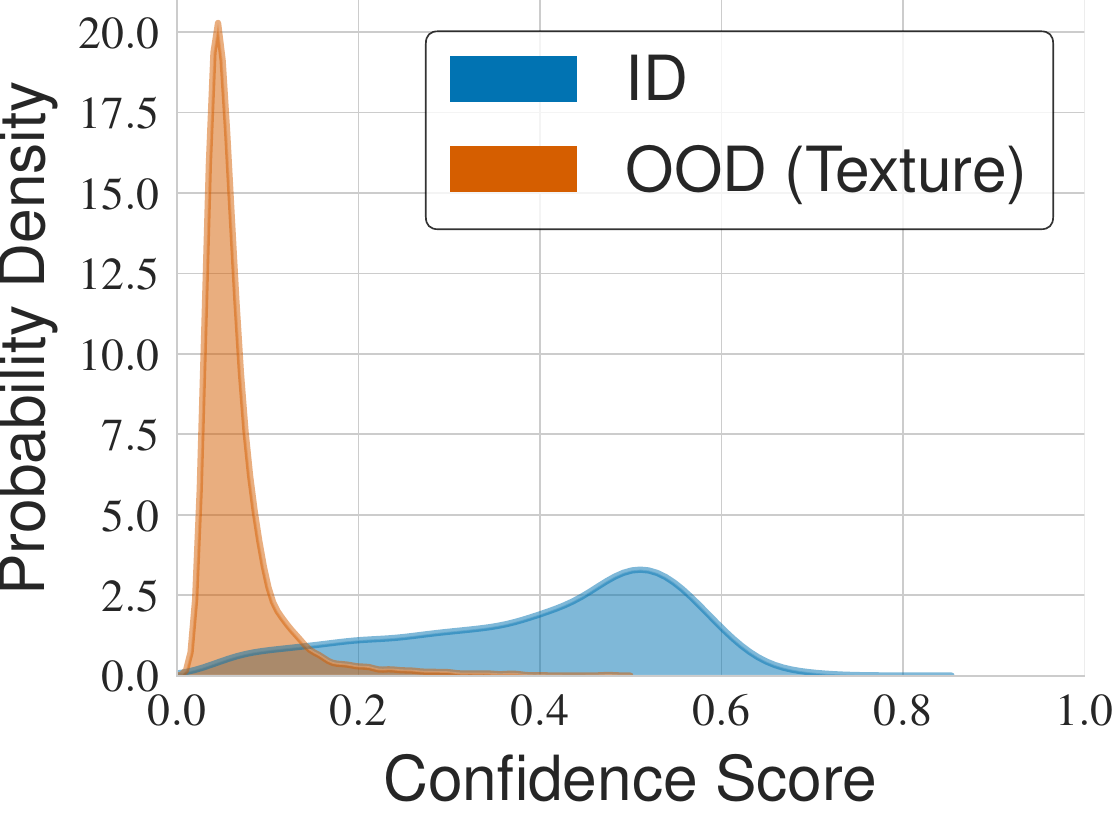}
        \end{tabular}
    }
    \caption{\textbf{Top row:} Visualization of confidence scores of cross-entropy baseline for Aircraft ID and various conventional OOD datasets. \textbf{Bottom row:} Visualization of confidence scores of \ours~for Aircraft ID and various conventional OOD datasets.}
\end{figure}

\subsection{Car datasets}
\begin{figure}[htbp]
    \centering
    \adjustbox{width=0.95\linewidth}{
        \begin{tabular}{c c c c c}
            \includegraphics[width=0.19\linewidth]{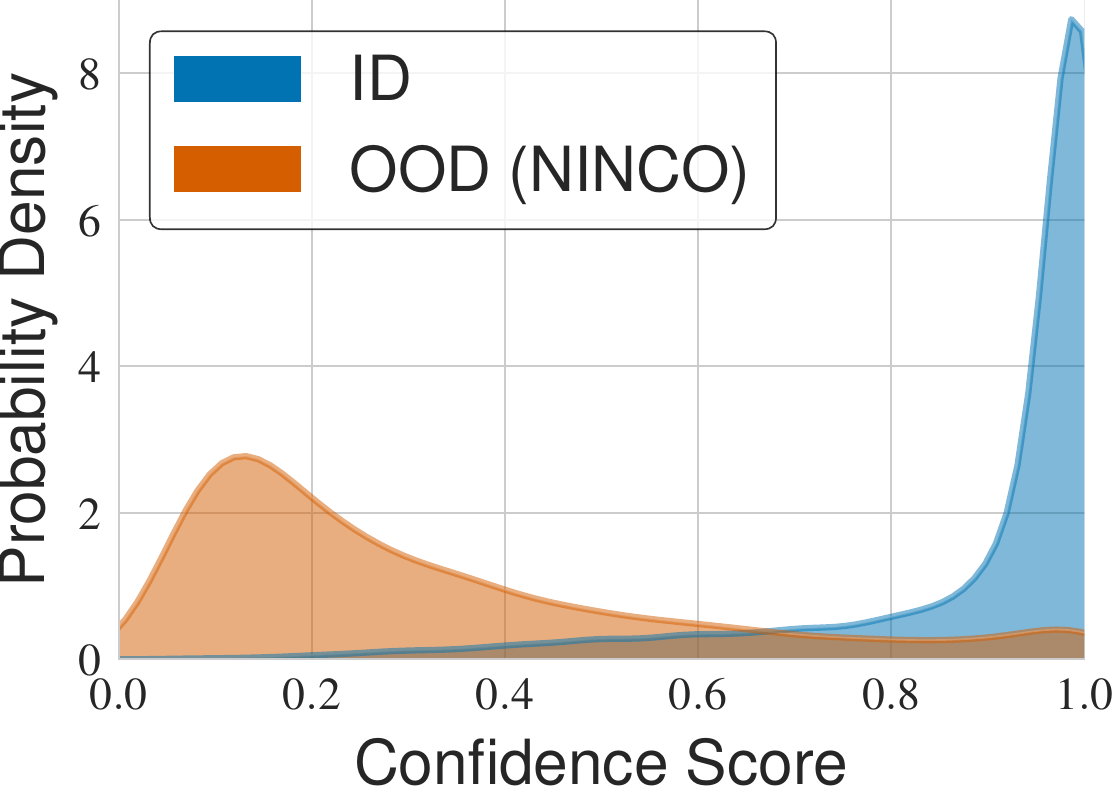} &
            \includegraphics[width=0.19\linewidth]{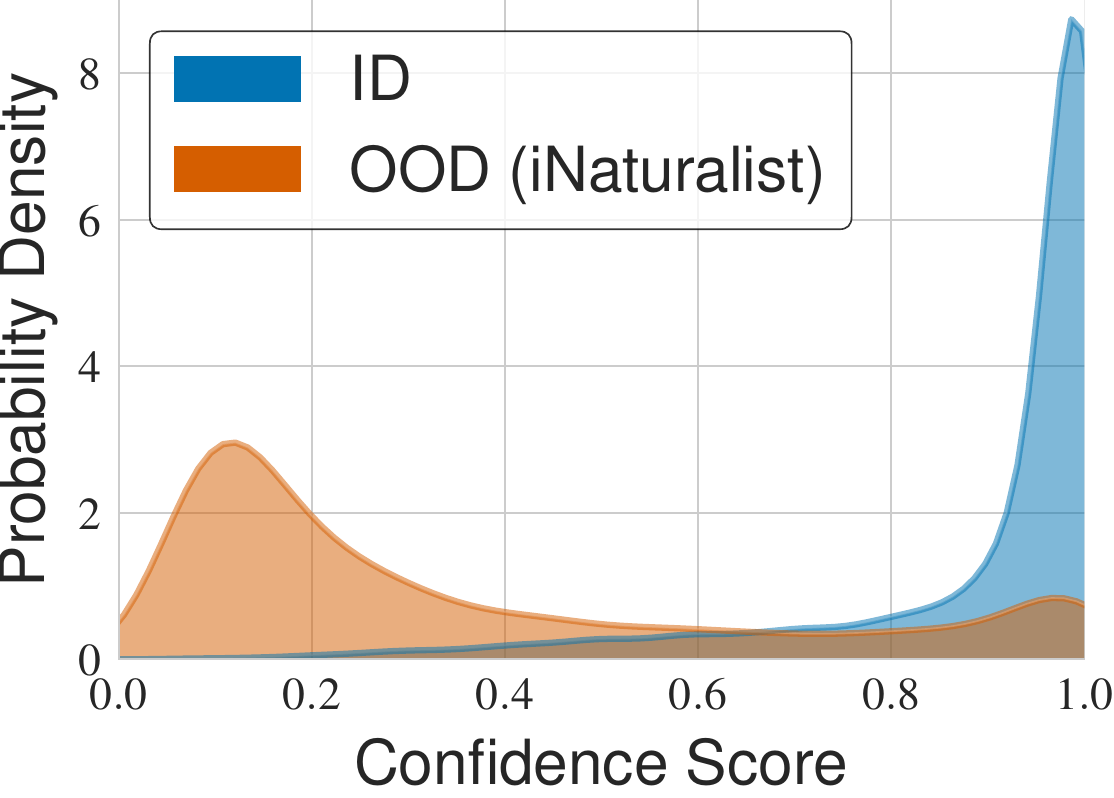} &
            \includegraphics[width=0.19\linewidth]{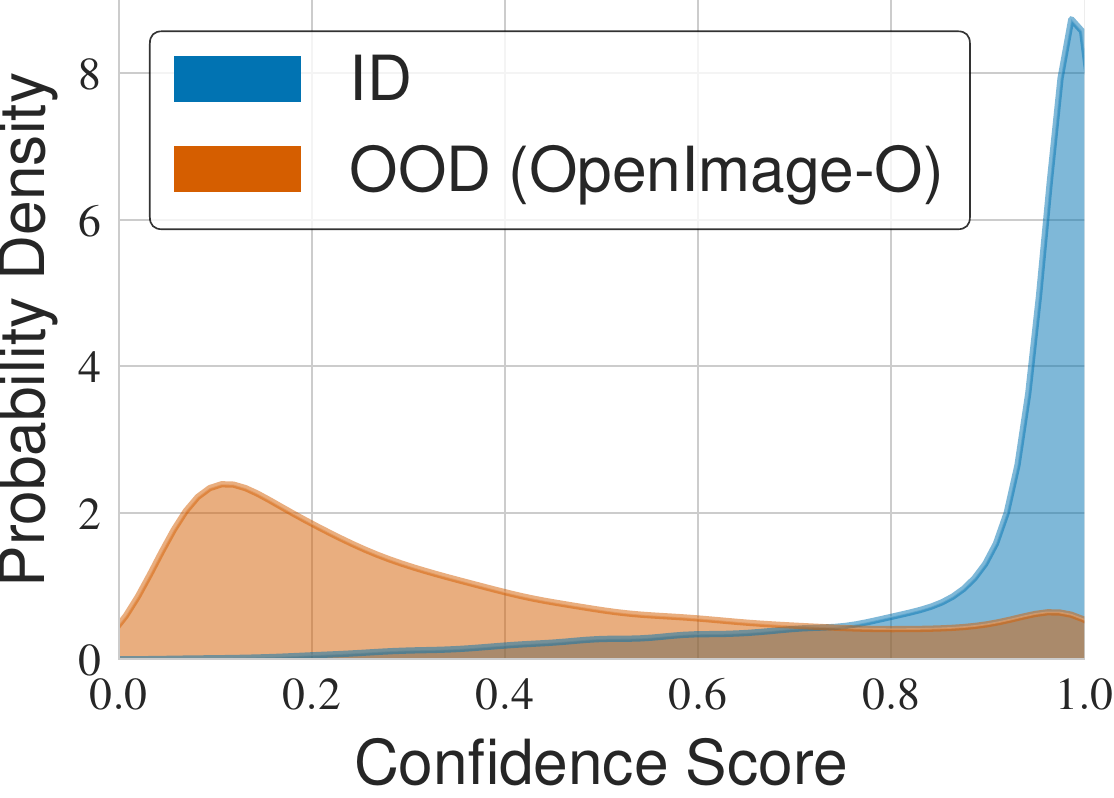} &
            \includegraphics[width=0.19\linewidth]{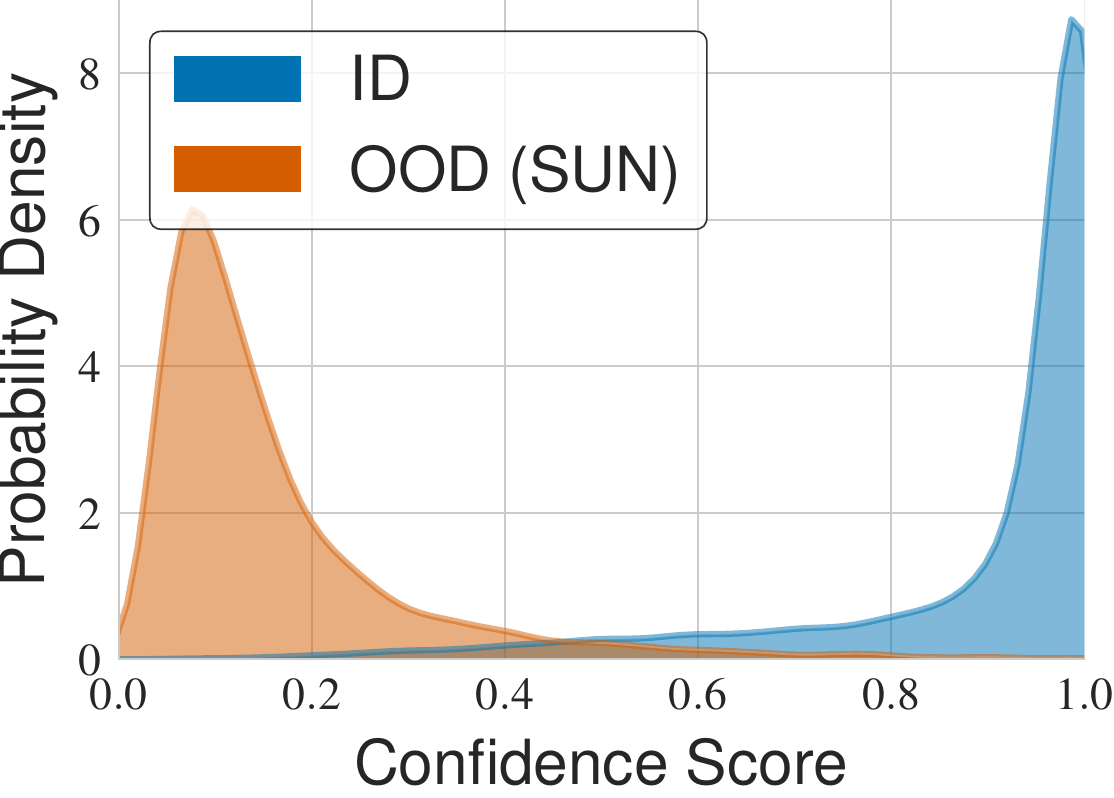} &
            \includegraphics[width=0.19\linewidth]{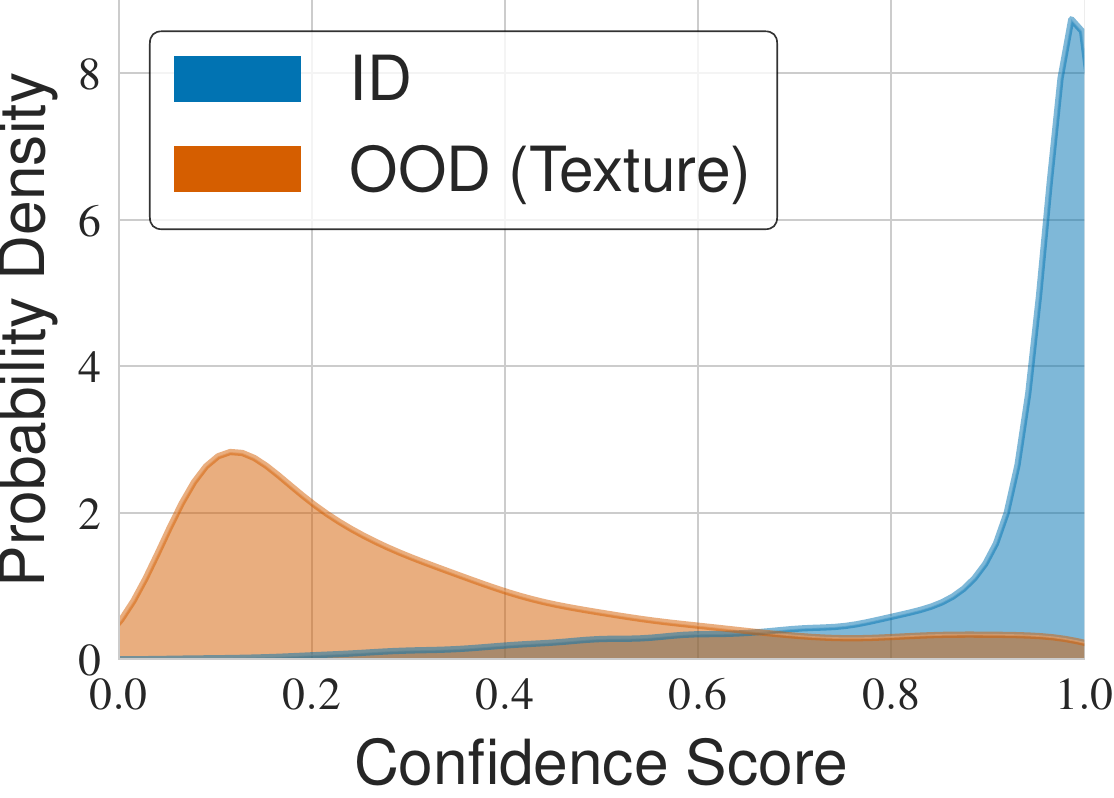} \\

            \includegraphics[width=0.19\linewidth]{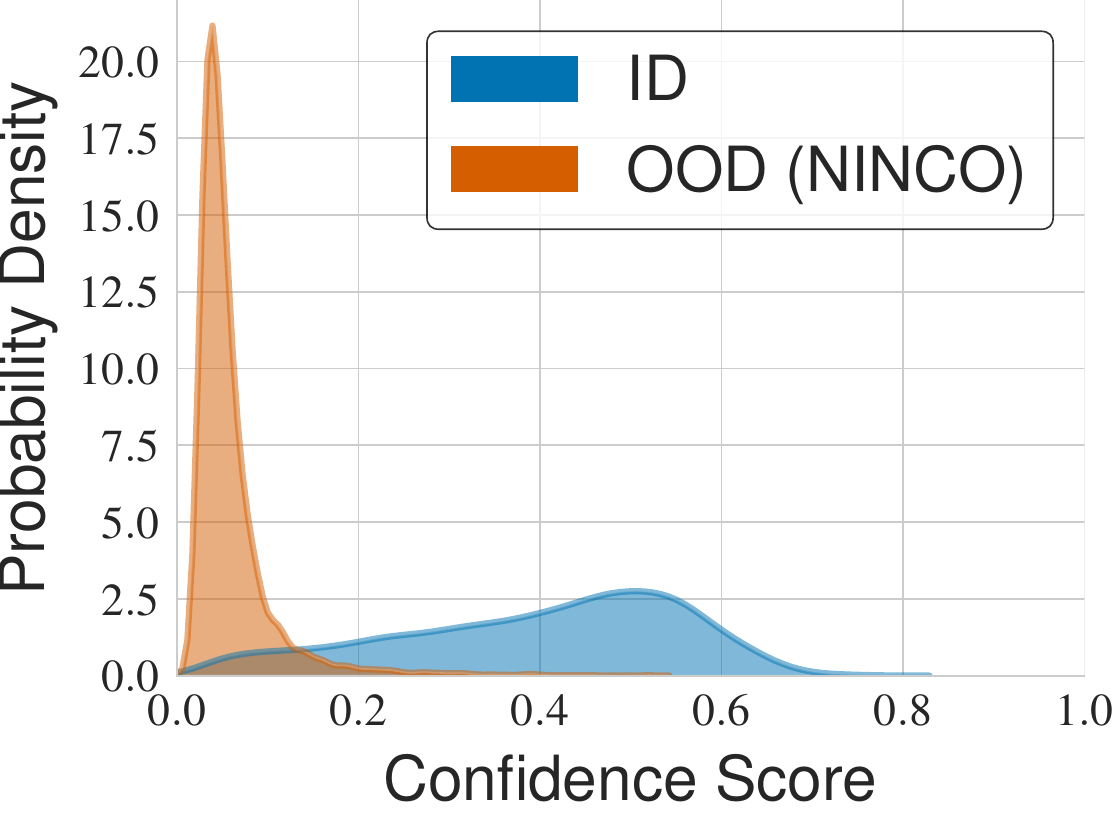} &
            \includegraphics[width=0.19\linewidth]{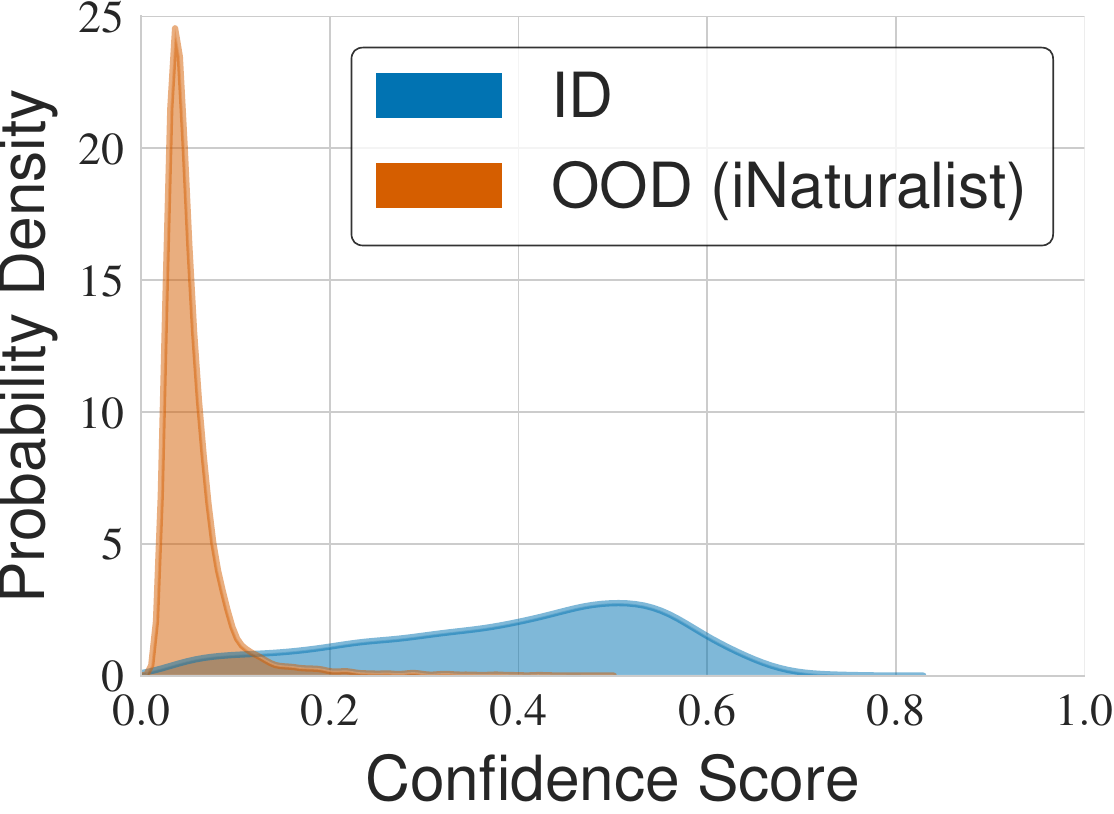} &
            \includegraphics[width=0.19\linewidth]{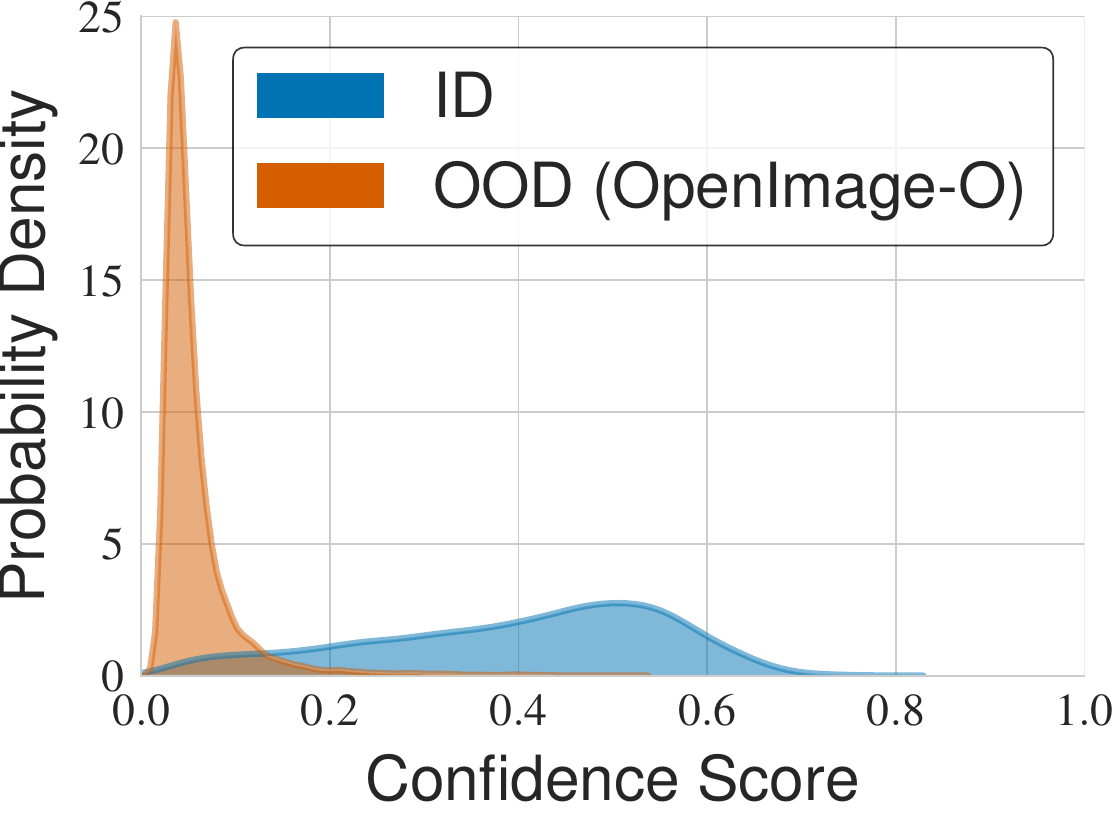} &
            \includegraphics[width=0.19\linewidth]{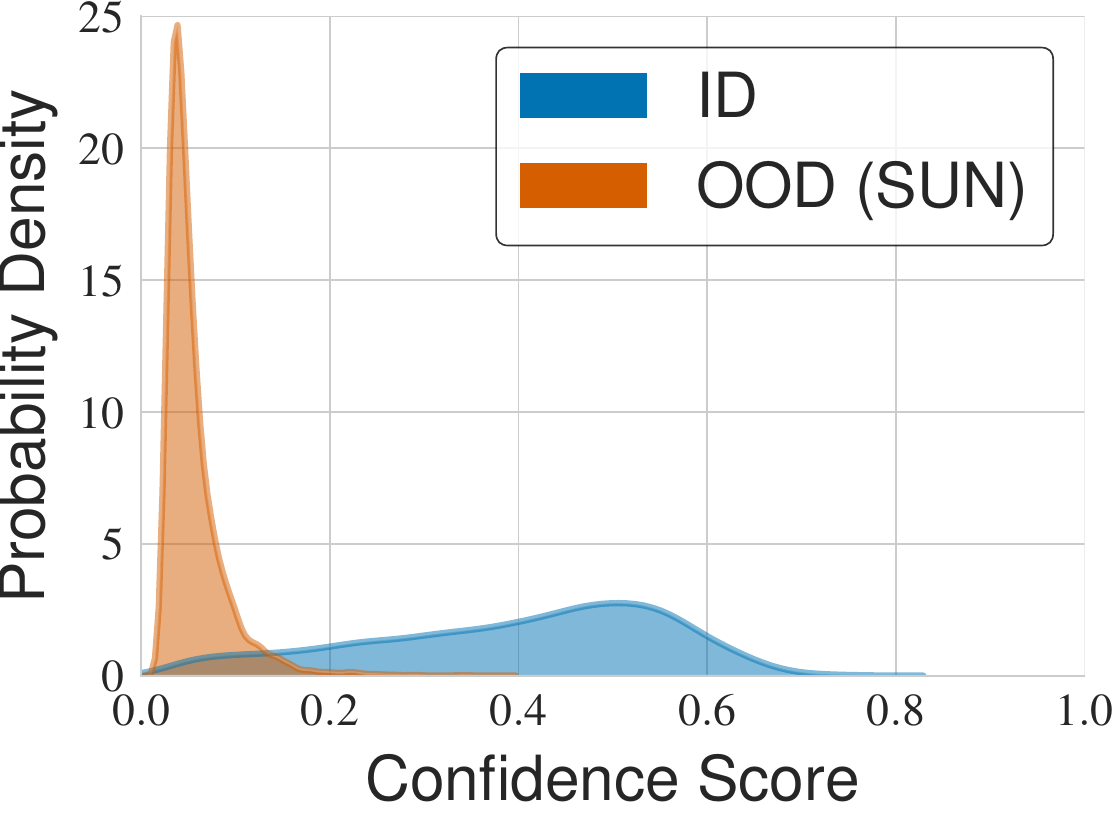} &
            \includegraphics[width=0.19\linewidth]{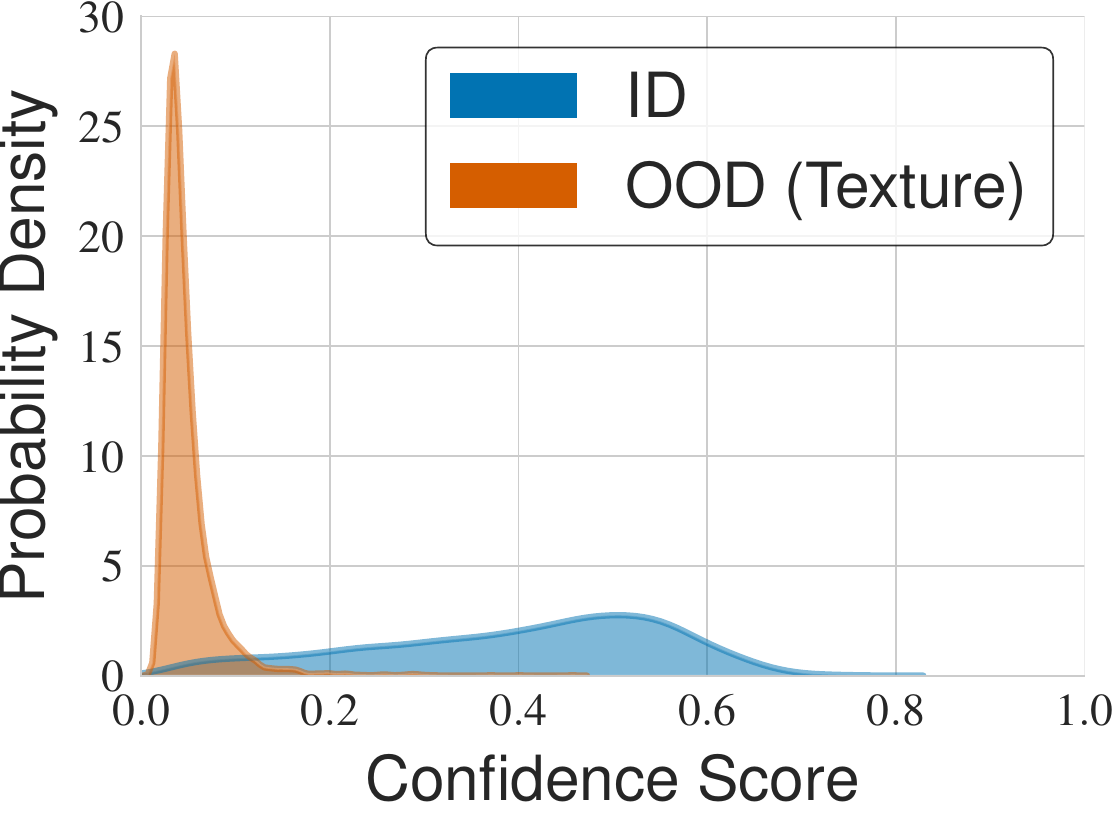}
        \end{tabular}
    }
    \caption{\textbf{Top row:} Visualization of confidence scores of cross-entropy baseline for Car ID and various conventional OOD datasets. \textbf{Bottom row:} Visualization of confidence scores of \ours~for Car ID and various conventional OOD datasets.}
\end{figure}

\newpage

\section{Confidence score plots with spurious and fine-grained OOD datasets:}
\null
\vfill
\label{sec:confidence_plot_spurious}
\subsection{Waterbirds datasets}
\begin{figure}[htbp]
    \centering
    \adjustbox{width=0.95\linewidth}{
    \begin{subfigure}{0.45\linewidth}
        \adjustbox{width=\linewidth,clip}{\includegraphics{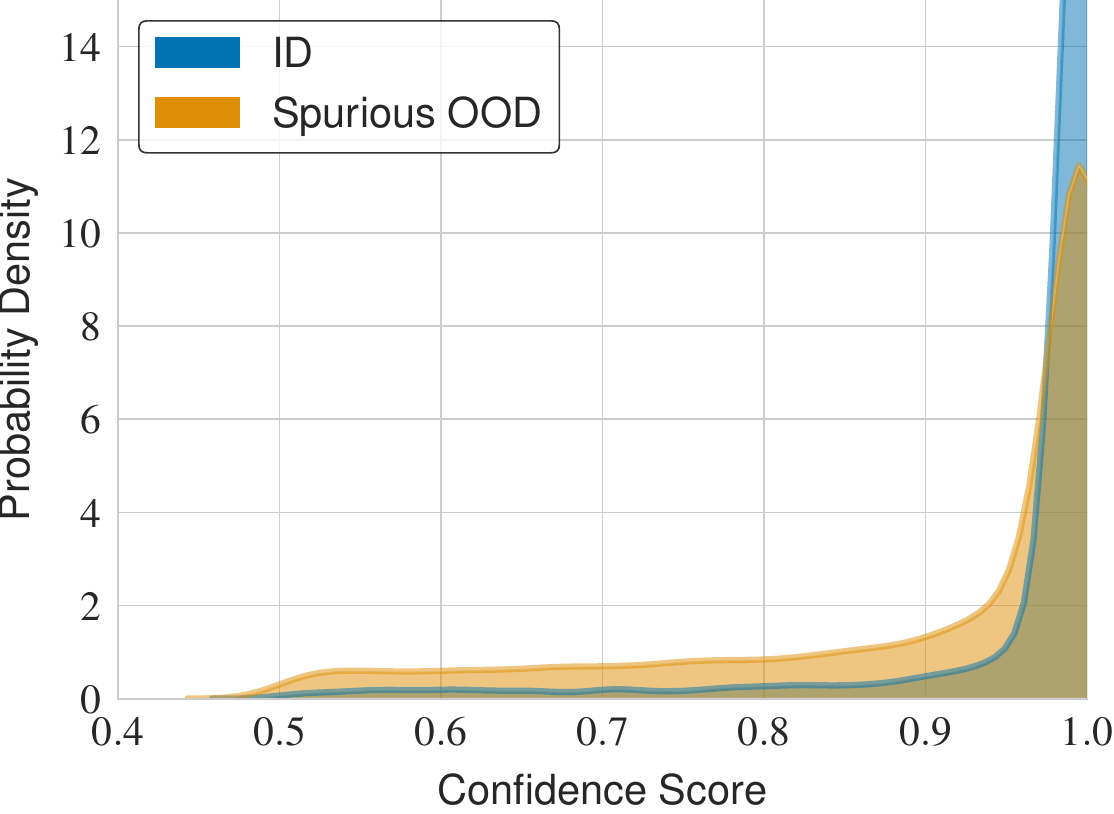}}
        \caption{Cross-entropy baseline}
        \label{fig:waterbirds_baseline_spurious}
    \end{subfigure}
    \hfill
    \begin{subfigure}{0.45\linewidth}
        \adjustbox{width=\linewidth,clip}{\includegraphics{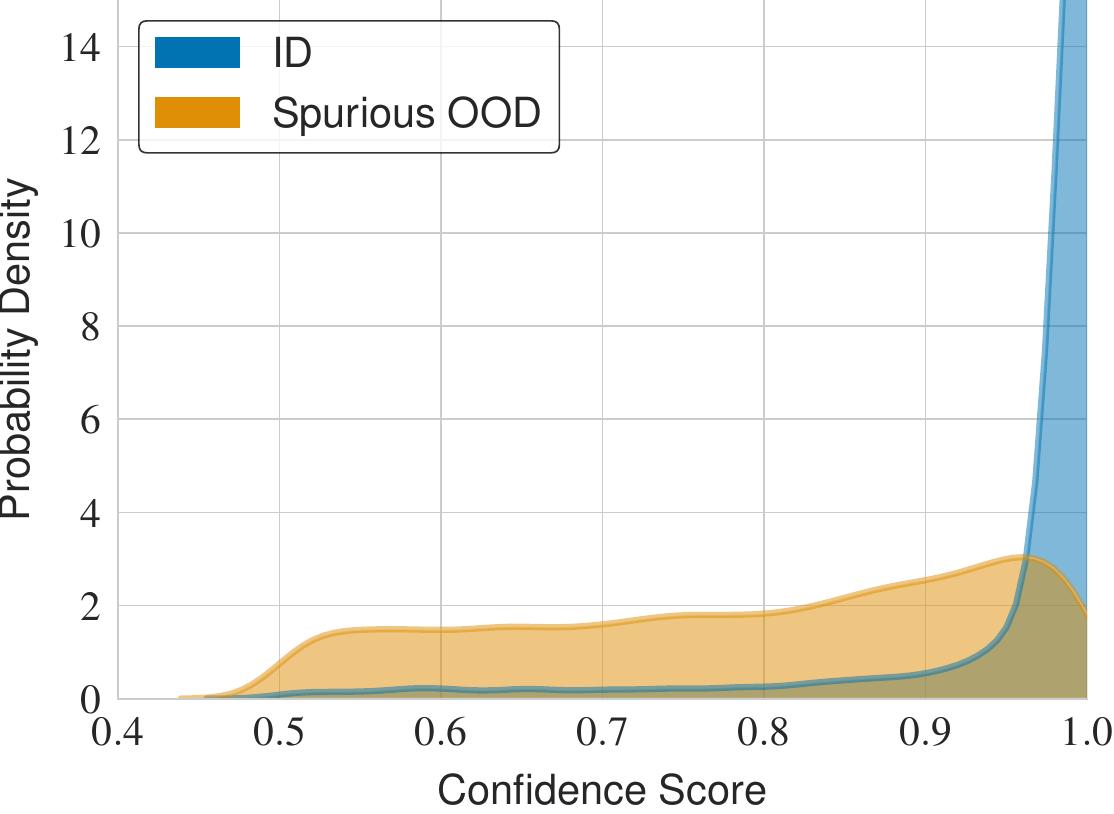}}
        \caption{\ours}
        \label{fig:waterbirds_ios_spurious}
    \end{subfigure}
    }
    \caption{Visualization of confidence scores for Waterbirds ID datasets and corresponding spurious OOD datasets.}
\end{figure}
\null
\vfill
\subsection{CelebA datasets}
\begin{figure}[htbp]
    \centering
    \adjustbox{width=0.95\linewidth}{
    \begin{subfigure}{0.45\linewidth}
        \adjustbox{width=\linewidth,clip}{\includegraphics{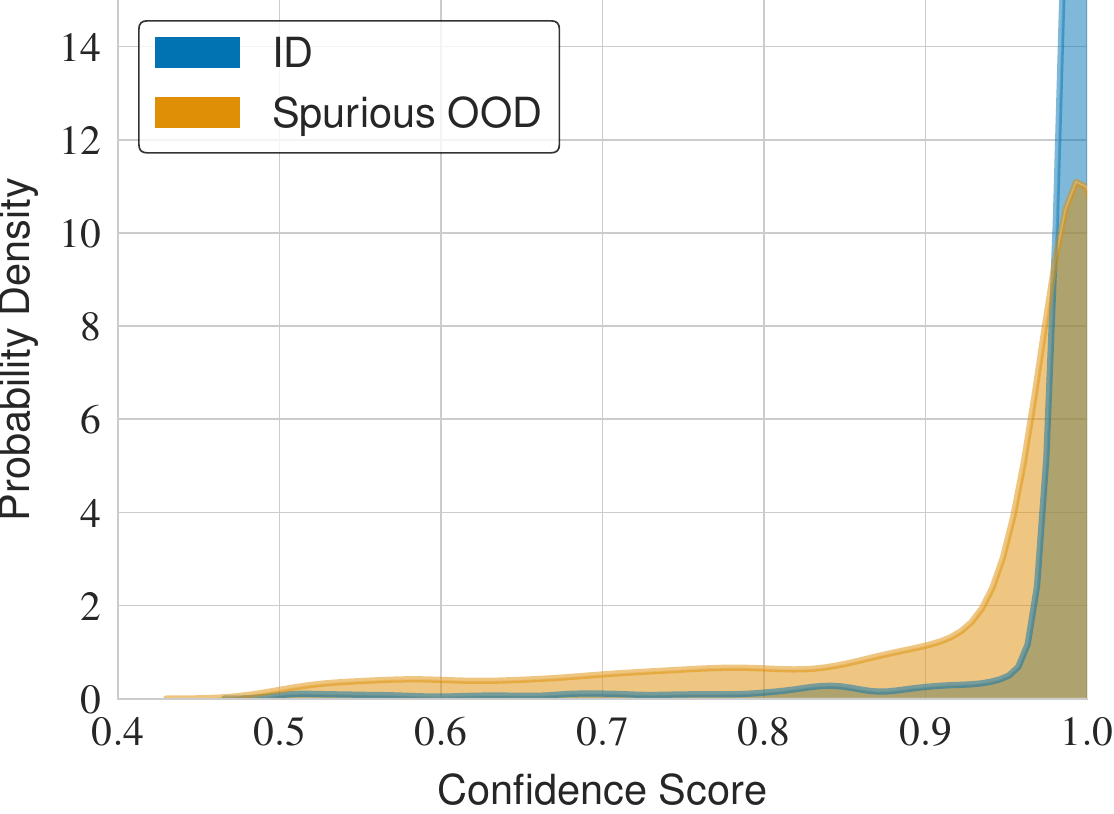}}
        \caption{Cross-entropy baseline}
        \label{fig:celeba_baseline_spurious}
    \end{subfigure}
    \hfill
    \begin{subfigure}{0.45\linewidth}
        \adjustbox{width=\linewidth,clip}{\includegraphics{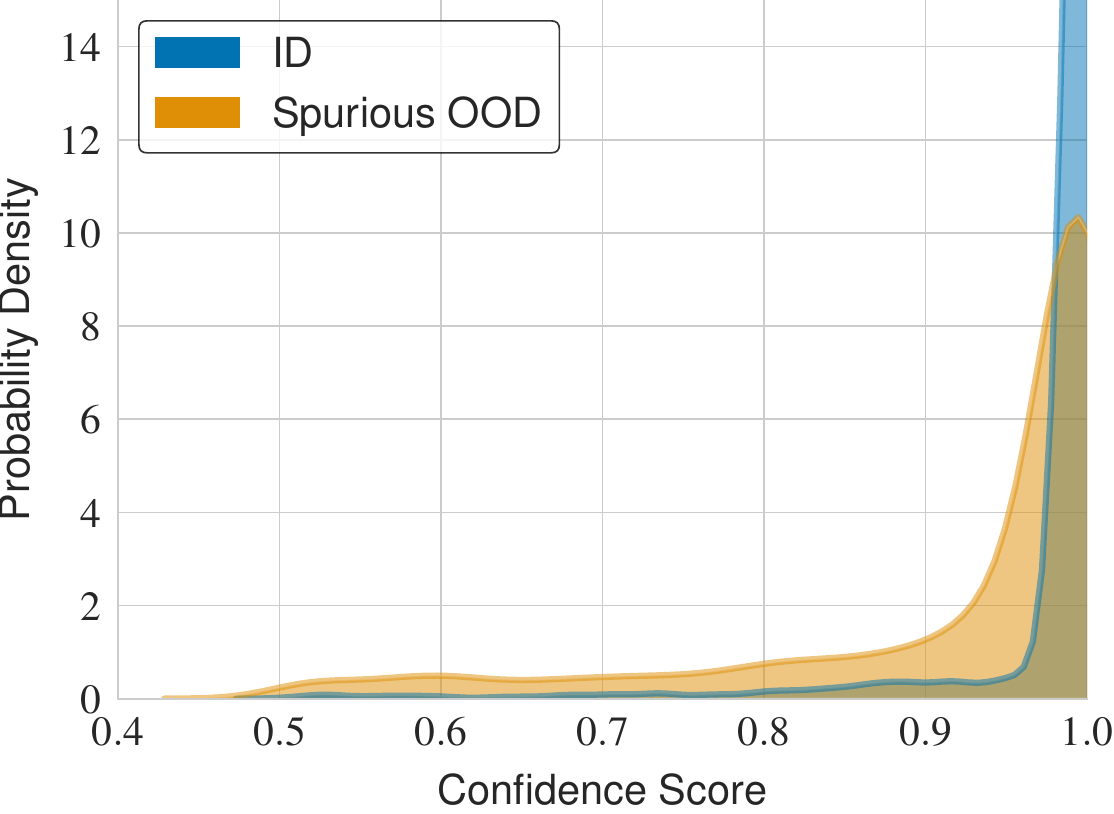}}
        \caption{\ours}
        \label{fig:celeba_ios_spurious}
    \end{subfigure}
    }
    \caption{Visualization of confidence scores for CelebA ID datasets and corresponding spurious OOD datasets.}
\end{figure}
\vfill
\null

\newpage
\vfill
\null
\subsection{Aircraft datasets}
\begin{figure}[htbp]
    \centering
    \adjustbox{width=0.95\linewidth}{
    \begin{subfigure}{0.45\linewidth}
        \adjustbox{width=\linewidth,clip}{\includegraphics{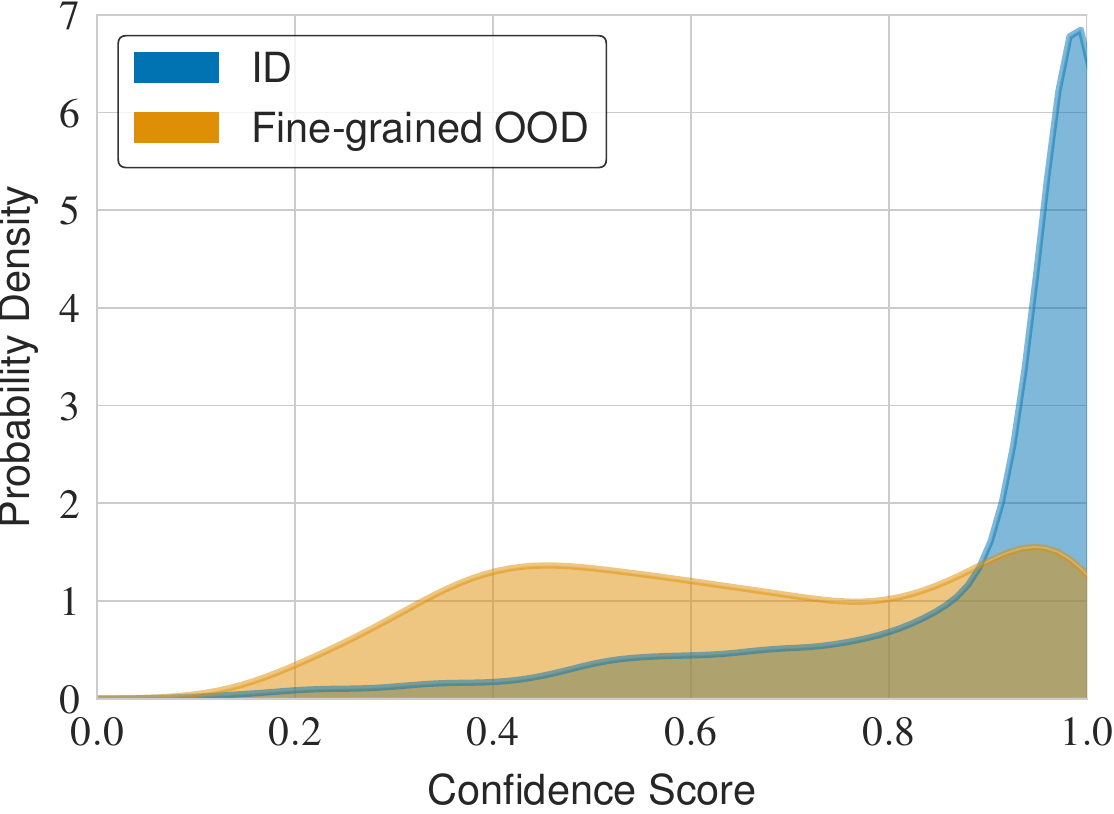}}
        \caption{Cross-entropy baseline}
        \label{fig:aircraft_baseline_spurious}
    \end{subfigure}
    \hfill
    \begin{subfigure}{0.45\linewidth}
        \adjustbox{width=\linewidth,clip}{\includegraphics{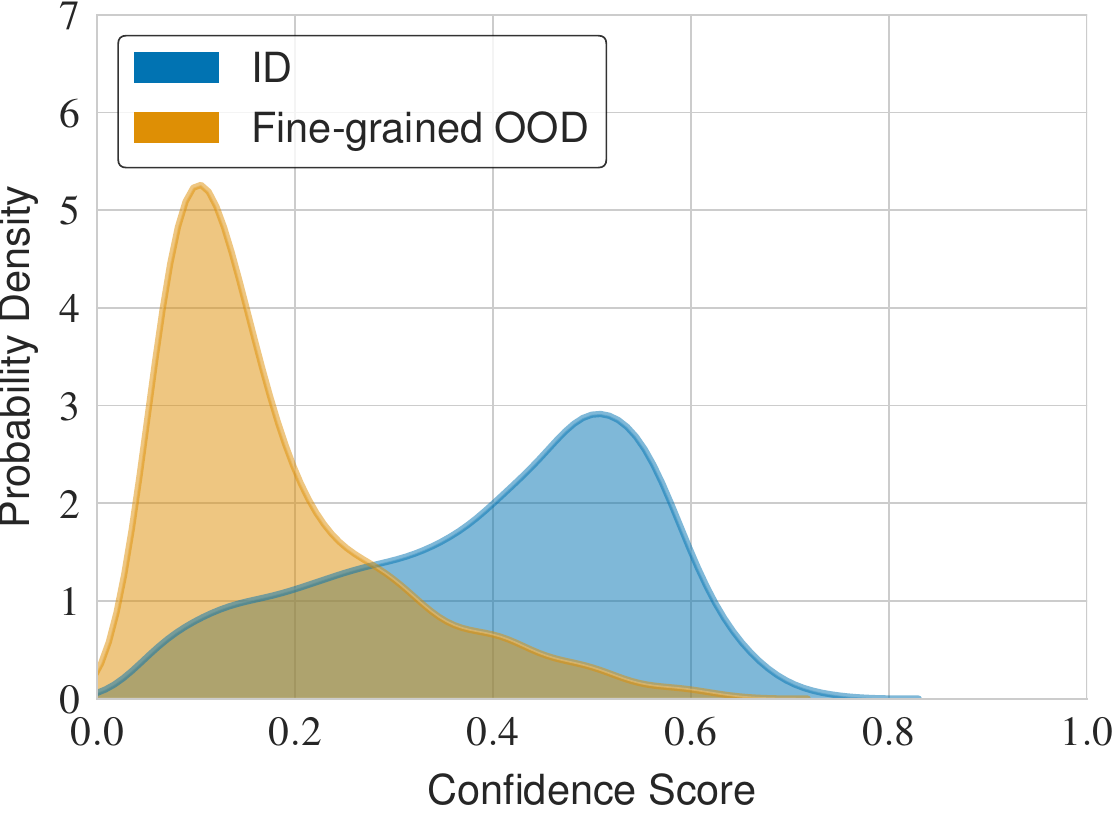}}
        \caption{\ours}
        \label{fig:aircraft_ios_spurious}
    \end{subfigure}
    }
    \caption{Visualization of confidence scores for Aircraft ID datasets and corresponding fine-grained OOD datasets.}
\end{figure}
\vfill
\null
\subsection{Car datasets}
\begin{figure}[htbp]
    \centering
    \adjustbox{width=0.95\linewidth}{
    \begin{subfigure}{0.45\linewidth}
        \adjustbox{width=\linewidth,clip}{\includegraphics{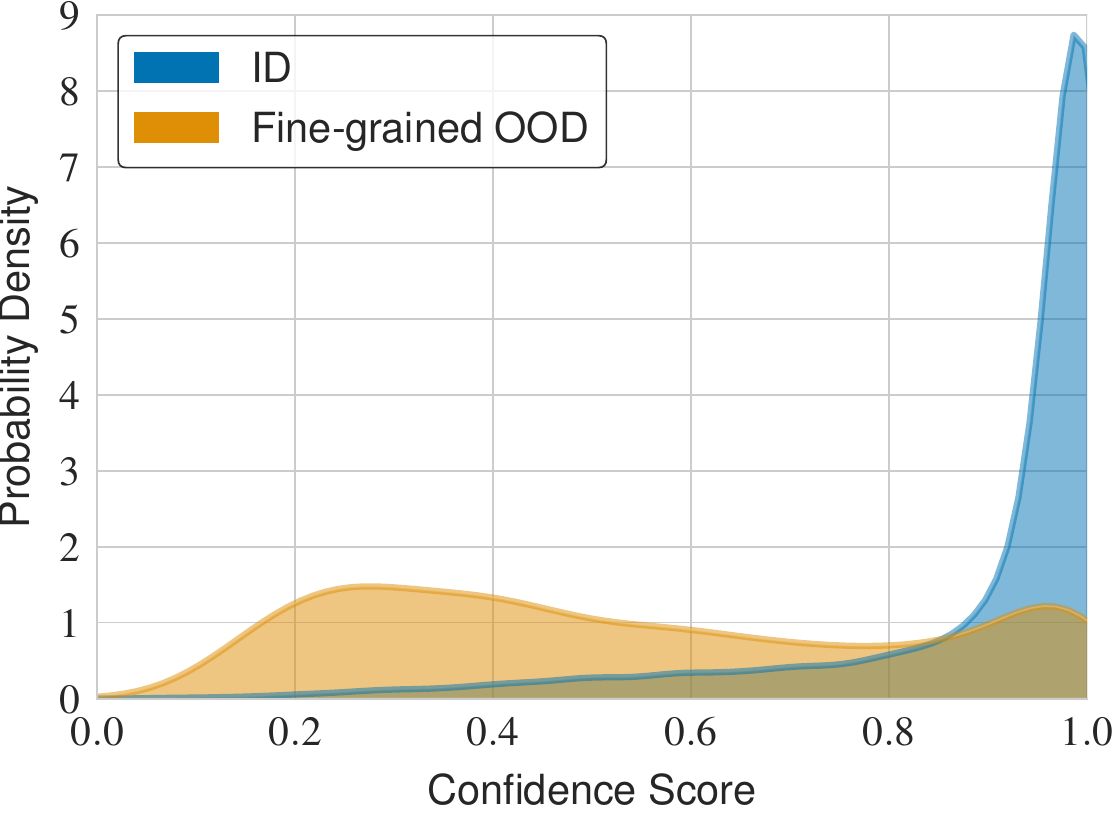}}
        \caption{Cross-entropy baseline}
        \label{fig:cars_baseline_spurious}
    \end{subfigure}
    \hfill
    \begin{subfigure}{0.45\linewidth}
        \adjustbox{width=\linewidth,clip}{\includegraphics{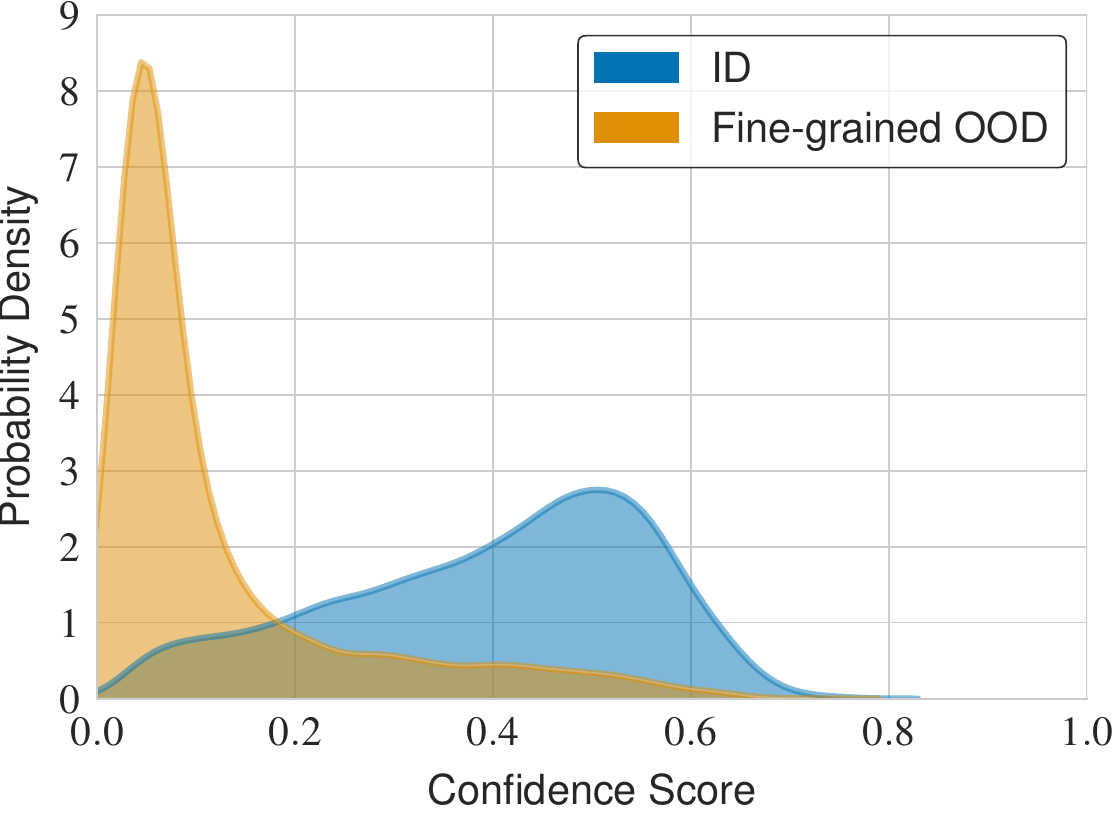}}
        \caption{\ours}
        \label{fig:cars_ios_spurious}
    \end{subfigure}
    }
    \caption{Visualization of confidence scores for Car ID datasets and corresponding fine-grained OOD datasets.}
\end{figure}
\vfill
\null

\newpage
\section{Complete results}
\label{sec:complete_results}

\null
\vfill
\subsection{ODIN vs. I-ODIN} 
\label{appendix:sec:complete_results:odin_compare}
We empirically find that perturbing only the most significant (one) color channel value, rather than all channel values (entire pixels of image), yields the best results across all benchmarks.
\begin{table*}[h]
	\centering
	\adjustbox{max width=1\textwidth}{%
		% [inline block 0: 17 envs, 88621 chars -> data_tex | \begin{tabular}{l cc cc cc cc cc cc} 			\toprule...]
}
\caption{Fine-grained OOD detection results in Car benchmark with MSP scoring.}
\label{appendix:tab:car_finegrained_results}
\end{table*}
\vfill
\null
\end{document}